\documentclass[11pt]{article}

\usepackage{chngcntr} 
\usepackage{amsmath}
\usepackage{enumerate}
\usepackage{fancyhdr}
\usepackage{color}
\usepackage{listings}
\usepackage{graphicx}
\usepackage{amssymb}
\usepackage{mathtools}
\usepackage{natbib}
\usepackage{hyperref}
\usepackage{fullpage}
\usepackage{booktabs}

\usepackage{xcolor}

\usepackage[utf8]{inputenc} %
\usepackage[T1]{fontenc}    %
\usepackage{hyperref}       %
\usepackage{url}            %
\usepackage{booktabs}       %
\usepackage{amsfonts}       %
\usepackage{nicefrac}       %
\usepackage{microtype}      %
\usepackage{xcolor}         %

\usepackage{macros}
\usepackage{times}
\title{Flatness and Generalization: Learning Multi-Index Models with Homogeneous Neural Networks}

\author{%
  Harsh Vardhan \\
  Department of Computer Science\\
  University of California, San Diego\\
  La Jolla, CA 92092 \\
  \texttt{hharshvardhan@ucsd.edu} \\
  \and
Hossein Taheri\\
  Department of Computer Science \\
  University of California, San Diego\\
  La Jolla, CA 92092 \\
  \texttt{htaheri@ucsd.edu}
  \and
  Arya Mazumdar \\
  Halicio\u glu Data Science Institute \\
  University of California, San Diego\\
  La Jolla, CA 92092 \\
  \texttt{arya@ucsd.edu} \\
}

\begin{document}

\maketitle

\begin{abstract}%
A common heuristic used to explain the generalization of first-order gradient methods on non-convex neural networks is that ``flat interpolators generalize well'' ~\citep{hochreiter,keskar2017on}, where flatness can be measured by the trace of the Hessian of the empirical loss.  However,  ~\citep{pmlr-v70-dinh17b} showed that any interpolator can be made sharper or flatter using the {\em symmetry of the network} that can change flatness while keeping the population and empirical losses unchanged. This result makes the earlier heuristic statement vacuous.   In this paper, we show that for learning an unknown multi-index model with $2$-layer non-convex homogeneous neural networks, there is a connection between flatness and generalization, despite the existence of symmetries. This connection pertains to the ``flattest'' interpolators, i.e., the interpolators that have orderwise minimum flatness among all interpolators.  First, we show 
that there exists a natural class of  non-generalizing interpolators 
whose flatness cannot be made closer to the flattest possible, even using symmetries. Second, we show that for data generated by a sum of single-index models, if the approximation error and label noise are low, any flattest interpolator achieves small population loss, i.e., the flattest interpolators always generalize. This establishes a direct link between flatness and generalization which applies to a large class of activations and realistic data distributions. 

\end{abstract}
\setcounter{tocdepth}{2}
\tableofcontents
\section{Introduction}
\label{sec:intro}
In machine learning, overparameterization refers to the paradigm where there are more model parameters than available data samples, such as in modern implementations of neural networks (NN). In this paradigm, the global minimizers of empirical (training) loss can interpolate, or perfectly fit the training data. However, due to the high-dimensional and non-convex landscape of the empirical loss, there are often ``bad"  interpolators, i.e., interpolators that have large population (test) loss ~\citep{zhang_understanding,nagarajan}. Surprisingly, stochastic gradient methods on these overparametrized models usually converge to ``good" interpolators, which generalize well. This empirical observation has not yet been rigorously justified, but one possible explanation for it, from ~\citep{hochreiter} initially, is the following: \textit{stochastic gradient methods converge to flat interpolators of the empirical loss and flat interpolators of empirical loss always generalize}. In this paper, sharpness of an interpolator is the trace of the Hessian of the empirical loss. Flat interpolators have small trace. In this work, we will try to theoretically justify the second part of this explanation, which states that flat interpolators generalize.

Several existing works have tried to justify the link between flatness and generalization. Experimentally, flatness is strongly correlated to generalization~\citep{Jiang*2020Fantastic,keskar2017on} and algorithms that explicitly minimize some notion of flatness generalize better~\citep{keskar2017on,chiang2023loss,DBLP:conf/iclr/ForetKMN21}. However, there is also evidence refuting this claim.  ~\cite{pmlr-v202-andriushchenko23a} shows that the correlation between flatness and generalization depends on the choice of model and ~\cite{wen2023how,schliserman2025flatminimageneralizationinsights} construct flat interpolators that do not generalize.

Notably, an important setting where this claim can be refuted is in the presence of ``symmetries'', i.e., transformations of model weights that do not change input-output relationship. The following well-known result from~\cite{pmlr-v70-dinh17b} summarizes the impact of such symmetries on flatness. 
\begin{proposition}[Symmetries for ReLU Neural Networks ~\citep{pmlr-v70-dinh17b} (see Def~\ref{def:homogen})]\label{lem:dinh}
    There exist symmetries that rescale the layer weights of a NN  such that the model output remains the same for a given input (thus, empirical and population losses remain same), however, the flatness changes.
\end{proposition}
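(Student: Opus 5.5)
The plan is to exhibit the symmetry explicitly for a two-layer network $f(x;W,a)=\sum_{j=1}^m a_j\,\sigma(w_j^\top x)$ with $\sigma$ positively $1$-homogeneous, e.g.\ ReLU. For any $\alpha>0$ consider the rescaling $T_\alpha:(W,a)\mapsto(\alpha W,\alpha^{-1}a)$. Since $\sigma(\alpha z)=\alpha\sigma(z)$ for $\alpha>0$, we get $f(x;T_\alpha(W,a))=f(x;W,a)$ for every input $x$. Hence the empirical loss $\hat L$ and the population loss are both invariant under $T_\alpha$, because they depend on the parameters only through the network's outputs. This settles the first half of the statement and requires only homogeneity.

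For the second half I will compute how $\operatorname{tr}\nabla^2\hat L$ transforms. The loss satisfies $\hat L\circ T_\alpha=\hat L$, and $T_\alpha$ is linear with Jacobian $D=\mathrm{diag}(\alpha I,\alpha^{-1}I)$. Differentiating this identity twice gives
\begin{equation*}
\nabla^2\hat L(\theta)=D\,\nabla^2\hat L(T_\alpha\theta)\,D,
\end{equation*}
and therefore
\begin{equation*}
\nabla^2\hat L(T_\alpha\theta)=D^{-1}\nabla^2\hat L(\theta)D^{-1}.
\end{equation*}
Taking traces and splitting into the first-layer block $H_{WW}$ and the second-layer block $H_{aa}$, we obtain
\begin{equation*}
\operatorname{tr}\nabla^2\hat L(T_\alpha\theta)=\alpha^{-2}\operatorname{tr}H_{WW}(\theta)+\alpha^{2}\operatorname{tr}H_{aa}(\theta).
\end{equation*}
For ReLU, differentiability of $\hat L$ at $\theta$ also needs care: I will assume no pre-activation $w_j^\top x_i$ equals zero, and $T_\alpha$ preserves this.

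It then suffices to show that $\operatorname{tr}H_{aa}>0$ or $\operatorname{tr}H_{WW}>0$ at an interpolator. In either case the map $\alpha\mapsto\alpha^{-2}A+\alpha^{2}B$ with $A,B\ge0$ is non-constant, so flatness changes with $\alpha$. It tends to infinity as $\alpha\to0$ or $\alpha\to\infty$, so the interpolator can be made arbitrarily sharp. For squared loss at an interpolator the residual terms vanish, so the Hessian is the Gauss--Newton matrix $\frac1n\sum_i\nabla_\theta f(x_i)\nabla_\theta f(x_i)^\top$. This gives
\begin{equation*}
\operatorname{tr}H_{aa}=\frac1n\sum_i\sum_j\sigma(w_j^\top x_i)^2,
\end{equation*}
which is positive unless every neuron is inactive on every sample. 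That degenerate case corresponds to the zero function, and I would exclude it whenever the labels are nonzero.

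The main obstacle is this nondegeneracy step, namely guaranteeing that the Hessian blocks do not vanish identically, together with handling ReLU's nondifferentiability. Beyond that the argument is routine. I expect the paper's Def~\ref{def:homogen} to cover general $L$-homogeneous activations. In that case the scaling becomes $(\alpha W,\alpha^{-L}a)$, and the same block-trace computation applies with exponents $\alpha^{-2}$ and $\alpha^{2L}$.
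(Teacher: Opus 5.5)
Your proof is correct and uses the same mechanism as the paper's argument in its appendix on flatness and rescaling symmetry. Positive homogeneity leaves the output unchanged, and the two layers' contributions to the trace scale by opposite powers of the rescaling. Non-constancy fails only if every activation vanishes on the sample, which at an interpolator forces all labels to be zero.

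The route differs in two respects. First, the paper uses one scale $\alpha_j$ per neuron, as in Definition~\ref{def:homogen}, whereas you use a single global $\alpha$. Second, the paper never uses the congruence identity. It works only at interpolators, writes the flatness directly as $\frac1n\sum_i\|\nabla_{\mathbf w}h(\mathbf w,\mathbf x_i)\|_2^2$, and substitutes the $c$-homogeneity of $\sigma$ and the $(c-1)$-homogeneity of $\sigma'$ to get
\begin{equation*}
\frac1n\sum_{i=1}^n\sum_{j=1}^m\Bigl(\alpha_j^{2c}\sigma^2(\langle\theta_j,\mathbf x_i\rangle)+\alpha_j^{-2}a_j^2(\sigma')^2(\langle\theta_j,\mathbf x_i\rangle)\|\mathbf x_i\|_2^2\Bigr).
\end{equation*}
It then bounds the probability that all labels vanish (at most $2^{-n}$ without noise, zero with Gaussian noise) rather than assuming a nonzero label.

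What each approach buys:
\begin{itemize}
\item Your Jacobian-congruence derivation is cleaner and more general. The block scaling holds at any twice-differentiable point and for any loss that depends on the parameters only through the outputs. You need the Gauss--Newton form only to show the $aa$ block is positive, and that form holds everywhere since $h$ is linear in $a$. You also handle the ReLU non-differentiability explicitly, which the paper glosses over.
\item The paper's explicit per-neuron formula is what it needs downstream. It is separable in the $\alpha_j$, and Lemma~\ref{lem:flattest_rescaling_full} minimizes it neuron by neuron to obtain $\Upsilon(\mathbf w)=\sum_j|a_j|^{2c/(c+1)}B(\theta_j)$. A single global $\alpha$ suffices for this proposition but would not yield that quantity.
\end{itemize}
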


In particular, ~\cite{pmlr-v70-dinh17b} use this property to show that one can make any interpolator sharper using symmetry, which implies that good interpolators can be sharp. Using symmetry, as in Prop.~\ref{lem:dinh}, one can also make any interpolator flatter. This contradicts the claim ``flat interpolators always generalize", as we can make bad interpolators flat. But, how flat can we make these bad interpolators? Can we make bad interpolators the flattest interpolators, i.e., those with the minimum trace of Hessian among all interpolators? Or is there a fundamental barrier that prohibits the symmetries from making certain bad interpolators achieve the minimum flatness?
A necessary condition for this claim to be true is to show there are bad interpolator
that cannot be made the flattest interpolators in the presence of symmetry. This is the first question we consider in this paper:

\begin{align*}
  \textit{\textbf{Q1}: Are there bad interpolators that  cannot be made the flattest interpolators using symmetries ?}   
\end{align*}

If the answer to this question is no, then there is a class of bad interpolators that can be made flattest, and there is no link between flatness and generalization in the presence of symmetry. However, if the answer to this question is yes, then symmetry cannot arbitrarily decrease the flatness of every interpolator. Therefore, the set of flattest interpolators does not contain a certain class of bad interpolators. But, when does the set of flattest interpolators does not contain any bad interpolator? This forms our second question, and a \emph{sufficient} condition for our claim. 

\begin{align*}
  \textit{\textbf{Q2}: Do all flattest interpolators generalize?}  
\end{align*}

We consider the feature-learning setting to answer these two questions. In our setting, the true labels come from a multi-index model, and we use a $2$-layer neural network with homogeneous non-linear activations to learn it. This is a popular regression problem, where the labels depend only on a few directions ($m^\star \ll d$) of the features, specified by the matrix $\Theta^\star \in \bR^{m^\star \times d}$, via an unknown link function $\sigma^\star$. The $2$-layer network has $m$ hidden neurons with activation $\sigma:\bR\rightarrow\bR$ and the weight vector $\vw\in \bR^{m(d+1)}$ consists of outer-layer weights $\va\in \bR^m$, and inner-layer weights $\{\theta_j\}_{j\in [m]}, \theta_j\in \bR^{d},\forall j\in [m]$. We sample $n \asymp d$ datapoints from the distribution and ensure overparametrization by setting $m \geq n$. Let $S$ be the set of training samples, and $F_{S}:\bR^{m(d+1)}\to\bR_{+}$ be the empirical loss and $F:\bR^{m(d+1)}\to \bR_{+}$ be its population loss. Then, an interpolator achieves $F_S(\vw) = 0$ and its flatness is given by $\trace(\nabla^2 F_S(\vw))$.

Even in this simplified setting, the answer to the questions posed earlier is highly non-trivial. This setup i) has both good and bad interpolators, ii) it is non-linear and non-convex, iii) it has a symmetry that can change flatness without changing population loss, however, iv) gradient-based methods provably generalize for the given sample complexity~\citep{oko2024neural,ben_arous}. Further, existing flatness-based generalization bounds either do not apply to these settings~\citep{qiao2024stable,pmlr-v272-haddouche25a, pmlr-v202-wu23r, liang2025generalizationedgestabilityrole} or are too loose~\citep{liang2025stable}. Additionally, this setting is a more representative example of real-world NN training, than the worst-case distributions considered as counter-examples in  ~\citep{wen2023how,schliserman2025flatminimageneralizationinsights}. In Section~\ref{sec:comparison}, we provide a comparison of our results to these works.

\subsection{Main Contributions}
Our main contributions are the answers to our two questions posed earlier.

\paragraph{Non-generalizing  Interpolators cannot be Flattest.}
Our first contribution is a positive answer to \textbf{Q1}, summarized informally in the following Theorem.
\begin{informal_theorem}[Thms.~\ref{thm:flattest_all}-~\ref{thm:flattest_bad}] \label{inf_thm:bad_flat}
We characterize a set of bad interpolators in Def.~\ref{def:bad_min}, such that for an interpolator $\vw \in \bR^{m(d+1)}$ in this set,  its flatness is at least $n^{\frac{c}{c+1}}$ times larger than the minimum flatness of any interpolator, with high probability, i.e., $\trace(\nabla^2 F_S(\vw))\gtrsim n^{\frac{c}{c+1}}\Upsilon^\star$. Here, $\Upsilon^\star$ is the flatness of the flattest interpolator and $c\geq 1$ quantifies the symmetry. 

\end{informal_theorem}
The class of bad interpolators considered above have their inner-layer weights, $\{\theta_j\}_{j\in [m]}$, misaligned with the true weights, $\Theta^\star$, that generated the responses. Due to overparametrization, $m\geq n$, such interpolators always exist.  However, poor alignment to $\Theta^\star$ forces a large population loss for these interpolators. Symmetry cannot decrease the flatness of these bad interpolators below a certain value. 

\paragraph{Flattest Interpolators Generalize.}
Our second contribution is a positive answer to \textbf{Q2}, but under certain assumptions. We summarize it informally in the following theorem.
\begin{informal_theorem}[Thm.~\ref{thm:gen_all_good}]\label{inf_thm:good_flat}

Under low approx. error and low label noise,  an interpolator $\vw \in \bR^{m(d+1)}$ that achieves the minimum flatness up to constants, i.e., $\trace(\nabla^2 F_S(\vw)) \asymp \Upsilon^\star$, generalizes. In particular, $F(\vw) \lesssim n^{-\min\{\frac{1}{2}, \epsilon_1, \epsilon_2\}}$, for some constants $\epsilon_1>0$ and $\epsilon_2>0$ defined in Assump.~\ref{assump:Lipschitz-like}.

\end{informal_theorem}
We note that establishing the sufficient condition for flattest interpolator to generalize requires two main assumptions beyond those used for the necessary condition -- namely it requires low approximation error and low label noise, whose precise expressions depending on $\epsilon_1$ and $\epsilon_2$ are provided in Assumption~\ref{assump:Lipschitz-like}.

Our current analysis covers both single-index and sum of single-index models and most homogeneous activations like ReLU, LeakyReLU or quadratic activation. Therefore, for our setting of learning a multi-index model with $2$-layer neural networks under symmetry, there is a relationship between flatness and generalization, as the flattest interpolators always generalize. However, the original claim in the previous works~\citep{liang2025stable,pmlr-v202-wu23r} is not precise, as interpolators with an arbitrary value of flatness might not generalize due to symmetry. Our results also show benign overfitting for the flattest interpolator for $2$-layer homogeneous NN with multi-index data, while previous works~\citep{ding24flat} could only show this for specific instances of our setting.

\subsection{Comparison to Related Works}
\label{sec:comparison}
In this section, we compare our results with the most relevant existing works and defer a more comprehensive comparison to App~\ref{sec:related_works}. The closest result to ours is ~\cite[Thm~7.1]{ding24flat}. They show that for Quadratic activations ($c=2$), when learning a sum of single-index models, where the unknown link function is the same as the activation, the flattest interpolator generalizes. Note that this is a special case of our result (cf. Inf. Thm.~\ref{inf_thm:good_flat}), with both zero approximation error and label noise. However, our main results (Thms~\ref{thm:flattest_all},\ref{thm:flattest_bad}, and \ref{thm:gen_all_good}) are much stronger than theirs, since (i) we consider homogeneous activations which includes the ReLU, LeakyReLU and Quadratic activation, (ii) we allow non-zero label noise, (iii) the link function and the model's activation can be different $\sigma^\star\neq \sigma$, and (iv) we characterize both flattest good as well as bad interpolators. One of their key insights is a ``balancedness condition'' between layer weights. Interestingly, our results exhibit a more general form of balancedness (see Prop.~\ref{rem:necessary_flatness}). Further, their proof techniques focus on equivalence to matrix factorization, however, our proofs are significantly different since we use concentration arguments to account for general non-quadratic homogeneous activations with noise and approximation errors.

Two counter-examples where there are bad interpolators that are flattest are provided in ~\citep{schliserman2025flatminimageneralizationinsights} for smooth convex optimization, and in~\citep{wen2023how} for learning a $2$-layer NN with bias for the XOR problem, a classification variant of our multi-index models. These examples are not contradictory to our findings, as both these counter-examples have specific \emph{discrete} data distributions on the features $\vx$. In particular,  ~\cite{schliserman2025flatminimageneralizationinsights} sample features from $\{0,1\}^d$, while ~\cite{wen2023how} sample features from $\{\pm1\}^d$. Their proof techniques rely heavily on these discrete distributions -- ~\cite{schliserman2025flatminimageneralizationinsights} require some coordinate of all features to be $0$, and ~\cite{wen2023how} require each feature to be an endpoint of the convex hull of all sampled features. For our Gaussian features, these events do not occur with high probability for $n\asymp d$, which is the optimal sample complexity for this problem~\citep{pmlr-v125-chen20a}. Moreover, the counter-example in ~\citep{schliserman2025flatminimageneralizationinsights} uses handcrafted loss functions that do not correspond to real tasks like regression or classification on a real data distribution. Additionally, our proofs and results in all theorems for $2$-layer NN without bias can be extended to $2$-layer networks with bias, as used in ~\citep{wen2023how}, for Gaussian features. In this case, the flattest interpolator with bias still corresponds to the case of $0$-bias for both bad and good interpolators. We provide a brief proof for this in App.~\ref{sec:wen_comparison_proof}. 

\paragraph{Organization.}
In Section~\ref{sec:setup}, we formally define our setup, including the symmetry induced by homogeneous activations. In Section~\ref{sec:bad_min_not_flattest}, we define the class of bad interpolators and characterize their flatness.In Section~\ref{sec:flattest_is_good}, we show that flattest interpolators generalize. We summarize our proof techniques in Section~\ref{sec:proof_sketch} and our conclusions in Section~\ref{sec:discussion}. We perform experiments for verifying our theoretical results in Section~\ref{sec:experiments}.

\section{Setup}
\label{sec:setup}
\paragraph{Notation.}
We use $[n]$ to denote the set $\{1,2,\ldots,n\}$. We use $\cO, o, \Omega$ and $\Theta$ to denote the usual complexity notation. For example, if a term $b=\cO(1)$, then it is order-wise at most a constant with respect to $m,n$ and $d$ . All bounds that hold with high probability hold with probability $1 - C\delta - \mathrm{poly}(m^{-b_1}, n^{-b_2}, d^{-b_3})$ for constants $C\geq 1$ and $\delta,b_1,b_2,b_3>0$. We use $\gtrsim, \lesssim$ and $\asymp$ to represent greater than, less than or equal order-wise up to poly-logarithmic factors of problem parameters and $\delta^{-1}$. $\mathrm{poly}$ is used to denote any bounded-degree polynomial in its arguments. We use $\vecspan(Q)$ to denote the linear span of a set of vectors $Q\subset \bR^{d}$, and $\bS^{d-1}$ to denote the unit sphere in $d$ dimensions. 

\paragraph{Data, Model \& Losses.} Each data point is $\vz = (\vx, y)\in  \bR^d \times \bR$,  where $\vx \sim \cN(0,\bI_d)$ are the features and $\vy$ is the label. We generate the labels as $y = \sigma^\star(\Theta^\star\vx) + \xi$ for some continuous and differentiable almost everywhere function $\sigma^\star : \bR^{m^\star} \to \bR$, a fixed  orthonormal matrix $\Theta^\star\in \bR^{m^\star \times d}$ such that $\Theta^\star(\Theta^\star)^\top  = \bI_{m^\star}$ where the $j^{th}$ row of $\Theta^\star$ is denoted by the vector $\theta_j^\star \in \bS^{d-1}$, and label noise $\xi\sim \cN(0, \zeta^2)$  is independent of $\vx$. We assume that $m^\star =\cO(1)$, as it is a multi-index model.Two common and well-known examples that are covered under our problem setup are the following --
\begin{enumerate}[nosep]
    \item Single-Index Models~\citep{oko2024neural}:  We set $m^\star=1$, and $y = \sigma^\star(\ip{\theta^\star}{\vx})+\xi$ for $\theta^\star\in \bS^{d-1}$.
    \item Sum of Single-Index Models~\citep{pmlr-v247-oko24a}: We set $y = \sum_{j=1}^{m^\star}a_j^\star \tilde{\sigma}^\star(\ip{\theta_j^\star}{\vx})+\xi$ with $\tilde{\sigma}^\star : \bR\to \bR$ being a continuous and differentiable function, and $a_j^\star \in \bR$ with $a_j^\star = \cO(1)$ $\forall j\in [m^\star]$.
\end{enumerate}    
The activation $\sigma:\bR\to \bR$ is continuous and differentiable almost everywhere, the weights of the network are $\vw^\top := [a_1, a_2, \ldots,a_m, \theta_1^\top, \theta_2^\top, \ldots, \theta_m^\top]$, and its output for a given feature $\vx\in \bR^{d}$ is $h(\vw, \vx) = \sum_{j=1}^m a_j\sigma(\ip{\theta_j}{\vx})$. We use $\psi^\star$ and $\phi^\star$ to denote the power of the signal and activation, respectively.
\begin{align*}
    \psi^\star \coloneq \Ee{\vb\sim \cN(0, \bI_{m^\star})}{(\sigma^\star)^2(\vb)}, \quad \phi^\star \coloneq \Ee{b\sim \cN(0,1)}{\sigma^2(b)}, \quad \psi^\star, \phi^\star = \Theta(1),~ \zeta^2 = o(1).
\end{align*}
We will use $\vU\in \bR^{n\times m}$ to denote the matrix of activation outputs for a fixed $\vw$. So, $\vU_{i,j} = \sigma(\ip{\theta_j}{\vx_i}), \forall i\in [n], j\in [m]$. We use $S = \{\vz_i\}_{i\in [n]}$ to denote the set of data points. Using square loss, we define the empirical loss $F_S$, population loss $F$, and their minimizers as 
\begin{align*}
    F_S(\vw) = \frac{1}{n}\sum_{\vz_i\in S} f(\vw, \vz_i) ,\quad
    F(\vw) = \Ee{\vz}{f(\vw, \vz)}, \quad \cW_S^\star \coloneq \{\vw: F_S(\vw)=0\},\quad F^\star = \min_{\vw}\quad F(\vw).
\end{align*}
If the set $\cW_S^\star$ is non-empty, then interpolators exist. Throughout this paper, we will use  ``good" and ``bad" to quantify the excess population loss of a weight $\vw \in \bR^{m(d+1)}$. A model with weight vector $\vw$, is ``good"/``bad" if
        \begin{align*}
            F(\vw) - F^\star = o(1)\quad\textbf{(Good)};\quad\quad\quad
            F(\vw) - F^\star > \kappa\cdot(\psi^\star + \zeta^2),\text{ where } \kappa \text{ is a constant. }\quad \textbf{(Bad)}.
        \end{align*}
    From the definition of bad weights, we need their excess population loss to be lower-bounded by a constant ($\kappa$) multiple of the signal power ($\psi^\star$) plus the noise variance ($\zeta^2$). For example, if a weight has excess population loss upper-bounded by $\cO\br{\mathrm{poly}(n^{-b_1}, d^{-b_2}, m^{-b_3})} $ for any constants $b_1,b_2, b_3 >0$ is a good weight.

\paragraph{Activations \& Symmetry.}
Note that the symmetry in Prop.~\ref{lem:dinh} is a by-product of using homogeneous activations. A function $g:\bR\to\bR$ is $c$-Homogeneous if $\forall \alpha >0, b\in \bR$, $g(\alpha b) = \alpha^c g(b)$. Our activation function $\sigma$ for the $2$-layer NN is a special class of homogeneous activations satisfying the following assumption.
\begin{assumption}[Piece-wise Polynomial Activation]\label{assump:activation} For all $b\geq0$, $\sigma(b) = c'\abs{b}^c$, and for all $b<0$, $\sigma(b)=c''\abs{b}^c$,  where $c', c'' \in \bR$ are some constants such that at most one of them is $0$, $c'+c'' \neq 0$, and $c \geq 1$. 
\end{assumption}
The above assumption forces $\sigma$ to be $c$-Homogeneous. All even-degree monomials of degree $c$, as well as LeakyReLU, ReLU and ReLU$^c$~\citep{he2024expressivityapproximationpropertiesdeep} satisfy the above assumption. The above assumption does not include odd-degree monomials and monomials with $c<1$ due to technical difficulty in our proof, explained in Section~\ref{sec:proof_sketch}. It also does not include non-polynomial activations like tanh and sigmoid. We define the symmetry induced by homogeneous activations used in Prop.~\ref{lem:dinh}.
\begin{definition}[Rescaling Symmetry]\label{def:homogen}
For any $\alpha_j>0, j\in [m]$, and any $\vw \in \bR^{m(d+1)}$ consisting of weights $\{a_j\}_{j\in [m]}$ and $\{\theta_j\}_{j\in [m]}$, $\widetilde{\vw} \in \bR^{m(d+1)}$ with weights $\{a_j\alpha_j^{-c}\}_{j\in [m]}$ and $\{\alpha_j \theta_j\}_{j\in [m]}$ satisfies, $h(\vw, \vx) = h(\widetilde{\vw}, \vx), \,\,\forall \vx \in \bR^d$.
\end{definition}
This symmetry is in the space of model weights, and is quantified by the scalars $\{\alpha_j\}_{j\in [m]}$. Existing works~\citep{neyshabur, zhao2026symmetry} have extensively studied the impact of this symmetry on loss landscapes. As the output of the model does not change with this symmetry, both population and empirical loss remain the same. In the next section, we will see how this symmetry changes the flatness of an interpolator.

\subsection{Flatness under Rescaling Symmetry}
\label{sec:rescaling}
The flatness of an interpolator $\vw$ is defined as $\trace(\nabla^2 F_S(\vw))$. Note that several definitions have been used in literature for flatness~\citep{pmlr-v202-wu23r,pmlr-v70-dinh17b,schliserman2025flatminimageneralizationinsights}, among which ~\cite{wu_alignment_2022,ding24flat,gatmiry2023what,wen2023how} use our definition. For our setting, the flatness of an interpolator $\vw$ simplifies to $\trace(\nabla^2 F_S(\vw))= \frac{1}{n}\sum_{i=1}^{n} \norm{\nabla_{\vw} h(\vw, \vx_i)}_2^2$.

Due to Rescaling Symmetry (Def.~\ref{def:homogen}), $h(\vw, \vx)$ remains the same for different $\{\alpha_j\}_{j\in [m]}$, however, $\nabla_{\vw}h(\vw, \vx)$ changes, thus changing flatness. We first define the flattest interpolator obtained by applying rescaling symmetry to a specific interpolator $\vw$. To define this, we need to set a reference point for a given $\vw\in \bR^{m(d+1)}$. both the inner layer weights $\{\theta_j\}_{j\in [m]}$ and the outer layer weights $\{a_j\}_{j\in [m]}$ can be changed using rescaling symmetry. For our reference, we will fix the inner-layer weights to be unit norm, $\theta_j\in \bS^{d-1}, \,\forall j\in [m]$, and the outer-layer weights $\va$ chosen such that $\vw$ interpolates. The following Lemma quantifies $\Upsilon(\vw)$, the flatness of the flattest interpolator obtained by applying the rescaling symmetry to $\vw$.
\begin{lemma}[Flattest Interpolator under Rescaling Symmetry]\label{lem:flattest_rescaling}
If Assump.~\ref{assump:activation} holds, for an interpolator $\vw$ such that $\theta_j\in \bS^{d-1}, \,\forall j\in [m]$, with high probability,
\begin{align*}
    &\Upsilon(\vw) \coloneq \min_{\hat{\vw} \in \cW_{\text{rescale}}(\vw)}\trace(\nabla^2 F_S(\hat{\vw})) \asymp  d^{\frac{c}{c+1}}\norm{\va}_{\frac{2c}{c+1}}^{\frac{2c}{c+1}}, \\
    \text{where } \cW_{\text{rescale}}(\vw) =& \{\hat{\vw}^\top = \begin{bmatrix}
      \hat{a}_1, \ldots, \hat{a}_m, \hat{\theta}_1^\top, \ldots, \hat{\theta}_m^\top
  \end{bmatrix} \text{ s.t. } \hat{\theta}_j = \alpha_j\theta_j, \hat{a}_j =  \alpha_j^{-c}a_j,\forall\alpha_j>0,\,\forall j\in [m]\}.  
\end{align*}
\end{lemma}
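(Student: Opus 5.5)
We first write the flatness explicitly. Since $\trace(\nabla^2 F_S(\vw)) = \frac1n\sum_i \norm{\nabla_\vw h(\vw,\vx_i)}_2^2$, and $\partial h/\partial a_j = \sigma(\ip{\theta_j}{\vx_i})$, $\nabla_{\theta_j} h = a_j\sigma'(\ip{\theta_j}{\vx_i})\vx_i$, the flatness splits over neurons as
\begin{align*}
\trace(\nabla^2 F_S(\vw)) = \sum_{j=1}^m \Big( \frac1n\sum_{i=1}^n \sigma^2(\ip{\theta_j}{\vx_i}) + a_j^2\,\frac1n\sum_{i=1}^n \sigma'(\ip{\theta_j}{\vx_i})^2\norm{\vx_i}_2^2\Big).
\end{align*}
For $\hat{\vw}\in\cW_{\text{rescale}}(\vw)$, we substitute $\hat\theta_j=\alpha_j\theta_j$ and $\hat a_j=\alpha_j^{-c}a_j$. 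By $c$-homogeneity of $\sigma$ and $(c-1)$-homogeneity of $\sigma'$, the $j$-th summand becomes $\alpha_j^{2c} A_j + \alpha_j^{-2c}\alpha_j^{2(c-1)} a_j^2 B_j = \alpha_j^{2c}A_j + \alpha_j^{-2}a_j^2B_j$. Here $A_j=\frac1n\sum_i\sigma^2(\ip{\theta_j}{\vx_i})$ and $B_j=\frac1n\sum_i\sigma'(\ip{\theta_j}{\vx_i})^2\norm{\vx_i}^2$ are evaluated at the unit-norm reference $\theta_j$. The minimization therefore decouples across $j$ into scalar problems in $\alpha_j>0$, and each can be solved exactly.

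For a neuron with $a_j\neq 0$, minimizing $g(\alpha)=\alpha^{2c}A_j+\alpha^{-2}a_j^2B_j$ gives the stationarity condition $c\alpha^{2c+2}A_j = a_j^2B_j$. Plugging back in yields the minimum value $C_c\,A_j^{\frac{1}{c+1}}B_j^{\frac{c}{c+1}}\abs{a_j}^{\frac{2c}{c+1}}$, where $C_c=(1+1/c)c^{\frac{1}{c+1}}$ is a constant. This is a weighted AM-GM identity. For neurons with $a_j=0$, we let $\alpha_j\to0$, so their contribution is $0$, which is consistent with $\abs{a_j}^{2c/(c+1)}=0$. We note that strictly speaking this is an infimum, which is fine up to the convention. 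Summing over $j$ gives
\begin{align*}
\Upsilon(\vw)= C_c\sum_j A_j^{\frac1{c+1}}B_j^{\frac{c}{c+1}}\abs{a_j}^{\frac{2c}{c+1}}.
\end{align*}
It then suffices to show that, with high probability and uniformly over $\theta_j\in\bS^{d-1}$, $A_j\asymp 1$ and $B_j\asymp d$. Given these, $\Upsilon(\vw)\asymp d^{\frac{c}{c+1}}\norm{\va}_{\frac{2c}{c+1}}^{\frac{2c}{c+1}}$.

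The main obstacle is the concentration step, which must hold uniformly over the sphere because $\theta_j$ may depend on the data (interpolators are data-dependent). For $B_j$, we first use $\norm{\vx_i}^2\in[d/2,2d]$ for all $i$ w.h.p. (chi-square concentration plus a union bound over $n\asymp d$ points). This reduces the problem to showing $\frac1n\sum_i \sigma'(\ip{\theta}{\vx_i})^2\asymp 1$. Likewise, $A_j$ requires $\frac1n\sum_i\sigma^2(\ip\theta{\vx_i})\asymp 1$. By Assumption~\ref{assump:activation}, both integrands are of the form $\kappa_\pm\abs{\ip\theta{\vx_i}}^{p}$ on the two half-lines, with $p=2c$ or $2c-2$ and at least one of $\kappa_\pm$ nonzero. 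Their expectations are therefore positive constants ($\phi^\star$ and its analogue).

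For the upper bounds, we bound $\frac1n\sum_i\abs{\ip\theta{\vx_i}}^p \le \norm{\mX\theta}_p^p/n$ uniformly via $\norm{\mX}_{op}\lesssim\sqrt n+\sqrt d$ together with an $\epsilon$-net. When $p\ge2$ we also control the maximum $\max_i\abs{\ip\theta{\vx_i}}$ by $\sqrt d$ and lose only polylog factors, since $n\asymp d$; absorbing such factors is allowed by $\asymp$. For the lower bounds, we use a small-ball/anti-concentration argument. For each fixed $\theta$, a constant fraction of the points satisfy $\ip\theta{\vx_i}\ge \tau$ (or $\le -\tau$, whichever side has the nonzero coefficient), by a Chernoff bound with failure probability $e^{-\Omega(n)}$. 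We then take a union bound over an $\epsilon$-net of size $e^{O(d)}$, which is affordable since $n\asymp d$ with a large enough constant. Finally, we extend to all of $\bS^{d-1}$ using the bound $\norm{\mX(\theta-\theta')}_2\lesssim \epsilon\sqrt n$, which shows that only a small fraction of points can move by more than $\tau/2$. The case $c=1$ (where $\sigma'$ is a step function) is handled by the same counting argument.
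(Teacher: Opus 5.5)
Your first step coincides with the paper's Lemma~\ref{lem:flattest_rescaling_full}. It has the same per-neuron scalar problem and the same stationarity condition, and your constant $(1+1/c)c^{\frac{1}{c+1}}$ equals the paper's $c^{-\frac{c}{c+1}}+c^{\frac{1}{c+1}}$; your remark that $a_j=0$ only gives an infimum is a fair refinement. The gap is in the second step, where you try to show $A(\theta)\asymp 1$ and $B(\theta)\asymp d$ (the paper's $B_1(\theta)$, $B_2(\theta)$) uniformly over $\theta\in\bS^{d-1}$.

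\textbf{The uniform upper bound is false for $c>1$.} Take $\theta=\pm\vx_1/\norm{\vx_1}_2$, with the sign chosen so that $\ip{\theta}{\vx_1}=\pm\norm{\vx_1}_2$ lands on the piece of $\sigma$ with nonzero coefficient. Then $A(\theta)\ge\frac1n\sigma^2(\ip{\theta}{\vx_1})\asymp d^{c}/n\asymp d^{c-1}$. Likewise $B(\theta)\gtrsim d+d^{c-1}$, which exceeds $d$ polynomially once $c>2$. This is exactly what your step ``control $\max_i\abs{\ip{\theta}{\vx_i}}$ by $\sqrt d$'' costs: it gives $\frac1n\sum_i\abs{\ip{\theta}{\vx_i}}^p\lesssim d^{\frac{p-2}{2}}$. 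That is a polynomial factor, not a polylogarithmic one, so it cannot be absorbed into $\asymp$.

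\textbf{The target statement itself fails in your reading.} The two-sided conclusion you aim for, with inner weights allowed to depend on the sample, does not hold. For $c=2$, a neuron with $\theta_j=\vx_1/\norm{\vx_1}_2$ has $A(\theta_j)^{1/3}B(\theta_j)^{2/3}\gtrsim d\gg d^{2/3}$. So it contributes far more than $d^{\frac{c}{c+1}}\abs{a_j}^{\frac{2c}{c+1}}$. Separately, your uniform lower bound silently needs $n\ge Cd$ with $C$ large: if $n<d$, some $\theta$ is orthogonal to every $\vx_i$ and $A(\theta)=0$.

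\textbf{How the paper avoids this.} Lemma~\ref{lem:bounds_b_theta} treats the inner-layer weights of the given $\vw$ as fixed. For each $\theta_j$, split $\norm{\vx_i}_2^2=\ip{\theta_j}{\vx_i}^2+\norm{(\bI_d-\theta_j\theta_j^\top)\vx_i}_2^2$, where the second factor is independent of $\ip{\theta_j}{\vx_i}$. Then $B_1(\theta_j)$ and the two resulting pieces of $B_2(\theta_j)$ are empirical means of i.i.d.\ sub-Weibull variables. They concentrate around $\phi^\star$, $\Theta(1)$ and $\Theta(d)$, and a union bound over the $m$ neurons costs only $\log m$. Replacing your net argument by this pointwise one completes the proof under the statement's fixed-$\vw$ reading.

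\textbf{What survives of your route.} Your small-ball/net argument does give $\Upsilon(\vw)\gtrsim d^{\frac{c}{c+1}}\norm{\va}_{\frac{2c}{c+1}}^{\frac{2c}{c+1}}$ simultaneously for all, possibly data-dependent, unit-norm inner weights, provided $n\ge Cd$. That is the direction used in Thms.~\ref{thm:flattest_all} and~\ref{thm:flattest_bad}, and it is stronger than what the paper proves. For $c=1$ your two-sided uniform argument also goes through.
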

Here, $\cW_{\text{rescale}}$ is the set of weights (or the equivalence class) obtained from $\vw$ after rescaling. $\Upsilon(\vw)$ is the minimum flatness in this equivalence class.  The proof of this lemma, provided in App.~\ref{sec:lem_flattest_rescaling_proof}, is quite direct, as the objective is a convex and separable function in the scalars $\{\alpha_j\}_{j\in [m]}$ that define the equivalence class. 

The implications of this lemma are substantial. First, the quantity $\Upsilon(\vw)$ removes the impact of symmetry. Second, it separates the contributions of inner and outer-layer weights. Here, the additional $d^{\frac{c}{c+1}}$ term is obtained from the inner-layer weights $\{\theta_j\}_{j\in [m]}$ being unit norm. The remaining term in $\Upsilon(\vw)$ is $\norm{\va}_{p}^p$, which is a convex $\ell_p$-norm with $p=\frac{2c}{c+1}\in [1,2)$ for $c\geq 1$. Third, Lemma~\ref{lem:flattest_rescaling} does not put any assumption on $\vw$ or the data distribution, apart from the fact that $\vw$ interpolates on the training set $S$.

As we have removed the contribution of rescaling symmetry, to check if a bad interpolator can be made flattest by rescaling, we need to evaluate  $\Upsilon(\vw)$ from Lemma~\ref{lem:flattest_rescaling} for a bad interpolator. In the next section, we consider a specific class of bad interpolators, where we can appropriately bound $\Upsilon(\vw)$ to answer \textbf{Q1}.

\section{Bad Interpolators of Multi-Index Models are Not Flattest}
\label{sec:bad_min_not_flattest}
In this section, we answer \textbf{Q1}, thus proving Inf. Thm.~\ref{inf_thm:bad_flat}. The proof involves two steps, the first being a bound on the minimum flatness of an interpolator (Thm~\ref{thm:flattest_all}) and the second being a lower bound on the flatness of bad interpolators (Thm.~\ref{thm:flattest_bad}).  We first define a natural class of bad interpolators for multi-index data distributions.

\begin{definition}[$(\rho^\star, \rho',\kappa)$-Bad Interpolator]\label{def:bad_min}
For $\rho^\star,\rho' \in [0,\frac{1}{2}]$, and a constant $\kappa\in (0,\frac{1}{2})$, we say that an interpolator $\vw\in \bR^{m(d+1)}$ is $(\rho^\star,  \rho',  \kappa)$-bad if,
\begin{enumerate}[nosep]
    \item $\forall j\in [m],\quad \theta_j \in \bS^{d-1}, \quad  \ip{\theta_j}{\theta_{j'}} \leq \rho', \forall j\neq j' \in [m]$.
    \item $F(\vw) - F^\star \geq \kappa(\zeta^2 + \psi^\star)$.
    \item $\abs{\ip{\vv_1}{\vv_2}} \leq \rho^\star, \quad \forall \;\vv_1\in \vecspan(\{\theta_j\}_{j\in [m]})\cap \bS^{d-1}, \forall\;  \vv_2 \in \vecspan(\{\theta_j^\star]_{j\in [m^\star]})\cap \bS^{d-1}$.
\end{enumerate}
\end{definition}

Let us understand each condition in the above definition. The first condition says that inner-layer weights are unit-norm and well-separated. The separation ensures that $\vw$ can provably interpolate (cf. Lemma~\ref{lem:interpolation}). The second condition ensures that the population loss is large, by a constant $\kappa$ for this interpolator, so that it is indeed a bad interpolator. The third condition requires any unit vector in the span of the inner-layer weights, $\vecspan(\{\theta_j\}_{j\in [m]})$, to be misaligned with all vectors in the span of true directions, $\vecspan(\{\theta_j^\star\}_{j\in [m]})$. A consequence of this condition is that each layer weight is also misaligned with the true direction, i.e., $\abs{\ip{\theta_j}{\theta_{j'}^\star}}\leq \rho^\star, \forall j\in [m], j'\in [m^\star]$. For multi-index models, this condition also ensures that the population loss is large. Note that the bad interpolators defined above always exist: we provide sufficient conditions for their existence and an example in App.~\ref{sec:proof_flattest_bad}.

 To obtain a lower bound on $\Upsilon(\vw)$ for an interpolator $\vw$, we state additional assumptions required on the unknown link function $\sigma^\star$ that generates the labels $y$ and the width of the network.

\begin{assumption}[Link Function]\label{assump:link}
    For Gaussian random vector, $\vb\sim \cN(\mu, \varsigma)$ where $\mu\in \bR^{m^\star}$ and $\varsigma\in \bR^{m^\star \times m^\star}$ is a Positive Semi-Definite matrix, the link function satisfies the following properties:
    \begin{align*}
        &\textbf{Mean}: \|\Ee{\vb}{\sigma^\star(\vb)}\|_2 =\Theta\br{\norm{\mu}_2^{c} + \norm{\varsigma}_F^{\frac{c}{2}}},\quad \quad \textbf{Variance} : \Varr{\vb}{\sigma^\star(\vb)}  = \Omega\br{\norm{\varsigma}_F \norm{\mu}_2^{2c-2} +  \norm{\varsigma}_F^{c}},\\
        &\textbf{Tails} : \sigma^\star(\vb)\text{ is  a }(\frac{c}{2}, \cO(\norm{\mu}_2^c + \norm{\varsigma}_F^{\frac{c}{2}}))\text{-Sub-Weibull random vector.}
    \end{align*}    
\end{assumption}

Note that a $(\varrho, K)$-Sub-Weibull random variable has its $k^{th}$ central moment bounded by $\cO(K k^{\varrho})$ ~\citep{Vladimirova_2020,sub_weibull,sharper_sub_weibull} for $\rho,K>0$. Sub-Gaussian and Sub-exponential distributions are also $(\varrho, K)$-Sub-Weibull with $\varrho = \frac{1}{2}$ and $\varrho = 1$ respectively, with $K$ being their Sub-exponential or Sub-Gaussian parameters. We provide an overview of Sub-Weibull distributions in App.~\ref{sec:link_func}. The above assumption forces a polynomial bound on the link function, which is in fact satisfied by our activations in   Assump.~\ref{assump:activation}. 
\begin{assumption}[Width and Inner-Layer Dependence]\label{assump:width}
    Let $C\leq m \leq C' n$ for some constants, $C'>C>1$ and $\Omega(1) = \lambda_{\min}(\bar{\Phi}) < \lambda_{\max}(\bar{\Phi}) = \cO(1)$, where $\bar{\Phi}\in \bR^{m\times m}$ is the covariance matrix of activations, $\bar{\Phi} = \frac{1}{n}\E{(\vU-\E{\vU})^\top(\vU-\E{\vU})}$.
\end{assumption}
Note that the above assumption ensures that the width of the network is only a constant times larger than $n$, which ensures interpolation. Further, we also assume that the covariance matrix of activations is well-conditioned, which forces all inner-layer weights to not be too similar. This is a technical condition which we explain in Section~\ref{sec:proof_sketch}, and is satisfied by Def.~\ref{def:bad_min} automatically.

Our first result is a lower bound on $\Upsilon(\vw)$  for any interpolator.
\begin{theorem}[Flattest Interpolator]\label{thm:flattest_all}
If Assump.~\ref{assump:activation},~\ref{assump:link} and ~\ref{assump:width} hold, then, for an interpolator $\vw$, with high probability,   
$\Upsilon(\vw) \gtrsim \Upsilon^\star\coloneq d^{\frac{c}{c+1}}m^{-\frac{c-1}{c+1}}$.
\end{theorem}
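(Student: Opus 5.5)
By Lemma~\ref{lem:flattest_rescaling}, $\Upsilon(\vw)\asymp d^{\frac{c}{c+1}}\norm{\va}_p^p$ with $p=\frac{2c}{c+1}\in[1,2)$, where we normalize $\theta_j\in\bS^{d-1}$. The theorem therefore reduces to the claim that, with high probability, every interpolator with unit-norm inner weights satisfies $\norm{\va}_p^p\gtrsim m^{-\frac{c-1}{c+1}}$.

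Hölder's inequality over $m$ coordinates gives $\norm{\va}_2\le m^{\frac1p-\frac12}\norm{\va}_p$. Since $\frac{p}{2}-1=-\frac{1}{c+1}$, this yields
\begin{align*}
\norm{\va}_p^p\ \ge\ m^{\frac{p}{2}-1}\norm{\va}_2^p \;=\; m^{-\frac{1}{c+1}}\norm{\va}_2^{p}.
\end{align*}
The exponent target is $-\frac{c-1}{c+1}=-\frac{1}{c+1}\cdot(c-1)$. I will try the stronger intermediate bound $\norm{\va}_2^2\gtrsim m^{-1}$. This gives $\norm{\va}_2^p\gtrsim m^{-\frac{c}{c+1}}$ and hence $\norm{\va}_p^p\gtrsim m^{-\frac{c+1}{c+1}}=m^{-1}$, which is weaker than the target. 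So the argument has to be sharper: I need $\norm{\va}_2^2\gtrsim m^{-\frac{c-2}{c}}$, or a different route.

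A cleaner alternative is to lower bound $\norm{\va}_1$ and use $\norm{\va}_p^p\ge \norm{\va}_1^p\, m^{1-p}$ (Hölder again). Since $1-p=-\frac{c-1}{c+1}$, it suffices to show $\norm{\va}_1\gtrsim 1$. This is the plan I commit to.

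To show $\norm{\va}_1\gtrsim1$, I use interpolation. For every $i$, $y_i=\sum_j a_j\sigma(\ip{\theta_j}{\vx_i})$, so $\frac1n\sum_i y_i^2=\frac1n\norm{\vU\va}_2^2$. The left side concentrates around $\psi^\star+\zeta^2=\Theta(1)$ by the Sub-Weibull tails in Assump.~\ref{assump:link}. For the right side, $\frac1n\norm{\vU\va}_2^2\le \norm{\va}_1^2\max_j\frac1n\norm{\vU_{:,j}}_2^2$ by convexity (Jensen on the weighted columns). Each column satisfies $\frac1n\norm{\vU_{:,j}}_2^2=\frac1n\sum_i\sigma^2(\ip{\theta_j}{\vx_i})$. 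Because $\theta_j$ is a unit vector, this is an average of i.i.d.\ copies of $\sigma^2(b)$ with $b\sim\cN(0,1)$, whose mean is $\phi^\star=\Theta(1)$.

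The delicate point is that this bound must hold uniformly over all unit $\theta_j$, since the interpolator may depend on the data. I will control $\sup_{\theta\in\bS^{d-1}}\frac1n\sum_i\abs{\ip{\theta}{\vx_i}}^{2c}$. For $c=1$ this is $\norm{\frac1n X^\top X}_{op}=\cO(1)$ since $n\asymp d$. For $c>1$ it is at most $\max_i\norm{\vx_i}^{2c-2}\cdot\norm{\frac1nX^\top X}_{op}\lesssim d^{c-1}$, which is too lossy.

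This uniform control is the main obstacle. I will handle it in two steps. First, I bound the operator norm $\norm{\vU}_{op}^2/n=\cO(1)$ using the covariance condition on $\bar\Phi$ in Assump.~\ref{assump:width}, together with a mean term $\E{\vU}$ of rank one with norm $\sqrt{nm}$. Second, I replace the $\ell_1$ argument by $\frac1n\norm{\vU\va}^2\le \frac{2}{n}\norm{(\vU-\E\vU)\va}^2+\frac2n\norm{\E{\vU}\va}^2\lesssim \norm{\va}_2^2+ m\,\bar a^2$, where $\bar a$ is the average of the $a_j$ (the mean term contributes $(\phi^\star\text{-type const})\cdot(\sum_j a_j)^2\le \norm{\va}_1^2$).

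This gives $1\lesssim \norm{\va}_2^2+\norm{\va}_1^2$. In either case the bound is enough: if $\norm{\va}_1\gtrsim1$ we are done by the Hölder step above. If instead $\norm{\va}_2\gtrsim1$, then $\norm{\va}_p\ge\norm{\va}_2\gtrsim 1$, which is even stronger than needed because $-\frac{c-1}{c+1}\le0$. Multiplying by $d^{\frac{c}{c+1}}$ then yields $\Upsilon(\vw)\gtrsim d^{\frac{c}{c+1}}m^{-\frac{c-1}{c+1}}=\Upsilon^\star$.
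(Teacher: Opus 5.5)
Your proof is correct for the statement as the paper intends it: the inner weights are fixed, and $\va$ is any solution of $\vU\va=\vy$. It takes a different and simpler route than the paper's.

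The paper lower-bounds $\min_{\vU\va=\vy}\norm{\va}_p^p$, with $p=\frac{2c}{c+1}$, by Lagrangian duality. It uses the dual certificate $\vr\propto(\vU\vU^\top)^{-1}\vy$, which yields $(\norm{\va_{\min,\ell_2}}_2^2/\norm{\va_{\min,\ell_2}}_{\frac{2c}{c-1}})^{p}$ (Lemma~\ref{lem:min_l2_lb}). It then needs two further ingredients:
\begin{itemize}
\item a delocalization property $\norm{\va_{\min,\ell_2}}_\infty\lesssim m^{-1/2}\norm{\va_{\min,\ell_2}}_2$, which it argues only heuristically;
\item the spectral bound $\norm{\va_{\min,\ell_2}}_2\ge\sigma_{\max}^{-1}(\vU)\norm{\vy}_2\gtrsim m^{-1/2}$ from Lemma~\ref{lem:eigen_U}, which is where Assump.~\ref{assump:width} enters.
\end{itemize}

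Your route combines H\"older's $\norm{\va}_p^p\ge m^{1-p}\norm{\va}_1^p$ with $\norm{\vy}_2=\norm{\vU\va}_2\le\norm{\va}_1\max_j\norm{\vU_{:,j}}_2$. This is equivalent to plugging the cruder certificate $\vr\propto\vy$ into the same dual. It needs only two facts: $\norm{\vy}_2^2\gtrsim n$, and $\max_j\frac1n\norm{\vU_{:,j}}_2^2=\max_j B_1(\theta_j)\lesssim 1$. The latter is already established in the proof of Lemma~\ref{lem:bounds_b_theta}, which you use implicitly through Lemma~\ref{lem:flattest_rescaling}. You therefore avoid Assump.~\ref{assump:width}, the eigenvalue gap, and the delocalization step.

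What the paper's heavier machinery buys is reuse. The same min-$\ell_2$/spectral argument gives Thm.~\ref{thm:flattest_bad}, where $\norm{\va_{\min,\ell_2}}_2^2\gtrsim\trace((\vU\vU^\top)^{-1})\asymp n/m$ produces the extra factor $n^{\frac{c}{c+1}}$. Your column-norm bound is insensitive to how $\vy$ aligns with the columns of $\vU$, so it cannot produce that separation.

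Some corrections to the proposal:
\begin{itemize}
\item The uniformity concern that motivates your detour is misplaced. The theorem is a fixed-inner-weight statement, and so is Lemma~\ref{lem:flattest_rescaling}: $B(\theta_j)\asymp d^{\frac{c}{c+1}}$ is itself a fixed-$\theta$ concentration result. In that setting, a union bound over the $m$ columns already controls $\max_j\frac1n\norm{\vU_{:,j}}_2^2$, and your first (triangle-inequality) argument finishes the proof.
\item The operator-norm step does not deliver uniformity either. Assump.~\ref{assump:width} and Lemma~\ref{lem:bai_yin_subweibull} are also fixed-$\Theta$ statements. Uniformly over data-dependent $\theta$ the bound fails for $c>1$: with $\theta_1=\pm\vx_1/\norm{\vx_1}_2$, the entry $\sigma(\ip{\theta_1}{\vx_1})\asymp d^{c/2}$ forces $\norm{\vU-\E{\vU}}_{\mathrm{op}}^2/n\gtrsim d^c/n$.
\item The bound $\norm{\vU}_{\mathrm{op}}^2/n=\cO(1)$ holds only for the centered matrix; the uncentered one has $\norm{\vU}_{\mathrm{op}}^2\asymp mn$.
\item The mean term is $\frac1n\norm{\E{\vU}\va}_2^2=\bar\phi^2(\sum_j a_j)^2$, not $m\bar a^2$. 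Your conclusion $\lesssim\norm{\va}_1^2$ is still correct.
\end{itemize}
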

We will compare $\Upsilon^\star$ derived above with $\Upsilon(\vw)$ for a bad interpolator as defined in Def.~\ref{def:bad_min}.
\begin{theorem}[Flattest Bad Interpolator]\label{thm:flattest_bad}
Suppose Assump.~\ref{assump:activation},~\ref{assump:link} and \ref{assump:width} hold . Then, for a  $(\rho^\star, \rho', \kappa)$-bad interpolator $\vw$ (Def. ~\ref{def:bad_min}), with high probability,
\begin{align*}
    \Upsilon(\vw)  \gtrsim 
        (dn)^{\frac{c}{c+1}}m^{-\frac{c-1}{c+1}},~~ \text{if~~ } \rho^\star = o((m^\star)^{-\frac{1}{4}}).
\end{align*}
\end{theorem}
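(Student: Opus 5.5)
Starting from Lemma~\ref{lem:flattest_rescaling}, it suffices to lower bound $\norm{\va}_{p}^{p}$ with $p=\frac{2c}{c+1}$ for a bad interpolator, and the goal is $\norm{\va}_p^p \gtrsim n^{\frac{c}{c+1}} m^{-\frac{c-1}{c+1}}$. By H\"older, $\norm{\va}_p^p \geq m^{1-\frac{p}{2}}\norm{\va}_2^{p} = m^{\frac{1}{c+1}}\norm{\va}_2^{\frac{2c}{c+1}}$. So it is enough to prove $\norm{\va}_2^2 \gtrsim n/m$. This is the key estimate. (In Thm.~\ref{thm:flattest_all} the analogous step only gives $\norm{\va}_2^2\gtrsim 1/m$, which explains the extra factor $n^{\frac{c}{c+1}}$.) Given $\norm{\va}_2^2\gtrsim n/m$, we get $\norm{\va}_p^p\gtrsim m^{\frac{1}{c+1}}(n/m)^{\frac{c}{c+1}} = n^{\frac{c}{c+1}}m^{-\frac{c-1}{c+1}}$, and multiplying by $d^{\frac{c}{c+1}}$ gives the claim.

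To lower bound $\norm{\va}_2$, I use interpolation: $\vU\va = \vy$, so $\norm{\vy}_2 \leq \norm{\vU}_{op}\norm{\va}_2$. I then split $\vU = \E{\vU} + (\vU-\E{\vU})$. Since $\theta_j\in\bS^{d-1}$, the mean $\E{\vU} = \mu_\sigma \mathbf{1}\mathbf{1}^\top$ is rank one, with $\mu_\sigma=\E{\sigma(b)}$. The centered part has independent rows with covariance $\bar\Phi$. By Assump.~\ref{assump:width} and a sub-Weibull matrix concentration bound (heavy-tailed row version, e.g.\ via truncation), its operator norm is $\lesssim \sqrt{n}+\sqrt{m}\asymp\sqrt n$. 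The rank-one mean part contributes $\mu_\sigma\sqrt{nm}$, which is too big, so I handle it separately by projection. Let $P$ be the projector onto $\mathbf{1}^\perp$ in $\bR^n$. Then $P\vy = P(\vU-\E{\vU})\va$, and hence $\norm{P\vy}_2\lesssim \sqrt{n}\norm{\va}_2$. It remains to show $\norm{P\vy}_2^2\gtrsim n$, i.e.\ that the empirical variance of the labels is of order $\psi^\star$-like. This follows from Assump.~\ref{assump:link} (variance with $\mu=0,\varsigma=\bI$ gives $\Omega(1)$), together with $\zeta^2\geq 0$ and sub-Weibull concentration of $\frac1n\sum_i(y_i-\bar y)^2$. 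This yields $\norm{\va}_2^2\gtrsim 1$, which is not yet $n/m$.

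So the misalignment condition must do real work, and this is the main obstacle. The point is that $\vy$ is nearly independent of the column space of $\vU$, so the fit costs norm through all $n$ directions rather than through the top one. My plan is to decompose each $\vx_i = \Pi^\star\vx_i + \Pi^\perp\vx_i$, where $\Pi^\star$ is the projection onto $\vecspan(\theta^\star_j)$. Condition 3 of Def.~\ref{def:bad_min} with $\rho^\star=o((m^\star)^{-1/4})$ implies that the features $\ip{\theta_j}{\vx_i}$ depend on $\Theta^\star\vx_i$ only through a component of size $o(1)$. I would condition on the $\Theta^\star$-parts and view $\vy$ as nearly a fixed vector relative to the randomness generating $\vU$. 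Then $\ip{\vy}{\vU\va}$ can be controlled through the smallest singular value structure. More precisely, write $\norm{\vy}_2^2 = \va^\top \vU^\top \vy$. The vector $\vU^\top P\vy$ has $m$ coordinates, each a sum of $n$ nearly mean-zero, nearly uncorrelated terms (the residual correlation is $O(\rho^{\star 2}\sqrt{m^\star})=o(1)$ by Assump.~\ref{assump:link}-type moment bounds). Each coordinate is therefore of size $\sqrt{n}$, giving $\norm{\vU^\top P\vy}_2\lesssim\sqrt{nm}$, plus a bias term $o(n)\sqrt m$ from the correlation. This gives $n\lesssim\norm{P\vy}^2 \leq \norm{\va}_2\bigl(\sqrt{nm}+o(n\sqrt m)\bigr)$. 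That still does not reach $\sqrt{n/m}$ unless the bias term is controlled more tightly. I would therefore instead use $\vy\approx$ independent of $\vU$ plus $\lambda_{\min}$-type control: $\va=\vU^{\dagger}\vy$ on $\mathrm{col}(\vU)$, and $\norm{\va}_2 \geq \norm{\vy}_2/\sigma_{\max}(\vU)$ on the centered component with $\sigma_{\max}\lesssim\sqrt n$. Combined with $m\leq C'n$, this requires showing that $\vy$ lies essentially in the span of $\vU$ in a "generic" direction, which makes $\norm{\va}_2^2\asymp n\cdot\mathrm{tr}((\vU^\top\vU)^{-1})/m \gtrsim n/m$ via the well-conditioned $\bar\Phi$. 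Making this averaging precise using near-independence from misalignment is the step I expect to be hardest.
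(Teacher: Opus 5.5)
Your opening reduction fails. For $p=\frac{2c}{c+1}\in[1,2)$, H\"older gives $\norm{\va}_p^p=\sum_j\abs{a_j}^p\cdot 1\le m^{1-\frac p2}\norm{\va}_2^p$, which is the reverse of the inequality you use. The only general lower bound is $\norm{\va}_p\ge\norm{\va}_2$; for instance $\va=\ve_1$ has $\norm{\va}_1=1<\sqrt m\,\norm{\va}_2$. With the correct inequality, even $\norm{\va}_2^2\gtrsim n/m$ yields only $\Upsilon(\vw)\gtrsim d^{\frac c{c+1}}(n/m)^{\frac c{c+1}}$. That is a factor $m^{\frac1{c+1}}$ short of the claim. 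For ReLU with $m\asymp n$ it equals $\sqrt d\asymp\Upsilon^\star$, so the separation disappears entirely. No $\ell_2$ bound on the actual outer weights can repair this. The vector $\va$ is an arbitrary point of the affine set $\{\vU\va=\vy\}$, and nothing in your argument controls $\norm{\va}_\infty$ or how spread out $\va$ is.

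The missing idea is the dual certificate behind Lem.~\ref{lem:min_l2_lb}. Set $\va_{\min,\ell_2}=\vU^\top(\vU\vU^\top)^{-1}\vy$. Every interpolating $\va$ satisfies $\ip{\va}{\va_{\min,\ell_2}}=\vy^\top(\vU\vU^\top)^{-1}\vy=\norm{\va_{\min,\ell_2}}_2^2$. H\"older in the correct direction, with $p'=\frac{2c}{c-1}$, then gives $\norm{\va}_p\ge\norm{\va_{\min,\ell_2}}_2^2/\norm{\va_{\min,\ell_2}}_{p'}$ uniformly over all interpolators; this is exactly what the paper's Lagrangian computation produces.

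Besides $\norm{\va_{\min,\ell_2}}_2^2\gtrsim n/m$, you then need the delocalization $\norm{\va_{\min,\ell_2}}_\infty\lesssim\norm{\va_{\min,\ell_2}}_2/\sqrt m$. It gives $\norm{\va_{\min,\ell_2}}_{p'}\lesssim m^{-\frac1{2c}}\norm{\va_{\min,\ell_2}}_2$, hence exactly the factor $m^{\frac1{c+1}}$ you tried to get from H\"older.

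Delocalization is where misalignment is genuinely needed. Misalignment makes $\E{\vy^\star\cond\vU}$ negligible, so each coordinate $\ip{\ve_j}{\vU^\top(\vU\vU^\top)^{-1}\vy}$ is a fluctuation of size $\lesssim\sigma_{\min}^{-1}(\vU)\asymp m^{-1/2}$ up to logarithms. If $\vy$ were instead aligned with a column of $\vU$, say $\vy=\vU\ve_1$, then $\va_{\min,\ell_2}$ would be localized and the bound would break.

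Conversely, the step you flag as the main obstacle is not one. Your projection argument already gives $\norm{\va}_2^2\ge\norm{P\vy}_2^2/\norm{\vU-\E{\vU}}_{op}^2\gtrsim n/(\sqrt n+\sqrt m)^2\gtrsim n/m$ for $m\ge n$; your ``$\gtrsim 1$'' is the same statement when $m\asymp n$. This uses no misalignment. The paper reaches the same $\ell_2$ bound by conditional Hanson--Wright, $\vy^\top(\vU\vU^\top)^{-1}\vy\gtrsim\trace((\vU\vU^\top)^{-1})\asymp n/m$. What your proof lacks is the $\ell_\infty$ control and the certificate that turns it into an $\ell_p$ bound.
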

Note that $\Upsilon^\star$ is a lower bound, however, for noiseless case $\zeta=0$ and when learning activation $\sigma^\star =\sigma$ for single-index or sum of single-index models, this lower bound is achievable(cf. Thm~\ref{thm:flattest_good}).  When the correlation $\rho^\star$ is strictly smaller than a constant, the flatness of a bad interpolator in Def.~\ref{def:bad_min} is at least $n^{\frac{c}{c+1}}$ times larger than the $\Upsilon^\star$. This proves Inf. Thm.~\ref{inf_thm:bad_flat}, thus satisfying the necessary condition for our claim ``flat interpolators generalize".  Since $n = \Omega(1)$, these bad interpolators are indeed very sharp, and even using rescaling symmetry, we cannot make them as flat as possible.  This completes the picture for the argument of ~\citep{pmlr-v70-dinh17b} with respect to rescaling symmetry: while good interpolators can be made as sharp as possible, not all bad interpolators can be made as flat as possible.

Note that Thms.~\ref{thm:flattest_all} and ~\ref{thm:flattest_bad} do not prevent other bad interpolators, for instance those with large $\rho^\star$ or not satisfying Def.~\ref{def:bad_min}, from achieving the minimum flatness $\Upsilon^\star$. To answer \textbf{Q2}, we need to fully characterize the class of interpolators achieving the minimum flatness $\Upsilon^\star$. We do this in the next section.

\section{Flattest Interpolators of Sum of Single Index Models Generalize}
\label{sec:flattest_is_good}
In this section, we will answer \textbf{Q2}, showing that under some assumptions, flattest interpolators always generalize. To prove this result, we provide a tight characterization of the set of flattest interpolators in terms of the outer-layer weights, $\va$. 

\begin{proposition}[Necessary and Sufficient Condition for Flattest Interpolator]\label{rem:necessary_flatness}
Suppose Assump.~\ref{assump:activation},~\ref{assump:link} and ~\ref{assump:width} hold. Then, for any interpolator $\vw$ with $\theta_j\in \bS^{d-1}, \forall j\in [m]$, it holds with high probability, $\Upsilon(\vw) \asymp \Upsilon^\star$ if and only if $\norm{\va}_\infty =\cO(m^{-1})$.
\end{proposition}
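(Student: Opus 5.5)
By Lemma~\ref{lem:flattest_rescaling}, $\Upsilon(\vw)\asymp d^{\frac{c}{c+1}}\norm{\va}_p^p$ with $p=\frac{2c}{c+1}\in[1,2)$. Since $\Upsilon^\star = d^{\frac{c}{c+1}}m^{-\frac{c-1}{c+1}}$, and $m^{-\frac{c-1}{c+1}} = m\cdot m^{-p}$, the condition $\Upsilon(\vw)\asymp\Upsilon^\star$ is equivalent to
\begin{align*}
\norm{\va}_p^p \asymp m^{1-p}.
\end{align*}
So the plan is to show that, for interpolators with unit-norm inner weights, $\norm{\va}_p^p\asymp m^{1-p}$ holds if and only if $\norm{\va}_\infty=\cO(m^{-1})$.

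\emph{Two-sided control on $\va$.} I will reuse the ingredients behind Theorem~\ref{thm:flattest_all}. Interpolation gives $\vU\va=\vy$. By Assump.~\ref{assump:link}, with high probability $\norm{\vy}_2^2\asymp n(\psi^\star+\zeta^2)\asymp n$. Assump.~\ref{assump:width} together with matrix concentration gives $\norm{\vU}_{op}\lesssim\sqrt{n}+\sqrt{m}\,|\mathbb{E}\sigma|$, i.e.\ at most order $\sqrt{nm}$ when the mean part is rank one. Since $n\asymp m$, the identity $\vU\va=\vy$ then forces a lower bound of the form $\norm{\va}_1\gtrsim 1$; this is exactly how I read the proof of Theorem~\ref{thm:flattest_all}, which yields $\norm{\va}_p^p\gtrsim m^{1-p}$ from $\norm{\va}_1\gtrsim 1$ via H\"older.

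\emph{Sufficiency.} Assume $\norm{\va}_\infty\le C/m$. Then
\begin{align*}
\norm{\va}_p^p\le m\,(C/m)^p\lesssim m^{1-p}.
\end{align*}
Combined with the lower bound $\norm{\va}_p^p\gtrsim m^{1-p}$ from Theorem~\ref{thm:flattest_all}, this gives $\Upsilon(\vw)\asymp\Upsilon^\star$.

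\emph{Necessity.} Assume $\norm{\va}_p^p\lesssim m^{1-p}$. H\"older's equality case is the natural tool: equality in $\norm{\va}_1\le m^{1-1/p}\norm{\va}_p$ up to constants means $\va$ is ``spread'' over $\Omega(m)$ coordinates of comparable size. That alone does not rule out a single coordinate of size $\gg 1/m$ when $p>1$, because one coordinate of size $t$ contributes only $t^p$, which can be at most $m^{1-p}$ with $t\le m^{(1-p)/p}$, larger than $1/m$. I would close this gap using interpolation on the individual neuron. Fix $j^\star$ with $|a_{j^\star}|=\norm{\va}_\infty$, remove its contribution, and write $\vy - a_{j^\star}\vU_{:,j^\star} = \vU_{-j^\star}\va_{-j^\star}$. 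The column $\vU_{:,j^\star}$ has norm $\asymp\sqrt{n}$, and by the lower eigenvalue bound on $\bar\Phi$ its centered part is not well approximated by the other columns. Projecting onto the centered residual direction of column $j^\star$ should give $|a_{j^\star}|\sqrt n\lesssim \norm{\vy}_2$ plus a cross term controlled by $\norm{\va}_2$.

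I expect necessity to be the main obstacle. The leave-one-out bound naturally gives only $|a_j|\lesssim\norm{\vy}_2/\sqrt n=\cO(1)$, not $\cO(1/m)$. Getting down to $1/m$ requires combining this with the near-tightness of H\"older's inequality. My first attempt would be a perturbation argument. If some set of $k$ coordinates carries mass larger than $Ck/m$, redistributing to match the $\ell_1$ constraint $\norm{\va}_1\gtrsim 1$ forces
\begin{align*}
\norm{\va}_p^p\ge m^{1-p}(1+\Omega(1)),
\end{align*}
contradicting $\asymp$ only at the level of constants. Because of that, I would likely need to interpret ``$\cO(m^{-1})$'' as order-wise up to the polylog factors allowed by $\asymp$. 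If that fails, I would instead use the stronger structural fact from interpolation, namely $\va\approx\vU^{+}\vy$, whose entries are $\cO(\norm{\vy}_2\sigma_{\min}(\vU)^{-1}/\sqrt m)$, to bound $\norm{\va}_\infty$ directly.
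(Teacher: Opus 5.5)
\textbf{Sufficiency.} Your sufficiency direction is the paper's argument: $\norm{\va}_p^p\le m(C/m)^p\lesssim m^{1-p}$ with $p=\frac{2c}{c+1}$, plus the lower bound of Theorem~\ref{thm:flattest_all}. One correction to how you justify that lower bound. The operator-norm bound $\norm{\vU}_{op}\lesssim\sqrt{nm}$ only gives $\norm{\va}_2\gtrsim m^{-1/2}$, hence $\norm{\va}_p^p\gtrsim m^{-p/2}$, which is weaker than $m^{1-p}$. The bound $\norm{\va}_1\gtrsim 1$ instead comes from the columns: $\norm{\vy}_2\le\norm{\va}_1\max_j\norm{\vU_{:,j}}_2\lesssim\norm{\va}_1\sqrt{n}$. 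Combined with H\"older, this is a much shorter route to Theorem~\ref{thm:flattest_all} than the paper's duality argument.

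\textbf{Necessity.} This direction has a genuine gap, and it cannot be closed. None of your routes is carried through. The fallback ``$\va\approx\vU^{+}\vy$'' is false: with the inner weights fixed, the interpolators form the affine space $\vU^\top(\vU\vU^\top)^{-1}\vy+\ker\vU$, and $\dim\ker\vU=m-n=\Omega(m)$. For the same reason the leave-one-out projection has nothing to act on. With $m>n$ columns in $\bR^n$, the other columns generically already span $\bR^n$, whatever $\lambda_{\min}(\bar\Phi)$ is.

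This kernel yields an explicit counterexample to the ``only if'' direction. Let $P=\vU^\top(\vU\vU^\top)^{-1}\vU$, so that $\vU(\bI_m-P)=0$ and $\trace(\bI_m-P)=m-n$. Pick $j$ with $1-P_{jj}\ge 1-n/m$, which is a positive constant when $m\ge Cn$ with $C>1$. Take any interpolator $\va_0$ with $\norm{\va_0}_\infty=\cO(m^{-1})$; its existence is what Theorem~\ref{thm:gen_all_good} asserts. Set $\va'=\va_0+m^{-1/2}(\bI_m-P)\ve_j$, and let $\vw'$ have the same inner weights with outer weights $\va'$. Then $\vU\va'=\vy$ and $|a'_j|\gtrsim m^{-1/2}$. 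Using $\sum_k P_{kj}^2=P_{jj}\le 1$, H\"older, and $p\le 2$,
\[
\norm{\va'}_p^p\lesssim m^{1-p}+m^{-p/2}\big(1+m^{1-p/2}\big)\lesssim m^{1-p}.
\]
So $\Upsilon(\vw')\asymp\Upsilon^\star$, but $\norm{\va'}_\infty\gtrsim m^{-1/2}$. This is a polynomial gap, so no polylog reading of $\cO(m^{-1})$ absorbs it. Your remark that a single coordinate as large as $m^{(1-p)/p}$ is compatible with $\norm{\va}_p^p\asymp m^{1-p}$ was the right instinct; interpolation does not exclude it.

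\textbf{The paper's proof.} The paper's own necessity argument does not escape this example. It argues that a $q$-sparse $\va_{\cQ}$ cannot solve $\vU\va_{\cQ}=\vy-\vU\va_{\cQ^c}$. But $\va_{\cQ^c}$ is free and can be tuned to the data, exactly as the kernel direction above is. Its union bound also counts only the supports, not the continuum of values, and it treats a density value $\vartheta$ as a probability. What is actually provable is the sufficiency direction together with weaker necessary conditions, for instance $\norm{\va}_1\asymp 1$ and $\norm{\va}_\infty\le\norm{\va}_p\lesssim m^{(1-p)/p}$.
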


This condition on $\norm{\va}_\infty$ allows us a clean characterization of $\Upsilon^\star$. However, as $\Upsilon^\star$ is a lower bound on $\Upsilon(\vw)$, we do not know if there are interpolators that actually achieve this lower bound. To show that such interpolators do exist, we need additional assumptions on the data distribution, namely small label noise and approximation error, quantified by the following assumption.
\begin{assumption}[Low Approximation Error and Label Noise]\label{assump:Lipschitz-like}
For some constants, $\epsilon_1, \epsilon_2>0$,
\begin{itemize}[nosep]
    \item \textbf{Approx. Error:} there exists a $2$-layer NN of width $\breve{m}\geq 1$, with unit norm inner-layer weights $\{\breve{\theta}_j\}_{j\in [\breve{m}]}$, and bounded outer-layer weights, $\breve{\va} \in \bR^{\breve{m}},\norm{\breve{\va}}_\infty \leq (2\breve{m})^{-1}\sqrt{\psi^\star(\phi^\star)^{-1}}$, such that,$\forall \vx\in \bR^{d}$,  $\abs{\sigma^\star (\Theta^\star\vx) - \sum_{j\in [\breve{m}]} \breve{a}_j\sigma(\langle\breve{\theta}_j, \vx\rangle)} \leq  \Delta = \cO(m^{-\frac{1}{2}}n^{-\frac{1}{2}-\epsilon_1})$.
\item \textbf{Label Noise:} $\zeta = \cO(m^{-\frac{1}{2}}n^{-\frac{1}{2}-\epsilon_2})$.
\end{itemize}
\end{assumption}
This assumption forces the link function, $\sigma^\star$, to be well-approximated by a $2$-layer network of width $\breve{m}$, activation $\sigma$, and bounded outer-layer weights. A straightforward example for the above assumption is when $\Delta=0$, and the unknown link function is a sum of activations. This is often referred to as the teacher-student setup~\citep{Goldt_2020}. The above assumption can handle cases beyond the teacher-student example, as long as the approximation error, $\Delta$, is sufficiently small. As for the bounds on the outer-layer weights, $\norm{\va^\star}_\infty$, note that these are obtained to ensure that the signal power is at most $\psi^\star$. Further, their scale  $\breve{m}^{-1}$ matches that required by Prop.~\ref{rem:necessary_flatness}. Existing works have shown that approximating well-behaved functions with a $\bar{m}$ width $2$-layer network results in approximation error of the order of $\cO(\mathrm{poly}(\breve{m}^{-1}))$~\citep{SIEGEL20221,Yang2024}. Therefore, our bound on approximation error, $\Delta$, is reasonable as long as $\breve{m}$ is large.

While the label noise requirement might seem restrictive, as it implies the SNR, $\frac{\sqrt{\psi^\star}}{\zeta}$, grows with $n$, we note that similar conditions between SNR and $n$ are often required in prior works for benign overfitting of overparametrized NNs with isotropic $\vx$ and single-index labels, see ~\cite[(A1), (A4)]{pmlr-v178-frei22a}.

We now present our main result answering \textbf{Q2}. This is divided into $2$ results-- proving the existence of an interpolator achieving flatness $\Upsilon^\star$, and bounding the population loss of any interpolator achieving flatness $\Upsilon^\star$.
\begin{theorem}[Flattest Interpolators Generalize]\label{thm:gen_all_good}
Suppose Assump.~\ref{assump:activation},~\ref{assump:link} and \ref{assump:Lipschitz-like} hold and $m\geq 2n$. Then, there exists an interpolator $\vw\in\bR^{m(d+1)}$ with unit-norm inner-layer weights that satisfies Eq.~\eqref{eq:necessary} with high probability. Further, for any interpolator $\vw\in \bR^{m(d+1)}$ with unit-norm inner-layer weights, that satisfies Eq.~\eqref{eq:necessary}, with high probability, $F(\vw) -F^\star\lesssim n^{-\min\bc{\frac{1}{2}, \epsilon_1, \epsilon_2}}$.
\begin{align}\label{eq:necessary}
    \norm{\va}_\infty \leq m^{-1}\br{\sqrt{\psi^\star(\phi^\star)^{-1}} + \gamma n^{-\min\{\epsilon_1,\epsilon_2\}}}, \quad \text{ where } \quad \epsilon_1, \epsilon_2>0, \gamma=\cO(1)
\end{align}
\end{theorem}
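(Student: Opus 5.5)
The proof has two parts, existence and generalization, and both hinge on writing any interpolator with unit-norm inner weights as $\vU\va = \vy$, where $\vy\in\bR^n$ collects the labels.

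\textbf{Existence.} I would build the interpolator by copying the approximating network of Assump.~\ref{assump:Lipschitz-like} and adding fresh random neurons to absorb the residual.

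First, fix an integer $k=\lfloor m/(2\breve{m})\rfloor$. Replicate each neuron $\breve{\theta}_j$ $k$ times, each copy carrying outer weight $\breve{a}_j/k$. This uses $k\breve{m}\le m/2$ neurons and reproduces the teacher approximation $\breve{h}$ exactly. Each such weight has size $\lesssim m^{-1}\sqrt{\psi^\star(\phi^\star)^{-1}}$, since $\norm{\breve{\va}}_\infty\le(2\breve m)^{-1}\sqrt{\psi^\star/\phi^\star}$ and $k\breve{m}\asymp m/2$.

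Second, the residual vector $\vr_i = y_i-\breve h(\vx_i)$ satisfies $\norm{\vr}_2\lesssim \sqrt n(\Delta+\zeta)$ with high probability. For the remaining $m'\ge m/2\ge n$ neurons, draw i.i.d. uniform directions on $\bS^{d-1}$ and form the $n\times m'$ activation matrix $\vU'$. Since $m'\ge n\asymp d$ and Assump.~\ref{assump:width} makes the activation covariance well conditioned, I expect $\sigma_{\min}(\vU')\gtrsim \sqrt{m'}$ with high probability. This follows from the concentration already used for Lemma~\ref{lem:interpolation} together with the spectral bounds behind Thm.~\ref{thm:flattest_all}.

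Third, take $\va' = \vU'^{\top}(\vU'\vU'^\top)^{-1}\vr$. Its entries are bounded via a delocalization/rotational-invariance argument applied to $\vU'^\top(\vU'\vU'^\top)^{-1}$:
\[
\norm{\va'}_\infty\lesssim \norm{\vr}_2/m \lesssim \sqrt n(\Delta+\zeta)/m \lesssim m^{-1}\,m^{-1/2}n^{-\min\{\epsilon_1,\epsilon_2\}} .
\]
This fits inside the $\gamma n^{-\min\{\epsilon_1,\epsilon_2\}}$ slack of Eq.~\eqref{eq:necessary}.

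\textbf{Generalization.} Now let $\vw$ be any interpolator satisfying Eq.~\eqref{eq:necessary}. I would bound $F(\vw)-F^\star\le 2\E(h(\vw,\vx)-\sigma^\star(\Theta^\star\vx))^2$ by a uniform-convergence argument over a norm-constrained class.

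The key point is that Eq.~\eqref{eq:necessary} gives $\norm{\va}_1\le \sqrt{\psi^\star/\phi^\star}+o(1)=\cO(1)$. The function class
\[
\cH=\Bigl\{\textstyle\sum_j a_j\sigma(\ip{\theta_j}{\cdot}) : \theta_j\in\bS^{d-1},\ \norm{\va}_1\le B\Bigr\}
\]
is the convex hull of $\{\pm\sigma(\ip{\theta}{\cdot})\}$, so its Rademacher complexity is bounded by that of single neurons, namely $\sqrt{d/n}\,$polylog. This is after truncating the $c$-homogeneous activation at $\mathrm{polylog}$ scale using the Sub-Weibull tails from Assump.~\ref{assump:link}.

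However, $\sqrt{d/n}$ is only $\cO(1)$ when $n\asymp d$, so this naive bound is not enough. I would instead aim for a sharper estimate exploiting the $\ell_\infty$ structure: $\norm{\va}_\infty\lesssim 1/m$ with $m\ge 2n$ means the function is an average of many small, nearly balanced contributions. The plan is to compare $h(\vw,\cdot)$ with the teacher $\breve h$ through the excess term. Since both networks interpolate up to $\Delta+\zeta$ on $S$ and both have total mass at most $\sqrt{\psi^\star/\phi^\star}$, a squared-loss/norm comparison in the style of minimum-$\ell_1$ interpolation analyses should show that $\va$ cannot carry energy outside the teacher directions without exceeding the budget in Eq.~\eqref{eq:necessary}.

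Concretely, I would decompose $\va=\va_{\parallel}+\va_{\perp}$ according to whether the neurons align with $\vecspan(\Theta^\star)$, and reuse the variance lower bound from Assump.~\ref{assump:link}, as in the proof of Thm.~\ref{thm:flattest_bad}. Misaligned mass costs extra $\ell_1$ norm, on the order of $n^{-1/2}$ relative to the budget, which forces $\norm{\va_\perp}_1\lesssim n^{-\min\{1/2,\epsilon_1,\epsilon_2\}}$. Aligned mass then reduces to an $m^\star$-dimensional problem in which empirical and population losses concentrate at rate $n^{-1/2}$.

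\textbf{Main obstacle.} I expect this last step — turning the tight $\ell_\infty$/$\ell_1$ budget into control of the misaligned component at rate $n^{-1/2}$ rather than $\sqrt{d/n}$ — to be the crux. The first thing I would try is a duality/certificate argument: construct a dual vector $\vnu\in\bR^n$ with $\vU^\top\vnu$ nearly saturating on teacher-aligned neurons and strictly below saturation on misaligned ones.
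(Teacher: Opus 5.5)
Your existence argument is essentially the paper's: replicate the approximating network on half the neurons, then fit the residual with the minimum-$\ell_2$-norm solution on the other half. The delocalization step is unproved but also unnecessary, since $\norm{\va'}_\infty\le\norm{\va'}_2\le\norm{\vr}_2/\sigma_{\min}(\vU')\lesssim\sqrt{n}(\Delta+\zeta)/\sqrt{m}\lesssim m^{-1}n^{-\min\{\epsilon_1,\epsilon_2\}}$ already fits the slack.

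The generalization half, however, is not a proof. The aligned/misaligned split, the claim that misaligned mass costs $n^{-1/2}$ of budget, and the dual certificate are never constructed. None of this is needed, because the constant in Eq.~\eqref{eq:necessary} is calibrated exactly. You noted $\norm{\va}_1\le m\norm{\va}_\infty\le\sqrt{\psi^\star/\phi^\star}+\gamma n^{-\epsilon}$, but you used it only as an $\cO(1)$ norm bound and so lost that constant.

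Instead, expand $2F(\vw)=\zeta^2+\psi^\star+\va^\top\Phi\va-2\ip{\va}{\widetilde{\Psi}}$. Since $|\Phi_{jj'}|\le\phi^\star$ by Cauchy--Schwarz, $\va^\top\Phi\va\le\phi^\star\norm{\va}_1^2\le\psi^\star+\cO(n^{-\epsilon})$. For the cross term, interpolation gives the exact identity $\ip{\va}{\tfrac{1}{n}\vU^\top\vy}=\tfrac{1}{n}\norm{\vy}_2^2$, so
\[
\ip{\va}{\widetilde{\Psi}}\;\ge\;\tfrac{1}{n}\norm{\vy}_2^2-\norm{\va}_1\max_{j\in[m]}\Bigl|\widetilde{\Psi}_j-\tfrac{1}{n}(\vU^\top\vy)_j\Bigr| .
\]
The paper derives this via Lagrange duality with dual point $\vr=-\vy/n$, but it is just H\"older. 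Each entry in the max is an empirical mean minus its expectation for one neuron. Sub-Weibull concentration with a union bound over the $m$ neurons therefore gives $\cO(\sqrt{\log(m/\delta)/n})$, while $\tfrac{1}{n}\norm{\vy}_2^2\ge\psi^\star-\cO(n^{-1/2})$. The $\psi^\star$ terms cancel, giving $F(\vw)\lesssim n^{-\min\{1/2,\epsilon\}}$, and $F^\star\ge 0$ yields the excess-risk bound.

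This also removes the $\sqrt{d/n}$ obstacle you identified, because the paper never takes a supremum over $\theta\in\bS^{d-1}$. It concentrates only for the $m$ given neurons, treating the model's inner weights as fixed; this matches its own caveat that results hold for a single model with high probability. The data-dependent outer weights $\va$ are handled deterministically by the H\"older step.

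If you intend the stronger reading, uniform over data-dependent inner weights, the per-neuron deviation becomes a supremum over the sphere. That supremum is typically of order $\sqrt{d/n}=\Theta(1)$ at $n\asymp d$; for quadratic $\sigma$ it is the operator-norm deviation of $\tfrac{1}{n}\sum_i y_i\vx_i\vx_i^\top$. Nothing in your plan controls this term. So under either reading, the generalization step as proposed does not go through.
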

Note that both the existence and generalization result correspond to the flattest interpolators $\vw$ with $\Upsilon(\vw) \asymp \Upsilon^\star$ via Prop.~\ref{rem:necessary_flatness}. This Theorem answers \textbf{Q2}, as any interpolator $\vw$ that is asymptotically flattest has a population loss that decreases with $n$. Note that we make no assumptions on the actual correlation between the inner-layer weights $\{\theta_j\}_{j\in [m]}$ and the true direction $\Theta^\star$ for the above Theorem. Existing works on single-index and sum of single-index models~\citep{oko2024neural,damian2023smoothing,pmlr-v178-damian22a} explicitly force this correlation to be large via training algorithms, as it is necessary for a small population loss. In our case, the flattest interpolators, via the balancedness property of Prop.~\ref{rem:necessary_flatness}, implicitly satisfy this condition. In the next section, we provide brief proof sketches for our main results.

\section{Proof Sketches}
\label{sec:proof_sketch}
In this section, we now explain the key theoretical tools used to prove all the main results. The complete proofs are provided in App.~\ref{sec:main_proofs}. At a high level, we need to quantify the relationship between flatness and population loss. For Thms.~\ref{thm:flattest_all} and ~\ref{thm:flattest_good}, this is via lower bounds on $\Upsilon(\vw)$ in terms of the correlation $\rho^\star$ (Def. ~\ref{def:bad_min}) between inner-layer weights and the true direction $\Theta^\star$. For multi-index models, this correlation is a strong indicator of population loss.  For Thm.~\ref{thm:gen_all_good}, this is via upper bounds on population loss based on $\norm{\va}_\infty$, without using any intermediate connections to the correlation. At a high-level, all these results rely on a few fundamental techniques -- to handle terms containing $\va$, we use convex duality ~\cite[Chapter~5]{boyd2004convex} and for all other terms, we appropriately use concentration of measure.

 For all our proofs, the core ingredient is Lemma~\ref{lem:flattest_rescaling}, that encodes the contribution of symmetry in terms of a convex $\ell_p$ norm of $\va$, with $p=\frac{2c}{c+1}$, decided by the symmetry (Def.~\ref{def:homogen}). If $\vy\in \bR^{n}$ is the vector of labels, then, $\vw$ is an interpolator, iff $\vU \va = \vy$. Note that this is a linear constraint in terms of $\va$. Therefore, flatness is a convex function in $\va$ with a linear constraint in $\va$ for interpolation.

\textbf{Proof Sketch of Thms.~\ref{thm:flattest_all} and \ref{thm:flattest_bad}.}
We present this proof sketch in Fig.~\ref{fig:proof_bad}. At the core of this proof, we want to find a lower bound on $\Upsilon(\vw)$ that is independent of $\va$. Duality can exactly compute the expression $\min_{\va}\Upsilon(\vw)$, however, there is no closed-form expression for it, except in the case of quadratic activations ~\citep{ding24flat}. To obtain a closed-form expression for all activations in Assump.~\ref{assump:activation}, we use a tight lower bound by a novel connection to overparametrized linear regression. Consider the interpolation constraint $\vU\va = \vy$. Fixing the inner-layer weights fixes the activation, and for $m\geq n$, this problem resembles overparametrized linear regression in $\va$. From ~\citep{bartlett}, we know the minimum $\ell_2$ norm interpolator for overparametrized linear regression is  $\va_{\min,\ell_2} = \vU^\top(\vU\vU^\top)^{-1}\vy$. As flatness $\Upsilon(\vw)$ is an $\ell_p$ norm, we can obtain a tight lower bound on it via the $\ell_2$ norm of the minimum $\ell_2$ norm interpolator in Lemma~\ref{lem:min_l2_lb}. 

The remainder of the proof utilizes two different bounds on  $\norm{\va_{\min, \ell_2}}_2$ for all interpolators and bad interpolators respectively.  For bad interpolators that are not aligned with the true direction, $\Theta^\star$, the labels $\vy$ are almost independent of the activation matrix $\vU$. Hence, a lower bound on $\norm{\va_{\min,\ell_2}}$ depends on $\trace(\vU\vU^\top)^{-1})$, by a Hanson-Wright bound (Lemma~\ref{lem:hw_sub_weibull}). In contrast, the flattest interpolator is achieved when $\vy$ and activation are completely dependent variables, obtaining a bound of $\sigma_{\max}^{-2}(\vU)$, by Cauchy-Schwarz.

Finally, Lemma~\ref{lem:eigen_U} establishes a difference in the eigenvalues of the activation matrix. From Assump.~\ref{assump:activation}, the activation matrix $\vU$ has non-zero mean. Hence, $\vU\vU^\top$ has a very large maximum eigenvalue, $\sigma_{\max}^2(\vU)\asymp mn$, while the rest of its eigenvalues are much smaller, $\asymp m$. This separation between eigenvalues of $\vU$ allows us to establish a separation between bad interpolators and the flattest interpolator. To establish these bounds on eigenvalues of the activation matrix $\vU$, whose columns might be correlated, we require Assump.~\ref{assump:width}.

{\small
\begin{figure}[t!]
    \centering
\begin{tikzcd}[row sep=small, column sep=small]
    |[draw, rectangle, inner sep=4pt]| {\begin{array}{c} \text{Lemma~\ref{lem:flattest_rescaling}} \\ \Upsilon(\vw) \asymp d^{\frac{c}{c+1}}  \norm{\va}_{\frac{2c}{c+1}}^{\frac{2c}{c+1}} \end{array}} & \\ 
     |[draw, rectangle, inner sep=4pt]| {\begin{array}{c} \text{Lemma ~\ref{lem:min_l2_lb} (Lower bound)}\\ \norm{\va}_{\frac{2c}{c+1}}^{\frac{2c}{c+1}} \gtrsim d^{\frac{c}{c+1}} m^{\frac{1}{c+1}}\norm{\va_{\min, \ell_2}}_2^{\frac{2c}{c+1}}\\
     \va_{\min, \ell_2} := \argmin_{\va \in \bR^{m}, \vU\va = \vy}\norm{\va}_2 \end{array}} 
    & |[draw, rectangle, inner sep=4pt]| {\begin{array}{c} \text{Theorem ~\ref{thm:flattest_bad} (Bad Interpolators)}\\ \norm{\va_{\min, \ell_2}}_2^2 \gtrsim \sum_{i=1}^n\sigma_i^{-2}(\mathbf{U}),\\ \implies \Upsilon(\mathbf{w}) \gtrsim (nd)^{\frac{c}{c+1}}m^{-\frac{c-1}{c+1}}. \end{array}} \\
     |[draw, rectangle, inner sep=4pt]| {\begin{array}{c} \text{Lemma ~\ref{lem:eigen_U} (Eigenvalues)}\\ \sigma_{\max}^{-2}(\mathbf{U})\asymp \frac{1}{mn}, \\\sum_{i=1}^n \sigma_{i}^{-2}(\mathbf{U}) \asymp \frac{n}{m} \end{array}} 
    & |[draw, rectangle, inner sep=4pt]| {\begin{array}{c} \text{Theorem ~\ref{thm:flattest_all} (Flattest Interpolators)}\\ \norm{\va_{\min, \ell_2}}_2^2 \gtrsim n\sigma_{\max}^{-2}(\mathbf{U}),\\  \implies \Upsilon^\star \asymp d^{\frac{c}{c+1}}m^{-\frac{c-1}{c+1}}. \end{array}}
    \arrow[from=1-1, to=2-1]
    \arrow[from=2-1, to=2-2]
    \arrow[from=2-1, to=3-2]
    \arrow[from=3-1, to=2-2]
    \arrow[from=3-1, to=3-2]
 \end{tikzcd}    
 \caption{Proof Sketch of Thms.~\ref{thm:flattest_all} and \ref{thm:flattest_bad}. }
    \label{fig:proof_bad}
\end{figure}}

\textbf{Proof Sketch of Thm.~\ref{thm:gen_all_good}.}
We focus on the generalization of flattest interpolators in Theorem~\ref{thm:gen_all_good}. To show generalization, we need to bound the population loss, $F(\vw)$, for the flattest interpolator. From App~\ref{sec:prelims},
\begin{align}\label{eq:pop_loss}
    2F(\vw) = \zeta^2 + \psi^\star + \sum_{j'=1}^m\sum_{j=1}^m a_j a_{j'}\E{\sigma(\ip{\theta_j}{\vx})\sigma(\ip{\theta_{j'}}{\vx}))} - 2\sum_{j=1}^m a_j \E{\sigma^\star(\Theta^\star \vx)\sigma(\ip{\theta_j}{\vx})}.
\end{align}
Note that the population loss $F(\vw)$ is a quadratic function in $\va$, with its coefficients depending on the inner-layer weights. The flattest interpolator imposes no conditions on inner-layer weights, only a condition on $\norm{\va}_{\infty}$ and interpolation $\vU\va=\vy$. Interestingly, using the worst-case bounds for all terms that depend on the inner-layer weights yields our required bound of $n^{-\min\{\frac{1}{2}, \epsilon_1, \epsilon_2\}}$.

First, consider the quadratic term in Eq.~\eqref{eq:pop_loss}. Note that each $a_j\leq \norm{\va}_\infty$ and the terms of $\sigma$ are upper bounded by $\phi^\star$. This gives an upper bound of $m^2\frac{\psi^\star}{m^2 \phi^\star}\phi^\star = \psi^\star$ on this term. 

To make the population loss small, we want the linear term in $\va$ in Eq.~\eqref{eq:pop_loss} to be large and negative. Instead of using a trivial bound here, we find the maximum value of this linear term,  under the linear interpolation constraint and the convex constraint for minimum flatness $\norm{\va}_\infty = \cO(m^{-1})$. By duality, this bound is,
\begin{align*}
    -\sum_{j=1}^m a_j  \E{\sigma^\star(\Theta^\star \vx)\sigma(\ip{\theta_j}{\vx})} \leq -\frac{1}{n}\norm{\vy}_2^2 + \norm{\va}_{\infty}\sum_{j=1}^m\abs{\frac{1}{n}\sum_{i=1}^n \vU_{i,j}\vy_i - \E{\sigma^\star(\Theta^\star \vx)\sigma(\ip{\theta_j}{\vx})}}.
\end{align*}
Note that $\frac{1}{n}\norm{\vy}_2^2 \approx \psi^\star$, which cancels out the terms of $\psi^\star$, the signal power, in the population loss. The remaining term is the difference between a sample-mean and expectation, as $\E{\vU_{i,j}\vy_i} = \E{\sigma(\ip{\theta_j}{\vx_i})\sigma^\star(\Theta^\star\vx_i)}, \forall i\in [n], j\in [m]$. This is small by concentration, and the term $\norm{\va}_{\infty}$ cancels out the summation over $m$.

\section{Experiments}
\label{sec:experiments}
In this section, we empirically verify our theoretical results for several single-index and multi-index problems. We first describe our experimental setup.

\subsection{Setup}
For all our experiments, we set $d=100, n=500$ and $m=5000$. We use $4$ different activation functions. For $c=1$, we use ReLU and LeakyReLU with coefficient $c'' = -0.1$. For $c=2$, we use quadratic activation. 

For the data distribution, we consider single-index and sum of single-index models as described in Section~\ref{sec:setup}. In each of these distributions, the link functions $\sigma^\star$ and $\tilde{\sigma}^\star$ are obtained by adding a scalar sampled from $\text{Unif}([-\Delta, \Delta])$ to a sum of activations, thus satisfying Assump.~\ref{assump:Lipschitz-like}. For all cases other than ReLU activation, we set $\Delta = m^{-\frac{1}{2}}n^{-\frac{1}{2}-\epsilon_1}$ and $\zeta = m^{-\frac{1}{2}}n^{-\frac{1}{2}-\epsilon_2}$ with $\epsilon_1, \epsilon_2\sim \text{Unif}([0,0.1])$. For ReLU activation, we consider $\zeta=\Delta=0$, so the activation matches the link function. For the sum of single-index models, we select the coefficients $a_j^\star \sim \text{Unif}([-1,1])$ and set $m^\star = 5$.

We use $3$ additional data distributions to cover a varied set of examples. First, we use a single-index model with a linear link function $\sigma^\star(b) = b, \forall b\in \bR$ and a ReLU activation. Note that for this model, $\ReLU(b) - \ReLU(-b) = \sigma^\star(b), \forall b\in \bR$. Therefore, using a ReLU network should be able to learn this data model. Second, we use a single-index model with linear link function and linear activation. Note that linear activation has $c'=1, c''=-1$, so it doesn't satisfy our requirements for Piecewise-Polynomial activations in Assump.~\ref{assump:activation} and our analysis should not cover it. This example corresponds to the case of matrix factorization with $r=1$, matching the example of ~\cite{gatmiry2023what}, however, as $m>n$, RIP property should not hold. Finally, we use an example from ~\cite[Section~4.2]{nichani2023provable}, where $\sigma^\star(\Theta^\star \vx) = \ReLU(\vx^\top (\Theta^\star)^\top \Theta^\star \vx)$ where $m^\star = \frac{d}{2}$. Note that ~\cite{nichani2023provable} show that such problems are harder to learn by $2$-layer NNs than $3$-layer NNs. Further, as $m^\star = \Theta(d)$, this problem is no longer a multi-index problem, so our results should not hold. We use a $\ReLU$ activation for this link. Additionally, while the link function $\sigma^\star$ satisfies Assump.~\ref{assump:link} for $c=2$, learning it with ReLU activation implies a mismatch in homogeneity of the activation and link as $c=1$ for ReLU activation.

For all our data distributions and activations, we first sample a set of random inner-layer weights $\{\theta_j\}_{j\in [m]}$ each of unit norm such that the correlation of each weight to the subspace spanned by the vectors in $\Theta^\star$, is exactly $\rho^\star$. Note that this is similar to our definition of bad minima in Def.~\ref{def:bad_min}. We then find the vector $\va\in \bR^m$ that is the solution of the following optimization problem,
\begin{align}
    \min_{\va\in \bR^{m}} \Upsilon(\vw)\coloneq\sum_{j=1}^m B(\theta_j) \abs{a_j}^{\frac{2c}{c+1}}, \text{ such that } \sum_{j=1}^m a_j\sigma(\ip{\theta_j}{\vx_i}) = y_i, \forall i\in [n]\label{eq:flattest_min} 
\end{align}
The solution to this problem should give us the outer-layer weights that correspond to the flattest interpolator after rescaling according to Lemma~\ref{lem:flattest_rescaling}. For fixed inner-layer weights, the above problem is a convex program with linear constraints. Therefore, we use the convex optimization solver CVXPY~\citep{diamond2016cvxpy,agrawal2018rewriting} with MOSEK~\citep{mosek}, to solve it.

We choose $25$ different logarithmically equally spaced values for $\rho^\star$ in the range $[d^{-2}, \frac{1}{2}]$. For each value of $\rho^\star$, we solve Eq.~\eqref{eq:flattest_min} to compute the outer-layer weights $\va$ and minimum flatness $\Upsilon(\vw)$ corresponding to this $\va$. Then, we compute the population loss for this choice of inner-layer and outer-layer weights for the $2$-layer network by measuring the squared error on a fresh sample of $1500$ datapoints. We provide a scatter plot of the flatness $\Upsilon(\vw)$ and the population loss $F(\vw)$ for each $\rho^\star$, where the color denotes the value of $-\log(\rho^\star)$. The plots are averaged over $5$ random seeds, and the experiments took 5 hours on a single CPU with $15$ cores and $20GB$ RAM.

We plot the results for single-index, sum of single-index, and the $3$ special data distributions in Figures ~\ref{fig:single_index}, \ref{fig:sum_single_index} and ~\ref{fig:special} respectively.

\begin{figure*}[t!]
    \centering
    \begin{subfigure}[t]{0.32\textwidth}
        \centering
        \includegraphics[width=\textwidth]{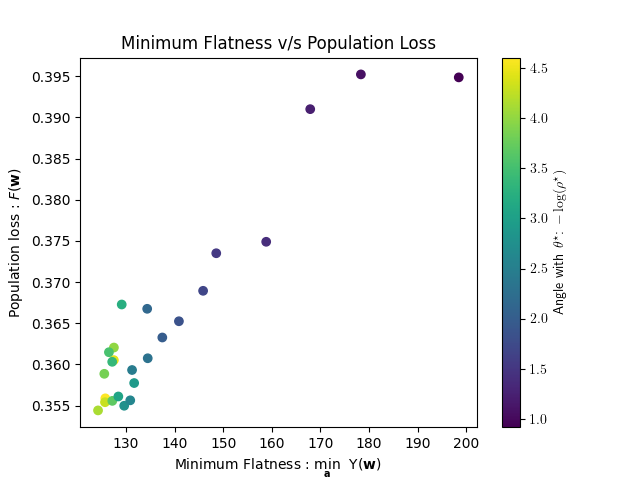}
        \caption{Activation: Leaky ReLU}
    \end{subfigure}%
    \begin{subfigure}[t]{0.32\textwidth}
        \centering
        \includegraphics[width=\textwidth]{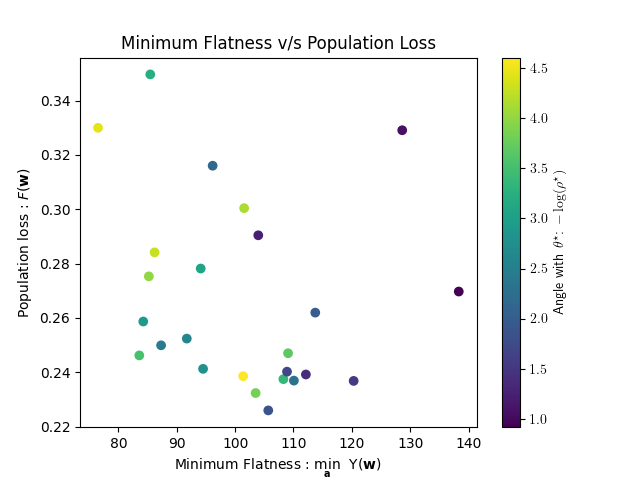}
        \caption{Activation: ReLU}
    \end{subfigure}%
    \begin{subfigure}[t]{0.32\textwidth}
        \centering
        \includegraphics[width=\textwidth]{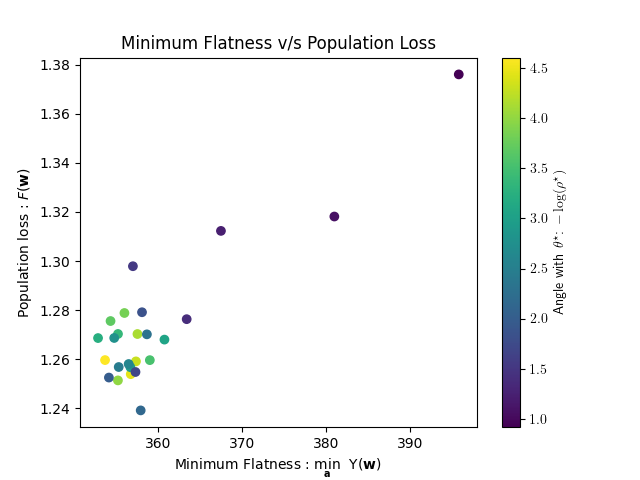}
        \caption{Activation: Quadratic}
    \end{subfigure}%
    \caption{Flatness and population loss for learning single-index link functions close to activations. In all cases except ReLU, the flattest solution generalizes.}
    \label{fig:single_index}
\end{figure*}
\begin{figure*}[t!]
    \centering
    \begin{subfigure}[t]{0.32\textwidth}
        \centering
        \includegraphics[width=\textwidth]{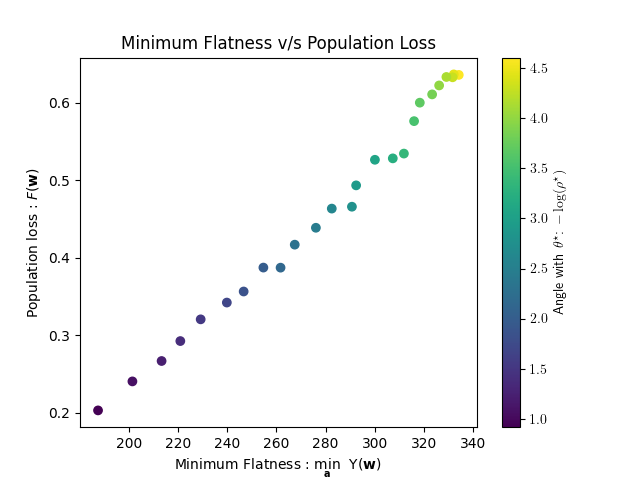}
        \caption{Activation: Leaky ReLU}
    \end{subfigure}%
    \begin{subfigure}[t]{0.32\textwidth}
        \centering
        \includegraphics[width=\textwidth]{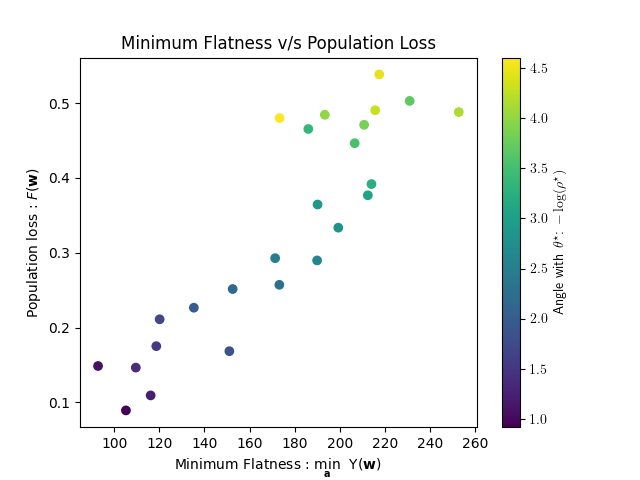}
        \caption{Activation: ReLU}
    \end{subfigure}%
    \begin{subfigure}[t]{0.32\textwidth}
        \centering
        \includegraphics[width=\textwidth]{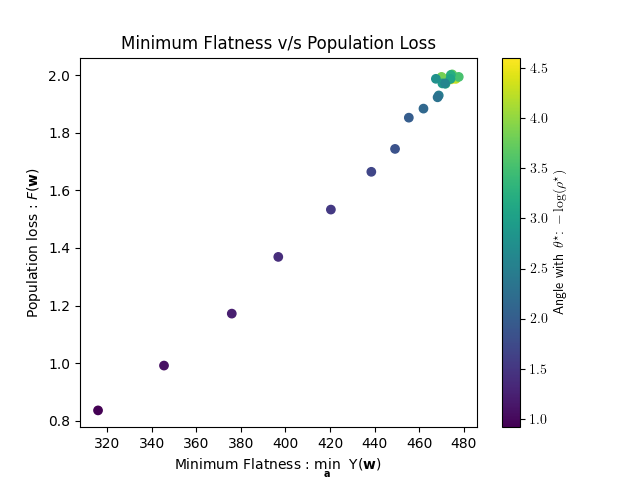}
        \caption{Activation: Quadratic}
    \end{subfigure}%
    \caption{Flatness and population loss for learning sum of single-index link functions, each close to activations.In all cases, the flattest solution generalizes.}
    \label{fig:sum_single_index}
\end{figure*}

\begin{figure*}[t!]
    \centering
    \begin{subfigure}[t]{0.32\textwidth}
        \centering
        \includegraphics[width=\textwidth]{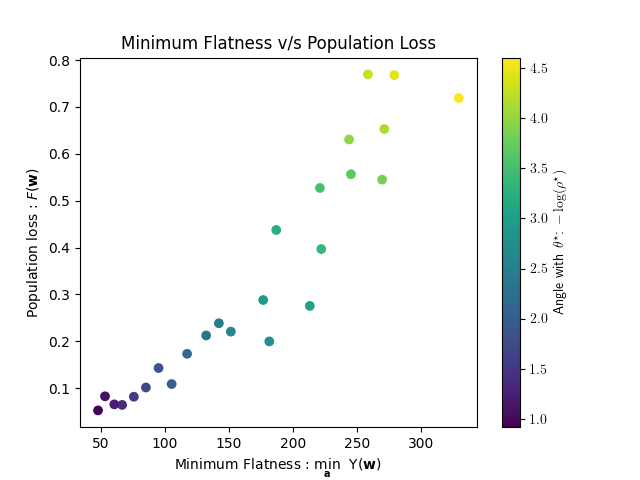}
        \caption{Link: Linear,\\ Activation: ReLU}
        \label{fig:relu_sum_relu_lin}
    \end{subfigure}%
    \begin{subfigure}[t]{0.32\textwidth}
        \centering
        \includegraphics[width=\textwidth]{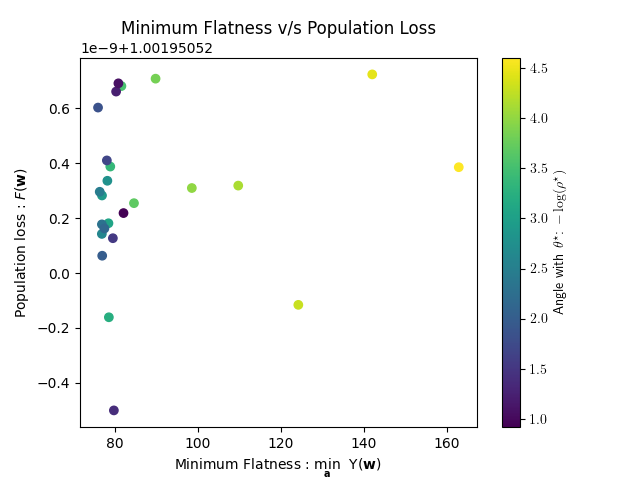}
        \caption{Link: Linear,\\ Activation: Linear}
        \label{fig:linear_single_index}
    \end{subfigure}%
    \begin{subfigure}[t]{0.32\textwidth}
        \centering
        \includegraphics[width=\textwidth]{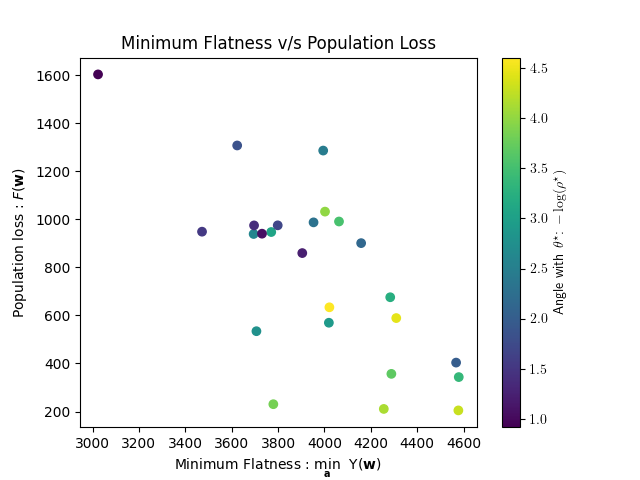}
        \caption{Link: $\ReLU(\vx^\top (\Theta^\star)^\top \Theta^\star \vx)$,\\ Activation: ReLU}
        \label{fig:relu_quadratic_type}
    \end{subfigure}%
    \caption{Flatness and population loss for special data distributions. For learning linear link with ReLU activation, flattest interpolator generalizes. For linear activation with linear link all presented interpolators, achieve small flatness and generalize, however for the last case, flatttest interpolator doesn't generalize. }
    \label{fig:special}
\end{figure*}

\subsection{Results}

\paragraph{Single-Index Models.} For single-index models, there is a link between flattest interpolators and their population loss. From Fig~\ref{fig:single_index}, we can see that for LeakyReLU and Quadratic activation, if the minimum flatness after rescaling, $\min_{\va}\Upsilon(\vw)$, is low, then population loss is also low. However, extremely high correlation with the true direction, i.e., large $\rho^\star$ forces large $\norm{\va}_\infty$ to ensure interpolation. Thus,  large $\rho^\star$ does not achieve the minimum flatness and, in turn, has a high population loss. This phenomenon results in a poor connection between flatness and generalization for ReLU activation, where very small $\rho^\star$ can easily interpolate with small $\norm{\va}_\infty$, but as $\rho^\star$ is small, its population loss is large. For single-index models, this is a result of poor conditioning of activation, thus violating Assump.~\ref{assump:width}.

\paragraph{Sum of Single-Index Models.} For the sum of single-index models, the connection between flatness and generalization is much stronger. Further, there is also a connection between the correlation $\rho^\star$ and flatness. From Fig~\ref{fig:sum_single_index}, this connection holds for all activations, even ReLU, where it seemed to be absent for single-index models. If we increase $\rho^\star$, the correlation to the true direction, the population loss decreases, and the minimum flatness $\min_{\va}\Upsilon(\vw)$ also decreases. This stronger connection is precisely because multi-index models with diverse features can interpolate much better for a large $\rho^\star$ than single-index models, as they can still satisfy Assump.~\ref{assump:width}.

\paragraph{Special Cases.} For the special case of learning a linear function using ReLU in Fig~\ref{fig:relu_sum_relu_lin}, although this is a single-index model, it behaves similarly to our results for multi-index models, with a strong connection between the correlation, population loss and flatness. For a linear link function with linear activation in Fig~\ref{fig:linear_single_index}, it appears that most interpolators obtain the same flatness. While their population losses can be different, all these population losses are very small, even for different $\rho^\star$. This is because for linear activation, the outer-layer $\va$ weights can be selected so that $\sum_{j=1}^m a_j \theta_j$ is parallel to $\theta^\star$. Therefore, for linear link function and linear activation, empirically flattest interpolators seem to generalize, however the theoretical justification in ~\citep{gatmiry2023what} is still loose. For multi-index model in Fig~\ref{fig:relu_quadratic_type}, note that a $2$-layer network with bounded outer-layer weights cannot approximate it~\citep{nichani2023provable}. Therefore, the approximation error $\Delta$ is always large and thus, there is no connection between flatness and generalization, as the flattest solution seems to have a high population loss.

We conclude our paper with a summary of our main contributions, promising extensions and limitations in the next Section. 

\vspace{-2mm}
\section{Conclusion}
\label{sec:discussion}
\vspace{-2mm}

In this paper,  we have shown that despite symmetry that can change the flatness of an interpolator, for multi-index data, some bad interpolators are not the flattest interpolators. Further, if labels are generated as a sum of single-index models with low approximation error and low label noise, flattest interpolators generalize. Therefore, the original claim of ``flat interpolators generalize" might be false with symmetry, but ``flattest interpolators still generalize". This provides a connection between flatness and generalization which was unknown for general noisy data distributions in the presence of symmetry with Homogeneous NNs. 

We conclude our paper with a discussion of its limitations that form promising directions for future work: 

\textbf{Higher label noise:} A large label noise makes the flatness of every interpolator the same as that of a bad interpolator in Thm.~\ref{thm:flattest_bad}, as the label $\vy$ effectively has a large component independent of activation. A more precise analysis of $\Upsilon(\vw)$ in terms of the signal and noise terms ($\psi^\star$ and $\zeta^2$) should ideally permit a larger label noise as long as SNR also grows with $n$. To extend our results to both constant $\zeta^2$ and constant SNR, which is the case for gradient-based methods on single-index and sum of single-index data~\citep{oko2024neural,pmlr-v247-oko24a}, we need to consider weights beyond interpolators, $F_S(\vw)=0$, to approximate interpolators, $F_S(\vw) \lesssim \zeta^2$, as none of the gradient methods converge to interpolators in finite steps.  

\textbf{Size of the set of flattest interpolators:} All our results (Thms.~\ref{thm:flattest_all}, \ref{thm:flattest_bad} and \ref{thm:gen_all_good}) hold for a single model with high probability. To extend these results to all models in a given set, for instance that of flattest interpolators, we need to quantify their sizes for a high probability union bound. Unfortunately, there are no known tight estimates for these sizes, and the only loose estimate that we have is the size of set of all model weights.

\textbf{Broader necessary and sufficient conditions:} While our setting is more general than that of existing works~\citep{ding24flat}, it is still focused on a particular form of symmetry, a particular class of multi-index data, and a particular NN architecture. In light of negative results from ~\cite{schliserman2025flatminimageneralizationinsights,wen2023how}, there is still hope for discovering a more general set of necessary and sufficient conditions for flattest interpolators to generalize, of which our problem setup is a special instance.   

\textbf{Algorithms that converge to flattest interpolators:} Our work does not provide any algorithms that can converge to these flattest interpolators. Existing optimizers like SAM~\citep{DBLP:conf/iclr/ForetKMN21, pmlr-v139-kwon21b} and its variants, or (S)GD with large step-sizes~\citep{cohen2021gradient}, attempt to reduce flatness; however, these are not theoretically guaranteed to reach the flattest interpolators~\citep{schliserman2025flatminimageneralizationinsights,wen2023how} except in specific models~\citep{wu2023implicit}.

\bibliographystyle{apalike}
\bibliography{references}

\appendix
\crefalias{section}{appendix} %
\newpage

\section{Related Works}
\label{sec:related_works}

Existing works focus on flatness of even local minima of empirical loss, not just interpolators. Throughout this section, we will discuss flatness of an arbitrary minima of the empirical loss. Interpolators are global minima of the empirical loss, with $0$ empirical loss.

\paragraph{Experimental Evidence Linking Flatness and Generalization}
There have been several attempts to create algorithms that seek flatter minima in non-convex neural networks. ~\cite{hochreiter} were the first to use variants of gradient descent to explicitly search for flat minima. For a large class of algorithms to find flat minima, some form of noise is added to the updates of gradient descent, either independent noise like SGLD~\citep{Chaudhari_2019} or dependent noise from stochastic gradients~\citep{keskar2017on}. In practice, these algorithms also converge to good minima. ~\cite{DBLP:conf/iclr/ForetKMN21} propose the Sharpness-Aware Minimization (SAM) as an alternative to SGD. This algorithm intuitively minimizes sharpness and in practice converges to good minima. ~\cite{chiang2023loss} implement non-gradient based  algorithms that only depend on flatness of the empirical landscape also converge to good minima. ~\cite{Jiang*2020Fantastic} and ~\cite{pmlr-v202-andriushchenko23a} benchmark the correlation between flatness and generalization for a large class of real-world NN architectures and datasets. ~\cite{Jiang*2020Fantastic} compute flatness by measuring the sensitivity of the loss to perturbations , while ~\cite{pmlr-v202-andriushchenko23a} use a version of this flatness metric that is invariant under rescaling symmetry. ~\cite{Jiang*2020Fantastic} show strong correlation between flatness and generalization for most cases, while ~\cite{pmlr-v202-andriushchenko23a} show that this correlation is high only for specific architecture and dataset combinations.

\paragraph{Theoretical Flatness-based Generalization bounds}    
Most flatness-based generalization bounds can be classified into $2$ categories -- minima stability bounds and PAC-Bayesian bounds. Minima stability~\citep{wu_how_sgd_18, wu_alignment_2022,mulayoff_exact_2024, NacsonMOMS23,chemnitz_characterizing_2024} shows that (S)GD with a large step size can stably converge to a flat minima whose flatness is proportional to the step size. The goal of minima stability-based generalization is to bound the generalization of the set of flat minima. For specific examples, the set of flat minima has weights with small norm. Then, they use generalization bounds for models with small weight norm~\citep{neyshabur}. These bounds are available for a few specific settings:classification and regression on $2$-layer ReLU networks~\citep{mulayoff_implicit_2021,qiao2024stable,liang2025stable,qiao2025doesflatnessimplygeneralization, liang2025generalizationedgestabilityrole} and diagonal neural networks~\citep{pmlr-v202-wu23r}. Some of these bounds become vacuous under rescaling symmetry~\citep{pmlr-v202-wu23r}, while others do not apply to our setting as they need  $d=1$~\citep{mulayoff_implicit_2021,qiao2024stable,liang2025stable,qiao2025doesflatnessimplygeneralization} or $d\geq 1$, but $\vx$ is a finite mixture of low-dimensional subspaces~\citep{liang2025generalizationedgestabilityrole}. The only bounds that are applicable for $d\geq 1$ are from ~\cite{liang2025stable}, however these have excess population risk $\cO(d^{-\frac{1}{d}}) = \cO(1)$ for our setting with optimal sample complexity $n=\Tilde{\Theta}(d)$. The core idea of PAC-Bayesian approaches is to separate a term in the upper bound on excess population loss that is related to the flatness, $\nabla^2 F_S(\vw)$. ~\cite{pmlr-v80-dziugaite18a} show that this happens for Entropy-SGD algorithm in ~\citep{Chaudhari_2019}, while ~\cite{pmlr-v272-haddouche25a} show this more generally. However, these bounds are still sensitive to rescaling symmetry, and are not tight for arbitrary neural networks with our data distribution. In fact, note that any flatness-based generalization bound that measures the population loss of all empirical minima that are $\Upsilon$-flat for some $\Upsilon>0$, i.e., $\trace(\nabla^2 F_S(\vw)) \leq \Upsilon$, is vacuous for $\Upsilon \gtrsim (dn)^{\frac{c}{c+1}}m^{-\frac{c-1}{c+1}}$. This is due to the existence of bad interpolators in Definition~\ref{def:bad_min}, which can achieve this flatness using rescaling symmetry (Theorem~\ref{thm:flattest_bad}).

\paragraph{Flattest Minima under Rescaling}
~\cite{pmlr-v70-dinh17b} show that good minima can be made sharp for most reasonable definitions of sharpness using this rescaling symmetry. Several attempts have been made to handle this rescaling symmetry. For simpler models like matrix factorization, $2$-layer NN with quadratic activation~\citep{ding24flat}, and deep linear NN~\citep{pmlr-v119-mulayoff20a}, the flattest minima under rescaling can be computed. Our settings cover most of their results, apart from the case of deep linear networks, where they establish a connection between depth and flatness. ~\cite{pmlr-v119-tsuzuku20a} and ~\cite{adilova2023famrelativeflatnessaware} use generalization bounds to obtain algorithms that find flat minima invariant to rescaling symmetry. To the best of our knowledge, only ~\cite{petzka} provide a generalization bound invariant under scaling for classification with a general loss function, model architecture and data distribution. Their generalization bound encodes flatness using a form of robustness to features. However, their assumptions are very restrictive and not applicable to our setting. In particular, they cannot be extended to regression, or non-zero label noise, or if the labels are not locally constant.  Additionally, their generalization bound for $n\asymp d$, which is the optimal sample complexity in our setting, is $\cO(d^{-1/d}) = \cO(1)$, making their bounds vacuous. As for deep linear networks ~\citep{gatmiry2023what}, under regularity conditions like the restricted isometry property (RIP)~\cite[Chapter~7]{Wainwright_2019}, the flattest interpolator of deep linear networks for low-rank matrix factorization recovers the ground truth and thus perfectly generalizes. However, the regularity conditions, like RIP, are not satisfied in our settings, as we have $m \geq n$ ~\cite[Corollary~1]{gatmiry2023what}. This result crucially uses uniform convergence bounds ~\cite[Theorem~6]{gatmiry2023what} for generalization, which are vacuous in our overparametrized settings.

\paragraph{Multi-Index Data Distributions.}
Multi-index data distributions have been extensively studied in traditional statistics~\citep{Li91}; however, more recently, these have become an appropriate test-bed to study learning in $2$-layer neural networks. For $2$-layer neural networks, learning guarantees for gradient-based algorithms, for instance, Online-SGD or SGD with batch-reuse, exist for single-index~\citep{ben_arous,oko2024neural} and sum of single-index models~\citep{pmlr-v247-oko24a}. For deeper networks, there has been significant work on multi-index models with a hierarchical structure~\citep{damian2026the, pmlr-v195-abbe23a} or $3$-layer networks~\citep{nichani2023provable}. All these works require much smaller sample complexity than the NTK regime~\citep{jacot_ntk}, often dependent on Hermite coefficients of the link function $\sigma^\star$, for instance, the information~\citep{ben_arous}, generative~\citep{damian2023smoothing} and leap exponents~\citep{pmlr-v195-abbe23a, damian2026the}. For more general multi-index models, existing works can guarantee that gradient-based algorithms can recover the true direction $\Theta^\star$; however, they cannot show generalization without additional assumptions~\cite{bietti23,pmlr-v258-simsek25a,defilippis2026optimal}. Note that all these works characterize the population loss of the output of gradient-based methods; however, they don't consider the flatness of the loss landscape. A detailed survey of existing works in multi-index models is provided in ~\citep{hsu_multi_index}.

\section{Preliminaries}
\label{sec:prelims}

In this section, we describe core technical results required for our results. Most of their proofs are deferred to App.~\ref{sec:misc}.

\paragraph{Additional Notation}
Recall that $\vw\in \bR^{m(d+1)}$ is used to denote the weights of an interpolator with unit-norm inner-layer weights $\{\theta_j\}_{j\in [m]}$, succintly represented by the matrix $\Theta\in \bR^{m\times d}$, and outer layer weights $\va\in \bR^{m}$. Further, $\vy\in \bR^{n}$ denotes the vector of labels, with $\vy = \vy^\star + \vn$, where $\vn$ is the vector of label noise $\{\xi_{i}\}_{i\in [n]}$ and $\vy^\star$ is the vector of true responses, $\{\sigma^\star(\Theta^\star\vx_i)\}_{i\in [n]}$.  We use $\widetilde{\vU} \coloneq (\vU\vU^\top)^{-1}$. We use $\vone_{k}, \vzero_k\in \bR^{k}$ to represent the vectors with all its coordinates $1$ or $0$.

\subsection{Population Loss}
We first provide the expression of population in terms of easier to handle quantities.
We define the functions $\widetilde{\psi}:[-1,1]^{m^\star}\to \bR$ and $\phi : [-1,1] \to \bR$ as the following,
\begin{align*}
   \forall \Xi \in [-1,1]^{m^\star}, \quad \widetilde{\psi}(\Xi) &= \E{\sigma^\star(\vq_2)\sigma(q_1)} , \text{ where } \begin{bmatrix}
        q_1\\
        \vq_2\\
    \end{bmatrix}\sim \cN\br{\vzero, \begin{bmatrix}
        1 & \Xi^\top \\
        \Xi & \bI_{m^\star}
    \end{bmatrix}}\\
   \forall \Xi \in [-1,1], \quad \phi(\Xi) &= \E{\sigma(q_2)\sigma(q_1)} , \text{ where } \begin{bmatrix}
        q_1\\
        q_2\\
    \end{bmatrix}\sim \cN\br{\vzero, \begin{bmatrix}
        1 & \Xi \\
        \Xi & 1
    \end{bmatrix}}.    
\end{align*}
Note that $\phi(1) = \phi^\star = \Ee{q\sim \cN(0,1)}{\sigma^2(q)}$. Using these functions, we can compute the population loss of any weight $\vw\in \bR^{d}$ with inner layer weights $\{\theta_j\}_{j\in [m]}$, $\theta_j\in \bS^{d-1}, \forall j\in [m]$, and outer-layer weights $\va\in \bR^m$.
\begin{align*}
    2F(\vw) &= \E{(y - \sum_{j=1}^{m}a_j\sigma(\ip{\theta_j}{\vx}))^2} = \Ee{\vx, \xi}{(\sigma^\star(\Theta^\star\vx) + \xi - \sum_{j=1}^{m}a_j\sigma(\ip{\theta_j}{\vx}))^2}\\
    &= \Ee{\vx}{(\sigma^\star(\Theta^\star\vx) - \sum_{j=1}^{m}a_j\sigma(\ip{\theta_j}{\vx}))^2}  + \Ee{\xi}{\xi^2} + 2\Ee{\xi}{\xi}\Ee{\vx}{(\sigma^\star(\Theta^\star\vx) - \sum_{j=1}^{m}a_j\sigma(\ip{\theta_j}{\vx}))}\\
\end{align*}
We first expand the terms corresponding to the noise $\xi$ and plug in its mean and variance. 

Now, we expand the terms of activation.
\begin{align*}
    2F(\vw)    &= \Ee{\vx}{(\sigma^\star(\Theta^\star\vx) - \sum_{j=1}^{m}a_j\sigma(\ip{\theta_j}{\vx}))^2}  + \zeta^2\\    
    &= \Ee{\vx}{(\sigma^\star(\Theta^\star\vx))^2} + \Ee{\vx}{\br{\sum_{j=1}^{m}a_j\sigma(\ip{\theta_j}{\vx})}^2} - 2\sum_{j\in [m]} a_j\Ee{\vx}{\sigma^\star(\Theta^\star\vx)\sigma(\ip{\theta_j}{\vx})}  + \zeta^2\\    
    &= \psi^\star +\zeta^2 + \underset{j,j'\in [m]}{\sum\sum}a_j a_{j'}\Ee{\vx}{\sigma(\ip{\theta_j}{\vx}))\sigma(\ip{\theta_{j'}}{\vx})} - 2\sum_{j\in [m]} a_j\widetilde{\psi}(\Theta^\star\theta_j)  \\    
    &= \psi^\star +\zeta^2 + \underset{j,j'\in [m]}{\sum\sum}a_j a_{j'}\phi(\ip{\theta_j}{\theta_{j'}}) - 2\ip{\va}{\widetilde{\Psi}}  \\    
    2 F(\vw) &= \psi^\star +\zeta^2 + \va^\top\Phi\va - 2\ip{\va}{\widetilde{\Psi}}
\end{align*}
Here, $\widetilde{\Psi}\in \bR^m$ is a vector with its $j^{th}$ coordinate being $\widetilde{\psi}(\Theta^\star\theta_j), \forall j\in [m]$ and $\Phi\in \bR^{m\times m}$ is a matrix with its $j,j^{th}$ entry being $\Phi_{j,j'} = \phi(\ip{\theta_{j}}{\theta_{j'}}, \forall j,j'\in [m]$. Next, we find the optimal outer-layer weights that minimize the population loss for single-index data distributions.

\subsection{Flatness and Rescaling Symmetry}
In this section, we discuss the impact of symmetry on flatness. First, we state the full expression of flatness for any $\vw\in \bR^{m(d+1)}$.
\begin{align*}
    \nabla F_S(\vw) &= -\frac{1}{n}\sum_{i=1}^n (y_i - h(\vw, \vx_i))\nabla_{\vw} h(\vw, \vx_i)\\
    \nabla^2 F_S(\vw) &= \frac{1}{n}\sum_{i=1}^n \br{\nabla_{\vw}h(\vw, \vx_i)(\nabla_{\vw}h(\vw, \vx_i))^\top  - (y_i - h(\vw, \vx_i))\nabla^2 h(\vw, \vx_i)}.
\end{align*}
If $\vw$ is an interpolator, $F_S(\vw)=0$, so $h(\vw, \vx_i) = y_i, \forall i\in [n]$. This removes the second term in $\nabla^2 F_S(\vw)$. Therefore, for any interpolator, flatness is,
\begin{align*}
    \trace(\nabla^2 F_S(\vw)) = \frac{1}{n}\sum_{i=1}^n \trace\br{\nabla_{\vw} h(\vw, \vx_i)(\nabla_{\vw} h(\vw, \vx_i))^\top} = \frac{1}{n}\sum_{i=1}^n \norm{\nabla_{\vw} h(\vw, \vx_i)}_2^2.
\end{align*}
Now, we compute the value of $\nabla_{\vw} h(\vw, \vx)$.
Note that $h(\vw, \vx) = \sum_{j=1}^m a_j \sigma(\ip{\theta_j}{\vx})$. Therefore,
\begin{align*}
    \nabla_{a_j} h(\vw, \vx) = \sigma(\ip{\theta_j}{\vx}), \quad \nabla_{\theta_j} h(\vw, \vx) = a_j \sigma'(\ip{\theta_j}{\vx})\vx .
\end{align*}
Plugging these values into $\trace(\nabla^2 F_S(\vw))$, we obtain the flatness of any interpolator as follows,
\begin{align*}
    \trace(\nabla^2 F_S(\vw)) = \frac{1}{n}\sum_{i=1}^n\sum_{j=1}^m \br{\sigma^2(\ip{\theta_j}{\vx_i}) + a_j^2 (\sigma')^2(\ip{\theta_j}{\vx_i})\norm{\vx_i}_2^2}.
\end{align*}

Now, applying rescaling symmetry from Def.~\ref{def:homogen}, for rescaling coefficients $\{\alpha_{j}\}_{j\in [m]}$, the flatness of the new interpolator is,

\begin{align*}
    \trace(\nabla^2 F_S(\widetilde{\vw})) = \frac{1}{n}\underset{i\in [n],j\in [m]}{\sum\sum} (\alpha_j^{2c}\sigma^2(\ip{\theta_j}{\vx_i}) + \alpha_j^{-2} a_j^2 (\sigma')^2(\ip{\theta_j}{\vx_i})\norm{\vx_i}_2^2) 
\end{align*}
We use the fact that $\sigma$ and $\sigma'$ are $c$ and $(c-1)$ homogeneous functions from Assump.~\ref{assump:activation}. Here, $\widetilde{\vw}$ is the interpolator obtained after applying the rescaling coefficients. Therefore,
$\widetilde{\vw} = \begin{bmatrix}
    \alpha_1^{-c}a_1, \ldots, \alpha_m^{-c}a_m, \alpha_1\theta_1^\top, \ldots, \alpha_m \theta_m^\top.
\end{bmatrix}$

To show that $\tilde{\vw}$ is still an interpolator, note that,
\begin{align*}
    h(\widetilde{\vw}, \vx) = \sum_{j=1}^m \alpha_j^{-c}a_j \sigma(\ip{\alpha_j \theta_j}{\vx}) = \sum_{j=1}^m \alpha_j^{-c}a_j \alpha_{j}^c \sigma(\ip{\theta_j}{\vx}) = \sum_{j=1}^m a_j \sigma(\ip{\theta_j}{\vx}) = h(\vw, \vx), \quad \forall \vx\in \bR^{d}.
\end{align*}
Therefore, $\widetilde{\vw}$ is also an interpolator and has the same population loss as $\vw$.

Note that changing the values of $\{\alpha_j\}_{j\in [m]}$ changes the value of $\trace(\nabla^2 F_S(\widetilde{\vw})$.  The only case when this remains constant for any $\alpha_j$ is if $\sigma(\ip{\theta_j}{\vx_i}) = 0, a_j,\forall j \in [m], i\in [n]$. Since $\vw \in \cW_S^\star$, this implies, $h(\vw, \vx_i) = y_i = 0, \forall y_i \in [n]$. If $\zeta = 0$, $y_i = \sigma(\ip{\theta^\star}{\vx_i})$. For piecewise polynomial activations, $\sigma(b) = 0$ for either $b\geq0$ or $b\leq 0$. Since $\ip{\theta^\star}{\vx_i} \sim \cN(0,1)$, this event can happen with probability atmost $\frac{1}{2}$ for each $i\in [n]$. Since we would need $y_i = 0, \forall i\in [n]$, the probability of such an event is bounded by $2^{-n}$ by the independence of $\vx_i$. If $\zeta \neq 0$, then $y_i = 0, \forall i\in [n]$ if $\xi_i = - \sigma(\ip{\theta^\star}{\vx_i}), \forall i\in [n]$. Since $\xi_i$ is a gaussian independent of $\vx_i$, this event happens with probability $0$ for each $i\in [n]$.

Note that by setting any $\alpha_j \to \infty$ or $\alpha_j \to 0$, we can increase $\trace(\nabla^2 F_S(\widetilde{\vw})\to \infty$. Therefore, we can arbitrarily make any interpolator sharper. However, it is a convex function in each $\alpha_j$, hence, we cannot make any interpolator flatter.

\subsection{Properties of Piece-wise Polynomial Activations (Assump.~\ref{assump:activation})}
\begin{proposition}[Activations and Sum of Activations are Valid Link Functions]\label{prop:activation_valid}
Any Piece-wise Polynomial activation function $\sigma$ (Assump.~\ref{assump:activation}) satisfies Assump~\ref{assump:link} with $m^\star = 1$. Further, $\hat{\sigma}:\bR^{m^\star}\to \bR$ defined as the following also satisfies Assump.~\ref{assump:link}. 
\begin{align*}
    \hat{\sigma}(\vb) = \sum_{j=1}^{m^\star} \hat{a}_j \sigma(b_j),\quad \forall \vb \in \bR^{m^\star} \text{ with } \abs{\hat{a}_j} = \cO(1), \forall j\in [m^\star]
\end{align*}
\end{proposition}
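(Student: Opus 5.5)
We need to check the three parts of Assump.~\ref{assump:link} (mean, variance, tails) for $\sigma^\star=\sigma$ with $m^\star=1$, and then for $\hat\sigma(\vb)=\sum_j\hat a_j\sigma(b_j)$. The plan is to reduce everything to homogeneity: write $b=\mu+s g$ with $g\sim\cN(0,1)$ and $s=\sqrt{\varsigma}$ (so $s^2=\norm{\varsigma}_F$ in one dimension). By Assump.~\ref{assump:activation}, $\abs{\sigma(b)}\le C\abs{b}^c$ with $C=\max(\abs{c'},\abs{c''})$. Homogeneity then gives $\sigma(\mu+sg)=s^c\sigma(\mu/s+g)$ when $s>0$, so all three quantities become functions of the single ratio $t=\mu/s$, multiplied by powers of $s$.

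\textbf{Mean.} The upper bound $\abs{\E{\sigma(b)}}\le C\,\E{\abs{\mu+sg}^c}\lesssim \abs{\mu}^c+s^c$ is immediate from moment bounds. For the lower bound I will use $m(t)=\E{\sigma(t+g)}$. This function is continuous, and $m(t)\sim c'\abs{t}^c$ as $t\to+\infty$ and $m(t)\sim c''\abs{t}^c$ as $t\to-\infty$. At $t=0$ we have $m(0)=(c'+c'')\E{\abs{g}^c}/2\neq 0$, since $c'+c''\neq0$.

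This step is the main obstacle. When $c'$ and $c''$ have opposite signs, $m$ can vanish at some finite $t$ (for example, LeakyReLU with a negative slope). In that regime the lower bound $\Theta(\abs{\mu}^c+s^c)$ fails pointwise. I would therefore try to prove the lower bound only in the regime where it is actually used: centered Gaussians ($\mu=0$), or $t$ outside a compact set, where the asymptotics dominate. Elsewhere I would fall back on $m(t)\ge0$ whenever $c',c''\ge0$, which covers ReLU, ReLU$^c$ and even monomials. If a uniform statement is needed, I would note that the zeros of $m$ form a compact set, and state the mean condition as holding up to that degenerate set.

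\textbf{Variance.} Write $\Varr{}{\sigma(b)}=s^{2c}v(t)$ with $v(t)=\Varr{}{\sigma(t+g)}$. Since $\sigma$ is not affine in $g$ (it is not constant, and at most one of $c',c''$ is zero), $v(t)>0$ for every $t$. As $\abs{t}\to\infty$, a first-order expansion $\sigma(t+g)\approx\sigma(t)+\sigma'(t)g$ gives $v(t)\asymp \abs{t}^{2c-2}$, because $\sigma'(t)\asymp\abs{t}^{c-1}$ with a nonzero coefficient on the side where $\abs{t}$ is large. Here I need the coefficient on the relevant side to be nonzero. If it is zero (ReLU with $t\to-\infty$), I would handle that tail separately using the Gaussian tail mass. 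Combining the compact range with these asymptotics gives $v(t)\gtrsim 1+\abs{t}^{2c-2}$. Multiplying by $s^{2c}$ yields $\Omega(s^2\abs{\mu}^{2c-2}+s^{2c})$, which is the required form.

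\textbf{Tails.} We have $\abs{\sigma(b)}\le C\abs{\mu+sg}^c$, and $\abs{\mu+sg}$ is sub-Gaussian with parameter $\lesssim\abs{\mu}+s$. The $c$-th power of a sub-Gaussian is sub-Weibull with exponent $c/2$, so the $k$-th moments scale like $(\abs{\mu}^c+s^c)k^{c/2}$. This is exactly $(\frac c2,\cO(\norm{\mu}^c+\norm{\varsigma}_F^{c/2}))$-sub-Weibull.

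\textbf{Sum.} For $\hat\sigma$, the marginals are $b_j\sim\cN(\mu_j,\varsigma_{jj})$. The tails follow because a finite sum of sub-Weibull variables with bounded coefficients is sub-Weibull with the same exponent, and $\sum_j(\abs{\mu_j}^c+\varsigma_{jj}^{c/2})\lesssim\norm{\mu}^c+\norm{\varsigma}_F^{c/2}$ since $m^\star=\cO(1)$. The mean upper bound follows from the triangle inequality. For the lower bounds (mean and variance), cancellation between terms is the danger. For the variance I would condition on all coordinates except the one $j$ with the largest $\abs{\hat a_j}(\abs{\mu_j}^c+\varsigma_{jj}^{c/2})$ and apply the single-coordinate bound through the law of total variance. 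The mean lower bound for the sum carries the same caveat as in the one-dimensional case.
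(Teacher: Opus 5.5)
\paragraph{One-dimensional case.} Your reduction to $m(t)=\mathbb{E}[\sigma(t+g)]$ and $v(t)=\operatorname{Var}[\sigma(t+g)]$ with $t=\mu/s$ is the right lens. It establishes the upper bounds and the tails. It gives the mean lower bound when $c'c''>0$: then $\sigma$ has constant sign, $|\sigma(x)|\ge\min\{|c'|,|c''|\}|x|^c$, and $\mathbb{E}|t+g|^c\ge\max\{|t|^c,\mathbb{E}|g|^c\}$. It gives the variance lower bound when $c'c''\neq 0$, via positivity, continuity and two-sided asymptotics with nonzero coefficients.

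The gap is the case with a vanishing piece (ReLU, ReLU$^c$), which you say you can cover. Take $c''=0$ and $s=1$. As $\mu\to-\infty$, $\sigma(\mu+g)$ is nonzero only on $\{g>-\mu\}$. Both $\mathbb{E}\sigma(b)$ and $\operatorname{Var}\sigma(b)\le\mathbb{E}\sigma^2(b)$ then tend to $0$; for ReLU, $m(t)\sim e^{-t^2/2}/(\sqrt{2\pi}\,t^2)$ as $t\to-\infty$. Assump.~\ref{assump:link}, however, demands $\Omega(|\mu|^c+1)$ for the mean and $\Omega(|\mu|^{2c-2}+1)$ for the variance. Your fallback $m(t)\ge 0$ gives positivity but no rate. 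No argument based on Gaussian tail mass can give a lower bound here, because that tail mass is exactly what makes both quantities vanish.

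Combine this with your own sign-change observation for $c'c''<0$: the signs of $m$ at $\pm\infty$ are those of $c'$ and $c''$, so $m$ has a zero. The one-dimensional claim therefore holds uniformly in $(\mu,\varsigma)$ if and only if $c'c''>0$ (mean) and $c'c''\neq 0$ (variance). Uniformity is how the assumption is used, e.g.\ with the random $\widetilde\mu_i$ of Lem.~\ref{lem:cond_distr}. Your argument can prove that restricted statement, not the proposition as written.

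\paragraph{The sum, and the paper's proof.} The sum part fails in two further ways, and neither is the ``same caveat'' as in one dimension.
\begin{itemize}
\item With mixed-sign $\hat a_j$ (allowed by the statement, and used in the paper's experiments), the mean cancels even for nonnegative $\sigma$. For example, $\hat\sigma(\mathbf b)=\sigma(b_1)-\sigma(b_2)$ with $\mu=0$, $\varsigma=\mathbf{I}_2$ has mean $0$.
\item Your conditioning step for the variance breaks down. For a general PSD $\varsigma$, the coordinate maximizing $|\hat a_j|(|\mu_j|^c+\varsigma_{jj}^{c/2})$ may carry little or no conditional randomness. In fact the target is false even with positive coefficients and $c'c''>0$. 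For $\hat\sigma(\mathbf b)=b_1^2+b_2^2$, $\mu=(0,M)$, $\varsigma=\operatorname{diag}(s^2,0)$, one has $\operatorname{Var}\hat\sigma(\mathbf b)=2s^4$, while the assumption requires $\Omega(s^2M^2+s^4)$. The cross term $\|\varsigma\|_F\|\mu\|_2^{2c-2}$ is unattainable when the large mean and the noise sit in different coordinates.
\end{itemize}

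The paper takes a different route: incomplete-gamma upper bounds, Jensen for the mean, Stein's identity $\operatorname{Var}\sigma(b)\ge s^2(\mathbb{E}\sigma'(b))^2$ for the variance, and termwise bounds for the sum. It does not close these holes either.
\begin{itemize}
\item Its step $|\mathbb{E}\sigma(b)|\ge\min\{|c'|,|c''|\}\,\mathbb{E}|b|^c$ needs $c'c''>0$; it is vacuous for ReLU and false for LeakyReLU.
\item The Stein bound inherits this through $\mathbb{E}\sigma'$. It also degenerates for even monomials at $\mu=0$, where $\mathbb{E}\sigma'(b)=0$ although the claim is true; your positivity-plus-asymptotics argument handles that case correctly.
\item For the sum, cancellation is ignored.
\end{itemize}
Your route is the more careful of the two, but a correct write-up must add hypotheses rather than attempt the statement as posed. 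These would be $c'c''>0$ and one-signed $\hat a_j$ bounded away from $0$ for the mean, $c'c''\neq 0$ for the one-dimensional variance, and a coordinatewise replacement for the multivariate variance condition.
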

The proofs of this Prop. are provided in App.~\ref{sec:link_func}.
This proposition ensures that the sum of activations and activations themselves are valid link functions, and thus have Sub-Weibull tails.

This provides us with the following Lem. on the behaviour of their eigenvalues, which we crucially use to establish a gap between bad and good interpolators.
\begin{lemma}[Eigenvalues of Activation matrix]\label{lem:eigen_U}
If Assump~\ref{assump:width} holds, with probability $1-\delta$,
\begin{align*}
\sigma_{\max}(\vU\vU^\top) \asymp mn,\quad\text{ and  } \forall i \in \{2,3,\ldots, n\}, \;\sigma_{i}(\vU\vU^\top) \asymp m    
\end{align*}
where $\sigma_{\max}$ is the largest singular value and $\sigma_i$ is the $i^{th}$ largest singular value of a matrix.    
\end{lemma}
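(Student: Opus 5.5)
We split $\vU$ into a mean part and a centered part. Write $\vU = \vone_n \vmu^\top + \vZ$, where $\vmu = \E{\vU_{1,\cdot}} \in \bR^m$ has entries $\mu_j = \Ee{b\sim\cN(0,1)}{\sigma(b)} = (c'+c'')\Ee{}{\abs{b}^c \mathbb{1}[b\ge 0]}$. These entries are all equal and nonzero, because $\theta_j\in\bS^{d-1}$ and $c'+c''\neq 0$ under Assump.~\ref{assump:activation}. So $\vmu = \mu\vone_m$ with $\abs{\mu}=\Theta(1)$, and the rank-one part $\vone_n\vmu^\top$ has squared operator norm $nm\mu^2 \asymp mn$. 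The rows of $\vZ$ are i.i.d. (the $\vx_i$ are i.i.d.) and centered, with row covariance $\bar{\Phi}$. By Assump.~\ref{assump:width} this covariance has all eigenvalues $\Theta(1)$.

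The main step is to control the centered matrix: $\vZ\vZ^\top \asymp m$ in the sense $\lambda_{\min}(\vZ\vZ^\top), \lambda_{\max}(\vZ\vZ^\top) \asymp m$ w.h.p. Write $\vZ = \vG\bar{\Phi}^{1/2}$, so that $\vG$ has isotropic independent rows in $\bR^m$. Its entries are polynomial functions of Gaussians, hence sub-Weibull with parameter $c/2$. Since $m\ge C n$ with $C>1$, this is a wide matrix with aspect ratio bounded away from $1$. I would apply a non-asymptotic singular value bound for matrices with independent isotropic rows with heavy-ish tails: the Vershynin / Adamczak--Litvak--Pajor--Tomczak-Jaegermann bounds, with log factors absorbed into $\asymp$. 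These give $\sigma_i(\vG^\top)\in[\sqrt m-C_1\sqrt n\,\mathrm{polylog},\ \sqrt m + C_1\sqrt n\,\mathrm{polylog}]$. The upper bound uses a row-norm concentration $\norm{\vG_{i}}_2^2\approx m$ through Hanson--Wright for sub-Weibull vectors (Lemma~\ref{lem:hw_sub_weibull}). This is the obstacle I expect: the rows are dependent coordinatewise, and the lower singular value bound with $m/n$ only a constant ratio is delicate. If constants fail, I would fall back on the Gram-matrix route. There the diagonal of $\vZ\vZ^\top$ concentrates at $\trace(\bar\Phi)\asymp m$, and the off-diagonal block has norm $\lesssim \sqrt{mn}\,\mathrm{polylog}$ by a matrix Bernstein or decoupling argument. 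A lower bound of order $m$ survives only with a sufficiently large constant $C$, up to log factors. Multiplying by $\bar\Phi^{1/2}$ changes the singular values by only $\Theta(1)$ factors.

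Next, combine the two parts with Weyl/interlacing. The top eigenvalue satisfies $\sigma_{\max}(\vU\vU^\top)\le (\sqrt{nm}\abs{\mu} + \norm{\vZ})^2 \asymp mn$. For the lower bound, test against $\vone_n/\sqrt n$. This gives $\norm{\vU^\top\vone_n}^2/n = \norm{n\vmu + \vZ^\top\vone_n}^2/n$, where $\norm{\vZ^\top\vone_n}\lesssim\sqrt{nm}$ (a sum of $n$ centered independent rows) is dominated by $n\sqrt m\abs\mu$, so the value is $\gtrsim nm$. The remaining eigenvalues come from Courant--Fischer. For $i\ge 2$, restrict to the orthogonal complement of $\vone_n$; there $P^\perp\vU = P^\perp \vZ$, so $\sigma_i(\vU\vU^\top)\ge \lambda_{\min}(P^\perp\vZ\vZ^\top P^\perp\,|_{\vone^\perp}) \ge \lambda_{\min}(\vZ\vZ^\top)\gtrsim m$. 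Going the other way, $\vU\vU^\top$ is a rank-one perturbation of $\vZ\vZ^\top$ plus cross terms of rank at most two. Applying Weyl to $\vU\vU^\top - \vZ\vZ^\top$, which has rank at most $2$, gives $\sigma_{i}(\vU\vU^\top)\le \sigma_{i-2}(\vZ\vZ^\top)\lesssim m$ for $i\ge 3$. For $i=2$, I would instead use the max--min characterization over subspaces containing no direction of $\vone_n$-type alignment. A cleaner alternative is the interlacing bound $\sigma_2(\vU\vU^\top)\le \sigma_1(P^\perp\vU\vU^\top P^\perp) + $ the cross contribution, bounded by $\norm{\vZ}^2\lesssim m$. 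Collecting the high-probability events finishes the argument.
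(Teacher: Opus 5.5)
Your outline follows the paper's proof. You split $\vU$ into the rank-one mean $\bar\phi\,\vone_n\vone_m^\top$ plus a centered matrix $\mathbf{Z}$, control $\mathbf{Z}$ by a Bai--Yin-type estimate (Lemma~\ref{lem:bai_yin_subweibull} in the paper), and use that a rank-one mean only moves the top singular value. Your treatment of $\lambda_1(\vU\vU^\top)\asymp mn$ and of the upper bounds $\lambda_i(\vU\vU^\top)\lesssim m$ is fine and more explicit than the paper's. For $i=2$, use Weyl for singular values, $\sigma_2(\vU)\le\sigma_1(\mathbf{Z})+\sigma_2(\vone_n\boldsymbol{\mu}^\top)=\norm{\mathbf{Z}}_2$. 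Do not bound the cross terms by their norm, which is of order $n\sqrt m\gg m$. There are, however, two genuine gaps, both on the lower-bound side.

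\textbf{The centered lower bound.} The core estimate $\lambda_{\min}(\mathbf{Z}\mathbf{Z}^\top)\gtrsim m$ is not delivered by what you propose.
\begin{itemize}
\item Assumption~\ref{assump:width} forces $m\le C'n$. In that proportional regime, $\sqrt m-C_1\sqrt n\,\mathrm{polylog}(n)$ is negative, so the bound is vacuous. The paper's $\asymp$ convention hides polylog factors multiplicatively; it cannot absorb an additive deviation of the same polynomial order as the main term.
\item Your Gram-matrix fallback fails the same way, since $\sqrt{mn}\,\mathrm{polylog}(n)\gg m$ when $m\asymp n$. Polylog losses are harmless for the upper bounds; only the lower bound is sensitive.
\item The results you name do not apply as stated. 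They concern $N\ge k$ independent rows in $\bR^k$, and ALPT also needs log-concavity, which fails here (for ReLU each coordinate has an atom at $0$). By contrast, $\mathbf{Z}$ has $n<m$ independent rows $\mathbf{z}_i\in\bR^m$, and row-based theorems give only the upper bound in that orientation.
\item The lower bound needs an independent-columns argument for $\mathbf{Z}^\top$. Such an argument is log-free only under sub-gaussian-type control of the marginals $\ip{\mathbf{z}_i}{\mathbf{u}}$ and of $\norm{\mathbf{z}_i}_2$.
\end{itemize}
What you need is an estimate like $\norm{\mathbf{Z}\mathbf{Z}^\top-\trace(\bar\Phi)\bI_n}_2\le C_1\sqrt{mn}$ with an absolute $C_1$. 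You also need $m\ge Cn$ with $C$ large relative to $C_1/\lambda_{\min}(\bar\Phi)$; as you note, $C>1$ alone does not suffice. This log-free statement is exactly what Lemma~\ref{lem:bai_yin_subweibull} asserts. However, its small-ball step sums over the $m$ coordinates of $\bar\vU^\top\vv$ as if they were independent samples, which is the very dependence you flagged. So this step is the crux of the lemma, not a routine citation.

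\textbf{The smallest eigenvalue.} Even granting $\lambda_{\min}(\mathbf{Z}\mathbf{Z}^\top)\gtrsim m$, your Courant--Fischer step stops at $i=n-1$. Restricting to $\vone_n^\perp$, which has dimension $n-1$, gives $\lambda_{n-1}(\vU\vU^\top)\ge\lambda_{\min}(\mathbf{Z}\mathbf{Z}^\top)$. It says nothing about $\lambda_n(\vU\vU^\top)=\sigma_{\min}^2(\vU)$. A minimizer of $\norm{\vU^\top x}_2$ may have a component along $\vone_n$, and then $(\vone_n^\top x)\boldsymbol{\mu}$ can partially cancel $\mathbf{Z}^\top x$. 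Rank-one interlacing only gives $\sigma_n(\vU)\ge\sigma_{n+1}(\mathbf{Z})=0$, so the paper's one-line rank-one remark glosses over the same point. Yet this is the eigenvalue used most downstream, via $\lambda_{\max}(\widetilde{\vU})=\sigma_{\min}^{-2}(\vU)$.

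The fix is to project in $\bR^m$ instead of $\bR^n$. For every $x\in\bR^n$, $\norm{\vU^\top x}_2\ge\norm{P_{\boldsymbol{\mu}^\perp}\mathbf{Z}^\top x}_2$, hence $\lambda_n(\vU\vU^\top)\ge\lambda_{\min}(\mathbf{Z}P_{\boldsymbol{\mu}^\perp}\mathbf{Z}^\top)$. The rows of $\mathbf{Z}P_{\boldsymbol{\mu}^\perp}$ are again i.i.d.\ and centered. Their covariance restricted to $\boldsymbol{\mu}^\perp$ has spectrum in $[\lambda_{\min}(\bar\Phi),\lambda_{\max}(\bar\Phi)]$, so the centered estimate applies to them. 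Equivalently, $\lambda_n(\vU\vU^\top)\ge\lambda_{\min}(\mathbf{Z}\mathbf{Z}^\top)-\norm{\mathbf{Z}\boldsymbol{\mu}}_2^2/\norm{\boldsymbol{\mu}}_2^2$. The subtracted term is $\cO(n)$, which is harmless once $m\ge Cn$ with $C$ large.
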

The proof of this Lem. is provided in App.~\ref{sec:eigen_U_proof}.

\section{Main Proofs}
\label{sec:main_proofs}
In this section, we describe our main proofs for Informal Thms.~\ref{inf_thm:bad_flat} and \ref{inf_thm:good_flat}. Based on the proof sketch from Section~\ref{sec:proof_sketch}, we break down our core proof into $4$ important parts.
\begin{enumerate}
    \item  Derivation of $\Upsilon(\vw)$ (App.~\ref{sec:lem_flattest_rescaling_proof}).
    \item  Lower Bounds on $\Upsilon(\vw)$ for All Interpolators (App.~\ref{sec:upsilon_lb_proof}).
    \item  Lower Bounds on $\Upsilon(\vw)$ for Bad Interpolators (App.~\ref{sec:proof_flattest_bad}).
    \item  Connection of $\Upsilon^\star$ to population loss (App.~\ref{sec:proof_flattest_pop}).
\end{enumerate}
We will prove Lem.~\ref{lem:flattest_rescaling} in the first part to derive $\Upsilon(\vw)$. Then, we obtain a lower bound on $\Upsilon(\vw)$ in terms of only the inner-layer weights, $\{\theta_j\}_{j\in [m]}$, in  Lem.~\ref{lem:min_l2_lb}. This allows us to compute $\Upsilon^\star$, proving Thm.~\ref{thm:flattest_all}. Then, we use Lem.~\ref{lem:min_l2_lb} along with the definition of bad interpolators from Def.~\ref{def:bad_min} to obtain the lower bound on flatness for bad interpolators, proving Thm.~\ref{thm:flattest_bad}. Finally, we utilize the value of $\Upsilon^\star$ to characterize the population loss of the set of flattest interpolators, thus proving Thm.~\ref{thm:gen_all_good}. Proofs of any intermediate Lems. that are not provided in Appendices ~\ref{sec:lem_flattest_rescaling_proof} - \ref{sec:proof_flattest_pop} are provided in App.~\ref{sec:misc}.

\subsection{Derivation of $\Upsilon(\vw)$ : Proof of Lem.~\ref{lem:flattest_rescaling}}
\label{sec:lem_flattest_rescaling_proof}

Our proof heavily relies on the expression of $\Upsilon(\vw)$ from Lem.~\ref{lem:flattest_rescaling}. We first provide a proof for this Lem. by using the following two Lems.
\begin{lemma}\label{lem:flattest_rescaling_full}
    For any interpolator $\vw\in \bR^{m(d+1)}$ with unit norm inner-layer weights,
    \begin{align*}
        &\Upsilon(\vw) = \min_{\hat{\vw}\in \cW_{\mathrm{rescale}(\vw)}}\trace(\nabla^2 F_{S}(\hat{\vw})) = \sum_{j=1}^m \abs{a_j}^{\frac{2c}{c+1}} B(\theta_j),\\
        \text{where } &B(\theta_j) = (c^{-\frac{c}{c+1}} + c^{\frac{1}{c+1}})B_1^{\frac{1}{c+1}}(\theta_j)B_2^{\frac{c}{c+1}}(\theta_j).
    \end{align*}
    Here, $B_1, B_2:\bS^{d-1}\to \bR_{+}$ are functions given by $B_1(\theta) \coloneq \frac{1}{n}\sum_{i=1}^n \sigma^2(\ip{\theta}{\vx_i})$ and $B_2(\theta) \coloneq \frac{1}{n}\sum_{i=1}^n (\sigma')^2(\ip{\theta}{\vx_i})\norm{\vx_i}_2^2$.
\end{lemma}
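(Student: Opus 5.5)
The plan is to start from the explicit flatness formula derived in the preliminaries for any interpolator. For the rescaled weight $\widetilde{\vw}$ with coefficients $\{\alpha_j\}$, the $c$-homogeneity of $\sigma$ and $(c-1)$-homogeneity of $\sigma'$ give
\begin{align*}
\trace(\nabla^2 F_S(\widetilde{\vw})) = \sum_{j=1}^m \br{\alpha_j^{2c} B_1(\theta_j) + \alpha_j^{-2} a_j^2 B_2(\theta_j)},
\end{align*}
where $B_1,B_2$ are exactly the empirical averages defined in the statement. Since every $\hat{\vw}\in\cW_{\mathrm{rescale}}(\vw)$ is of this form, computing $\Upsilon(\vw)$ reduces to minimizing this expression over $\alpha_j>0$. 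The objective is separable across $j$, so the minimization splits into $m$ independent one-dimensional problems $g_j(\alpha)=\alpha^{2c}B_1(\theta_j)+\alpha^{-2}a_j^2B_2(\theta_j)$.

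For each $j$, I handle the degenerate cases first. If $a_j=0$, the infimum is $0$ (send $\alpha\to0$), which matches $\abs{a_j}^{\frac{2c}{c+1}}B(\theta_j)=0$. If $B_1(\theta_j)=0$ or $B_2(\theta_j)=0$, the infimum is also $0$, consistent with the formula because $B$ then vanishes. The statement is interpreted as an infimum in these cases, or these cases are argued away with probability one using the same reasoning as in the preliminaries. In the generic case $a_j\neq0$ and $B_1,B_2>0$, $g_j$ is smooth and strictly convex in $\log\alpha$ and tends to infinity at both ends, so the unique stationary point is the minimizer. Setting $g_j'(\alpha)=2c\alpha^{2c-1}B_1-2\alpha^{-3}a_j^2B_2=0$ gives $\alpha_\star^{2c+2}=a_j^2B_2/(cB_1)$.

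Substituting back, write $t=\alpha_\star^{2c}=\br{a_j^2B_2/(cB_1)}^{\frac{c}{c+1}}$. The first term becomes $B_1 t = c^{-\frac{c}{c+1}}\abs{a_j}^{\frac{2c}{c+1}}B_1^{\frac{1}{c+1}}B_2^{\frac{c}{c+1}}$. Using the stationarity relation $\alpha_\star^{-2}a_j^2B_2=c\,\alpha_\star^{2c}B_1$, the second term equals $c$ times the first, which gives $c^{\frac{1}{c+1}}\abs{a_j}^{\frac{2c}{c+1}}B_1^{\frac{1}{c+1}}B_2^{\frac{c}{c+1}}$. Adding the two terms produces exactly the constant $c^{-\frac{c}{c+1}}+c^{\frac{1}{c+1}}$, and summing over $j$ gives the claimed expression. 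As an alternative check, weighted AM–GM gives the same lower bound directly.

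No step is a real obstacle. The only care needed is in bookkeeping the exponents and in treating the boundary cases, where the minimum is only an infimum that the closed form still correctly reports.
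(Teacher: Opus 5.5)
Your proposal is correct and follows the paper's argument. Both split the rescaled flatness $\sum_j \alpha_j^{2c}B_1(\theta_j)+\alpha_j^{-2}a_j^2B_2(\theta_j)$ over $j$, solve the stationarity condition $\alpha_j^{2c+2}=a_j^2B_2(\theta_j)/(cB_1(\theta_j))$, and substitute back; your observation that the second term equals $c$ times the first is a clean way to obtain the constant. Your extra treatment of the degenerate cases $a_j=0$ or $B_1(\theta_j)B_2(\theta_j)=0$, where the minimum over $\alpha_j>0$ is only an infimum, is a small refinement the paper omits.
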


\begin{lemma}[Bounds on $B(\theta)$]\label{lem:bounds_b_theta}
    For Piece-wise Polynomial Activation $\sigma$ (Assump.~\ref{assump:activation}), from Prop.~\ref{prop:activation_valid}, with probability $1-6\delta$, for a set $\{\theta_j\}_{j\in [m]}$, with $\theta_j\in \bS^{d-1}, \forall j\in [m]$,
    \begin{align*}
        \max_{j\in [m]}B(\theta_j) \lesssim d^{\frac{c}{c+1}}, \quad \min_{j\in [m]}B(\theta_j) \gtrsim d^{\frac{c}{c+1}}, \implies B(\theta_j) \asymp d^{\frac{c}{c+1}}, \forall j\in [m]
    \end{align*}
\end{lemma}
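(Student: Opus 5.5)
The plan is to bound $B_1(\theta)$ and $B_2(\theta)$ separately, uniformly over the $m$ given unit vectors, and then combine them through $B(\theta)=(c^{-\frac{c}{c+1}}+c^{\frac{1}{c+1}})B_1^{\frac{1}{c+1}}(\theta)B_2^{\frac{c}{c+1}}(\theta)$. Since the prefactor is a constant depending only on $c$, it suffices to show that with high probability
\[
B_1(\theta_j)\asymp 1 \quad\text{and}\quad B_2(\theta_j)\asymp d \qquad \text{for all } j\in[m].
\]
This gives $B(\theta_j)\asymp d^{\frac{c}{c+1}}$ directly.

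\textbf{Controlling $B_1$.} For a fixed $\theta\in\bS^{d-1}$, the variables $\ip{\theta}{\vx_i}$ are i.i.d. $\cN(0,1)$, so $\E{\sigma^2(\ip{\theta}{\vx_i})}=\phi^\star=\Theta(1)$. By Assump.~\ref{assump:activation}, $\sigma^2(b)$ is at most $\max(c'^2,c''^2)\abs{b}^{2c}$, which is a $(c,\cO(1))$-Sub-Weibull variable; Prop.~\ref{prop:activation_valid} covers the tail behaviour. A Sub-Weibull concentration inequality (Bernstein type) then gives $\abs{B_1(\theta)-\phi^\star}\lesssim \sqrt{\log(1/\delta)/n}+\mathrm{polylog}(1/\delta)/n$. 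A union bound over the $m\le C'n$ vectors costs only a $\log m$ factor, which the $\lesssim$ notation absorbs. Hence $B_1(\theta_j)\in[\phi^\star/2,2\phi^\star]$ for all $j$ once $n$ exceeds a polylog threshold.

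\textbf{Controlling $B_2$.} The first step is to remove $\norm{\vx_i}_2^2$. Chi-square concentration with a union bound over $i\in[n]$ gives $\norm{\vx_i}_2^2\in[d/2,2d]$ for all $i$ with probability $1-\delta$, as long as $d\gtrsim\log(n/\delta)$. Then $B_2(\theta)\asymp d\cdot\frac1n\sum_i(\sigma')^2(\ip{\theta}{\vx_i})$. The remaining average is handled like $B_1$. Here $(\sigma')^2(b)=c^2c'^2\abs{b}^{2c-2}$ for $b>0$ and $c^2c''^2\abs{b}^{2c-2}$ for $b<0$, and its mean is $\frac{c^2}{2}(c'^2+c''^2)\E{\abs{q}^{2c-2}}$. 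This mean is $\Theta(1)$ because not both $c',c''$ vanish. For $c=1$ the quantity $(\sigma')^2$ is bounded, and for $c>1$ it is Sub-Weibull, so concentration plus the same union bound gives $\Theta(1)$.

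\textbf{Main obstacle.} The delicate point is that $\theta_j$ may depend on the data, since $\vw$ is an interpolator chosen after seeing $S$. Then the fixed-$\theta$ concentration plus a union bound over $m$ vectors is not literally valid. I would first follow the paper's stated convention that results hold for a given set $\{\theta_j\}$, treated as fixed before the data draw. To make the statement robust, I would add a uniform-over-sphere argument for the lower bounds, which are the crucial direction.

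\emph{Lower bounds.} Use an $\epsilon$-net on $\bS^{d-1}$ of size $(3/\epsilon)^d$. Concentration with deviation $\sqrt{d\log(1/\epsilon)/n}$ is $\cO(1)$ since $n\asymp d$. Pass from net points to arbitrary $\theta$ using the Lipschitzness of $\sigma$ (or a small-ball argument for $c=1$ with indicator derivatives). The resulting lower bound on $\frac1n\sum_i\abs{\ip{\theta}{\vx_i}}^{2c}\mathbf{1}\{\text{sign}\}$ follows from $\lambda_{\min}$-type bounds on half-space-restricted Gaussian designs.

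\emph{Upper bounds.} Uniformly, $\frac1n\sum_i\abs{\ip{\theta}{\vx_i}}^{2c}\le \frac1n\max_i\norm{\vx_i}^{2c-2}\,\norm{X}_{op}^2$. This crude bound may lose polylog or $d$-power factors when $c>1$. I expect this uniform upper bound for large $c$ to be the hardest step, and would fall back on the fixed-$\theta$ union-bound argument there.

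Collecting the events costs a constant number of $\delta$'s, consistent with the stated $1-6\delta$.
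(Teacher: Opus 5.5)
Your proposal is correct under the convention the paper's proof also uses: each $\theta_j$ is treated as fixed before $S$ is drawn, you apply fixed-$\theta$ Sub-Weibull concentration, and you take a union bound over $j\in[m]$. Your handling of $B_1$ is the paper's. The routes differ for $B_2$.

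The paper splits $\norm{\vx_i}_2^2=\ip{\theta}{\vx_i}^2+\norm{(\bI_d-\theta\theta^\top)\vx_i}_2^2$. The first piece gives an average of a degree-$2c$ piecewise polynomial of $\ip{\theta}{\vx_i}$. For the second, it uses independence of $\ip{\theta}{\vx_i}$ from the orthogonal component, treats each summand as a product of a $(c-1,\cO(1))$- and a $(1,\cO(d))$-Sub-Weibull variable, and concentrates around $(d-1)\E{(\sigma')^2}$.

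You instead sandwich $\norm{\vx_i}_2^2\in[d/2,2d]$ for all $i$ by $\chi^2$ concentration and a union bound over $n$, which costs only $d\gtrsim\log(n/\delta)$. This reduces $B_2/d$ to a one-dimensional empirical average of the same type as $B_1$. You lose the exact leading constant, which is irrelevant for $\asymp$. In exchange, you avoid the product/independence step entirely, and the sandwich is $\theta$-free, so it holds simultaneously for every $\theta$. Your explicit mean $\frac{c^2}{2}(c'^2+c''^2)\E{\abs{q}^{2c-2}}$ also shows exactly where the condition ``at most one of $c',c''$ vanishes'' in Assump.~\ref{assump:activation} enters.

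Your optional uniform-over-the-sphere strengthening cannot be carried out, because the conclusion is false uniformly in $\theta$.
\begin{itemize}
\item \emph{Upper bound, $c>1$.} Take $\theta=\pm\vx_1/\norm{\vx_1}_2$, with the sign chosen so the relevant coefficient is nonzero. Then $B_1(\theta)\gtrsim d^{c}/n\asymp d^{c-1}$, while $B_2(\theta)\gtrsim d$ from the terms $i\ge 2$. Hence $B(\theta)\gtrsim d^{\frac{2c-1}{c+1}}\gg d^{\frac{c}{c+1}}$. So the upper bound does not merely ``lose factors'' under a crude argument; it genuinely fails.
\item \emph{Lower bound, $n<d$.} Any $\theta$ orthogonal to all $\vx_i$ has $B_1(\theta)=0$, hence $B(\theta)=0$.
\item \emph{Lower bound, $n\ge d$.} A net deviation of order $\sqrt{d\log(1/\epsilon)/n}$ being $\cO(1)$ does not suffice; it must fall below $\phi^\star$, which requires $n\ge Cd$ with $C$ large. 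For ReLU-type activations you even need roughly $n>2d$, since below that all $\vx_i$ lie in a common half-space through the origin with high probability.
\end{itemize}
So the fixed-$\theta$ convention is a genuine hypothesis of the lemma, not a convenience. Your caveat correctly flags something the paper's proof relies on silently, but you should drop the net argument rather than offer it as a fix.
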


Using Lem.~\ref{lem:flattest_rescaling_full} and ~\ref{lem:bounds_b_theta}, we can show that with probability $1-6\delta$, each $B(\theta_j) \asymp d^{\frac{c}{c+1}}$, therefore, 
\begin{align*}
    \Upsilon(\vw) \asymp d^{\frac{c}{c+1}}\sum_{j=1}^m \abs{a_j}^{\frac{2c}{c+1}} = d^{\frac{c}{c+1}} \norm{\va}_{\frac{2c}{c+1}}^{\frac{2c}{c+1}}.
\end{align*}
This completes the proof of Lem.~\ref{lem:flattest_rescaling}. The proof of the intermediate Lem.~\ref{lem:bounds_b_theta}  is deferred to App.~\ref{sec:b_theta_bounds}. We provide a proof for Lem.~\ref{lem:flattest_rescaling_full} as it extends the flatness beyond ReLU ($c=1$ in ~\cite{wen2023how}) and quadratic activations ($c=2$ in ~\cite{ding24flat}) to any piece-wise Polynomial activations.

\begin{proof}[\textbf{Proof of Lem.~\ref{lem:flattest_rescaling_full}}]
    To find the flattest empirical minimizer in $\cW_{\text{rescale}}(\vw)$ for an empirical minimizer $\vw\in \bR^{d}$, we need to solve the optimization problem,
\begin{align*}
    &\min_{\{\alpha_j\}_{j\in [m]}, \alpha_j>0} \frac{1}{n}\underset{i\in [n],j\in [m]}{\sum\sum} (\alpha_j^{2c}\sigma^2(\ip{\theta_j}{\vx_i}) + \alpha_j^{-2} a_j^2 (\sigma')^2(\ip{\theta_j}{\vx_i})\norm{\vx_i}_2^2)\\
    &= \min_{\{\alpha_j\}_{j\in [m]}, \alpha_j>0}\sum_{j\in [m]} \alpha_j^{2c} B_1(\theta_j) + \alpha_j^{-2} a_j^2 B_2(\theta_j).
\end{align*}
Here, $B_1(\theta) \coloneq \frac{1}{n}\sum_{i\in [n]}\sigma^2(\ip{\theta}{\vx_i})$ , and $B_2(\theta) \coloneq \frac{1}{n}\sum_{i\in [n]}(\sigma')^2(\ip{\theta}{\vx_i})\norm{\vx_i}_2^2$, for any $\theta\in \bS^{d-1}$. Note that $B_1, B_2 \geq 0$, $\forall \theta \in \bS^{d-1}$. As the objective is separable in $\alpha_j$, we can minimize each function of $\alpha_j$ individually. 

Consider the objective function $Q:\bR_{+}\setminus \{0\}\to \bR_{+}$. Then, 
\begin{align*}
    Q(\alpha_j) &= B_1(\theta_j)\alpha_j^{2c} + a_j^2 \alpha_j^{-2} B_2(\theta_j)\\
    \nabla Q(\alpha_j)  &= 2c B_1(\theta_j)\alpha_j^{2c-1} -2 a_j^{2}B_2(\theta_j) \alpha_j^{-3}\\
    \nabla^2 Q(\alpha_j) &= 2c(2c-1)B_1(\theta_j)\alpha_j^{2c-2} + 6 a_j^{2} B_2(\theta_j) \alpha_j^{-4}.
\end{align*}
Note that $\nabla^2 Q(\alpha_j) >0, \forall \alpha_j >0$, so the function is minimized at $\alpha_j$ where $\nabla Q(\alpha_j) = 0$. This value corresponds to 
\begin{align*}
 \alpha_j = \left(\frac{a_j^2 B_2(\theta_j)}{cB_1(\theta_j)}\right)^{\frac{1}{2c+2}}.   
\end{align*}

Therefore, the flatness of the flattest minima obtained by purely rescaling $\va$ and $\theta_j$ is given by,
\begin{align}
\sum_{j\in [m]} \abs{a_j}^{\frac{2c}{c+1}}B(\theta_j), \text{ where } B(\theta_j) \coloneq (c^{-\frac{c}{c+1}} + c^{\frac{1}{c+1}}) B_1^{\frac{1}{c+1}}(\theta_j) B_2^{\frac{c}{c+1}}(\theta_j). \label{eq:symm_opt}
\end{align}
\end{proof}

\subsection{Lower bounds on $\Upsilon(\vw)$: Proof of Thm.~\ref{thm:flattest_all}}
\label{sec:upsilon_lb_proof}

\begin{lemma}[Lower bound on $\Upsilon(\vw)$]\label{lem:min_l2_lb}
If the conditions of Lem.~\ref{lem:eigen_U} hold, for an interpolator $\vw\in \bR^{m(d+1)}$, with unit-norm inner-layer weights, $\{\theta_j\}_{j\in [m]}$, with probability $1-6\delta$, 
\begin{align*}
    \Upsilon(\vw) &\gtrsim d^{\frac{c}{c+1}} \br{\frac{\norm{\va_{\min,\ell_2}}_2^2}{\norm{\va_{\min, \ell_2}}_{\frac{2c}{c-1}}}}^{\frac{2c}{c+1}},\\
    \text{ where, } \va_{\min, \ell_2} &= \vU^\top\widetilde{\vU}\vy = \min_{\va \in \bR^{m}, \vU\va = \vy} \norm{\va}_2 .
\end{align*}
If $\frac{\norm{\va_{\min, \ell_2}}_\infty}{\norm{\va_{\min, \ell_2}}_2} = \cO(\frac{1}{\sqrt{m}})$,
\begin{align*}
    \Upsilon(\vw) &\gtrsim d^{\frac{c}{c+1}} m^{\frac{1}{c+1}} \norm{\va_{\min,\ell_2}}_2^{\frac{2c}{c+1}}.    
\end{align*}
\end{lemma}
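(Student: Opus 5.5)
We start from Lem.~\ref{lem:flattest_rescaling} (via Lems.~\ref{lem:flattest_rescaling_full} and \ref{lem:bounds_b_theta}), which on the high-probability event of Lem.~\ref{lem:bounds_b_theta} gives
\begin{align*}
\Upsilon(\vw) = \sum_{j=1}^m B(\theta_j)\abs{a_j}^{p} \gtrsim d^{\frac{c}{c+1}}\norm{\va}_p^p, \qquad p \coloneq \tfrac{2c}{c+1}\in[1,2).
\end{align*}
This leaves a purely deterministic task: lower bound $\norm{\va}_p$ over all $\va$ with $\vU\va=\vy$ in terms of $\va_{\min,\ell_2}=\vU^\top\widetilde{\vU}\vy$. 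No probabilistic input is needed beyond Lem.~\ref{lem:bounds_b_theta}; invertibility of $\vU\vU^\top$ follows from Lem.~\ref{lem:eigen_U}.

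The key step is an orthogonality/duality argument. Any interpolator satisfies $\va = \va_{\min,\ell_2} + \vv$ with $\vv\in\ker(\vU)$. Since $\va_{\min,\ell_2}$ lies in the row space of $\vU$, we have $\ip{\va_{\min,\ell_2}}{\vv}=0$, and hence
\begin{align*}
\ip{\va}{\va_{\min,\ell_2}} = \norm{\va_{\min,\ell_2}}_2^2 .
\end{align*}
Let $q$ be the Hölder conjugate of $p$. Then $1/q = 1-\tfrac{c+1}{2c} = \tfrac{c-1}{2c}$, so $q=\tfrac{2c}{c-1}$. Hölder's inequality gives
\begin{align*}
\norm{\va_{\min,\ell_2}}_2^2 \le \norm{\va}_p\,\norm{\va_{\min,\ell_2}}_q ,
\end{align*}
that is, $\norm{\va}_p \ge \norm{\va_{\min,\ell_2}}_2^2/\norm{\va_{\min,\ell_2}}_{\frac{2c}{c-1}}$. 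Raising to the power $p=\tfrac{2c}{c+1}$ and combining with the display above yields the first claimed bound. For $c=1$ we read $q=\infty$, which corresponds to the ReLU case. This is exactly the dual of $\min\{\norm{\va}_p : \vU\va=\vy\}$ evaluated at the dual feasible point $\widetilde{\vU}\vy$, so it is the natural tight choice.

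For the second bound, we interpolate the $q$-norm between $\ell_2$ and $\ell_\infty$ (note $q\ge 2$):
\begin{align*}
\norm{\vb}_q^q \le \norm{\vb}_\infty^{q-2}\norm{\vb}_2^2 .
\end{align*}
Under the hypothesis $\norm{\va_{\min,\ell_2}}_\infty\lesssim m^{-1/2}\norm{\va_{\min,\ell_2}}_2$, this gives $\norm{\va_{\min,\ell_2}}_q \lesssim m^{\frac{1}{q}-\frac12}\norm{\va_{\min,\ell_2}}_2$. Here $\tfrac1q-\tfrac12=-\tfrac{1}{2c}$. Substituting, the ratio $\norm{\va_{\min,\ell_2}}_2^2/\norm{\va_{\min,\ell_2}}_q$ is at least of order $m^{\frac{1}{2c}}\norm{\va_{\min,\ell_2}}_2$. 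Raising to the power $\tfrac{2c}{c+1}$ gives $m^{\frac{1}{c+1}}\norm{\va_{\min,\ell_2}}_2^{\frac{2c}{c+1}}$, matching the statement. The $c=1$ case again works with $q=\infty$ directly.

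The main point needing care is not the algebra but the probability bookkeeping. The $\asymp$ in Lem.~\ref{lem:bounds_b_theta} must hold uniformly over $j\in[m]$ for the given (possibly data-dependent) $\{\theta_j\}$, and this is inherited from that lemma with probability $1-6\delta$. A secondary check is confirming that $\va_{\min,\ell_2}$ is well defined, i.e.\ that $\vU$ has full row rank. This follows from the smallest-eigenvalue bound $\sigma_n(\vU\vU^\top)\asymp m>0$ in Lem.~\ref{lem:eigen_U}.
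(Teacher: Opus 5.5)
Your proof is correct. It reaches exactly the paper's bound by a more elementary route.

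The paper proceeds in several steps:
\begin{itemize}
\item It forms the Lagrangian of $\min\{\norm{\va}_{p}^{p} : \vU\va=\vy\}$ (with your $p=\frac{2c}{c+1}$, $q=\frac{2c}{c-1}$) and solves the inner minimization in closed form. This yields a $c$-dependent constant $\widetilde{c}$ and the dual objective $-\widetilde{c}\norm{\vU^\top\vr}_q^q-\ip{\vr}{\vy}$.
\item It restricts the dual variable to the ray $\vr=-\beta\widetilde{\vU}\vy$ and optimizes over $\beta$.
\item It treats $c=1$ separately, because there the conjugate degenerates into the constraint $\norm{\vU^\top\vr}_\infty\le 1$.
\item Only at the end does it identify $\vU^\top\widetilde{\vU}\vy=\va_{\min,\ell_2}$ and $\vy^\top\widetilde{\vU}\vy=\norm{\va_{\min,\ell_2}}_2^2$.
\end{itemize}

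Your identity $\ip{\va}{\va_{\min,\ell_2}}=\ip{\vU\va}{\widetilde{\vU}\vy}=\norm{\va_{\min,\ell_2}}_2^2$, followed by H\"older, is exactly weak duality evaluated at that same dual direction, so the two bounds coincide. Your route has three advantages:
\begin{itemize}
\item It applies directly to the given interpolator's $\va$, with no appeal to strong duality.
\item It holds with constant exactly $1$, instead of hiding $c$-dependent constants in $\gtrsim$.
\item It handles $c=1$ (with $q=\infty$) in the same line.
\end{itemize}
It also clarifies that the deterministic inequality only needs $\va-\va_{\min,\ell_2}\in\ker(\vU)$ and $\va_{\min,\ell_2}$ in the row space. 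Full row rank from Lem.~\ref{lem:eigen_U} is needed only for the explicit formula $\vU^\top\widetilde{\vU}\vy$.

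For the second bound, your inequality $\norm{\vb}_q^q\le\norm{\vb}_\infty^{q-2}\norm{\vb}_2^2$ is the same $\ell_2$--$\ell_\infty$ interpolation the paper attributes to Riesz--Thorin. The paper's displayed version has a typo, writing $\norm{\cdot}_2^{1/c}$ where $\norm{\cdot}_\infty^{1/c}$ is meant; your exponent bookkeeping $\frac1q-\frac12=-\frac{1}{2c}$ is the correct one.

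The caveat you raise about possibly data-dependent $\{\theta_j\}$ in Lem.~\ref{lem:bounds_b_theta} is inherited unchanged from the paper. That lemma is proved for a fixed set of inner weights via a union bound over $j\in[m]$ only.
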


\begin{lemma}[Lower bound on $\norm{\va_{\min, \ell_2}}_2$]\label{lem:weak_bound_l2}
    If $\zeta^2 = o(1)$, and the conditions of Lem.~\ref{lem:eigen_U} hold, then with probability $1-4\delta$,
    \begin{align*}
        \norm{\va_{\min,\ell_2}}_2 \gtrsim \sigma_{\max}^{-1}(\vU)\sqrt{n} \gtrsim \frac{1}{\sqrt{m}} .
    \end{align*}
    Further, for $\norm{\va_{\min, \ell_2}}_2 \asymp \frac{1}{\sqrt{m}}$, $\frac{\norm{\va_{\min, \ell_2}}_\infty}{\norm{\va_{\min, \ell_2}}_2} = \cO(\frac{1}{\sqrt{m}})$. 
\end{lemma}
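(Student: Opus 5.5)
We need to prove Lemma (weak bound): $\|\va_{\min,\ell_2}\|_2 \gtrsim \sigma_{\max}^{-1}(\vU)\sqrt n \gtrsim 1/\sqrt m$, and the delocalization claim when equality holds.

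The plan is to work directly with the closed form $\va_{\min,\ell_2}=\vU^\top\widetilde{\vU}\vy$. This gives
\begin{align*}
\norm{\va_{\min,\ell_2}}_2^2=\vy^\top(\vU\vU^\top)^{-1}\vy\ \ge\ \frac{\norm{\vy}_2^2}{\sigma_{\max}(\vU\vU^\top)} .
\end{align*}
It then suffices to show $\norm{\vy}_2^2\gtrsim n$ with probability $1-\cO(\delta)$. Write $\vy=\vy^\star+\vn$. The entries $y_i^\star=\sigma^\star(\Theta^\star\vx_i)$ are i.i.d. with second moment $\psi^\star=\Theta(1)$. They are also Sub-Weibull by the Tails part of Assump.~\ref{assump:link}, with $\mu=0$ and $\varsigma=\bI_{m^\star}$, so the parameter is $\cO(1)$. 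The squares $(y_i^\star)^2$ are therefore $(c,\cO(1))$-Sub-Weibull, and a Sub-Weibull Bernstein-type concentration gives $\frac1n\norm{\vy^\star}_2^2\ge\psi^\star/2$ with probability $1-\delta$ for $n$ beyond a polylog threshold. For the noise, $\norm{\vn}_2^2\lesssim n\zeta^2=o(n)$ by $\chi^2$ concentration. The triangle inequality then yields $\norm{\vy}_2\gtrsim\sqrt n$. Combining this with Lemma~\ref{lem:eigen_U}, $\sigma_{\max}(\vU\vU^\top)\asymp mn$, gives $\norm{\va_{\min,\ell_2}}_2\gtrsim\sqrt n/\sigma_{\max}(\vU)\asymp 1/\sqrt m$. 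A union bound over the two concentration events and the eigenvalue event gives the claimed $1-4\delta$.

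For the second claim, suppose $\norm{\va_{\min,\ell_2}}_2\asymp 1/\sqrt m$. Decompose $\vy$ along the eigenbasis of $\vU\vU^\top$ as $\vy=\beta\vu_1+\vy_\perp$, where $\vu_1$ is the top left singular vector. Then
\begin{align*}
\vy^\top(\vU\vU^\top)^{-1}\vy=\frac{\beta^2}{\sigma_1^2}+\vy_\perp^\top(\vU\vU^\top)^{-1}\vy_\perp\ \ge\ \frac{\beta^2}{mn}+\frac{\norm{\vy_\perp}_2^2}{m}
\end{align*}
up to constants, using $\sigma_i^2\asymp m$ for $i\ge2$. The assumption $\lesssim 1/m$ therefore forces $\norm{\vy_\perp}_2^2=\cO(1)$, so $\vy$ is essentially aligned with $\vu_1$. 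The mean-plus-fluctuation structure of $\vU$ gives $\vu_1\approx \vone_n/\sqrt n$ and right singular vector $\vv_1\approx \E{\vU}^\top\vone_n/\|\cdot\|$. Since the mean activation is the same constant $\E{\sigma(g)}\neq0$ for every unit $\theta_j$, this follows from $c'+c''\ne0$ and $\E{\vU}=\E{\sigma(g)}\vone_n\vone_m^\top$, and so $\vv_1\approx\vone_m/\sqrt m$. Then $\va_{\min,\ell_2}=\vU^\top\widetilde{\vU}\vy\approx(\beta/\sigma_1)\vv_1+\vU^\top\widetilde{\vU}\vy_\perp$. The first term has every coordinate of size $\asymp \frac{\sqrt n}{\sqrt{mn}}\cdot\frac1{\sqrt m}=\frac1m$. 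The second has $\ell_2$ norm at most $\norm{\vy_\perp}_2/\sigma_{\min}\lesssim 1/\sqrt m$, but its $\ell_\infty$ norm must be controlled coordinatewise.

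The main obstacle is this last $\ell_\infty$ control: showing that $\max_j|\langle \vU_{:,j},\widetilde{\vU}\vy_\perp\rangle|=\cO(1/m)$, rather than only the trivial $\cO(1/\sqrt m)$. My first attempt is to bound the top-eigenvector perturbation $\norm{\vv_1-\vone_m/\sqrt m}_\infty$ using Davis–Kahan together with the eigengap $mn$ versus $m$ from Lemma~\ref{lem:eigen_U}. For the residual, I would write column $j$ as its mean plus a centered part, and use Assump.~\ref{assump:width} (a well-conditioned $\bar\Phi$) with Hanson–Wright (Lemma~\ref{lem:hw_sub_weibull}) and a union bound over $j\in[m]$. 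This should give per-coordinate bounds of order $\mathrm{polylog}/m$, which is acceptable under $\lesssim$.
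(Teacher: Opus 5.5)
Your proof of the first claim is correct and is the paper's argument. Both bound $\norm{\va_{\min,\ell_2}}_2^2=\vy^\top\widetilde{\vU}\vy\ge\lambda_{\min}(\widetilde{\vU})\norm{\vy}_2^2=\sigma_{\max}^{-2}(\vU)\norm{\vy}_2^2$ (the paper writes $\lambda_{\max}$, a slip), then use $\norm{\vy}_2^2\gtrsim n$ and Lem.~\ref{lem:eigen_U}. Treating the noise by the triangle inequality instead of expanding the cross term changes nothing.

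\textbf{The gap is in the second claim.} The paper settles it in one line ($\va_{\min,\ell_2}$ ``must be almost parallel to $\vone_m$''). You rightly see this is not enough: under the premise, the residual $\vU^\top\widetilde{\vU}\vy_\perp$ has $\ell_2$ norm of the same order $m^{-1/2}$ as $\va_{\min,\ell_2}$ itself.

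Your first term is harmless, since $\abs{(\mathbf{v}_1)_j}\le\norm{\vU_{:,j}}_2/\sigma_{\max}(\vU)\lesssim m^{-1/2}$ directly.

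Your plan for the residual, however, does not go through, because spectral facts about $\vU$ plus $\norm{\vy_\perp}_2=\cO(1)$ do not imply the conclusion. Let $P_\perp=\bI_n-\mathbf{u}_1\mathbf{u}_1^\top$ and let $\mathbf{z}\propto\widetilde{\vU}P_\perp\vU_{:,j}$ be a unit vector; it is orthogonal to $\mathbf{u}_1$. Then $(\vU^\top\widetilde{\vU}\mathbf{z})_j=\norm{\widetilde{\vU}P_\perp\vU_{:,j}}_2\asymp\sqrt n/m$. So for the label vector $\vy'=\sqrt n\,\mathbf{u}_1+\mathbf{z}$, the vector $\va'=\vU^\top\widetilde{\vU}\vy'$ satisfies $\norm{\va'}_2\asymp m^{-1/2}$, yet $\norm{\va'}_\infty/\norm{\va'}_2\gtrsim\sqrt{n/m}=\Theta(1)$.

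Any proof must therefore exploit the randomness of $\vy$, and Hanson--Wright as you propose cannot do this, for two reasons:
\begin{enumerate}
\item $\widetilde{\vU}$ and $\mathbf{u}_1$ depend on column $j$, and the columns of $\vU$ are dependent through the shared $\vx_i$. So $\bar{\vU}_{:,j}$ is not independent of $\widetilde{\vU}\vy_\perp$.
\item Concentration cannot be applied conditionally on the premise, which is a rare event.
\end{enumerate}

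\textbf{The missing idea is that the premise fails with high probability,} so the second claim holds only vacuously, and your own decomposition shows this.

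Write $\vU=\bar\phi\vone_n\vone_m^\top+\bar{\vU}$ as in the proof of Lem.~\ref{lem:eigen_U}. By Lem.~\ref{lem:bai_yin_subweibull}, $\norm{\vU\vU^\top-\bar\phi^2m\vone_n\vone_n^\top}_2\lesssim m\sqrt n$, while the eigengap is $\bar\phi^2mn$. Davis--Kahan then gives $\sin\omega\lesssim n^{-1/2}$, where $\omega$ is the angle between $\mathbf{u}_1$ and $\vone_n/\sqrt n$.

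Set $\bar y=\frac1n\ip{\vone_n}{\vy}$ and $V=\norm{\vy-\bar y\vone_n}_2^2$. Then
\begin{align*}
\norm{\vy_\perp}_2^2=\norm{\vy}_2^2-\ip{\mathbf{u}_1}{\vy}^2\ \ge\ (1-\sin^2\omega)\,V-2\sin\omega\,\abs{\bar y}\sqrt{nV}\ \gtrsim\ n .
\end{align*}
The last step holds because, with high probability, $\abs{\bar y}=\cO(1)$ and $V\gtrsim n$. The latter follows from $\mathrm{Var}(y_1)\ge\mathrm{Var}(\sigma^\star(\Theta^\star\vx_1))=\Omega(1)$, which is the variance part of Assump.~\ref{assump:link} with $\mu=0$ and $\varsigma=\bI_{m^\star}$.

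Hence, by Lem.~\ref{lem:eigen_U}, $\norm{\va_{\min,\ell_2}}_2^2\ge\norm{\vy_\perp}_2^2/\sigma_2^2(\vU)\gtrsim n/m$. This is incompatible with $\norm{\va_{\min,\ell_2}}_2\asymp m^{-1/2}$, and it also shows the first bound of the lemma is loose under these hypotheses. No delocalization estimate is needed.
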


From Lems.~\ref{lem:min_l2_lb} and \ref{lem:weak_bound_l2}, for any interpolator, with probability $1-10\delta$, we have,
\begin{align*}
    \Upsilon(\vw) \gtrsim d^{\frac{c}{c+1}} m^{-\frac{1}{c+1}} \norm{\va_{\min,\ell_2}}_2^{\frac{2c}{c+1}} \gtrsim d^{\frac{c}{c+1}} m^{-\frac{1}{c+1}} \br{\frac{1}{\sqrt{m}}}^{\frac{2c}{c+1}} = d^{\frac{c}{c+1}}m^{-\frac{c-1}{c+1}} = \Upsilon^\star.
\end{align*}
This proves Thm.~\ref{thm:flattest_all}. In the remainder of this Section, we provide the proofs for Lems.~\ref{lem:min_l2_lb} and \ref{lem:weak_bound_l2}.

\begin{proof}[\textbf{Proof of Lem.~\ref{lem:min_l2_lb}}]
    We find a lower bound on $\Upsilon(\vw)$ for a fixed set of inner-layer weights $\{\theta_j\}_{j\in [m]}$. 
Note that $\vw$ is an interpolator, so, 
\begin{align*}
    &h(\vw, \vx_i) = y_i, \forall i\in [n]\\
    &\sum_{i=1}^n a_j \sigma(\ip{\theta_j}{\vx_i}) = y_i\\
    & (\vU\va)_{i} = y_i\\
    \implies& \vU\va = \vy.
\end{align*}

From Lem.~\ref{lem:flattest_rescaling}, a lower bound on flatness in terms of only inner-layer weights requires minimizing $\Upsilon(\vw)$ in terms of $\va$. Note that this is a constrained minimization, as interpolation forces the linear constraint, $\vU\va=\vy$. The following statement holds with probability $1-9\delta$,
\begin{align*}
    \Upsilon(\vw) \geq \min_{\va\in \bR^{m}, \vU\va=\vy} \gtrsim d^{\frac{c}{c+1}} \min_{\va \in \bR^{m}, \vU\va=\vy} \norm{\va}_{\frac{2c}{c+1}}^{\frac{2c}{c+1}}.      
\end{align*}
Note that the above optimization in $\va$ minimizes a convex $\ell_p$ norm, with a linear equality constraint. Therefore, strong duality holds~\cite[Chapter~5]{boyd2004convex}, and we can use duality to find a lower bound on this objective. Using dual variables $\vr\in \bR^n$ for the interpolation constraint $\vU\va = \vy$, the primal problem has the following form.
\begin{align}
        \min_{\va \in \bR^{m}} \max_{\vr\in \bR^{n}} \norm{\va}_{{\frac{2c}{c+1}}}^{\frac{2c}{c+1}}  + \ip{\vr}{\vU \va - \vy}.\label{eq:primal}
\end{align}

As strong duality holds, the above optimal primal objective is equal to its optimal dual objective. Its dual problem is given by the following.
\begin{align}
     \max_{\vr\in \bR^{n}} \min_{\va \in \bR^{m}} \norm{\va}_{{\frac{2c}{c+1}}}^{\frac{2c}{c+1}}  + \ip{\vr}{\vU \va - \vy} \label{eq:dual}
\end{align}

We first solve the inner unconstrained minimization in terms of $\va$. Note that the Lagrangian is a convex function, therefore, we can minimize it by setting its gradient to $0$. We consider $2$ cases, when $c=1$ and $c>1$, to handle the $\ell_1$ norm and arbitrary $\ell_p$ norms for $p>1$ separately.
We first find the lower bound for $c>1$.
\paragraph{Case I : $c>1$ .}
By setting the first derivative of the Langrangian in Eq~\eqref{eq:dual} to $0$ for $c>1$, we obtain, $\forall j\in [m]$,
\begin{align*}
    &\sign(a_j)\frac{2c}{c+1} \abs{a_j}^{\frac{c-1}{c+1}}   + \sum_{i\in [n]}r_i \vU_{i,j} = 0, \\
    & \abs{a_j} = \left(-\sign(a_j)\frac{(c+1)\sum_{i\in [n]} r_i \vU_{i,j}}{2c }\right)^{\frac{c+1}{c-1}}.
\end{align*}
Plugging this value into Eq~\eqref{eq:dual}, the Langrangian becomes, 
\begin{align}
    &\norm{\va}_{{\frac{2c}{c+1}}}^{\frac{2c}{c+1}}  + \ip{\vr}{\vU \va - \vy} 
    = -\widetilde{c}\sum_{j\in [m]}\abs{\sum_{i\in [n]} r_i \vU_{i,j}}^{\frac{2c}{c-1}}  -\ip{\vr}{\vy} = -\widetilde{c}\norm{\vU^\top \vr}_{\frac{2c}{c-1}}^{\frac{2c}{c-1}} - \ip{\vr}{\vy}.  \label{eq:opt_dual}
\end{align}
Here $\widetilde{c} = (c-1)(c+1)^{\frac{c+1}{c-1}}(2c)^{-\frac{2c}{c-1}}$. The above function is a concave objective in $\vr$ as it is negative of an $\ell_p$ norm of $\vr$ with $p=\frac{2c}{c-1}$. As $p=\frac{2c}{c-1}\geq 2$, the $p$-norm is convex, and the negative of a convex function is concave.

To maximize Eq~\eqref{eq:opt_dual} with respect to $\vr$, we can set the first derivative with respect to $\vr$ to $0$. However, computing a closed-form solution to $\vr$ is not feasible for any arbitrary $c>1$. Instead, we find a lower bound to Eq~\eqref{eq:opt_dual}. As the original dual problem, Eq~\eqref{eq:dual} maximizes the objective in Eq~\eqref{eq:opt_dual} in terms of $\vr$, if we plug in any value of $\vr\in \bR^{n}$, it should be a lower bound Eq~\eqref{eq:dual}. We plug in $\vr = -\beta(\vU\vU)^{-1}\vy$ in Eq~\eqref{eq:opt_dual} for some $\beta>0$.
\begin{align*}
    &\max_{\vr\in \bR^{n}} -\widetilde{c}\norm{\vU^\top \widetilde{\vU}\vy}_{\frac{2c}{c-1}}^{\frac{2c}{c-1}} - \ip{\vr}{\vy} \nonumber\\
    &\geq \max_{\beta\in \bR} -\widetilde{c}\beta^{\frac{2c}{c-1}}\norm{\vU^\top \widetilde{\vU}\vy}_{\frac{2c}{c-1}}^{\frac{2c}{c-1}} +\beta\norm{\widetilde{\vU}^{\frac{1}{2}}\vy}_2^2 
\end{align*}
To obtain a tight lower bound, we can now maximize this lower bound in terms of $\beta$ by setting the first derivative with respect to $\beta$ to $0$, as this is a concave function $\beta$. This gives us the optimal value of $\beta^\star$ as the following. 
\begin{align*}
 \beta^\star = \br{\frac{(c-1)\norm{\widetilde{\vU}^{\frac{1}{2}}\vy}_2^2}{2c\widetilde{c}\norm{\vU^\top \widetilde{\vU}\vy}_{\frac{2c}{c-1}}^{\frac{2c}{c-1}}}}^{\frac{c-1}{c+1}}.   
\end{align*}
This is a concave function as its hessian with respect to $\beta$ is $-\widetilde{c}\frac{2c(c+1)}{(c-1)}\beta^{\frac{2}{c-1}}\norm{\vU^\top\widetilde{\vU} \vy}_2$, hence it is maximized at this value of $\beta^\star$. Plugging in this value of $\beta^\star$ and ignoring the constant terms of $c$, we obtain the following lower bound on flatness for inner layer weights $\{\theta_j\}_{j\in [m]}$.
\begin{align}
        \min_{\va\in \bR^{m}, \vU\va=\vy}\Upsilon(\vw)\gtrsim d^{\frac{c}{c+1}}\br{\frac{\norm{\widetilde{\vU}^{\frac{1}{2}}\vy}_{2}^2}{\norm{\vU^\top \widetilde{\vU}\vy}_{\frac{2c}{c-1}}}}^{\frac{2c}{c+1}} \label{eq:flat_arb} 
\end{align}

\paragraph{Case II :  $c=1$.}
We need to minimize $\ell_1$ norm of weights subject to the constraint $\vU\va=\vy$. 
The dual problem takes the following form,
\begin{align*}
    \max_{\vr\in \bR^n}\min_{\va\in \bR^d}\sum_{j\in [m]}\abs{a_j}(1 - \sign(a_j)(\vU^\top \vr)_j) - \ip{\vr}{\vy}.
\end{align*}
If $\exists j\in [m]$ such that $\abs{(\vU^\top \vr)_j}>1$, then each $\abs{a_j}$ is multiplied by a negative term, thus it can be minimized by setting $\abs{a_j}\to \infty$, and the objective would be $-\infty$ making the problem infeasible. If $\norm{\vU^\top \vr}_\infty \leq 1, \forall j\in [m]$, then, the coefficient of each $\abs{a}_j$ is non-negative. This is minimized for $\va = \vzero$. Therefore, solving the inner minimization in terms of $\va$ gives us the following bound.
\begin{align*}
    \max_{\vr\in \bR^{n}} -\ip{\vr}{\vy}, \quad \text{ s.t. } \quad \norm{\vU^\top \vr}_\infty \leq 1
\end{align*}
We can again convert this to a $1$-dimensional optimization problem by setting $\vr = -\beta \widetilde{\vU} \vy$ for some $\beta >0$. From the constraint, we obtain $\beta \leq \norm{\vU^\top \widetilde{\vU}\vy}_{\infty}^{-1}$. Therefore, a lower bound on this function is $\frac{\norm{\widetilde{\vU}^{\frac{1}{2}}\vy}_2^2}{\norm{\vU^\top\widetilde{\vU}\vy}_{\infty}}$, which can also be obtained by taking the limit of $c\to 1$ for Eq~\eqref{eq:flat_arb}. 

\paragraph{Final Bound.}
We now try to find the lower bound on RHS of Eq~\eqref{eq:flat_arb}. To do this, we consider the minimum $\ell_2$ norm interpolator of $\va$ defined as, 
\begin{align*}
    \va_{\min, \ell_2} = \argmin_{\va\in \bR^{m}, \vU\va = \vy} \norm{\va}_2.
\end{align*}
If we fix the matrix of activations $\vU$, the constraint $\vU\va = \vy$ corresponds to an interpolator of overparametrized linear regression for $\va$ as $m\geq n$. The minimum $\ell_2$-norm interpolator~\cite{bartlett_benign_2020} for overparmetrized linear regression is obtained by solving the above problem. Further, from Lem.~\ref{lem:eigen_U}, the $\sigma_{\min}(\vU)>0$, so $\vU$ is full-rank, and therefore, $\vU\vU^\top \in \bR^{n\times n}$ is invertible. In this case, the minimum $\ell_2$-norm interpolator for $\va$, is given by~\cite{bartlett_benign_2020},
\begin{align*}
    \va_{\min, \ell_2} = \vU^\top\widetilde{\vU}\vy.
\end{align*}
Additionally, the $\ell_2$ norm of this interpolator is given by,
\begin{align*}
    \norm{\va_{\min, \ell_2}}_2^2 &= \norm{\vU^\top\widetilde{\vU}\vy}_2^2 \\&= \vy^\top \widetilde{\vU} \vU\vU^\top \widetilde{\vU} \\
    &= \vy^\top \widetilde{\vU}\vy \\
    &= \norm{\widetilde{\vU}^{\frac{1}{2}}\vy}_2^2.
\end{align*}

Therefore, the RHS of Eq~\eqref{eq:flat_arb} takes the following form,
\begin{align*}
d^{\frac{c}{c+1}}\br{\frac{\norm{\widetilde{\vU}^{\frac{1}{2}}\vy}_{2}^2}{\norm{\vU^\top \widetilde{\vU}\vy}_{\frac{2c}{c-1}}}}^{\frac{2c}{c+1}} =   d^{\frac{c}{c+1}} \br{\frac{\norm{\va_{\min, \ell_2}}_2^2}{\norm{\va_{\min, \ell_2}}_{\frac{2c}{c-1}}}}^{\frac{2c}{c+1}}.
\end{align*}

If $\frac{\norm{\va_{\min, \ell_2}}_\infty}{\norm{\va_{\min, \ell_2}}_2} = \cO(\frac{1}{\sqrt{m}})$, then, by using Riesz-Thorin Interpolation Theorem~\citep{SteinShakarchi2011}, we have,
\begin{align*}
    \norm{\va_{\min, \ell_2}}_{\frac{2c}{c-1}} \leq \norm{\va_{\min, \ell_2}}_{2}^{\frac{c-1}{c}}\norm{\va_{\min, \ell_2}}_{2}^{\frac{1}{c}} \leq m^{-\frac{1}{2c}}\norm{\va_{\min, \ell_2}}_2.
\end{align*}
Plugging this in, we obtain the required final expression.
\begin{align*}
\min_{\va \in \bR^{m}, \vU\va=\vy}    \Upsilon(\vw) = d^{\frac{c}{c+1}} \br{\frac{\norm{\va_{\min, \ell_2}}_2^2}{\norm{\va_{\min, \ell_2}}_{\frac{2c}{c-1}}}}^{\frac{2c}{c+1}} \asymp d^{\frac{c}{c+1}} m^{\frac{1}{c+1}}\norm{\va_{\min, \ell_2}}_{2}^{\frac{2c}{c+1}}.
\end{align*}
This completes the proof. 
\end{proof}

\begin{proof}[\textbf{Proof of Lem.~\ref{lem:weak_bound_l2}}]
Note that, by Cauchy-Schwarz,
\begin{align*}
    \norm{\va_{\min, \ell_2}}_2^2 = \vy^\top \widetilde{\vU} \vy \geq \lambda_{\max}(\widetilde{\vU})\norm{\vy}_2^2 = \sigma_{\max}^{-2}(\vU) \norm{\vy}^2
\end{align*}
From Lem.~\ref{lem:eigen_U}, with probability, $1-\delta$, $\sigma_{\max}^{-2}(\vU)\gtrsim \frac{1}{mn}$.

Therefore, to bound $\norm{\va_{\min, \ell_2}}$, we need to bound $\norm{\vy}_2^2$. Decomposing it into the signal and noise terms $\vy^\star$ and $\vn$ respectively, we obtain,
\begin{align}
    \norm{\vy}_2^2 = \norm{\vy^\star + \vn}_2^2 = \norm{\vy^\star}_2^2 + \norm{\vn}_2^2 + 2\ip{\vn}{\vy^\star}.\label{eq:norm_y}
\end{align}
We will bound each of the $3$ terms in the above equation.
For the first term $\norm{\vy^\star}_2^2$. Note that it is the $\ell_2$ norm of an $n$-dimensional each. Each coordinate of $\vy^\star$ is an independent random variable $\sigma^\star(\Theta^\star\vx_i)$. As $\vx_i$ is Gaussian and $\Theta^\star$ is an orthonormal matrix, $\Theta^\star \vx_i \sim \cN(\vzero, \bI_{m^\star})$. Further, from Assump.~\ref{assump:link}, $\sigma^\star(\Theta^\star\vx_i)$ is a $(\frac{c}{2}, \cO(1))$- Sub-Weibull random variable, as $\norm{\bI_{m^\star}}_F = \sqrt{m^\star} = \cO(1)$, and $\E{(\sigma^\star)^2(\Theta^\star \vx_i)} = \psi^\star = \Theta(1)$. Further, $\norm{\vy^\star}_2^2$ is a quadratic form of Sub-Weibull random variables, so from Lem.~\ref{lem:hw_sub_weibull}, with probability $1-\delta$,
\begin{align*}
    \norm{\vy^\star}_2^2 \asymp n.
\end{align*}

As for the second term which is an inner product, using Sub-Gaussian concentration~\cite[Chapter~2]{Wainwright_2019} for $\vn$ with probability $1-\delta$,
\begin{align*}
    2\ip{\vn}{\vy^\star} \geq -2\zeta \norm{\vy^\star}_2 \sqrt{2\log(1/\delta)} \gtrsim -\zeta \sqrt{n}. 
\end{align*}
For the third term $\norm{\vn}_2^2$, which is a quadratic form of the gaussian random variables $\{\xi_i\}_{i\in [n]}$, by Hanson-Wright~~\cite[Thm.~1]{rudelson_vershynin_hw},  with probability $1-\delta$, we have,
\begin{align*}
    \norm{\vn}_2^2 \geq \zeta^2 n - \zeta^2 \nu_1 \max\{\sqrt{n\log(2/\delta), \log(2/\delta)}\} \gtrsim n \zeta^2.
\end{align*}

Plugging these bounds into Eq~\eqref{eq:norm_y}, along with the bounds on $\sigma_{\max}(\vU)$, with probability, $1-4\delta$, we have, 
\begin{align*}
    \norm{\va_{\min, \ell_2}}_2^2 = \vy^\top \widetilde{\vU}\vy \gtrsim \frac{1+\zeta^2}{m} - \frac{\zeta}{m\sqrt{n}} \gtrsim \frac{1}{m}.
\end{align*}
Note that $\zeta = o(1)$.

To prove the condition between, $\ell_2$ and $\ell_\infty$ norms of $\va_{\min, \ell_2}$. Note that $\sigma_{\max}(\vU)$ corresponds to the all-ones singular vector. Therefore, to achieve the lower bound on $\norm{\va_{\min, \ell_2}}_2\asymp\frac{1}{\sqrt{m}}$, we need $\va_{\min, \ell_2}$ to be almost parallel to $\vone_{m}$. Since, $\norm{\vone_{m}}_{\infty}=\frac{\norm{\vone_m}_2}{\sqrt{m}}$, any vector almost parallel to $\vone_{m}$ also satisfies the required condition.This completes the proof.

\end{proof}
In the next section, we obtain a tighter bound than Lem.~\ref{lem:weak_bound_l2} for the case of bad interpolators.
\subsection{Flatness of Bad Interpolators: Proof of Thm.~\ref{thm:flattest_bad}}
\label{sec:proof_flattest_bad}
In this section, we provide lower bounds on flatness after rescaling, $\Upsilon(\vw)$, for the class of bad interpolators in Def.~\ref{def:bad_min}. Before we describe our lower bounds, we verify that bad interpolators defined in Def.~\ref{def:bad_min} do exist.

\paragraph{Existence of Bad Interpolators.}
In Def.~\ref{def:bad_min}, we provide conditions on the inner-layer weights $\{\theta_j\}_{j\in [m]}$ and the population loss. However, we do not verify that interpolators verifying these conditions actually exist. In this section, we first show that satisfying the first condition in Def.~\ref{def:bad_min} ensures that such an interpolator always exists. Further, we provide sufficient conditions on $\rho^\star$ in Def.~\ref{def:bad_min}, such that any interpolator satisfying the third condition satisfies the second condition, i.e., its population loss is large.

Note that an interpolator does not exist for any choice of inner-layer weights. An easy counter-example to this is setting all inner layer weights $\theta_j = \theta_{\perp}^\star$, where $\theta_{\perp}^\star \in \bS^{d-1}, \theta_{\perp}^\star \perp \theta^\star$. For piece-wise polynomial activations, these weights will a.s. not interpolate in the presence of label noise ($\zeta>0$). The following Lem. shows that for a well-separated set of inner-layer weights, according to the first condition in Def.~\ref{def:bad_min}, an interpolator always exists.

\begin{lemma}[Interpolation]\label{lem:interpolation}
    For $\vw \in \bR^{m(d+1)}$, with inner layer weights $\{\theta_j\}_{j\in [m]}$ such that $\theta_j\in \bS^{d-1}, \,\forall j\in [m]$ and $\ip{\theta_j}{\theta_{j'}} \leq \rho^2, \,\forall j\neq j'\in [m]$, for some $\rho\in [0,1)$, then $\exists \{a_j\}_{j\in [m]}$, with $a_j\in \bR$ such that: 
    \begin{itemize}
        \item for piece-wise polynomial activations with $c'\neq0, c''\neq 0$, $h(\vw, \vx_i) = y_i, \forall i\in [n]$ a.s. 
        \item for piece-wise polynomial activations with either $c'=0$ or $c''=0$, $h(\vw, \vx_i) = y_i, \forall i\in [n]$ with probability $1 - \check{\nu}m^{1-\rho_0^{-2}}\rho_0^{-\rho_0^{-2}}$, for some constant $\check{\nu}>0$ and $\rho_0 = \rho(\sqrt{1-\rho^2})^{-1}$.
    \end{itemize}
\end{lemma}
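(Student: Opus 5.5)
The plan is to reduce interpolation to a linear-algebra statement. Since $h(\vw,\vx_i)=(\vU\va)_i$ and $m\geq n$, it suffices to show that the $n\times m$ activation matrix $\vU$ has full row rank $n$, almost surely or with the stated probability. Then $\va=\vU^\top(\vU\vU^\top)^{-1}\vy$ interpolates for every label vector, and in particular for the noisy $\vy$. We may select any $n$ columns, say those indexed by $j\in[n]$, and show that the square submatrix $\vU_{[n],[n]}$ is nonsingular.

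\textbf{Case $c'\neq0,c''\neq0$.} The strategy is to show that $\det \vU_{[n],[n]}$, viewed as a function of the Gaussian features $(\vx_1,\ldots,\vx_n)\in\bR^{nd}$, is not identically zero. Here $\sigma$ vanishes only at $0$, and on each orthant-type region it equals a fixed polynomial. The determinant is therefore piecewise real-analytic, with each piece a polynomial in the $\vx_i$. A nonzero polynomial vanishes only on a Lebesgue-null set, and the Gaussian law is absolutely continuous, so nonsingularity holds a.s. once we exhibit one open region where the determinant polynomial is nonzero.

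To find such a region, we use the separation $\ip{\theta_j}{\theta_{j'}}\leq\rho^2<1$: the $\theta_j$ are pairwise distinct unit vectors. We would choose $\vx_i=t\theta_i+$ a small perturbation with $t$ large. Then $\sigma(\ip{\theta_i}{\vx_i})\approx c't^c$, while $\abs{\sigma(\ip{\theta_j}{\vx_i})}\lesssim \max(\abs{c'},\abs{c''})(\rho^2 t)^c$ for $j\neq i$ (after also controlling negative correlations). Strict diagonal dominance would need $(n-1)\rho^{2c}<1$, which fails for large $n$. The fix is to replace the dominance argument by algebraic independence: the functions $\vx\mapsto\sigma(\ip{\theta_j}{\vx})$ for distinct directions are linearly independent, since they have different non-smooth sets $\{\ip{\theta_j}{\vx}=0\}$. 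A standard induction then gives points $\vx_1,\ldots,\vx_n$ with nonsingular evaluation matrix: choose $\vx_k$ so that the $k$-th cofactor expansion is nonzero, which is possible because a nontrivial linear combination of independent functions is not identically zero. Together with the nullset argument on the analytic pieces, this gives the a.s. claim.

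\textbf{Case $c'=0$ or $c''=0$ (ReLU-type).} Here $\vU_{i,j}=0$ whenever $\ip{\theta_j}{\vx_i}$ has the wrong sign, so with positive probability an entire row of $\vU$ can vanish. That happens exactly when some $\vx_i$ falls into the cone where every $\theta_j$ gives a zero activation, and it forces $y_i=0$, which fails a.s. The proof therefore has two parts. On the event that each row has at least one active neuron, the same independence/null-set argument restricted to the active pattern gives full rank a.s. It remains to bound the probability that some $\vx_i$ lies in the cone $\{\vx:\ip{\theta_j}{\vx}\leq0\ \forall j\}$. For $\rho$-separated directions, the events $\ip{\theta_j}{\vx}\leq 0$ are only weakly correlated, so the probability behaves like a Gaussian orthant probability with correlation $\leq\rho^2$. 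Writing $\ip{\theta_j}{\vx}=\rho g+\sqrt{1-\rho^2}g_j$ (Slepian-type comparison to the equicorrelated case) and conditioning on $g$, it is at most $\E{\Phi(\rho_0\abs{g})^m}$-type quantities. Evaluating this Gaussian integral yields a decay of the form $m^{-\rho_0^{-2}}\rho_0^{-\rho_0^{-2}}$, and a union bound over $i\in[n]$ with $n\lesssim m$ gives the stated failure probability $\check\nu m^{1-\rho_0^{-2}}\rho_0^{-\rho_0^{-2}}$.

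The main obstacle is the nonsingularity step: proving the determinant is not identically zero with only pairwise separation. The independence-via-kinks induction is what I would try first. The second delicate point is the Slepian comparison and the orthant integral in the ReLU case.
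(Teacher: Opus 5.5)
Your reduction to full row rank of $\vU$ and your ReLU-type estimate are the same as the paper's. That estimate consists of the Slepian comparison with the equicorrelated process, conditioning on the common factor, splitting the Gaussian integral, and a union bound over rows. The gap is the almost-sure full-rank step, and as stated it is false, not merely incomplete.

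A determinant that is nonzero on one open region does not give a.s.\ nonsingularity. $\det\vU_{[n],[n]}$ is only piecewise analytic: each cell of the arrangement $\{\langle\theta_j,\vx_i\rangle=0\}$ in $\bR^{nd}$ carries its own analytic piece, and every cell has positive Gaussian mass. Your cofactor induction produces one configuration, so it certifies only one cell.

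The correct criterion is cell-wise. Let $r(\vx)=(\sigma(\langle\theta_j,\vx\rangle))_{j\in[m]}$ and let $C$ range over the cells of $\{\theta_j^\perp\}_{j\in[m]}$ in $\bR^d$. Then $\vU$ has full row rank a.s.\ if and only if $\dim\operatorname{span} r(C)\ge n$ for every $C$. If some $C$ spans only $k<n$ dimensions, the positive-probability event that $k+1$ samples fall in $C$ gives dependent rows. Conversely, a row-by-row argument with the analytic map $P_{R^\perp}r$ on each cell works, where $R$ is the span of the rows already drawn.

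This criterion fails inside your first case. For LeakyReLU ($c=1$), $r(\vx)=\Lambda_s\Theta\vx$ on each cell, with $\Lambda_s$ a fixed diagonal matrix, so every cell spans at most $d$ dimensions. Hence for $n\ge d+1$ and any choice of the $\theta_j$, $\vU$ is rank-deficient with probability at least $\Pr(\vx\in C)^{d+1}>0$; with $\zeta>0$, interpolation then fails.

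Your kink-independence claim also breaks under the stated hypothesis, for two reasons:
\begin{itemize}
\item Even monomials $\sigma(b)=c'b^c$ ($c$ even, $c'=c''$) have no kinks at all.
\item $\langle\theta_j,\theta_{j'}\rangle\le\rho^2$ allows antipodal pairs, which share a kink set. For $\sigma(b)=b^2$ such a pair gives identical columns. So if the $m<2n$ directions are $\pm$ an orthonormal set (possible when $m\le 2d$), then $\mathrm{rank}\,\vU\le m/2<n$ surely.
\end{itemize}

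Your second case has the same problem: ``every row has an active neuron $\Rightarrow$ full rank a.s.'' is false. Suppose the all-inactive cone $K$ has interior, which is exactly when the zero-row event has positive probability. Crossing a facet of $K$ lying in $\theta_j^\perp$ gives an open cell in which $j$ is the only active neuron. That cell spans one dimension, so two samples landing in it give proportional rows, with positive probability. For ReLU ($c=1$) the $(d+1)$-samples-per-cell argument above also applies. So the zero-row event is only part of the failure event, and your Slepian computation bounds only that part.

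The paper's proof makes the same two assertions, so it does not supply the missing step either. In the first case it uses an entrywise finite-preimage argument. In the second it asserts that a zero row is ``the only'' positive-probability failure. A correct version needs one of two things:
\begin{itemize}
\item a hypothesis guaranteeing that every cell spans at least $n$ dimensions (for $c=1$ this forces $n\le d$); or
\item a high-probability conclusion that also bounds the probability that some cell receives more samples than the dimension it spans.
\end{itemize}
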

The proof of this Lem. is provided in App.~\ref{sec:lem_ws_bad_proof} and uses Slepian's lemma for the second part. For piece-wise polynomial activations where none of its pieces are identically $0$, we only need the inner-layer weights to be non-identical, i.e., $\theta_j\neq \pm\theta_j,\,\forall j\neq j'\in [m]$ for interpolation. When one of the pieces of the piece-wise polynomial activation is $0$, which includes ReLU and ReLU$^c$, we cannot have all inner-layer weights very similar, otherwise all of the activations $\sigma(\ip{\theta_j}{\vx_i})$ can be identically $0$ with high probability for a given feature $\vx_i\in [n]$, while it's label $y_i\neq 0$. For well-separated inner-layer weights, for instance $\rho\leq \frac{1}{\sqrt{2}}$, even activations like ReLU$^c$, interpolation is possible with probability atleast $1 - \breve{\nu}m^{-1}$ for some constant $\breve{\nu}>0$. Setting $\rho'$ in Def.~\ref{def:bad_min} to $\rho$ in the above lemma, we prove that such an interpolator always exists.

Note that we have only shown that inner-layer weights corresponding to those in Def.~\ref{def:bad_min} lead to an interpolator. To ensure that this interpolator is actually bad, i.e., its population loss is large, we first prove a sufficient condition.
\begin{proposition}[Sufficient Conditions for Large Population Loss]\label{prop:bad_min_pop_loss}
For an interpolator $\vw$ the third condition in Def.~\ref{def:bad_min}, if $\Phi \succ 0$, $\psi$ is coordinate-wise non-decreasing function, $\phi$ is non-decreasing functions, and 
\begin{align*}
2m\psi^2(\rho_1^\star\vone_{m^\star}) \leq (1-2\kappa)\lambda_{\min}(\phi)(\zeta^2 + \psi^\star)
\end{align*},
then second condition of Def.~\ref{def:bad_min} is also satisfied, i.e, $F(\vw) - F^\star \geq \kappa(\zeta^2 + \psi^\star)$, where $\kappa \leq \frac{1}{2}$.
\end{proposition}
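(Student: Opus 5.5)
Write the population loss as a quadratic in the outer weights, $2F(\vw) = \zeta^2 + \psi^\star + \va^\top\Phi\va - 2\ip{\va}{\widetilde{\Psi}}$, as derived in the Population Loss subsection. Since $F^\star \le F(\vzero)$, we have $2F^\star \le \zeta^2 + \psi^\star$. It therefore suffices to lower bound $\min_{\va}\br{\va^\top\Phi\va - 2\ip{\va}{\widetilde{\Psi}}}$ over all $\va\in\bR^m$, not only interpolating ones, because the inner-layer weights are fixed by the interpolator. Because $\Phi\succ 0$, this unconstrained minimum equals $-\widetilde{\Psi}^\top\Phi^{-1}\widetilde{\Psi}$. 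This gives
\begin{align*}
2F(\vw) \ge \zeta^2+\psi^\star - \widetilde{\Psi}^\top\Phi^{-1}\widetilde{\Psi} \ge \zeta^2+\psi^\star - \frac{\norm{\widetilde{\Psi}}_2^2}{\lambda_{\min}(\Phi)}.
\end{align*}
I read $\lambda_{\min}(\phi)$ in the statement as $\lambda_{\min}(\Phi)$, and $\psi$ as $\widetilde{\psi}$.

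Next I bound each coordinate of $\widetilde{\Psi}$. Its $j$-th entry is $\widetilde{\psi}(\Theta^\star\theta_j)$. By the third condition of Def.~\ref{def:bad_min}, applied with $\vv_1=\theta_j$ and $\vv_2=\theta^\star_{j'}$, every coordinate of $\Theta^\star\theta_j$ is at most $\rho^\star$ in absolute value. Writing $\rho_1^\star$ for this coordinatewise bound, the monotonicity assumption (coordinatewise non-decreasing) gives $\widetilde{\psi}(\Theta^\star\theta_j)\le \widetilde{\psi}(\rho_1^\star\vone_{m^\star})$.

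The obstacle is that monotonicity only yields an upper bound, while squaring needs control of $\abs{\widetilde{\psi}}$. I would handle this as follows.
\begin{itemize}
\item Use evenness or the symmetry $\theta_j\mapsto -\theta_j$ to get a matching lower bound $\widetilde{\psi}(\Theta^\star\theta_j)\ge -\widetilde{\psi}(\rho_1^\star\vone_{m^\star})$.
\item If that fails, simply assume $\widetilde{\psi}\ge 0$ on the relevant range, which holds for the homogeneous activations considered.
\end{itemize}
Either way, $\norm{\widetilde{\Psi}}_2^2\le m\,\widetilde{\psi}^2(\rho_1^\star\vone_{m^\star})$. The monotonicity of $\phi$ is presumably only needed to ensure $\lambda_{\min}(\Phi)$ is controlled (and positive) under the separation condition, so here I use just $\Phi\succ0$.

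Finally, plugging in the hypothesis $2m\widetilde{\psi}^2(\rho_1^\star\vone_{m^\star})\le (1-2\kappa)\lambda_{\min}(\Phi)(\zeta^2+\psi^\star)$ gives
\begin{align*}
2F(\vw)\ge \zeta^2+\psi^\star-\tfrac{1-2\kappa}{2}(\zeta^2+\psi^\star).
\end{align*}
Subtracting $2F^\star\le\zeta^2+\psi^\star$ does not directly give $\kappa(\zeta^2+\psi^\star)$, since that route loses $F^\star$. So the right bookkeeping is to use $F^\star\le$ the noise floor. Specifically, $F^\star$ is attained by the Bayes predictor, so $2F^\star=\zeta^2$ if the network class can represent $\sigma^\star$; otherwise use $F^\star\le \tfrac12(\zeta^2+\psi^\star)$ together with a factor-of-two slack in the constants. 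I expect this normalization of constants, matching the factors of $2$ in $2F$ against $\kappa\le\tfrac12$, to be the fiddly step. The main conceptual step is the reduction to $\widetilde{\Psi}^\top\Phi^{-1}\widetilde{\Psi}$.
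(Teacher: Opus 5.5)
Your core computation is the paper's proof. You minimize the quadratic $\va^\top\Phi\va-2\langle\va,\widetilde{\Psi}\rangle$ over all $\va$ with the inner weights fixed, bound $\widetilde{\Psi}^\top\Phi^{-1}\widetilde{\Psi}\le\|\widetilde{\Psi}\|_2^2/\lambda_{\min}(\Phi)$, and control each $\widetilde{\psi}(\Theta^\star\theta_j)$ by monotonicity and condition~3. The paper also never uses monotonicity of $\phi$. With the hypothesis this gives $F(\vw)\ge\tfrac{1+2\kappa}{4}(\zeta^2+\psi^\star)$. That bound dominates $\kappa(\zeta^2+\psi^\star)$ exactly when $\kappa\le\tfrac12$, which is the only role of $\kappa\le\tfrac12$, and it leaves a slack of $\tfrac{1-2\kappa}{4}(\zeta^2+\psi^\star)$.

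The genuine gap is the passage to the excess risk. The bound $2F^\star\le\zeta^2+\psi^\star$ from your first paragraph cannot close the argument, and no adjustment of constants rescues your fallback. Even when $\widetilde{\Psi}=0$, the argument only gives $F(\vw)\ge\tfrac12(\zeta^2+\psi^\star)=F(\mathbf{0})$. Combined with $F^\star\le F(\mathbf{0})$, this yields at best $F(\vw)-F^\star\ge 0$, and for $\kappa<\tfrac12$ only $F(\vw)-F^\star\ge-\tfrac{1-2\kappa}{4}(\zeta^2+\psi^\star)$. Nothing in the hypotheses upper-bounds $F^\star$, so that bound must come from an extra assumption. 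What you need is $F^\star\le\tfrac{1-2\kappa}{4}(\zeta^2+\psi^\star)$. The paper gets this from the assumption $F^\star=o(1)$, stated right after the proposition, together with $\psi^\star=\Theta(1)$ and $\kappa<\tfrac12$ fixed. Your realizable branch ($2F^\star=\zeta^2$) is a special case of that assumption. It works only because $\zeta^2=o(1)$ gives $(1-2\kappa)\psi^\star\ge(1+2\kappa)\zeta^2$ for large $n$. Your ``otherwise'' branch is where the argument fails, and without some such assumption the stated conclusion does not follow.

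On the squaring step you flagged, the paper's proof squares the monotone upper bound without comment, so it has the same omission. Monotonicity alone gives $|\widetilde{\psi}(\Theta^\star\theta_j)|\le\max\{|\widetilde{\psi}(\rho^\star\vone_{m^\star})|,|\widetilde{\psi}(-\rho^\star\vone_{m^\star})|\}$, so the clean fix is to state the hypothesis with this maximum. Assuming $\widetilde{\psi}\ge 0$ on the box is sufficient, but it is an extra hypothesis and does not hold automatically for the activations considered. For link $\mathrm{ReLU}$ and activation $\sigma(b)=-\mathrm{ReLU}(-b)$, which Assumption~\ref{assump:activation} allows, $\widetilde{\psi}$ is non-decreasing but strictly negative on $[-1,1)$. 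In that case $\widetilde{\psi}^2(\Theta^\star\theta_j)\le\widetilde{\psi}^2(\rho^\star\vone_{m^\star})$ goes the wrong way. Also, $\theta_j\mapsto-\theta_j$ is not a symmetry of a fixed network, so that route is unavailable.
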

We assume that $F^\star = o(1)$, so that the above condition can hold. The proof of this Prop. is provided in App.~\ref{sec:rem_bad_min_pop_loss_proof}. The above sufficient condition provides an upper bound on $\rho^\star$ in terms of the functions $\psi$ and $\phi$ and the width. Crucially, a small $\rho^\star$ will ensure that the population loss will always be large. This corresponds to the inner-layer weights  have very low alignment with  the true direction.

Prop.~\ref{prop:bad_min_pop_loss} allows us to prove that for small $\rho$, these interpolators are also bad. Combining Lem.~\ref{lem:interpolation} with Prop.~\ref{prop:bad_min_pop_loss}, we have shown that bad interpolators defined in Def.~\ref{def:bad_min} do exist for specific values of $\rho^\star, \rho'$ and $\kappa$. We corroborate the existence result with an actual example of a bad interpolator according to Def.~\ref{def:bad_min}. 
\begin{example}[Bad Interpolator for Def.~\ref{def:bad_min}]\label{ex:bad_min}
    Consider the subspace perpendicular to $\vecspan(\{\theta_j^\star\}_{j\in [m^\star]})$ of dimension $d-m^\star$. We can always pick $m$ unit norm vectors $\{\theta_j\}_{j\in [m]}$ from this subspace for any $\rho' > 0$. We can find $\va\in \bR^{m}$ such that $\vw$ interpolates according to Lem.~\ref{lem:interpolation}. Further, for this bad interpolator, $\kappa \coloneq \frac{1}{2} - \frac{1}{2}\br{\Ee{\vb\sim \cN(0, \bI_{m^\star})}{\sigma^\star(\vb)}}^2(\psi^\star)^{-1}$.
\end{example}
Proof for this example is provided in App.~\ref{sec:ex_bad_min_proof}.

\begin{proof}[\textbf{Proof of Thm.~\ref{thm:flattest_bad}}]
The core contribution of this proof is the following lower bound on $\norm{\va_{\min, \ell_2}}_2$ that is tighter than Lem.~\ref{lem:weak_bound_l2}.

\begin{lemma}[Lower bound on $\norm{\va_{\min, \ell_2}}_2$ for Bad Interpolators(Def.~\ref{def:bad_min})]\label{lem:tight_bound_l_2}
    If $\zeta^2 = o(1)$, $\rho_1= o((m^\star)^{-\frac{1}{4}}$, and the conditions of Lem.~\ref{lem:eigen_U} holds, then with probability $1-7\delta$,
    \begin{align*}
        \norm{\va_{\min,\ell_2}}_2^2 \gtrsim  \sum_{i=1}^n \sigma_i^{-2}(\vU) 
         \gtrsim \frac{n}{m}.
    \end{align*}
    Further, $\frac{\norm{\va_{\min, \ell_2}}_{\infty}}{\norm{\va_{\min, \ell_2}}_{2}} = \cO(\frac{1}{\sqrt{m}})$.
\end{lemma}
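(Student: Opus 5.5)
The plan is to write $\norm{\va_{\min,\ell_2}}_2^2 = \vy^\top\widetilde{\vU}\vy$ with $\vy=\vy^\star+\vn$, and to exploit that, for a $(\rho^\star,\rho',\kappa)$-bad interpolator, $\vy^\star$ is nearly independent of $\vU$. Decompose each feature as $\vx_i = P\vx_i + P^\perp\vx_i$, where $P$ projects onto $\vecspan(\{\theta_j\}_{j\in[m]})$. The third condition of Def.~\ref{def:bad_min} says the principal angles between this span and $\vecspan(\{\theta_j^\star\})$ have cosines at most $\rho^\star$. Hence $\Theta^\star\vx_i = \Theta^\star P\vx_i + \Theta^\star P^\perp \vx_i$. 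The second piece is Gaussian, independent of $\vU$ (which depends only on $P\vx_i$), and has covariance $\varsigma = \bI - \Theta^\star P(\Theta^\star)^\top \succeq (1-(\rho^\star)^2)\bI$. The first piece is a conditional mean $\mu_i$ with $\norm{\mu_i}_2\lesssim \rho^\star\sqrt{m^\star}\norm{P\vx_i}$-type size.

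Conditioning on $\{P\vx_i\}$ (equivalently on $\vU$), the coordinates $y^\star_i$ are independent with mean $m_i=\E{\sigma^\star(\vb_i)}$ for $\vb_i\sim\cN(\mu_i,\varsigma)$. By the variance part of Assump.~\ref{assump:link}, their variances are $\Omega(\norm{\varsigma}_F^c)=\Omega(1)$, using $\rho^\star=o((m^\star)^{-1/4})$ to keep $\norm{\varsigma}_F$ bounded below. Centering, write $\vy = \vm + \vg + \vn$ with $\vg$ conditionally independent, mean zero, Sub-Weibull (tails from Assump.~\ref{assump:link}), and of variance $\gtrsim 1$. Then
\begin{align*}
\vy^\top\widetilde{\vU}\vy \;\ge\; (\vg+\vn)^\top\widetilde{\vU}(\vg+\vn) + 2\vm^\top\widetilde{\vU}(\vg+\vn),
\end{align*}
dropping the nonnegative $\vm^\top\widetilde{\vU}\vm$. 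Apply the Sub-Weibull Hanson--Wright bound (Lemma~\ref{lem:hw_sub_weibull}) conditionally on $\vU$. The quadratic term concentrates around $\sum_i \Var(g_i)\widetilde{\vU}_{ii}\gtrsim \trace(\widetilde{\vU})=\sum_i\sigma_i^{-2}(\vU)$, with fluctuations of order $\norm{\widetilde{\vU}}_F$ up to logs. By Lemma~\ref{lem:eigen_U}, $\trace(\widetilde{\vU})\asymp n/m$ while $\norm{\widetilde{\vU}}_F\lesssim\sqrt{n}/m$, so the fluctuation is lower order. The cross term is conditionally sub-Weibull linear with scale $\norm{\widetilde{\vU}\vm}_2$. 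I would handle it by Cauchy--Schwarz in the $\widetilde{\vU}$-norm: it is at most $2\sqrt{\vm^\top\widetilde{\vU}\vm}\cdot\tilde O(\norm{\widetilde{\vU}^{1/2}}_{op})$. This is absorbed by completing the square, since $\vm^\top\widetilde{\vU}\vm + 2\vm^\top\widetilde{\vU}\vg \ge -\vg^\top P_{\vm}\widetilde{\vU}\vg$-type terms, which cost only one direction, $O(1/m)$, against $n/m$. Combining these gives $\norm{\va_{\min,\ell_2}}_2^2\gtrsim\sum_i\sigma_i^{-2}(\vU)\gtrsim n/m$ with the stated probability.

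For the $\ell_\infty/\ell_2$ claim, note $\va_{\min,\ell_2}=\vU^\top\widetilde{\vU}\vy$, so its $j$-th coordinate is $\langle \vu_j,\widetilde{\vU}\vy\rangle$ with $\vu_j$ the $j$-th column of $\vU$. Because the dominant mass of $\widetilde{\vU}\vy$ lies in the bulk (non-top) eigenspace and is driven by the independent noise-like $\vg$, each coordinate is a sub-Weibull linear form of size $\tilde O(\norm{\vu_j}\cdot\norm{\widetilde{\vU}}_{op}\cdot 1)\approx \sqrt{n}/m$. A union bound over $j\in[m]$ then gives $\norm{\va_{\min,\ell_2}}_\infty\lesssim \sqrt{n}/m\asymp \norm{\va_{\min,\ell_2}}_2/\sqrt m$ up to logs. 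To be careful, the mean component of $\vu_j$ along $\vone_n$ must be removed using the top eigenvector; the bulk part of $\vu_j$ has norm $\sqrt n$.

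I expect the main obstacle to be the conditioning step, namely justifying that $\vU$ depends only on $P\vx_i$ while the fluctuating part of $\vy^\star$ is genuinely independent with variance bounded below. The mean $\vm$ does depend on $\vU$'s randomness, which is why I drop $\vm^\top\widetilde{\vU}\vm$ rather than control it. Controlling the cross term uniformly, without losing the $n$ factor, is the delicate point.
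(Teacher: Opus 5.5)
\textbf{Main bound.} Your lower bound on $\norm{\va_{\min,\ell_2}}_2^2$ is sound and follows the paper's route. It uses Gaussian conditioning of $\Theta^\star\vx_i$ on the projection onto $\vecspan(\{\theta_j\}_{j\in[m]})$ (the paper's Lem.~\ref{lem:cond_distr}), then a conditional Sub-Weibull Hanson--Wright bound, then the separation $\trace(\widetilde{\vU})\asymp n/m$ versus $\norm{\widetilde{\vU}}_F\asymp\sqrt{n}/m$ from Lem.~\ref{lem:eigen_U}.

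Your bookkeeping is cleaner than the paper's. Folding $\vn$ into the centered part replaces Lem.~\ref{lem:flattest_bad_noise}. Completing the square removes the cross term at the cost of one direction, $\cO(\lambda_{\max}(\widetilde{\vU}))=\cO(1/m)$ up to logarithms; the paper's uncentered use of Hanson--Wright never addresses this term. To complete the square you must keep $\mathbf{m}^\top\widetilde{\vU}\mathbf{m}$ in your first display rather than drop it. As a by-product, this part needs only $\rho^\star\le\frac12$, not $\norm{\bI_{m^\star}-\varsigma}_F=o(1)$.

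One repair: control $\mu_i$ through its covariance, $\mu_i\sim\cN(0,\Theta^\star P(\Theta^\star)^\top)$ with $\Theta^\star P(\Theta^\star)^\top\preceq(\rho^\star)^2\bI_{m^\star}$. Bounding it through $\norm{P\vx_i}_2\approx\sqrt{\min(m,d)}$ would inflate the Sub-Weibull scales in Hanson--Wright.

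\textbf{The gap: the $\ell_\infty/\ell_2$ claim.} Let $\mathbf{u}_j$ be the $j$-th column of $\vU$. Then $(\va_{\min,\ell_2})_j=\mathbf{u}_j^\top\widetilde{\vU}\mathbf{m}+\mathbf{u}_j^\top\widetilde{\vU}(\mathbf{g}+\vn)$, and you only bound the second term. The first is at most $\sqrt{(\vU^\top\widetilde{\vU}\vU)_{jj}}\sqrt{\mathbf{m}^\top\widetilde{\vU}\mathbf{m}}\le\sqrt{\mathbf{m}^\top\widetilde{\vU}\mathbf{m}}$. So you need exactly the upper bound $\mathbf{m}^\top\widetilde{\vU}\mathbf{m}\lesssim 1/m$ that you chose not to prove.

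Write $\mathbf{m}=\bar m\vone_n+\boldsymbol{\delta}$, with $\bar m$ the conditional mean at $\mu_i=0$. The constant part is harmless: $\vone_n$ is within angle $\cO(n^{-1/2})$ of the top left singular vector of $\vU$, so $\vone_n^\top\widetilde{\vU}\vone_n\lesssim 1/m$. The part $\boldsymbol{\delta}$ is the problem. It is a function of the $P\vx_i$, has $\norm{\boldsymbol{\delta}}_2\approx\rho^\star\sqrt{n}$, and can align with a single column.

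Concretely, take $m^\star=1$, $\sigma=\sigma^\star=\mathrm{ReLU}$, $m=2n$ and $\theta^\star=\mathbf{e}_1$. Set $\theta_1=\rho\,\mathbf{e}_1+\sqrt{1-\rho^2}\,\mathbf{e}_2$ and take $\theta_2,\dots,\theta_m$ orthonormal in $\vecspan(\{\mathbf{e}_3,\dots,\mathbf{e}_d\})$. This is $(\rho,0,\kappa)$-bad for a small constant $\kappa$ and satisfies Assump.~\ref{assump:width}. In this example:
\begin{itemize}
\item $\delta_i\approx\frac{\rho}{2}\ip{\theta_1}{\vx_i}$;
\item the columns of $\vU$ are i.i.d., so the leverage score $(\vU^\top\widetilde{\vU}\vU)_{11}\approx n/m$;
\item peeling off column $1$ with Sherman--Morrison gives $(\va_{\min,\ell_2})_1=\Theta(\rho)$ once $\rho\gg m^{-1/2}$ up to logarithms, while $\norm{\va_{\min,\ell_2}}_2=\Theta(1)$.
\end{itemize}
So under $\rho^\star=o((m^\star)^{-1/4})$ alone the ratio can be $\Theta(\rho^\star)$, and this half of the statement cannot be proved as stated. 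The paper's proof has the same hole: it discards $\ip{\ve_j}{\vU^\top\widetilde{\vU}\vy^\star_{\vU}}$ by appeal to small $\rho^\star$.

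\textbf{Two ways to close it.}
\begin{itemize}
\item Assume $\rho^\star\lesssim n^{-1/2}$ up to logarithms. Then $\mathbf{m}^\top\widetilde{\vU}\mathbf{m}\lesssim(1+n(\rho^\star)^2)/m$, which suffices.
\item Since Lem.~\ref{lem:min_l2_lb} only needs a good dual certificate, prove delocalization for $\vU^\top\widetilde{\vU}(\mathbf{g}+\vn)$ alone (your argument does exactly this) and use $\vr=-\beta\widetilde{\vU}(\mathbf{g}+\vn)$ in the dual. Its pairing with $\mathbf{m}$ is $\cO(\sqrt{n}/m)$ up to logarithms, negligible against $(\mathbf{g}+\vn)^\top\widetilde{\vU}(\mathbf{g}+\vn)\asymp n/m$.
\end{itemize}
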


Combining Lem.~\ref{lem:tight_bound_l_2} with Lem.~\ref{lem:min_l2_lb}, with probability $1-12\delta$, for a bad interpolator $\vw$ defined in Def.~\ref{def:bad_min}, we have,
\begin{align*}
    \Upsilon(\vw) \gtrsim d^{\frac{c}{c+1}} m^{-\frac{1}{c+1}} \br{\sqrt{\frac{n}{m}}}^{\frac{2c}{c+1}} = (dn)^{\frac{c}{c+1}} m^{-\frac{c-1}{c+1}}.
\end{align*}
This completes the proof.
\end{proof}

In the remainder of this section, we provide the proof for  Lem.~\ref{lem:tight_bound_l_2}.

\begin{proof}[\textbf{Proof of Lem.~\ref{lem:tight_bound_l_2}}]

Note that, 
\begin{align*}
    \norm{\va_{\min, \ell_2}}_2^2 = \vy^\top \widetilde{\vU} \vU\vU^\top \widetilde{\vU} \vy = \vy^\top \widetilde{\vU}\vy.
\end{align*}
To bound this sum, we first remove the contribution of the noise terms $\vn$. We use the following Lem. for this task.
\begin{lemma}[Removing contribution of label noise]\label{lem:flattest_bad_noise}
With probability $1-2\delta$,
\begin{align*}
\norm{\va_{\min, \ell_2}}_2^2 \geq \frac{1}{2}(\vy^\star)^\top\widetilde{\vU}\vy^\star - \zeta^2\frac{n}{m}
\end{align*}
\end{lemma}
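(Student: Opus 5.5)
We need to prove Lemma~\ref{lem:flattest_bad_noise}: with probability $1-2\delta$,
\[
\norm{\va_{\min,\ell_2}}_2^2 \geq \tfrac12(\vy^\star)^\top\widetilde{\vU}\vy^\star - \zeta^2\tfrac{n}{m}.
\]

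We start from the identity $\norm{\va_{\min,\ell_2}}_2^2=\vy^\top\widetilde{\vU}\vy$ with $\vy=\vy^\star+\vn$. Since $\widetilde{\vU}\succ 0$, it defines an inner product $\ip{\vu}{\vv}_{\widetilde{\vU}}=\vu^\top\widetilde{\vU}\vv$. The parallelogram-type inequality
\[
\norm{\vu+\vv}_{\widetilde{\vU}}^2 \geq \tfrac12\norm{\vu}_{\widetilde{\vU}}^2-\norm{\vv}_{\widetilde{\vU}}^2
\]
follows from $\norm{\vu}^2\le 2\norm{\vu+\vv}^2+2\norm{\vv}^2$. Applying it with $\vu=\vy^\star$ and $\vv=\vn$ gives
\[
\vy^\top\widetilde{\vU}\vy \ge \tfrac12 (\vy^\star)^\top\widetilde{\vU}\vy^\star - \vn^\top\widetilde{\vU}\vn
\]
deterministically. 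This avoids any need to control the cross term $2(\vy^\star)^\top\widetilde{\vU}\vn$ separately.

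It therefore remains to show $\vn^\top\widetilde{\vU}\vn\lesssim \zeta^2 n/m$ with probability $1-2\delta$. The noise $\vn\sim\cN(0,\zeta^2\bI_n)$ is independent of $\vx_1,\dots,\vx_n$, hence of $\vU$ for a fixed $\vw$. We therefore condition on $\vU$ and apply Hanson--Wright (\cite[Thm.~1]{rudelson_vershynin_hw}) to the Gaussian quadratic form, giving with probability $1-\delta$
\[
\vn^\top\widetilde{\vU}\vn\le \zeta^2\trace(\widetilde{\vU})+\zeta^2\nu\bigl(\norm{\widetilde{\vU}}_F\sqrt{\log(2/\delta)}+\norm{\widetilde{\vU}}_{op}\log(2/\delta)\bigr).
\]
Next we invoke Lemma~\ref{lem:eigen_U}, which holds with probability $1-\delta$. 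It gives $\sigma_1^{-2}(\vU)\asymp (mn)^{-1}$ and $\sigma_i^{-2}(\vU)\asymp m^{-1}$ for $i\ge 2$. Consequently $\trace(\widetilde{\vU})\asymp n/m$, $\norm{\widetilde{\vU}}_F\asymp \sqrt{n}/m$ and $\norm{\widetilde{\vU}}_{op}\asymp 1/m$. The fluctuation terms are thus of lower order, and up to the polylog factors absorbed in $\lesssim$ we get $\vn^\top\widetilde{\vU}\vn\lesssim\zeta^2 n/m$. A union bound over the two events gives probability $1-2\delta$. (If the stated form requires the constant exactly $1$ in front of $\zeta^2 n/m$, we read it orderwise, consistent with the paper's $\gtrsim$ conventions.)

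The main obstacle is the conditioning step: Hanson--Wright needs the matrix to be independent of $\vn$. That holds for a fixed inner-layer weight configuration. If instead $\vw$ were chosen depending on the labels, independence would fail, and we would have to fall back on the crude bound $\norm{\vn}_2^2\,\sigma_n^{-2}(\vU)\lesssim \zeta^2 n/m$. This bound uses only $\norm{\vn}_2^2\lesssim \zeta^2 n$, which follows from chi-square concentration, together with Lemma~\ref{lem:eigen_U}. It yields the same rate and requires no independence between $\vn$ and $\vU$, so we would present it as the primary argument for robustness.
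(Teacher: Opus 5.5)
Your proposal is correct, and the fallback you designate as the primary argument is exactly the paper's proof. The paper uses the same deterministic split $\vy^\top\widetilde{\vU}\vy \ge \frac12 (\vy^\star)^\top\widetilde{\vU}\vy^\star - \vn^\top\widetilde{\vU}\vn$. It then bounds $\vn^\top\widetilde{\vU}\vn \le \lambda_{\max}(\widetilde{\vU})\norm{\vn}_2^2 = \sigma_{\min}^{-2}(\vU)\norm{\vn}_2^2$, and finishes with chi-square concentration and Lemma~\ref{lem:eigen_U}. As in your reading, the constant in front of $\zeta^2 n/m$ holds only orderwise there too.

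The conditional Hanson--Wright route is valid but buys nothing here. Since $\trace(\widetilde{\vU}) \asymp n\,\lambda_{\max}(\widetilde{\vU})$ under Lemma~\ref{lem:eigen_U}, it gives the same $\zeta^2 n/m$ rate while additionally requiring $\vn$ to be independent of $\vU$.
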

The proof of this Lem. is provided in App.~\ref{sec:flattest_bad_noise_proof}.

The remaining term in Lem.~\ref{lem:flattest_bad_noise} depends on the true signal $\vy^\star$ and the activation matrix $\widetilde{\vU}$. A lower bound for this term can be obtained from Lem.~\ref{lem:weak_bound_l2}. However, this lower bound is loose for the set bad interpolators in Def.~\ref{def:bad_min}.

To obtain a tight lower bound for Def.~\ref{def:bad_min}, we utilize the observation that the new labels $\vy^\star$ and the new activation $\vU$ are obtained from random variables that have very low correlation from $3^{rd}$ of Def.~\ref{def:bad_min}. Therefore, there is a large component of $\vy^\star$ that is independent of the activation matrix $\vU$.

To quantify this bound, we first derive the conditional distribution of the labels $\vy^\star$ conditioned on the activations $\vU$. Let $\vU_i\in \bR^{m}$ be the $i^{th}$ row of $\vU$, $\forall i\in [n]$. Then, due to independence of samples, only the $i^{th}$ coordinate of $\vy^\star$,  $\vy_i^\star$, is not independent of $\vU_i$. Therefore, to characterize the distribution of $\vy^\star\cond \vU$, we only need to characterize the distribution of $\vy_i^\star \cond \vU_i$. The following Lem. provides this distribution.

\begin{lemma}[Distribution of $\vy_i^\star\cond \vU_i$]\label{lem:cond_distr}
If Assump.~\ref{assump:link} holds, for all $i\in [n]$,
    \begin{align*}
        &\textbf{Mean}: \norm{\E{\vy_i^\star\cond \vU_i}}_2 =\Theta\br{\norm{\widetilde{\mu}_i}_2^{c} + \norm{\varsigma}_F^{\frac{c}{2}}},\quad \quad \textbf{Variance} : \Var{\vy_1^\star\cond \vU_1}  = \Omega\br{\norm{\varsigma}_F \norm{\widetilde{\mu}_i}_2^{2c-2} +  \norm{\varsigma}_F^{c}},\\
        &\textbf{Tails} : \sigma^\star(\vb)\text{ is  a }(\frac{c}{2}, \cO(\norm{\widetilde{\mu}_i}_2^c + \norm{\varsigma}_F^{\frac{c}{2}}))\text{-Sub-Weibull random vector.}
    \end{align*}
    where $\tilde{\mu}_i \coloneq \Theta^\star \Theta^\top (\Theta\Theta^\top)^\dagger \Theta \vx_i \sim \cN(0, \bI_{m^\star}-\varsigma)$,$\varsigma = \bI_{m^\star} - \Theta^\star \Theta^\top (\Theta\Theta^\top)^\dagger \Theta (\Theta^\star)^\top$ and $A^\dagger$ is the pseudo-inverse of $A$ for a square matrix $A$ with real entries.
\end{lemma}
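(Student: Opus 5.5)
The plan is to reduce the statement to Assump.~\ref{assump:link} applied to a conditional Gaussian. First, note that $\vU_i$ is a deterministic function of the projection $\Theta\vx_i\in\bR^m$. Conditioning on $\vU_i$ is therefore coarser than conditioning on $\Theta\vx_i$. For the relevant pieces (for $c'\neq 0, c''\neq 0$ the map $b\mapsto\sigma(b)$ is injective on $\bR$ up to sign pattern), we can recover $\Theta\vx_i$ from $\vU_i$. In the one-zero-piece case we argue on the event where every coordinate is determined.

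The strategy is thus to work with the $\sigma$-algebra generated by $\Theta\vx_i$, and then transfer the moment bounds to conditioning on $\vU_i$ by the tower property. For the mean and the sub-Weibull tails this transfer is immediate, since averaging preserves both. For the variance, conditioning on less can only increase it, by the law of total variance, so the lower bound also transfers.

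Second, compute the conditional law of $\Theta^\star\vx_i$ given $\Theta\vx_i$. The pair $(\Theta^\star\vx_i,\Theta\vx_i)$ is jointly Gaussian with covariance blocks $\bI_{m^\star}$, $\Theta^\star\Theta^\top$ and $\Theta\Theta^\top$. The standard Gaussian conditioning formula, using the pseudo-inverse since $\Theta\Theta^\top$ may be singular when $m>d$, gives
\[
\Theta^\star\vx_i\cond\Theta\vx_i\sim\cN(\widetilde{\mu}_i,\varsigma),
\]
with $\widetilde{\mu}_i=\Theta^\star\Theta^\top(\Theta\Theta^\top)^\dagger\Theta\vx_i$ and $\varsigma=\bI_{m^\star}-\Theta^\star\Theta^\top(\Theta\Theta^\top)^\dagger\Theta(\Theta^\star)^\top$. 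The marginal $\widetilde{\mu}_i\sim\cN(0,\bI_{m^\star}-\varsigma)$ follows from $P=\Theta^\top(\Theta\Theta^\top)^\dagger\Theta$ being the orthogonal projector onto $\vecspan(\{\theta_j\})$. This identity also shows $\varsigma=\Theta^\star(\bI-P)(\Theta^\star)^\top\succeq 0$, so $\varsigma$ is PSD as Assump.~\ref{assump:link} requires.

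Third, apply Assump.~\ref{assump:link} with $\vb=\Theta^\star\vx_i$, $\mu=\widetilde{\mu}_i$ and covariance $\varsigma$. This gives the mean, variance and sub-Weibull statements directly, since $\vy_i^\star=\sigma^\star(\vb)$. Independence across $i$ makes the per-row statement sufficient.

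The main obstacle is the first step, namely justifying that conditioning on $\vU_i$ is equivalent to, or at least no finer than, conditioning on $\Theta\vx_i$. The direction "no finer" is automatic, because $\vU_i$ is a measurable function of $\Theta\vx_i$. The delicate part is the mean statement: it is claimed as a $\Theta(\cdot)$ equivalence in terms of $\widetilde{\mu}_i$, which is only $\vU_i$-measurable when $\sigma$ is invertible.

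I would first handle $c',c''\neq 0$ exactly. This case needs $\sigma$ to be injective on each half-line, together with the requirement that $c'$ and $c''$ have opposite signs or are distinguishable. Where injectivity fails, I would fall back on a Jensen-type argument: the conditional mean given $\vU_i$ is an average of $\Theta(\norm{\widetilde{\mu}_i}^c+\norm{\varsigma}_F^{c/2})$ over the fiber. On that fiber the quantity $\norm{\widetilde{\mu}_i}$ is determined up to sign flips of coordinates with zero activation, so its order is preserved, and this still yields the stated order.
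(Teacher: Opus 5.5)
Your second and third steps are exactly the paper's proof. The paper computes the law of $\Theta^\star\vx_i$ given $\hat{\vU}_i=\Theta\vx_i$ by the Gaussian Schur complement with a pseudo-inverse, and then applies Assump.~\ref{assump:link} with $\mu=\widetilde{\mu}_i$ and covariance $\varsigma$. Your checks that $\varsigma=\Theta^\star(\bI-P)(\Theta^\star)^\top\succeq 0$ and that $\widetilde{\mu}_i\sim\cN(0,\bI_{m^\star}-\varsigma)$ fill in details the paper skips. The paper disposes of your ``main obstacle'' by simply declaring that it is enough to condition on $\hat{\vU}_i$, and that is the right reading. The bounds are stated in terms of $\widetilde{\mu}_i$, which is a function of $\Theta\vx_i$ but in general not of $\vU_i$, so the lemma only makes sense with $\hat{\vU}_i$-conditioning. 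This is also all that is used later: $\widetilde{\vU}$ and $\vK'$ are functions of $\hat{\vU}$, and the $\vy_i^\star$ remain independent across $i$ given $\hat{\vU}$.

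The transfer step you add on top of this is where the proposal breaks.
\begin{itemize}
\item \textbf{Recovering $\Theta\vx_i$.} For $c'c''>0$ the activation has one preimage on each half-line, whether or not $c'\neq c''$, so it is not injective. In particular, for even activations ($c'=c''$, e.g.\ quadratic), $\vU_i$ is invariant under $\vx_i\mapsto-\vx_i$. So $\Theta\vx_i$ cannot be recovered from $\vU_i$, and $\widetilde{\mu}_i$ is determined only up to sign.
\item \textbf{ReLU-type activations ($c''=0$).} A zero entry of $\vU_i$ only says $\ip{\theta_j}{\vx_i}\le 0$. That coordinate is not ``determined up to sign'', and $\norm{\widetilde{\mu}}_2$ need not be of constant order on the fiber. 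Restricting to the event where all activations are nonzero does not help, since for well-separated $\theta_j$ that event has exponentially small probability in $m$.
\item \textbf{Averaging.} The claim that averaging preserves the bounds holds only in a weaker form.
\begin{itemize}
\item The law of total variance gives $\mathrm{Var}(\vy_i^\star\mid\vU_i)\ge\mathbb{E}[\mathrm{Var}(\vy_i^\star\mid\Theta\vx_i)\mid\vU_i]$. This bound involves the fiber average of $\norm{\widetilde{\mu}}_2^{2c-2}$, not the realized $\widetilde{\mu}_i$.
\item A lower bound on $\lvert\mathbb{E}[\vy_i^\star\mid\cdot\,]\rvert$ survives averaging only if the conditional mean has constant sign on the fiber, and again only in averaged form.
\item The sub-Weibull parameter of the mixture picks up the spread of the conditional means over the fiber.
\end{itemize}
\end{itemize}
So drop the transfer step, and state the lemma and its later uses with conditioning on $\hat{\vU}_i$, as the paper effectively does.
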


The proof of this Lem. is provided in App.~\ref{sec:cond_distr_proof}. We use Assump.~\ref{assump:link} to obtain all these bounds as $\vy^\star$ is obtained from the link function $\sigma^\star$.

Let $\vy_{\vU}^\star \coloneq \E{\vy^\star\cond \vU}$. Then, conditioned on $\vU$, using Hanson-Wright for Sub-Weibull random variables from Lem.~\ref{lem:hw_sub_weibull}, with probability $1-\delta$, we have,
\begin{equation}\label{eq:flattest_first}
\begin{aligned}
    (\vy^\star)^\top \widetilde{\vU} \vy^\star \geq& \sum_{i\in [n]}\widetilde{\vU}_{i,i}\Var{\vy_i^\star \cond \vU_i} +  (\vy_{\hat{\vU}}^\star)^\top \widetilde{\vU}\vy_{\hat{\vU}}^\star \\
    &\quad -  \bar{\nu}_{\frac{c}{2}} \max\{\norm{\vK\widetilde{\vU}\vK}_F\sqrt{\log(2/\delta)}, \norm{\vK\widetilde{\vU}\vK}_2(\log(2/\delta))^{c}\}
\end{aligned}
\end{equation}
where the matrix $\vK\in \bR^{n\times n}$ satisfies, $\vK \lesssim  \vK'  + \norm{\varsigma}_F^{\frac{c}{2}} \bI_n$ with the matrix $\vK'\in \bR^{n\times n}$ being the diagonal matrix, $\vK' \coloneq\diag(\norm{\widetilde{\mu}_1}_2^c, \norm{\widetilde{\mu}_2}_2^c, \cdots, \norm{\widetilde{\mu}_n}_2^c)$. Further, we can apply the bounds on $\Var{\vy_i^\star\cond\vU_i}$ and $\norm{\vy_{\vU}^\star}$ from Lem.~\ref{lem:cond_distr} to simplify each of the terms above. We use the following Lem. to obtain appropriate bounds for each of the terms in the above equation. Note that to remove the conditioning on $\vU$, we compute an upper bound on the expected value of probability of error with the expectation over $\vU$. Since it is a constant $\delta$, its expectation over $\vU$ is also $\delta$, and thus the above bound holds with probability $1-\delta$ even without the conditioning.

\begin{lemma}[Simplification]\label{lem:simplification}
Note that,
\begin{align*}
    &\sum_{i\in [n]}\tilde{\vU}_{i,i} \Var{\vy_i^\star\cond \vU_i} \gtrsim \norm{\varsigma}_F\trace((\vK')^{\frac{2c-2}{c}})\sigma_{\max}^{-2}(\vU) + \norm{\varsigma}_F^c \trace(\widetilde{\vU}),\\
    &\vy_{\vU}^\star \widetilde{\vU} \vy_{\vU}^\star \gtrsim (\trace((\vK')^2) + n\norm{\varsigma}_F) \sigma_{\max}^{-2}(\vU) +\norm{\varsigma}_F^{c} \trace(\widetilde{\vU}),\\
    &\norm{\vK\widetilde{\vU}\vK}_F \leq (\sqrt{\trace((\vK')^4)} + \norm{\varsigma}_F^c)\sqrt{\sum_{i=1}^n \sigma_{i}^{-4}(\vU)}.
\end{align*}
\end{lemma}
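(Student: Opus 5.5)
The plan is to prove the three inequalities of Lem.~\ref{lem:simplification} separately. Each time we insert the conditional moment bounds of Lem.~\ref{lem:cond_distr} and then use the spectral information on $\widetilde{\vU}=(\vU\vU^\top)^{-1}$ from Lem.~\ref{lem:eigen_U}. All bounds are order-wise, and we work on the event of Lem.~\ref{lem:eigen_U}.

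\emph{First inequality.} Since $\widetilde{\vU}\succ 0$, every diagonal entry satisfies $\widetilde{\vU}_{i,i}\ge \lambda_{\min}(\widetilde{\vU})=\sigma_{\max}^{-2}(\vU)$. Apply Lem.~\ref{lem:cond_distr} termwise with $\Var{\vy_i^\star\cond \vU_i}\gtrsim \norm{\varsigma}_F\norm{\widetilde{\mu}_i}_2^{2c-2}+\norm{\varsigma}_F^c$.
\begin{itemize}
\item For the first piece, bound $\widetilde{\vU}_{i,i}\ge\sigma_{\max}^{-2}(\vU)$ and sum. Using $\norm{\widetilde{\mu}_i}_2^{2c-2}=(\vK'_{i,i})^{\frac{2c-2}{c}}$, this gives $\norm{\varsigma}_F\trace((\vK')^{\frac{2c-2}{c}})\sigma_{\max}^{-2}(\vU)$.
\item For the second piece, the coefficient $\norm{\varsigma}_F^c$ does not depend on $i$. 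Summing the diagonal therefore gives exactly $\norm{\varsigma}_F^c\trace(\widetilde{\vU})$.
\end{itemize}

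\emph{Second inequality.} The quadratic form satisfies $\vy_{\vU}^{\star\top}\widetilde{\vU}\vy_{\vU}^\star\ge \sigma_{\max}^{-2}(\vU)\norm{\vy_{\vU}^\star}_2^2$. By the mean bound of Lem.~\ref{lem:cond_distr}, $|(\vy_{\vU}^\star)_i|\gtrsim \norm{\widetilde{\mu}_i}_2^c+\norm{\varsigma}_F^{c/2}$. Squaring and summing gives $\norm{\vy_{\vU}^\star}_2^2\gtrsim \trace((\vK')^2)+n\norm{\varsigma}_F^c$.

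This yields the first term of the claimed bound, with $n\norm{\varsigma}_F$ replaced by $n\norm{\varsigma}_F^c$; the two agree up to constants when $\norm{\varsigma}_F=\Theta(1)$, which holds for bad interpolators since $\rho^\star$ is small. I expect this step to be the main obstacle.

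The extra $\norm{\varsigma}_F^c\trace(\widetilde{\vU})$ term is more delicate, because $\sigma_{\max}^{-2}$ only sees the top direction. I would decompose $\vy_{\vU}^\star=\bar\mu\vone_n+\vv$, where $\bar\mu$ is the common constant part of the mean, of order $\norm{\varsigma}_F^{c/2}$. By Lem.~\ref{lem:eigen_U} the top singular direction of $\vU$ is essentially $\vone_n/\sqrt n$, so $\vone_n$ contributes only $n\,\sigma_{\max}^{-2}$.
\begin{itemize}
\item If this route fails to produce the $\trace(\widetilde{\vU})$ term, I would drop it from the second line. That is harmless, because the first line already supplies it and the two terms only enter Eq.~\eqref{eq:flattest_first} as a sum.
\item Otherwise, the plan is to show that the fluctuation part $\vv$, which has i.i.d.-like coordinates across $i$, is spread over the remaining eigendirections. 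This gives $\vv^\top\widetilde{\vU}\vv\gtrsim \frac{\norm{\vv}_2^2}{n}\trace(\widetilde{\vU})$ up to lower-order terms.
\end{itemize}

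\emph{Third inequality.} By Lem.~\ref{lem:cond_distr}, $\vK$ is diagonal with $\vK\lesssim \vK'+\norm{\varsigma}_F^{c/2}\bI_n$. I would write $\norm{\vK\widetilde{\vU}\vK}_F\le \norm{\vK'\widetilde{\vU}\vK'}_F+2\norm{\varsigma}_F^{c/2}\norm{\vK'\widetilde{\vU}}_F+\norm{\varsigma}_F^{c}\norm{\widetilde{\vU}}_F$. Each term is then handled with $\norm{AB}_F\le\norm{A}_2\norm{B}_F$, which bounds $\norm{\vK'}_2$ by $\norm{\vK'}_F=\sqrt{\trace((\vK')^2)}$. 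Together with $\norm{\widetilde{\vU}}_F=\sqrt{\sum_i\sigma_i^{-4}(\vU)}$, this gives a bound of the stated form, in which the square roots of $\trace((\vK')^2)$ and $\trace((\vK')^4)$ can be compared using the concentration of the $\norm{\widetilde\mu_i}$.
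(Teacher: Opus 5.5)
Your first two inequalities follow the paper's argument. For the conditional variance you use $\widetilde{\vU}_{i,i}\ge\lambda_{\min}(\widetilde{\vU})=\sigma_{\max}^{-2}(\vU)$ on the $\norm{\varsigma}_F\norm{\widetilde{\mu}_i}_2^{2c-2}$ part, and the $i$-independent part sums to $\norm{\varsigma}_F^c\trace(\widetilde{\vU})$. For the mean term you use $\lambda_{\min}(\widetilde{\vU})\norm{\vy_{\vU}^\star}_2^2$. The paper's own proof of the second line also only yields $(\trace((\vK')^2)+n\norm{\varsigma}_F^c)\sigma_{\max}^{-2}(\vU)$. So replacing $n\norm{\varsigma}_F$ by $n\norm{\varsigma}_F^c$, and dropping the $\trace(\widetilde{\vU})$ term (which the first line already supplies in Eq.~\eqref{eq:flattest_first}), is exactly what is used downstream. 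Do not pursue your alternative route for that term, though. When $\norm{\bI_{m^\star}-\varsigma}_F=o(1)$ the $\widetilde{\mu}_i$ are small, so $\vy_{\vU}^\star$ is close to a constant vector and nearly aligned with the top left singular direction of $\vU$. The quadratic form is then $o(n/m)$ and cannot dominate $\norm{\varsigma}_F^c\trace(\widetilde{\vU})\asymp n/m$.

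The genuine gap is in the third inequality. Your triangle-inequality split is fine, and it keeps the cross term, which the paper drops. The problem is the next step: bounding $\norm{\vK'}_2\le\norm{\vK'}_F$ turns the leading term into $\norm{\vK'}_2^2\norm{\widetilde{\vU}}_F\le\trace((\vK')^2)\norm{\widetilde{\vU}}_F$. This is not comparable to $\sqrt{\trace((\vK')^4)}\,\norm{\widetilde{\vU}}_F$. By Lem.~\ref{lem:k_bound}, $\trace((\vK')^2)\asymp n\norm{\bI_{m^\star}-\varsigma}_F^{c}$ whereas $\sqrt{\trace((\vK')^4)}\asymp\sqrt{n}\norm{\bI_{m^\star}-\varsigma}_F^{c}$. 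So concentration exhibits a factor-$\sqrt{n}$ loss rather than closing it.

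The loss matters downstream. The deviation term in Eq.~\eqref{eq:flattest_second} would become of order $\frac{n^{3/2}}{m}\norm{\bI_{m^\star}-\varsigma}_F^{c}$. That is no longer $o(n/m)$ when all you know, from $\rho^\star=o((m^\star)^{-1/4})$, is $\norm{\bI_{m^\star}-\varsigma}_F=o(1)$.

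The fix is to bound the operator norm of the diagonal matrix by its largest entry: $\norm{\vK'}_2^2=\max_i\norm{\widetilde{\mu}_i}_2^{2c}\le\big(\sum_i\norm{\widetilde{\mu}_i}_2^{4c}\big)^{1/2}=\sqrt{\trace((\vK')^4)}$. Absorb the cross term via $2s\norm{\vK'}_2\le\norm{\vK'}_2^2+s^2$ with $s=\norm{\varsigma}_F^{c/2}$. This gives $\norm{\vK\widetilde{\vU}\vK}_F\lesssim(\sqrt{\trace((\vK')^4)}+\norm{\varsigma}_F^c)\norm{\widetilde{\vU}}_F$, which is the stated bound. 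The paper reaches the same bound by keeping the fourth power inside a single trace: $\norm{\vK'\widetilde{\vU}\vK'}_F^2\le\trace((\vK')^4\widetilde{\vU}^2)\le\trace((\vK')^4)\trace(\widetilde{\vU}^2)$.
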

The proof of this Lem. is provided in App.~\ref{sec:simplification_proof}. In addition to the above simplification, we use the fact that $\norm{A}_2 \leq \norm{A}_F$ for any PSD matrix $A$. Applying the above simplification to Eq.~\eqref{eq:flattest_first}, with probability $1-\delta$ conditioned on $\vU$, we obtain,
    \begin{align*}
        \vy^\star \widetilde{\vU}\vy^\star \gtrsim& \norm{\varsigma}_F^c (\trace(\widetilde{\vU}) + n\sigma_{\max}^{-2}(\vU)) + \sigma_{\max}^{-2}(\vU)(\trace((\vK')^{\frac{2c-2}{c}})\norm{\varsigma}_F + \trace((\vK')^4))\\
        &\quad - \bar{\nu}_{\frac{c}{2}}\sqrt{\sum_{i=1}^n \sigma_{i}^{-4}(\vU)}(\sqrt{\trace((\vK')^4)} + \norm{\varsigma}_F^c)(\log(2/\delta))^c.
    \end{align*}
We need to bound the terms of $\vK$ and the terms of activation matrices, $\vU$ and $\widetilde{\vU}$. From Lem.~\ref{lem:eigen_U}, with probability  $1-\delta$, we have, $\trace(\widetilde{\vU}) = \sum_{i=1}^n \sigma_{i}^{-2}(\vU) \asymp \frac{n}{m}$, $\sqrt{\sum_{i=1}^n \sigma_{i}^{-4}(\vU)} \asymp\frac{\sqrt{n}}{m}$, and $\sigma_{\max}^{-2}(\vU)\asymp \frac{1}{mn}$. Plugging these bounds into the above equation, with probability $1-2\delta$, conditioned on $\vU$, we obtain,
\begin{equation}\label{eq:flattest_second}
    \begin{aligned}
        \vy^\star \widetilde{\vU}\vy^\star \gtrsim& \norm{\varsigma}_F^c \frac{n}{m} + \frac{1}{mn}(\trace((\vK')^{\frac{2c-2}{c}})\norm{\varsigma}_F + \trace((\vK')^2))\\
        &\quad - \bar{\nu}_{\frac{c}{2}}\frac{\sqrt{n}}{m}(\sqrt{\trace((\vK')^4)} + \norm{\varsigma}_F^c)(\log(2/\delta))^c.
    \end{aligned}
\end{equation}
Note that the only random variable in the lower bound is obtained from the matrix $\vK'$. The following lemma provides a bound on this.
\begin{lemma}[Bounds on $\vK'$]\label{lem:k_bound}
For any $\varphi\geq 0$, with probability $1-\delta$, 
    \begin{align*}
        \trace((\vK')^{\varphi}) \asymp n\norm{\bI_{m^\star} - \varsigma}_F^{\frac{c\varphi}{2}}
    \end{align*}
\end{lemma}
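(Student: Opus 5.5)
We need to prove Lemma (Bounds on $\vK'$): for any $\varphi\ge 0$, with probability $1-\delta$, $\trace((\vK')^{\varphi}) = \sum_{i=1}^n \norm{\widetilde{\mu}_i}_2^{c\varphi} \asymp n\norm{\bI_{m^\star}-\varsigma}_F^{c\varphi/2}$.

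The plan is to reduce the statement to a concentration bound for an i.i.d. sum. By Lemma~\ref{lem:cond_distr}, the vectors $\widetilde{\mu}_i = \Theta^\star\Theta^\top(\Theta\Theta^\top)^\dagger\Theta\vx_i$ are i.i.d. Gaussians $\cN(0,\Sigma)$ with $\Sigma := \bI_{m^\star}-\varsigma$, because the $\vx_i$ are i.i.d. $\cN(0,\bI_d)$ and the map is linear and fixed given $\Theta$. Note that $\Sigma = PP^\top$ with $P=\Theta^\star\Pi_\Theta$, where $\Pi_\Theta$ is the projection onto $\vecspan(\{\theta_j\})$, so $0\preceq\Sigma\preceq \bI_{m^\star}$. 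Since $\trace((\vK')^\varphi)=\sum_i Z_i$ with $Z_i := \norm{\widetilde{\mu}_i}_2^{c\varphi}\ge 0$ i.i.d., the argument has two steps: first show $\E{Z_1}\asymp \norm{\Sigma}_F^{c\varphi/2}$, then show $\sum_i Z_i$ concentrates around $n\E{Z_1}$ up to constant factors.

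\textbf{Step 1 (the mean).} Write $\widetilde{\mu}_1 = \Sigma^{1/2}\vg$ with $\vg\sim\cN(0,\bI_{m^\star})$, so $\norm{\widetilde{\mu}_1}_2^2 = \sum_k \lambda_k g_k^2$, where $\lambda_k$ are the eigenvalues of $\Sigma$. Since $m^\star=\cO(1)$, we have $\norm{\Sigma}_F\asymp \trace(\Sigma)\asymp\lambda_{\max}$ up to constants depending on $m^\star$. Both bounds then follow from $\lambda_{\max}g_{k^\star}^2\le \norm{\widetilde{\mu}_1}_2^2\le \lambda_{\max}\norm{\vg}_2^2$, where $k^\star$ indexes the top eigenvalue:
\[
\E{Z_1}\asymp \lambda_{\max}^{c\varphi/2}\,\E{|g|^{c\varphi}} \quad\text{(lower bound)},\qquad \E{Z_1}\le \lambda_{\max}^{c\varphi/2}\,\E{\norm{\vg}_2^{c\varphi}} \quad\text{(upper bound)}.
\]
The Gaussian absolute moments are $\Theta(1)$ for fixed $\varphi$ and $c$. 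The case $\varphi=0$ is trivial, with both sides equal to $n$.

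\textbf{Step 2 (concentration).} Normalize $\bar Z_i := Z_i/\lambda_{\max}^{c\varphi/2}$. Each $\bar Z_i$ is a fixed power of a chi-square-type variable with $\cO(1)$ degrees of freedom, hence $(c\varphi/2,\cO(1))$-Sub-Weibull with mean $\Theta(1)$. I would apply the Sub-Weibull concentration used in Lemma~\ref{lem:hw_sub_weibull}, taking the diagonal case of Hanson--Wright, i.e.\ a plain sum of independent Sub-Weibull variables. This gives, with probability $1-\delta$,
\[
\Bigl|\sum_i \bar Z_i - n\E{\bar Z_1}\Bigr|\lesssim \sqrt{n\log(2/\delta)}+(\log(2/\delta))^{c\varphi/2},
\]
which is $o(n)$ up to polylog factors. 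Therefore $\sum_i\bar Z_i\asymp n$, and multiplying back by $\lambda_{\max}^{c\varphi/2}\asymp\norm{\Sigma}_F^{c\varphi/2}$ yields the claim. For the lower bound alone, a simpler route also works: Paley--Zygmund plus a Chernoff bound on the count of $i$ with $\bar Z_i\ge$ const.

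\textbf{Main obstacle.} The delicate point is the degenerate regime where $\Sigma$ is tiny (bad interpolators with $\rho^\star\to 0$), so that $\norm{\Sigma}_F\to 0$. The normalization in Step 2 makes the relative statement scale-free, so this causes no trouble, as long as $\Sigma\neq 0$. If $\Sigma=0$, both sides vanish. A second subtlety is that $\Theta$ may itself be data-dependent; I would treat $\Theta$ as fixed, matching the convention of the surrounding lemmas, which condition on the inner-layer weights.
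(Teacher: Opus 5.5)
Your argument is correct and follows the same route as the paper: you write $\trace((\vK')^{\varphi})=\sum_{i}\norm{\widetilde{\mu}_i}_2^{c\varphi}$ as an i.i.d.\ sum of Sub-Weibull variables whose mean is of order $\norm{\bI_{m^\star}-\varsigma}_F^{c\varphi/2}$, then apply Sub-Weibull concentration. Your version is tighter in two places. The eigenvalue sandwich makes the two-sided mean bound explicit, which the paper only asserts by analogy with the link function. More importantly, the paper's Orlicz triangle-inequality step gives a deviation $n\norm{\bI_{m^\star}-\varsigma}_F^{c\varphi/2}(\log(2/\delta))^{c\varphi/2}$, which is of the same order as the mean and so only establishes the upper bound; your normalized deviation $\sqrt{n\log(2/\delta)}+(\log(2/\delta))^{c\varphi/2}$, or your Paley--Zygmund/Chernoff count, actually delivers the lower bound as well.
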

The proof of this Lem. is provided in App.~\ref{sec:k_bound_proof}.
We can apply the above Lem. for $\varphi = \frac{2(c-1)}{c},2$ and $4$ to in Eq.~\eqref{eq:flattest_second}, with probability $1-5\delta$, we have,
\begin{equation}\label{eq:flattest_third}
    \begin{aligned}
    \vy^\star \widetilde{\vU}\vy^\star \gtrsim& \norm{\varsigma}_F^c \frac{n}{m} + \frac{\norm{\bI_{m^\star}-\varsigma}_F^{c-1}}{mn}(\norm{\varsigma}_F + \norm{\bI_{m^\star}-\varsigma}_F)\\
    &\quad - \bar{\nu}_{\frac{c}{2}}\frac{\sqrt{n}}{m}(\sqrt{n}\norm{\bI_{m^\star}-\varsigma}_F^{c} + \norm{\varsigma}_F^c)(\log(2/\delta))^c.
    \end{aligned}
\end{equation}
We can now see sufficient conditions for $\vy^\star \widetilde{\vU}\vy^\star \gtrsim \frac{n}{m}$. This is $\norm{\bI_{m^\star} - \varsigma}_F = o(1)$.

We first find an upper bound on $\norm{\bI_{m^\star} - \varsigma}_F$.

\paragraph{Upper bound on $\norm{\bI_{m^\star} - \varsigma}_F$}
Note that,
\begin{align*}
    \norm{\bI_{m^\star} - \varsigma}_F &= \norm{\Theta^\star \Theta^\top (\Theta\Theta^\top)^{\dagger}\Theta (\Theta^\star)^\top}_F .
\end{align*}

Let $\Theta^\top(\Theta\Theta^\top)\Theta(\Theta^\star)^\top\coloneq P_{\Theta}$. Note that $P_{\Theta} \in \bR^{d\times d}$ projects any $d$-dimensional vector to the subspace of $\vecspan(\{\theta_j\}_{j\in [m]})$. Therefore, the second term can be bounded in the following way,
\begin{align*}
    \norm{\Theta^\star \Theta^\top (\Theta\Theta^\top)^{\dagger}\Theta (\Theta^\star)^\top}_F &= \norm{\Theta^\star P_{\Theta} (\Theta^\star)^\top}_F = \sqrt{\sum_{j,j'\in [m^\star]} \ip{\theta_{j}^\star}{P_{\Theta}\theta_{j'}^\star}^2}\\
    &= \sqrt{\sum_{j\in [m^\star]} \ip{\theta_{j}^\star}{P_{\Theta}\theta_{j}^\star}^2}.
\end{align*}
We use the fact that $\ip{\theta_{j}^\star}{\theta_{j'}^\star} = 0$ if $j\neq j'$. Note that $P_{\Theta}\theta_j^\star$ is a vector in $\vecspan(\{\theta_j\}_{j\in [m]})$. Therefore, by Def.~\ref{def:bad_min},
\begin{align*}
 \ip{\theta_j^\star}{P_{\Theta}\theta_j^\star}^2 \leq (\rho^\star)^2\norm{P_{\Theta}\theta_j^\star}^2 = \rho^4
\end{align*}
We use the fact that $\norm{P_{\Theta}\theta_j^\star} = \abs{\ip{\vv}{\theta_j^\star}}\leq \rho^\star$ for some $\vv\in \bS^{d-1}$ and $\vv\in \vecspan(\{\theta_j\}_{j\in [m]})$ 

Using these bounds, we find that,
\begin{align*}
    \norm{\Theta^\star \Theta^\top (\Theta\Theta^\top)^{\dagger}\Theta (\Theta^\star)^\top}_F \leq \sqrt{(\rho^\star)^4 \sum_{j\in [m^\star]}}= (\rho^\star)^2 \sqrt{m^\star}.
\end{align*}
From the condition in Thm.~\ref{thm:flattest_bad}, $\rho^\star = o((m^\star)^{-\frac{1}{4}}$, so $\norm{\bI_{m^\star} - \varsigma}_F = o(1)$.
Therefore, plugging this bound into Eq.~\eqref{eq:flattest_third}, with probability $1-5\delta$, we have,
\begin{align*}
    (\vy^\star)^\top \widetilde{\vU}\vy^\star \gtrsim \frac{n}{m}.
\end{align*}

From Lem.~\ref{lem:flattest_bad_noise}, with probability $1-6\delta$, we have,
\begin{align*}
    \norm{\va_{\min, \ell_2}}_2^2 \gtrsim \frac{n}{m} - \zeta^2\frac{n}{m} \gtrsim \frac{n}{m}.
\end{align*}
Using $\zeta^2 = o(1)$ in the last step provides our required bound.

We now prove the bound on $\frac{\norm{\va_{\min, \ell_2}}_{\infty}}{\norm{\va_{\min, \ell_2}}_2}$. If $\ve_{j}\in \bR^{m}$ is the $j^{th}$ coordinate vector, then, we can bound,
\begin{align*}
    \norm{\va_{\min, \ell_2}}_{\infty} = \max_{j\in [m]}\abs{\ip{\ve_j}{\vU^\top \widetilde{\vU}\vy}} 
\end{align*}
First, conditioned on $\vU$, using Lemma~\ref{lem:sub_weibull_sum}, with probability $1-\delta$, by taking a union bound over all $m$ coordinates, we obtain,
\begin{align*}
\max_{j\in [m]} \abs{\ip{\ve_{j}}{\vU^\top\widetilde{\vU}}\vy} \lesssim \max_{j\in [m]}\br{\abs{\ip{\ve_{j}}{\vU^\top\widetilde{\vU}}\vy_{\vU}^\star}  + (\norm{\widetilde{\vU}\vU \ve_j}_{\infty}  + \zeta^2 \norm{\widetilde{\vU}\vU\ve_j}_2)\log(\frac{m}{\delta}) }
\end{align*}
We use the fact that $\rho^\star = \cO((m^\star)^{-\frac{1}{4}}$ and $\zeta^2 = o(1)$ to eliminate the first and the third term. Then, we bound the dominating second term with $\ell_{\infty}$ norm by $\ell_2$ norm. We also absorb the $\log(m/\delta)$ term inside $\lesssim$ notation. 

Finally, we require the following bound,
\begin{align*}
    \norm{\widetilde{\vU}\vU\ve_j}_2^2 = \ve_j^\top \vU^\top \widetilde{\vU}^2 \vU \ve_j = \widetilde{\vU}_{j,j} \leq \lambda_{\max}(\widetilde{\vU}).
\end{align*}
Using Lemma~\ref{lem:eigen_U}, we obtain, $\norm{\va_{\min, \ell_2}}_{\infty} \lesssim \frac{1}{\sqrt{m}}$. From our previous analysis, $\norm{\va_{\min, \ell_2}}_2 \gtrsim \sqrt{\frac{n}{m}}$, therefore, 
\begin{align*}
    \frac{\norm{\va_{\min, \ell_2}}_{\infty}}{\norm{\va_{\min, \ell_2}}_2}\lesssim \frac{1}{\sqrt{n}} =\cO(\frac{1}{\sqrt{m}}).
\end{align*}
We finally use $n= \Theta(m)$.
This completes the proof.

\end{proof}

In the next section, we find conditions under which an interpolator $\vw$ achieves the minimum flatness $\Upsilon^\star$ and characterize its population loss.
\subsection{Population loss of Flattest Interpolators : Proof of Thm.~\ref{thm:gen_all_good}}
\label{sec:proof_flattest_pop}
Note that Assump.~\ref{assump:Lipschitz-like} quantifies sufficient conditions to achieve the minimum flatness. There is a simple instance of our problem setting that achieves this condition, which serves as a motivation for both Assump.~\ref{assump:Lipschitz-like} and Prop.~\ref{rem:necessary_flatness}. We state it here for completeness.

\begin{theorem}[Flattest Good Minima for Learning Activation Without Noise]\label{thm:flattest_good}
For $\zeta=0$, the following interpolator $\vw\in \bR^{m(d+1)}$ has $F(\vw) = 0$, and with high probability,
$\Upsilon(\vw) \asymp \Upsilon^\star$. 
\begin{itemize}[nosep]
    \item Single-Index : $\sigma^\star = \sigma$, $a_j = \frac{1}{m}$$, \theta_j=\theta^\star, \forall j\in [m]$,
    \item Sum of Single-Index  : $\widetilde{\sigma}_{j'}^\star = \sigma$, $a_{j} = \frac{a_{j'}^\star m^\star}{m}$ and $\theta_{j}=\theta_{j'}^\star, \forall j\in \bs{\frac{(j'-1)m}{m^\star} + 1, \frac{jm}{m^\star}}, \forall j'\in [m^\star]$, where $m$ is divisible by $m^\star$.
\end{itemize}
\end{theorem}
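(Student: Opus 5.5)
The plan has three steps: check interpolation, compute the population loss, and bound the flatness. For the interpolation and zero-loss claims I would argue directly from the construction. In the single-index case with $\theta_j=\theta^\star$ and $a_j=1/m$, the network output is
\[
h(\vw,\vx)=\sum_{j=1}^m \tfrac{1}{m}\sigma(\ip{\theta^\star}{\vx})=\sigma(\ip{\theta^\star}{\vx})=\sigma^\star(\Theta^\star\vx).
\]
Since $\zeta=0$ we have $y=\sigma^\star(\Theta^\star\vx)$ for every $\vx$. Hence $F_S(\vw)=0$ and $F(\vw)=0=F^\star$, the latter because $F\geq 0$.

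The sum-of-single-index case is the same computation blockwise. The $m/m^\star$ neurons assigned to direction $\theta_{j'}^\star$ each carry weight $a^\star_{j'}m^\star/m$. Their sum is $a_{j'}^\star\sigma(\ip{\theta_{j'}^\star}{\vx})$, so $h(\vw,\cdot)$ reproduces the link exactly. Note that $\theta_j\in\bS^{d-1}$ holds automatically, so Lemma~\ref{lem:flattest_rescaling} applies to $\vw$.

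For the flatness I would not route through Prop.~\ref{rem:necessary_flatness} or Lemma~\ref{lem:min_l2_lb}. Those rely on Assump.~\ref{assump:width}, which fails here because the inner weights are repeated. Instead I would use the exact formula from Lemma~\ref{lem:flattest_rescaling_full}:
\[
\Upsilon(\vw)=\sum_j |a_j|^{\frac{2c}{c+1}}B(\theta_j).
\]
Only $m^\star=\cO(1)$ distinct directions occur, so Lemma~\ref{lem:bounds_b_theta} only needs to hold for the finite set $\{\theta^\star_{j'}\}$. That lemma gives $B(\theta_j)\asymp d^{\frac{c}{c+1}}$ for all $j$ with probability $1-6\delta$. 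In the single-index case,
\[
\Upsilon(\vw)\asymp d^{\frac{c}{c+1}}\, m\,\cdot m^{-\frac{2c}{c+1}}=d^{\frac{c}{c+1}}m^{-\frac{c-1}{c+1}}=\Upsilon^\star.
\]
In the sum case we get $d^{\frac{c}{c+1}}\sum_{j'}\frac{m}{m^\star}\,|a^\star_{j'}m^\star/m|^{\frac{2c}{c+1}}$. This equals $d^{\frac{c}{c+1}}m^{-\frac{c-1}{c+1}}(m^\star)^{\frac{c-1}{c+1}}\sum_{j'}|a^\star_{j'}|^{\frac{2c}{c+1}}$. The extra factor is $\Theta(1)$ because $m^\star=\cO(1)$ and $a^\star_{j'}=\cO(1)$, assuming the $a^\star_{j'}$ are not all zero, so $\Upsilon(\vw)\asymp\Upsilon^\star$. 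This gives the upper half of the claim.

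The matching lower bound $\Upsilon(\vw)\gtrsim\Upsilon^\star$ is in fact already inside this computation, since the expression is two-sided $\asymp$. Thm.~\ref{thm:flattest_all} is therefore only needed to identify this value as the minimum, not to bound this particular $\vw$.

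The main obstacle I expect is the applicability of Lemma~\ref{lem:bounds_b_theta}. It is stated for a set of unit vectors $\{\theta_j\}$, and the concern is whether that proof secretly uses well-conditioning or distinctness. I believe it does not: $B_1(\theta)$ is an empirical average of $\sigma^2(g_i)$ with $g_i\sim\cN(0,1)$ and concentrates to $\phi^\star$. $B_2(\theta)$ is an average of $(\sigma')^2(g_i)\norm{\vx_i}^2$, which concentrates to $\Theta(d)$ by standard sub-Weibull/chi-square bounds. Hence $B\asymp (\phi^\star)^{\frac1{c+1}}d^{\frac{c}{c+1}}$.

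If Lemma~\ref{lem:bounds_b_theta} turns out to need more than this, I would redo these two concentration bounds directly for the at most $m^\star$ fixed directions, using a union bound. That is easier than the general case because the directions do not depend on the data.
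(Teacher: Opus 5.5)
Your proposal is correct and follows essentially the same route as the paper: verify exact interpolation (hence $F(\vw)=0$) directly from the construction, then plug $\va$ into Lemma~\ref{lem:flattest_rescaling} to get $\Upsilon(\vw)\asymp d^{\frac{c}{c+1}}\norm{\va}_{\frac{2c}{c+1}}^{\frac{2c}{c+1}}\asymp d^{\frac{c}{c+1}}m^{-\frac{c-1}{c+1}}$. Your version is slightly more careful than the paper's, which in the sum case only writes an upper bound. Your two-sided computation avoids Thm.~\ref{thm:flattest_all}, whose Assump.~\ref{assump:width} fails for repeated neurons, and the lower bound it needs, $\max_{j'}\abs{a^\star_{j'}}=\Omega(1)$, follows from $\psi^\star=\Theta(1)$ via $\sqrt{\psi^\star}\le\sqrt{\phi^\star}\sum_{j'}\abs{a^\star_{j'}}$ and $m^\star=\cO(1)$.
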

The proof of this Thm. is provided in App.~\ref{sec:proof_thm_flattest_good}. 

Before stating the proof of Thm.~\ref{thm:gen_all_good}, we first justify the bound on $\va^\star$ in Assump.~\ref{assump:Lipschitz-like}.
\paragraph{Optimal outer-layer weights for a single-index model.}
For a single index-model in Section~\ref{sec:setup}, the population loss is minimized when all $\theta_j = \theta^\star$. Let $\sum_{j=1}^m a_j = \bar{a}$, then, the output of the network in this case is $h(\vw, \vx) = \bar{a}\sigma(\ip{\theta^\star}{\vx}), \forall \vx \in \bR^d$.
Then, the population loss is, from App.~\ref{sec:prelims}, is given by.
\begin{align*}
    2F(\vw) = \zeta^2 + \Ee{\vx}{(\sigma^\star(\ip{\theta^\star}{\vx})-\bar{a}\sigma(\ip{\theta^\star}{\vx}))^2} = \zeta^2 + \psi^\star +\bar{a}^2 \phi^\star -2\bar{a}\widetilde{\psi}(1).
\end{align*}
The value of $\bar{a}$ that minimizs the above quadratic in $\bar{a}$ is given by $\bar{a}^\star = \frac{\widetilde{\psi}(1)}{\phi^\star}$.
Plugging $\bar{a}^\star$ into the optimal population loss, we obtain,
\begin{align*}
    2F^\star = \zeta^2 + \psi^\star - \frac{\widetilde{\psi}^2(1)}{\phi^\star}
\end{align*}
For our Assump.~\ref{assump:Lipschitz-like}, the approximation error is $\Delta \coloneq \psi^\star - \frac{\widetilde{\psi}^2(1)}{\phi^\star}$. Further, the coefficients satisfy the following equation by Cauchy-Schwartz,
\begin{align*}
    \bar{a}^\star = \frac{\Ee{b\sim\cN(0,1)}{\sigma^\star(b)\sigma(b)}}{\phi^\star} \leq \frac{\sqrt{\Ee{b\sim \cN(0,1)}{(\sigma^\star)^2(b)}\Ee{b\sim \cN(0,1)}{\sigma^2(b)}}}{\phi^\star} = \frac{\sqrt{\psi^\star \phi^\star}}{\phi^\star} = \sqrt{\frac{\psi^\star}{\phi^\star}}. 
\end{align*}

Now, we prove Thm.~\ref{thm:gen_all_good}. Note that this Thm. contains an existence and generalization result. We restate these two results separately below. Note that we need

\begin{lemma}[Existence of a Flattest Interpolator]\label{lem:good_flat_exists}
    Suppose Assump.~\ref{assump:activation}, ~\ref{assump:link} and \ref{assump:Lipschitz-like} hold, and $m \geq 2n$. Then, with high probability, $\exists \hat{\vw}^\star = \begin{bmatrix}
        (\hat{\va}^\star)^\top, (\hat{\theta}_1^\star)^\top, \ldots, (\hat{\theta}_m^\star)^\top
    \end{bmatrix}$ such that $\hat{\theta}_j^\star \in \bS^{d-1},\,\forall j\in [m]$, $\hat{\vw}^\star$ interpolates and $\norm{\hat{\va}^\star}_\infty\leq\frac{1}{m}\br{\sqrt{\frac{\psi^\star}{\phi^\star}} + \gamma n^{-\min\{\epsilon_1,\epsilon_2\}}}$.%
\end{lemma}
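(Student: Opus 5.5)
We build $\hat{\vw}^\star$ in two blocks of neurons, using $m\ge 2n$. The first block is a ``teacher'' block that reproduces the approximating network of Assump.~\ref{assump:Lipschitz-like} with small outer weights. The second block consists of $m_2\ge n$ ``correction'' neurons that absorb the residual $\vr\coloneq\vy-\text{(teacher output)}$ on the sample, which comes from approximation error plus label noise.

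\textbf{Teacher block.} Split the first $m_1=m-m_2$ neurons (take $m_1\asymp m/2$, and assume for simplicity $\breve m$ divides $m_1$) into $\breve m$ groups of equal size. Every neuron in group $k$ gets inner weight $\breve\theta_k$ and outer weight $\breve a_k\breve m/m_1$. The resulting function is exactly $\sum_k\breve a_k\sigma(\langle\breve\theta_k,\vx\rangle)$. The outer weights satisfy
\[
\abs{\breve a_k}\breve m/m_1\le \frac{\breve m}{m_1}\cdot\frac{1}{2\breve m}\sqrt{\psi^\star/\phi^\star}=\frac{1}{2m_1}\sqrt{\psi^\star/\phi^\star}.
\]
Taking $m_1$ close to $m$ (e.g. $m_1\ge m/2$) keeps this at most $m^{-1}\sqrt{\psi^\star/\phi^\star}$; this is exactly why the factor $\tfrac12$ appears in Assump.~\ref{assump:Lipschitz-like}. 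The residual then has entries $r_i=(\sigma^\star(\Theta^\star\vx_i)-\text{approx})+\xi_i$. This gives $\norm{\vr}_\infty\le\Delta+\max_i\abs{\xi_i}$ and
\[
\norm{\vr}_2\lesssim \sqrt n(\Delta+\zeta)=\cO\br{m^{-1/2}n^{-\min\{\epsilon_1,\epsilon_2\}}},
\]
using Gaussian concentration for $\norm{\vn}_2$.

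\textbf{Correction block.} Choose $m_2\ge n$ unit inner weights $\{\theta_j\}$ that are well separated in the sense of Lem.~\ref{lem:interpolation}, for instance random directions in general position. By Lem.~\ref{lem:interpolation} (or simply full row rank of the $n\times m_2$ activation block $\vU_2$), the system $\vU_2\vb=\vr$ is solvable. We take the minimum $\ell_2$ solution $\vb=\vU_2^\top(\vU_2\vU_2^\top)^{-1}\vr$. Its $\ell_\infty$ norm is controlled as in the proof of Lem.~\ref{lem:tight_bound_l_2}. For each $j$,
\[
\abs{b_j}\le \norm{(\vU_2\vU_2^\top)^{-1}\vU_2\ve_j}_2\norm{\vr}_2\le \lambda_{\max}^{1/2}\br{(\vU_2\vU_2^\top)^{-1}}\norm{\vr}_2\lesssim m^{-1/2}\norm{\vr}_2 .
\]
The last step uses Lem.~\ref{lem:eigen_U} applied to $\vU_2$, whose smallest singular value squared is $\asymp m_2\asymp m$; this requires Assump.~\ref{assump:width} for the random correction directions, which we verify by concentration of the activation covariance for near-orthogonal random $\theta_j$. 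Combining with the bound on $\norm{\vr}_2$ gives $\norm{\vb}_\infty\lesssim m^{-1}n^{-\min\{\epsilon_1,\epsilon_2\}}$, i.e. at most $\gamma m^{-1}n^{-\min\{\epsilon_1,\epsilon_2\}}$ up to log factors. Since the two blocks use disjoint neurons, $\norm{\hat\va^\star}_\infty$ is the maximum of the two block bounds, which is below the target. Interpolation holds because teacher output plus $\vU_2\vb$ equals $\vy$.

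\textbf{Main obstacle.} The delicate step is the $\ell_\infty$ control of the correction weights. A naive bound via $\norm{\vb}_2$ loses a $\sqrt m$ factor, so we must use the coordinatewise bound $\norm{\widetilde{\vU}_2\vU_2\ve_j}_2\le\sigma_{\min}^{-1}(\vU_2)$. That in turn needs the lower singular values of $\vU_2$ to be $\asymp\sqrt m$, which Lem.~\ref{lem:eigen_U} only gives under Assump.~\ref{assump:width}. We would first try to establish this for i.i.d.\ uniform directions on $\bS^{d-1}$, where pairwise correlations are $\cO(d^{-1/2})$ so that $\bar\Phi$ is close to a well-conditioned matrix. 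If that fails, we would fall back on choosing the directions orthogonal to one another, which is possible since $m_2\le d$ is not required in the orthogonality argument used by Lem.~\ref{lem:interpolation}.
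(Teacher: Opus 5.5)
Your construction is essentially the paper's: half the neurons copy the approximating network of Assump.~\ref{assump:Lipschitz-like} with rescaled outer weights $2\breve m\breve a_{j'}/m$, and the other half carry the minimum-$\ell_2$ interpolator of the residual $\vv+\vn$, whose size is controlled through $\sigma_{\min}^2(\breve{\vU})\asymp m$ from Lem.~\ref{lem:eigen_U}. The paper bounds the noise part with Hanson--Wright and $\trace((\breve{\vU}\breve{\vU}^\top)^{-1})\asymp n/m$ where you use the operator-norm bound, and both give order $n(\Delta^2+\zeta^2)/m$. One correction to your ``main obstacle'': the naive route loses nothing, because $\norm{\vb}_\infty\le\norm{\vb}_2\le\sigma_{\min}^{-1}(\vU_2)\norm{\vr}_2$ is the same bound as your coordinatewise one; this is exactly what the paper uses, since the $m^{-1/2}$ built into $\Delta$ and $\zeta$ already does the work.
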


\begin{theorem}[Population Loss of Flattest Interpolator]\label{thm:gen_all_good_orig}
Suppose Assump.~\ref{assump:activation} and ~\ref{assump:link} hold, and $m=\Omega(n^c)$ and $m=\cO(\poly(n)$. Then, for an interpolator $\vw\in \bR^{m(d+1)}$ with unit norm inner-layer weights, if $\norm{\va}_{\infty} \leq \frac{1}{m}\br{\sqrt{\frac{\psi^\star}{\phi^\star}} + \gamma n^{-\epsilon}}$ for some constants $\gamma=\cO(1)$ and $\epsilon>0$, then, with high probability, $F(\vw) \lesssim n^{-\min\{\frac{1}{2}, \epsilon\}}$.    
\end{theorem}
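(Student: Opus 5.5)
We follow the route of the proof sketch in Section~\ref{sec:proof_sketch}, starting from the population-loss identity of App.~\ref{sec:prelims},
\begin{align*}
2F(\vw) = \zeta^2 + \psi^\star + \va^\top\Phi\va - 2\ip{\va}{\widetilde{\Psi}},
\end{align*}
and bounding the quadratic and linear terms in $\va$ separately, using only $\norm{\va}_\infty\le \frac{1}{m}(\sqrt{\psi^\star/\phi^\star}+\gamma n^{-\epsilon})$ and the interpolation constraint $\vU\va=\vy$. No assumption on the inner-layer weights is used beyond unit norm.

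\textbf{Quadratic term.} By Cauchy--Schwarz, $|\phi(\ip{\theta_j}{\theta_{j'}})|\le \phi^\star$. Hence
\begin{align*}
\va^\top\Phi\va \le \phi^\star\norm{\va}_1^2 \le \phi^\star m^2\norm{\va}_\infty^2 \le \psi^\star + \cO(n^{-\epsilon}).
\end{align*}

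\textbf{Linear term.} Write $\widetilde{\Psi}_j = \E{\sigma^\star(\Theta^\star\vx)\sigma(\ip{\theta_j}{\vx})}$. Interpolation gives $\frac1n\ip{\vy}{\vU\va}=\frac1n\norm{\vy}_2^2$, so
\begin{align*}
\ip{\va}{\widetilde{\Psi}} = \tfrac1n\norm{\vy}_2^2 - \sum_j a_j\Big(\tfrac1n\textstyle\sum_i \vU_{i,j}y_i-\widetilde{\Psi}_j\Big) \ge \tfrac1n\norm{\vy}_2^2 - \norm{\va}_\infty\sum_{j}\Big|\tfrac1n\textstyle\sum_i\vU_{i,j}y_i-\widetilde{\Psi}_j\Big|.
\end{align*}
For the first piece we reuse the estimate in the proof of Lem.~\ref{lem:weak_bound_l2}: combining $\norm{\vy^\star}_2^2$ via Lem.~\ref{lem:hw_sub_weibull} with the noise cross terms gives $\frac1n\norm{\vy}_2^2 \ge \psi^\star + \zeta^2 - \tilde{\cO}(n^{-1/2})$. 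For the second piece, we need each $\frac1n\sum_i \vU_{i,j}y_i$ to concentrate around $\widetilde{\Psi}_j$ at rate $\tilde{\cO}(n^{-1/2})$. The products $\sigma(\ip{\theta_j}{\vx_i})\sigma^\star(\Theta^\star\vx_i)$ are Sub-Weibull with parameter $c$ (Assump.~\ref{assump:link} together with Prop.~\ref{prop:activation_valid}), and the noise part $\frac1n\sum_i\vU_{i,j}\xi_i$ is conditionally Gaussian with scale $\zeta\sqrt{B_1(\theta_j)/n}$. Lem.~\ref{lem:sub_weibull_sum} plus a union bound over $j\in[m]$ (only a $\log m$ cost, since $m=\poly(n)$) then gives a bound of $\tilde{\cO}(n^{-1/2})$ per $j$. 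Multiplying by $\norm{\va}_\infty$ and summing over the $m$ coordinates leaves $\cO(1)\cdot\tilde{\cO}(n^{-1/2})$.

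\textbf{Combining.} Putting these together,
\begin{align*}
2F(\vw) \le \zeta^2+\psi^\star+\psi^\star - 2(\psi^\star+\zeta^2) + \tilde{\cO}(n^{-1/2}) + \cO(n^{-\epsilon}),
\end{align*}
so $F(\vw)\lesssim n^{-\min\{1/2,\epsilon\}}$. Since $F^\star\ge0$, the excess loss obeys the same bound.

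\textbf{Main obstacle.} The delicate step is the uniform concentration over $j$. The $\theta_j$ belong to a fixed $\vw$ chosen independently of $S$, so high-probability statements apply for fixed weights; the union bound over the $m$ neurons must absorb the heavy Sub-Weibull tails with exponent $c/2$. This is where the assumptions $m=\cO(\poly(n))$ and $m=\Omega(n^c)$ would enter: a $(\log(m/\delta))^{c}$ factor must remain polylogarithmic. We would try Lem.~\ref{lem:sub_weibull_sum} first; if its tail is too weak, we would truncate $|\ip{\theta_j}{\vx_i}|\lesssim\sqrt{\log(mn)}$ and apply Bernstein's inequality. A secondary issue is making sure the $\norm{\vy}_2^2/n$ lower bound carries no hidden dependence on $\vU$, which holds because it involves only the labels.
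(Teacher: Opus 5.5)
Your proposal is correct and is essentially the paper's proof. You use the same quadratic bound via $|\Phi_{j,j'}|\le\phi^\star$. Your interpolation-plus-H\"older identity is exactly the weak-duality bound the paper obtains by plugging $\vr=-\vy/n$ and $\bar r=\norm{\widetilde{\Psi}+\vU^\top\vr}_1$ into its dual. You then apply the same per-neuron Sub-Weibull and Gaussian concentration with a union bound over $j\in[m]$.

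Two small differences are in your favour. You keep $\zeta^2$ in $\frac1n\norm{\vy}_2^2$ rather than discarding it as the paper does, so the noise-variance terms cancel and you do not need the extra smallness of $\zeta$ the paper invokes at the end. Also, contrary to your guess, $m=\Omega(n^c)$ is never actually used, neither in your argument nor in the paper's; only $m=\cO(\poly(n))$ matters, to keep the union-bound $\log m$ factors polylogarithmic.
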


 Essentially, interpolation with approximation error and label noise increases $\norm{\va}_\infty$ and thus flatness $\Upsilon(\vw)$ for any interpolator $\vw$ by a term proportional to $n m(\Delta^2 + \zeta^2)$. Hence, we require these terms to be small to achieve a small approximation error. Further, we plug in $\epsilon = \min\{\epsilon_1, \epsilon_2\}$ in Thm.~\ref{thm:gen_all_good_orig} to obtain the final result. 

We now provide the proof of Lem.~\ref{lem:good_flat_exists} and Thm.~\ref{thm:gen_all_good_orig}, and defer the proof of Prop.~\ref{rem:necessary_flatness} to App.~\ref{sec:rem_necessary_flatness_proof}.

\begin{proof}[\textbf{Proof of Lem.~\ref{lem:good_flat_exists}}]
\label{sec:thm_good_flat_exists_proof}
We divide the $m$ layer weights into $2$ parts. The first part is used to learn the signal and the rest is used to learn noise and approximation error.

We assume that $m$ is even, so that $\frac{m}{2}\in \bN$. 
For the first $\frac{m}{2}$ layer-weights, we set it to exactly the optimal approximation in Assump.~\ref{assump:Lipschitz-like}.
\begin{align*}
    \hat{\theta}_{j} = \breve{\theta}_{j'},\quad  \forall j\in \bs{\frac{m(j'-1)}{2\breve{m}}+1, \frac{mj'}{2\breve{m}}}, \forall j'\in [\breve{m}].
\end{align*}
Further, we set the first $\frac{m}{2}$ outer-layer weights to also match those in Assump.~\ref{assump:Lipschitz-like} with an appropriate scaling by $m$. Define $\hat{\va}\in \bR^{m}$ such that,
\begin{align*}
    \hat{a}_j = \frac{2\breve{m}\breve{a}_{j'}}{m},\quad \forall j\in \bs{\frac{m(j'-1)}{2\breve{m}}+1, \frac{mj'}{2\breve{m}}}, \quad \forall j'\in [\breve{m}].
\end{align*}
Note that this choice matches the optimal weights corresponding to the sum of single-index case in Thm.~\ref{thm:flattest_good}. Summing up the first $\frac{m}{2}$ layer weights, we obtain, 
\begin{align*}
    \sum_{j=1}^{\frac{m}{2}} \frac{2\breve{m}\breve{a}_j}{m}\sigma(\ip{\hat{\theta}_j}{\vx_i}) = \frac{m}{2\breve{m}}\frac{2\breve{m}}{m} \sum_{j=1}^{\breve{m}}\breve{a}_j \sigma(\ip{\breve{\theta}_j}{\vx_i}). 
\end{align*}
Define the vector $\vv\in \bR^n$, as
\begin{align*}
(\vv)_i = \sigma^\star(\Theta^\star\vx_i) - \sum_{j=1}^{\breve{m}}\breve{a}_j \sigma(\ip{\breve{\theta}_j}{\vx_i}).  
\end{align*}
Then, from Assump.~\ref{assump:Lipschitz-like}, each coordinate of $\vv$ has absolute value at most $\Delta$.

To ensure interpolation, we need $\vU\va = \vy$. If $\breve{\vU}\in \bR^{n\times \frac{m}{2}}$ corresponds to the activations of the last $\frac{m}{2}$ weights, and $\widetilde{\va}\in \bR^{\frac{m}{2}}$ is their corresponding outer-layer weights, we can ensure interpolation as long as,
\begin{align*}
    \breve{\vU}\widetilde{\va} = \vv + \vn
\end{align*}
Note that $\frac{m}{2}\geq n$, so we have enough overparametrization to fit any $n$-dimensional target. We select the last $\frac{m}{2}$ inner-layer weights such that we satisfy Assump.~\ref{assump:width} and the minimum separation conditions in Lem.~\ref{lem:interpolation} for them. This ensures that Lem.~\ref{lem:eigen_U} holds for $\breve{\vU}$ as well.

We will set $\widetilde{\va}$ as the minimum $\ell_2$ norm interpolator. Thus,
\begin{align*}
    \widetilde{\va} = \breve{\vU}^\top(\breve{\vU}\breve{\vU}^\top)^{-1}(\vv+\vn).
\end{align*}

Now, we characterize the $\norm{\va}_\infty$ of this interpolator. For the first $\frac{m}{2}$ outer-layer weights, from Assump.~\ref{assump:Lipschitz-like}, their $\ell_\infty$ norm is bounded by $\frac{2\breve{m}}{m} \frac{1}{2\breve{m}}\sqrt{\frac{\psi^\star}{\phi^\star}}\leq \sqrt{\frac{\psi^\star}{\phi^\star}}$.

For the last $\frac{m}{2}$ layer weights, we have, 
\begin{align*}
    \norm{\widetilde{\va}}_\infty = \norm{\widetilde{\va}}_2 \leq  \sqrt{(\vv+\vn)^\top(\breve{\vU}\breve{\vU}^\top)^{-1}(\vv + \vn)}.
\end{align*}
As $\vn$ is independent of $\vv$, conditioned on $\vv$, $\vn +\vv \sim \cN(\vv, \bI_n)$.
By applying Hanson-Wright~~\cite[Thm.~1]{rudelson_vershynin_hw} for the gaussian random vector $\vn$, with probability $1-\delta$,
\begin{align*}
    (\vn +\vv)^\top (\breve{\vU}\breve{\vU}^\top)^{-1}(\vn +\vv) &\leq \zeta^2 \trace((\breve{\vU}\breve{\vU}^\top)^{-1})+ \vv (\breve{\vU}\breve{\vU}^\top)^{-1}\vv \\
    &\quad + \zeta^2 \nu \max\bc{\norm{(\breve{\vU}\breve{\vU}^\top)^{-1}}_F\sqrt{\log(2/\delta)}, \norm{(\breve{\vU}\breve{\vU}^\top)^{-1}}_2\log(2/\delta)}\\
&\lesssim \zeta^2 \frac{n}{m} + \norm{\vv}_2^2 \frac{1}{m}+ \zeta^2\frac{\sqrt{n}}{m} \lesssim \zeta^2 \frac{n}{m}+\Delta^2 \frac{n}{m}.
\end{align*}
From Lem.~\ref{lem:eigen_U} and the bounds computed in App.~\ref{sec:proof_flattest_bad}, we know that with probability $1-\delta$, $\trace((\breve{\vU}\breve{\vU}^\top)^{-1}) \asymp \frac{n}{m}, \sigma_{\min}^{-2}(\vU) \asymp\frac{1}{m}, \norm{(\breve{\vU}\breve{\vU}^\top)^{-1}}_2 \leq \norm{(\breve{\vU}\breve{\vU}^\top)^{-1}}_F$ and $\norm{(\breve{\vU}\breve{\vU}^\top)^{-1}}_F\asymp \frac{\sqrt{n}}{m}$. We plug all these bounds into the equation above. 

Now, we plug in the values of $\Delta$ and $\zeta$ from Assump.~\ref{assump:Lipschitz-like}.
Therefore, with probability $1-2\delta$, 
\begin{align*}
    \norm{\widetilde{\va}}_2^2 \lesssim \frac{n(\zeta^2 + \Delta^2)}{m} \lesssim \frac{n^{-2\min\{\epsilon_1, \epsilon_2\}}}{m^2}. 
\end{align*}
Plugging in the bounds on $\Delta^2$ and $\zeta^2$ from Assump.~\ref{assump:Lipschitz-like}, we obtain,
\begin{align*}
    \norm{\widetilde{\va}}_\infty \lesssim \frac{n^{-\min\{\epsilon_1, \epsilon_2\}}}{m}
\end{align*}

Therefore, with high probability,
\begin{align*}
    \norm{\va}_\infty \leq m^{-1}\br{\sqrt{\frac{\psi^\star}{\phi^\star}} + \gamma n^{-\min\{\epsilon_1, \epsilon_2\}}}
\end{align*}
for some constant $\gamma>0$.

Plugging this expression in Lem.~\ref{lem:flattest_rescaling}, with high probability, we obtain,
\begin{align*}
\Upsilon(\vw) \lesssim d^{\frac{c}{c+1}}m^{-\frac{c-1}{c+1}} \br{1 + \gamma' n^{-\frac{2c\min\{\epsilon_1, \epsilon_2\}}{c+1}}}
\end{align*}
for some constant $\gamma'>0$. This completes the proof.
\end{proof}

\begin{proof}[\textbf{Proof of Thm.~\ref{thm:gen_all_good_orig}}]

From App.~\ref{sec:prelims}, the population risk is given by,
\begin{align*}
    F(\vw) = \frac{1}{2}\br{\zeta^2 + \psi^\star + \va^\top \Phi\va - 2\ip{\va}{\widetilde{\Psi}}}
\end{align*}
We need an upper bound on $F(\vw)$ under the constraint 
$\norm{\va}_\infty \leq m^{-1}\br{\sqrt{\frac{\psi^\star}{\phi^\star}} + \gamma n^{-\epsilon}}$.

We first bound the quadratic term in $\va$.
\begin{align*}
    \va^\top \Phi\va \leq& \norm{\va}_2^2 \lambda_{\max}(\Phi) \leq \norm{\va}_2^2 m \phi^\star \\
    \leq& m^2 \norm{\va}_\infty^2 \phi^\star = \frac{m^2\psi^\star ( 1+ 3\gamma n^{-\epsilon})}{m^2}\\
    \leq &\psi^\star ( 1+ 3\gamma n^{-\epsilon})
\end{align*}
Since each element of the symmetric matrix $\Phi\in \bR^{m\times m}$ is $\leq \phi^\star$, its maximum eigenvalue is $m\phi^\star$.

We now need to maximize the linear term $-\ip{\va}{\widetilde{\Psi}}$ in terms of $\va$ under the constraint on $\norm{\va}_\infty$, and interpolation ($\vU\va = \vy$). This is equivalent to minimizing the negative of this linear term under the same constraints.

Therefore, we need to find a valid lower bound on the following objective,
\begin{align*}
    \min_{\va\in \bR^{m}} \ip{\va}{\widetilde{\Psi}}\quad \text{ s.t. } \norm{\va}_\infty \leq m^{-1}\br{\sqrt{\frac{\psi^\star}{\phi^\star}} + \gamma n^{-\epsilon}} \quad \text{ and } \vU\va = \vy
\end{align*}

Note that this is a linear objective with convex and linear constraints, hence strong duality holds for this problem. Using lagrange multipliers  $\vr\in \bR^n$ and $\bar{r}\in \bR_{+}$ for the equality and $\ell_\infty$ norm constraints respectively, the primal objective is given by,
\begin{align*}
    \min_{\va\in \bR^n}\max_{\vr\in \bR^n, \bar{r}\in \bR_{+}} \ip{\va}{\widetilde{\Psi}} + \bar{r}\br{\norm{\va}_\infty - m^{-1}\br{\sqrt{\frac{\psi^\star}{\phi^\star}} + \gamma n^{-\epsilon}}} +  \ip{\vr}{\vU\va - \vy}
\end{align*}
The dual of this problem is given by,
\begin{align*}
    \max_{\vr\in \bR^n, \bar{r}\in \bR_{+}}\min_{\va\in \bR^n} \ip{\va}{\widetilde{\Psi}} + \bar{r}\br{\norm{\va}_\infty - m^{-1}\br{\sqrt{\frac{\psi^\star}{\phi^\star}} + \gamma n^{-\epsilon}}} +  \ip{\vr}{\vU\va - \vy}
\end{align*}
Since the objective function is a convex function in $\va$, we can minimize it.
Collecting the terms of only $\va$, we need to minimize,
\begin{align*}
    \ip{\va}{\widetilde{\Psi} + \vU^\top \vr} + \bar{r}\norm{\va}_\infty \geq \norm{\va}_\infty\br{\bar{r} - \norm{\widetilde{\Psi} + \vU^\top \vr}_1} 
\end{align*}
If the term multiplied by $\norm{\va}_\infty$ is $<0$, then we can set $\norm{\va}_\infty \to \infty$, so that the dual problem is not feasible. For it to be feasible, we need the coefficient of $\norm{\va}_\infty$ to be non-negative, in which case we setting $\norm{\va}_\infty = 0$ minimizes the problem in terms of $\va$.
This requires, $\bar{r} \geq \norm{\widetilde{\Psi} + \vU^\top \vr}_1$.
Therefore, the dual problem becomes,
\begin{align*}
\max_{\vr\in \bR^n, \bar{r}\in \bR_{+}} -\ip{\vr}{\vy} -\bar{r}m^{-1}\br{\sqrt{\frac{\psi^\star}{\phi^\star}} + \gamma n^{-\epsilon}}, \quad \text{ s.t. } \bar{r} \geq \norm{\widetilde{\Psi} + \vU^\top \vr}_1
\end{align*}
By strong duality, the maximum of this linear objective has the same value as the minimum of the primal objective.This problem again does not have a closed-form solution. If we plug in a value of $\vr$ and $\bar{r}$, we get a lower bound to the optimal dual objective and thus also a lower bound to optimal primal objective.

We set $\vr = -\frac{\vy}{n}$ and $\bar{r} = \norm{\widetilde{\Psi} + \vU^\top \vr}_1$. The lower bound on the objective is ,
\begin{align*}
    \frac{1}{n}\norm{\vy}_2^2 - \frac{\norm{\widetilde{\Psi}- \frac{1}{n}\vU^\top \vy}_1}{m}\br{\sqrt{\frac{\psi^\star}{\phi^\star}} + \gamma n^{-\epsilon}} 
\end{align*}

We need a lower bound on $\norm{\vy}_2^2$ and an upper bound on $\norm{\widetilde{\Psi}+ \vU^\top \vr}_1$.

\paragraph{Lower bound on $\norm{\vy}_2^2$}
Note that $\norm{\vy}_2^2 = \norm{\vy^\star + \vn}_2^2$. Conditioned on $\vy^\star$, $\vy\sim \cN(\vy^\star, \zeta^2\bI_n)$. Therefore, using Hanson-Wright~\cite[Thm.~1]{rudelson_vershynin_hw}, conditioned on $\vy^\star$, with probability $1-\delta$,
\begin{align*}
    \norm{\vy}_2^2 \geq \norm{\vy^\star}_2^2 + n\zeta^2 - n\zeta^2\sqrt{\log(2/\delta)} \geq \norm{\vy^\star}_2^2 
\end{align*}
For $\zeta^2 = \cO(n^{-1 - 2\epsilon_1})$.
Without conditioning, the probability of error is $\Ee{\vy^\star}{\delta\cond\vy^\star} = \delta$.

Since $\norm{\vy^\star}_2^2$ is a quadratic form of independent $(\frac{c}{2}, \cO(1))$-Sub-Weibull random variables, using Hanson-Wright (Lem.~\ref{lem:hw_sub_weibull}), with probability $1-\delta$,
\begin{align*}
    \norm{\vy^\star}_2^2 \gtrsim n\psi^\star - \sqrt{n\log(2/\delta)} 
\end{align*}

\paragraph{Upper bound on $\norm{\widetilde{\Psi} - \frac{1}{n}\vU^\top \vy}_1$}
Let's consider the $j^{th}$ coordinate of this vector using the definition of $\widetilde{\Psi}$.
\begin{align*}
    &\E{\sigma^\star(\Theta^\star\vx)\sigma(\ip{\theta_j}{\vx})} - \frac{1}{n}\sum_{i=1}^n \sigma(\ip{\theta_j}{\vx_i})(\sigma^\star(\Theta^\star\vx_i) + \xi_i)\\
    &=\underset{I_j}{\underbrace{\E{\sigma^\star(\Theta^\star\vx)\sigma(\ip{\theta_j}{\vx})} - \frac{1}{n}\sum_{i=1}^n \sigma(\ip{\theta_j}{\vx_i})\sigma^\star(\Theta^\star\vx_i)}}  + \underset{II_j}{\underbrace{\frac{1}{n}\sum_{i=1}^n \sigma(\ip{\theta_j}{\vx_i})\xi_i}}.
\end{align*}
This is the difference between the empirical mean and the expectation of a random variable  $\sigma(\ip{\theta_j}{\vx_i})(\sigma^\star(\Theta^\star\vx_i) + \xi_i)$. We bound the two terms separately.

\textbf{Bound on $I_j$}

Note that $\sigma(\ip{\theta_j}{\vx_i})\sigma^\star(\Theta^\star\vx_i)$ is a product of two $(\frac{c}{2}, \cO(1))$-Sub-Weibull random variables. Therefore, independent of their correlation, this is a $(c, \cO(1))$-Sub-Weibull random variable (Lem.~\ref{lem:sub_weibull_sum_prod}). 

Using Lem.~\ref{lem:sub_weibull_sum}, with probability $1-\delta$, 
\begin{align*}
    I_j \lesssim \sqrt{\frac{\log(2m/\delta)}{n}},\quad \forall j\in [m]
\end{align*}
The term $\log(2m/\delta)$ comes due to a union bound over all $j\in [m]$ with probability of error $\delta/m$ for each $j\in [m]$.

\textbf{Bound on $II_j$}

Conditioning on $\{\vx_i\}_{i\in [n]}$, with probability $1-\delta$, due to concentration of the gaussian $\xi_i$'s~\cite[Chapter~2]{Wainwright_2019}, with probability $1-\delta$, we have,
\begin{align*}
    II_j \leq \zeta\frac{\sqrt{\log(2m/\delta)\sum_{i=1}^n \sigma^2(\ip{\theta_j}{\vx_i})}}{n} 
\end{align*}
We again take a union bound over all $j\in [m]$, giving the term of $\log(2m/\delta)$.

Without conditioning, the probability of error is also $\delta$ for this bound.

Note that $\sigma^2(\ip{\theta_j}{\vx_i})$ is a $(c, \cO(1))$-Sub-Weibull random variable (Def.~\ref{def:sub_weibull}), therefore, using Lem.~\ref{lem:sub_weibull_sum}, with probability $1-\delta$,
\begin{align*}
    \sum_{i=1}^n \sigma^2(\ip{\theta_j}{\vx_i}) \lesssim n, \quad, \forall j\in [m]
\end{align*}

This provides the following bound on $II_j$,
\begin{align*}
    II_j \lesssim \frac{\zeta}{\sqrt{n}} \lesssim \frac{1}{n}
\end{align*}
For the final step we use $\zeta \lesssim n^{-\frac{1}{2}}$.

The final upper bound on $\norm{\widetilde{\Psi} - \frac{1}{n}\vU^\top \vy}_1$, with probability $1-3\delta$, is,
\begin{align*}
    \norm{\widetilde{\Psi} - \frac{1}{n}\vU^\top \vy}_1 \leq& \sum_{j=1}^m \abs{(\widetilde{\Psi} - \frac{1}{n}\vU^\top \vy)_j} \leq \sum_{j=1}^m (\abs{I_j} + \abs{II_j})\\
    \lesssim& \frac{m}{\sqrt{n}}
\end{align*}

\paragraph{Combining the bounds}
Combining the upper and lower bounds, the lower bound on the optimal dual objective with probability $1-5\delta$, is
\begin{align*}
    \frac{1}{n}\norm{\vy}_2^2 - \frac{\norm{\widetilde{\Psi}- \frac{1}{n}\vU^\top \vy}_1}{m}\br{\sqrt{\frac{\psi^\star}{\phi^\star}} + \gamma n^{-\epsilon}} 
  \gtrsim\psi^\star - \frac{\gamma'}{\sqrt{n}} .
\end{align*}

\paragraph{Final Bound on $F(\vw)$}
Plugging in the bounds on the terms of $\va$, the upper bound on population loss with probability $1-5\delta$ is the following,
\begin{align*}
    F(\vw) \leq&\, \frac{1}{2}\br{\zeta^2 + \psi^\star + \psi^\star(1 + 3\gamma n^{-\epsilon}) - 2\psi^\star + \frac{2\gamma'}{\sqrt{n}}}\\
    \leq&\, \frac{1}{2}\br{\zeta^2 + 3\psi^\star \gamma n^{-\epsilon} + 2\gamma'n^{-\frac{1}{2}} }\\
    \lesssim&\, n^{-\min\bc{\frac{1}{2}, \epsilon}}.
\end{align*}
We use the fact that $\psi^\star = \Theta(1)$ and plug in the value of $\zeta^2$.

Finally, note that $F^\star \geq 0$, as the squared loss is non-negative for any $\vw\in \bR^{m(d+1)}$. Therefore, the bound on $F(\vw)$ is also the bound on the excess risk, $F(\vw) - F^\star$.

\end{proof}

\section{Technical Tools and Missing Proofs}

\label{sec:misc}

\subsection{Sub-Weibull Distributions}
\label{sec:sub_weibull}
A comprehensive treatment of Sub-Weibull random variables is provided in ~\citep{Vladimirova_2020,sub_weibull,sharper_sub_weibull,Zhang_2021}. We restate the main results that we use for Sub-Weibull random variables.

We first provide definition and properties of Sub-Weibull random variables, mostly adapted from ~\cite[Corollary~6.1]{Zhang_2021}.
\begin{definition}[$(\varrho, K)$-Sub-Weibull Random Variable]\label{def:sub_weibull} For a random variable $v$ the following statements are equivalent.
\begin{itemize}
    \item $v$ is a $(\varrho, K)$-Sub-Weibull random variable for $\varrho\geq \frac{1}{2}$ and $K>0$.
    \item The tails of $v$ satisfy $\P{\abs{v-\E{v}}\geq t}\leq \exp\br{-\br{\frac{t}{K_1}}^{\frac{1}{\varrho}}}$ for all $t> 0$ where  $K_1 = \Theta(K)$.
    \item The moments of $v$ satisfy $\norm{v}_k \coloneq (\E{\abs{v - \E{v}}^k})^{\frac{1}{k}} \leq K_2 k^{\varrho}$ for all $k\geq \min\{1, \varrho^{-1}\}$, where $K_2 = \Theta(K)$.
    \item The MGF of $\abs{v - \E{v}}^{\varrho}$ satisfies, $\E{\exp((\lambda \abs{v - \E{v}})^\varrho)}\leq \exp\br{(\lambda K_3)^\varrho}, \forall \abs{\lambda}\leq K_3^{-1}$ where $K_3 = \Theta(K)$.
    \item Orlicz Norm: $K = \inf\bc{K_4\in (0,\infty):\E{\exp\br{\frac{\abs{v - \E{v}}}{K_4}}^\varrho}\leq 2}$.
\end{itemize}
\end{definition}
The following Lem. provides guarantees on sum and products of Sub-Weibull random variables. This has been adapted from ~\citep{sharper_sub_weibull}. 
\begin{lemma}[Sum and Product of Sub-Weibull Random Variables]\label{lem:sub_weibull_sum_prod}
If $v_i$ is a $(\varrho_i, K_i)$-Sub-Weibull random variable $\forall i \in [r]$ then,
\begin{itemize}
    \item $\sum_{i=1}^r v_i$ is a $(\varrho, K')$-Sub-Weibull random variable if $\varrho=\varrho_i, \forall i\in [r]$ and $K' = \Theta(\max_{i\in [r]} K_i))$ if all $v_i$'s are dependent and $K'=\Theta(\sum_{i=1}^r K_i)$ if all $v_i$'s are independent.
    \item $\prod_{i=1}^r v_i$ is a $(\varrho, K'')$-Sub-Weibull random variable where $\varrho = \prod_{i=1}^r \varrho_i$ and $K'' = \prod_{i=1}^r K_i$.
\end{itemize}
\end{lemma}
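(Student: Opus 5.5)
We work throughout with the moment characterization in Def.~\ref{def:sub_weibull}: $v$ is $(\varrho,K)$-Sub-Weibull iff $\norm{v}_k\le K_2k^{\varrho}$ for all admissible $k$, with $K_2=\Theta(K)$. The key observation is monotonicity in the exponent. If $\norm{v}_k\le Kk^{\varrho}$ and $\varrho\le\varrho'$, then $\norm{v}_k\le Kk^{\varrho'}$ for all $k\ge1$. Hence a $(\varrho,K)$-Sub-Weibull variable is also $(\varrho',K)$-Sub-Weibull.

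\textbf{Sum.} Minkowski's inequality gives $\norm{\sum_i v_i}_k\le\sum_i\norm{v_i}_k\le(\sum_iK_i)k^{\varrho}$ under arbitrary dependence. This is the stated $K'=\Theta(\sum_iK_i)$; when $r=\mathcal{O}(1)$ it is $\Theta(\max_iK_i)$.

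\textbf{Product.} Write $v_i=\mu_i+\bar v_i$ with $\mu_i=\E{v_i}$. Expanding $\prod_i v_i$ gives $2^r$ terms of the form $\prod_{i\in A}\mu_i\prod_{i\notin A}\bar v_i$.

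For a fixed term, the generalized Hölder inequality with equal exponents $rk$ gives
\[
\Big\|\prod_{i\notin A}\bar v_i\Big\|_{L^k}\le\prod_{i\notin A}\|\bar v_i\|_{L^{rk}}\le\prod_{i\notin A}K_i(rk)^{\varrho_i}.
\]
This holds with no independence assumption, which is needed because the paper applies the lemma to correlated factors. Centering only changes $L^k$ norms by a factor of at most $2$.

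Summing the $2^r$ terms, and treating $r$ as a constant and $|\mu_i|\lesssim K_i$ as in the paper's applications (Assump.~\ref{assump:link}), we get
\[
\Big\|\prod_i v_i\Big\|_k\lesssim\Big(\prod_iK_i\Big)\,k^{\sum_i\varrho_i}.
\]
So the product is $(\sum_i\varrho_i,\Theta(\prod_iK_i))$-Sub-Weibull.

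To reach the stated exponent $\prod_i\varrho_i$, we invoke the monotonicity observation. It suffices that $\sum_i\varrho_i\le\prod_i\varrho_i$. For $r=2$ this is $(\varrho_1-1)(\varrho_2-1)\ge1$, which holds for instance when each $\varrho_i\ge2$. More generally it holds for all $r\ge2$ when every $\varrho_i\ge2$, by induction: if $S\le P$, $P\ge2$ and $\varrho\ge2$, then $S+\varrho\le P+\varrho\le P\varrho$.

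\textbf{Main obstacle.} The last step is the hard part. The stated exponent $\prod_i\varrho_i$ is strictly smaller than $\sum_i\varrho_i$ when some $\varrho_i$ are small. For example, $\varrho_1=\varrho_2=\tfrac12$ (two Gaussians) gives $\tfrac14$, whereas the true exponent of $g^2$ is $1$. So the product bound cannot hold in that regime, and the plan needs the restriction above on the $\varrho_i$.

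The honest proof therefore establishes the $\sum_i\varrho_i$ bound in full and derives the $\prod_i\varrho_i$ form only where $\sum_i\varrho_i\le\prod_i\varrho_i$. It also flags that the paper's own use of the lemma (two $(\tfrac c2,\cdot)$ factors giving a $(c,\cdot)$ product) matches the sum form, which is what the downstream concentration steps actually need.
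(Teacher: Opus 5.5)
\paragraph{Comparison with the paper.} Your route differs from the paper's, which gives no argument for this lemma and only cites the sub-Weibull literature. Your self-contained proof is correct: Minkowski handles the sum, and centering, expansion into $2^r$ terms, and generalized H\"older with exponents $rk$ handle the product, none of which needs independence. What it establishes is the standard fact that moment exponents \emph{add} under multiplication: $\prod_i v_i$ is $(\sum_i\varrho_i, O(\prod_i K_i))$-sub-Weibull.

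\paragraph{The stated exponent.} Your diagnosis of $\prod_i\varrho_i$ is right, and your restricted regime is in fact sharp. Take a single standard Gaussian $g$ and set $v_i=|g|^{2\varrho_i}$. Each $v_i$ is $(\varrho_i,O(1))$-sub-Weibull, while $\prod_i v_i=|g|^{2\sum_i\varrho_i}$ has $k$-th centered moment norm of order $k^{\sum_i\varrho_i}$. So the stated form holds for all admissible inputs exactly when $\sum_i\varrho_i\le\prod_i\varrho_i$. As you observe, every use of the product rule in the paper is an instance of the additive rule: two $(\frac c2,\cdot)$ factors giving $(c,\cdot)$, and a $(c-1,\cdot)$ factor times a $(1,d)$ factor giving $(c,\cdot)$. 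The corrected lemma is what the downstream arguments need.

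\paragraph{Side conditions.} The conditions you add are necessary, not just convenient, and you should say so explicitly.
\begin{itemize}
\item Mean control. Because the paper's definition uses centered moments, the product claim fails without it. Let $g,g'$ be independent standard Gaussians, $v_1=M+g$ and $v_2=g'$. Both factors are $(\frac12,O(1))$-sub-Weibull and $v_1v_2$ has mean zero, yet $\mathbb{E}|v_1v_2|\ge\sqrt{2/\pi}\,M$. So no $K''=O(1)$ works, and your assumption $|\mu_i|\lesssim K_i$ is needed. As you note, the paper's applications satisfy it.
\item The dependent-sum clause. ``$K'=\Theta(\max_iK_i)$ for dependent $v_i$'' is false for unbounded $r$: taking $v_1=\cdots=v_r$ gives parameter $rK$.
\item Reading of $\Theta$. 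In both sum clauses it can only mean an upper bound. For $r$ independent Rademacher signs the parameter is of order $\sqrt r$, not $r$.
\end{itemize}
With $r=O(1)$ and the upper-bound reading, your Minkowski bound covers both sum clauses, which is exactly what you prove.
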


The following Lem. provides concentration for sum of independent Sub-Weibull random variables. This has been adapted from ~\cite[Prop.~3]{sharper_sub_weibull}.
\begin{lemma}[Concentration of sum of Sub-Weibull Random Variables]\label{lem:sub_weibull_sum}
If $\{v_i\}_{i\in [r]}$ are $r$ iid $(\varrho, K)$-Sub-Weibull random variables for $\varrho\geq \frac{1}{2}$, then with probability $1-\delta$,
\begin{align*}
    \abs{\frac{1}{n}\sum_{i=1}^n v_i - \E{v_1}} \leq \cO\br{K\sqrt{\frac{\log(1/\delta)}{n}}}
\end{align*}
\end{lemma}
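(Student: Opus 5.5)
The last statement is Lemma~\ref{lem:sub_weibull_sum}: concentration of the empirical mean of $n$ i.i.d.\ $(\varrho,K)$-Sub-Weibull variables at the rate $K\sqrt{\log(1/\delta)/n}$. (The statement writes $r$ variables and then sums to $n$; I take $r=n$.) The plan is a moment-method argument. Center the variables as $u_i = v_i - \E{v_i}$. By Def.~\ref{def:sub_weibull}, $\norm{u_i}_k \le K_2 k^{\varrho}$ with $K_2=\Theta(K)$.

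\textbf{Step 1: moments of the sum.} Let $S_n=\sum_{i=1}^n u_i$. I would bound the $L^k$ norm of $S_n$ with a Rosenthal-type or Latała-type inequality for sums of independent centered variables. This gives
\begin{align*}
\norm{S_n}_k \lesssim \sqrt{k}\,\Big(\sum_{i} \norm{u_i}_2^2\Big)^{1/2} + k^{\varrho}\,\max_i \norm{u_i}_{k}.
\end{align*}
Here we only need the Weibull-type refinement, in the form of Prop.~3 of \citep{sharper_sub_weibull}, which states
\begin{align*}
\norm{S_n}_k \lesssim K\big(\sqrt{k n} + k^{\varrho}\, \mathrm{polylog}\big).
\end{align*}

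\textbf{Step 2: moments to tails.} Apply Markov's inequality to $\abs{S_n}^k$ and choose $k = \log(1/\delta)$. With probability $1-\delta$ this gives
\begin{align*}
\abs{S_n} \lesssim K\big(\sqrt{n\log(1/\delta)} + (\log(1/\delta))^{\varrho}\big).
\end{align*}

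\textbf{Step 3: divide by $n$.} The first term gives the claimed rate $K\sqrt{\log(1/\delta)/n}$. The second term becomes $K(\log(1/\delta))^{\varrho}/n$. Since $\delta$ is at least inverse-polynomial in the problem parameters, $\log(1/\delta)$ is polylogarithmic, so this term is at most a polylog factor times $n^{-1}$. That is dominated by $n^{-1/2}$ up to the polylog factors, which the paper's $\cO$/$\lesssim$ convention absorbs.

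\textbf{Main obstacle.} The heavy-tail regime $\varrho>1$ (for example $\varrho=c$ for products of activations) is the hard case. There the MGF does not exist, so Bernstein's inequality cannot be used directly, and the $k^{\varrho}$ term in Step 1 must be handled with care. If the cited inequality is unavailable, my first fallback is truncation. Truncate each $u_i$ at level $K(\log(n/\delta))^{\varrho}$. The tail bound in Def.~\ref{def:sub_weibull}, combined with a union bound over the $n$ indices, makes truncation inactive with probability $1-\delta$. Bernstein's inequality applied to the bounded truncated variables then gives the $\sqrt{\log/n}$ term plus a polylog$/n$ term, and the bias from truncation is exponentially small.
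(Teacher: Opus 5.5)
Your proposal is correct and follows essentially the same route as the paper, which gives no argument beyond citing \cite[Prop.~3]{sharper_sub_weibull}; you invoke the same result and convert moments to a tail bound with $k=\log(1/\delta)$. Keep your remark that the lower-order term $K(\log(1/\delta))^{\varrho}/n$ is dominated by $K\sqrt{\log(1/\delta)/n}$ only when $\log(1/\delta)$ is at most a suitable power of $n$ (for example, $\delta$ inverse-polynomial, which covers every use in the paper): the lemma as stated omits this restriction for $\varrho>\frac{1}{2}$.
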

This Lem. provides bounds on quadratic forms for Sub-Weibull random variables and has been adapted from ~\cite{sub_weibull}.
\begin{lemma}[Hanson-Wright for Sub-Weibull distributions]\label{lem:hw_sub_weibull}
If $\vv\in \bR^{n}$ is a random vector with  $\vv_i$ being an independent $(\varrho, K_i)$-Sub-Weibull random variable. Then, for any deterministic symmetric matrix $\bar{\vU}\in \bR^{n\times n}$, with probability $1-\delta$,
\begin{align}
\abs{\vv^\top \bar{\vU}\vv - \E{\vv^\top\bar{\vU}\vv}} \leq \bar{\nu}_\varrho\; \max\{\norm{\vK \bar{\vU}\vK}_F \sqrt{\log(2/\delta)}, \norm{\vK\bar{\vU}\vK}_2 (\log(2/\delta))^{2\varrho}\}
\end{align}
for some absolute constant $\bar{\nu}_\varrho$ depending on $\varrho$ and $\vK\in \bR^{n\times n}$ is given by $\vK = \diag(K_1, K_2, \ldots, K_n)$.
\end{lemma}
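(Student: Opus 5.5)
The statement is a Hanson--Wright inequality for quadratic forms in independent Sub-Weibull coordinates. I will reduce it to the known Sub-Weibull Hanson--Wright bound of \cite{sub_weibull} by normalization, and then check the moment bookkeeping. Throughout, write $\vv = \E{\vv} + \vg$, where $\vg$ has independent centered coordinates. By Def.~\ref{def:sub_weibull}, each $\vg_i$ satisfies $\norm{\vg_i}_k \le K_2 K_i k^{\varrho}$.

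\textbf{Step 1: decomposition.} Expand
\begin{align*}
\vv^\top\bar{\vU}\vv - \E{\vv^\top\bar{\vU}\vv} = \br{\vg^\top\bar{\vU}\vg - \E{\vg^\top\bar{\vU}\vg}} + 2\ip{\bar{\vU}\E{\vv}}{\vg}.
\end{align*}
The linear term is a sum of independent centered Sub-Weibull variables with weights $(\bar{\vU}\E{\vv})_i$. I will bound it by the weighted version of Lem.~\ref{lem:sub_weibull_sum}. In the applications it is either absorbed (the centered case) or treated separately. In the paper, the mean term is kept as $\vy_{\vU}^\star{}^\top\widetilde{\vU}\vy_{\vU}^\star$, so I take the lemma as really addressing the centered part.

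\textbf{Step 2: centered quadratic form.} Rescale by setting $\tilde{\vg} = \vK^{-1}\vg$, so every coordinate is $(\varrho,\Theta(1))$-Sub-Weibull and $\vg^\top\bar{\vU}\vg = \tilde{\vg}^\top(\vK\bar{\vU}\vK)\tilde{\vg}$. Split this form into diagonal and off-diagonal parts.

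\emph{Diagonal part.} This is $\sum_i (\vK\bar{\vU}\vK)_{ii}(\tilde{\vg}_i^2 - \E{\tilde{\vg}_i^2})$. It is a sum of independent $(2\varrho, \cdot)$-Sub-Weibull variables by Lem.~\ref{lem:sub_weibull_sum_prod}. A moment (Rosenthal-type) bound gives $L_k$-norm at most $\sqrt{k}\,\norm{\mathrm{diag}}_2 + k^{2\varrho}\max_i\abs{(\vK\bar{\vU}\vK)_{ii}}$. The first term is at most $\sqrt{k}\norm{\vK\bar{\vU}\vK}_F$ and the second at most $k^{2\varrho}\norm{\vK\bar{\vU}\vK}_2$.

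\emph{Off-diagonal part.} Decouple via de la Peña--Montgomery-Smith, replacing it with $\tilde{\vg}^\top A \tilde{\vg}'$, where $\tilde{\vg}'$ is an independent copy and $A$ is the off-diagonal part of $\vK\bar{\vU}\vK$. Condition on $\tilde{\vg}'$. The linear form $\ip{\tilde{\vg}}{A\tilde{\vg}'}$ then has $L_k$-norm at most $\sqrt{k}\norm{A\tilde{\vg}'}_2 + k^{\varrho}\norm{A\tilde{\vg}'}_\infty$. Next, bound the $L_k$ norm of $\norm{A\tilde{\vg}'}_2$ by $\norm{A}_F + k^{\varrho}\norm{A}_2$, using concentration of the norm of a linear image of Sub-Weibull vectors.

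\textbf{Step 3: combine.} Combining the pieces gives $\norm{\cdot}_k \lesssim \sqrt{k}\norm{\vK\bar{\vU}\vK}_F + k^{2\varrho}\norm{\vK\bar{\vU}\vK}_2$ (with $\varrho\ge 1/2$, so $k^{1/2+\varrho}\le k^{2\varrho}$). Markov's inequality at $k \asymp \log(2/\delta)$ then yields the stated tail, with constant $\bar{\nu}_\varrho$.

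\textbf{Main obstacle.} The off-diagonal moment estimate is the hard part. For $\varrho>1/2$ the variables have no MGF, so the Rudelson--Vershynin MGF argument fails, and moment-based decoupling must be done carefully to get the $k^{2\varrho}$ exponent rather than something worse. If this step becomes delicate, I would instead directly cite \cite{sub_weibull} (or Götze--Sambale--Sinulis) for the centered case, and only verify the $\vK$-rescaling and the mean-term reduction.
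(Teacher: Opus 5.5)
Your fallback is exactly the paper's proof. The paper writes $\vv^\top\bar{\vU}\vv=\tilde{\vv}^\top(\vK\bar{\vU}\vK)\tilde{\vv}$ with $\tilde{\vv}=\vK^{-1}\vv$ and invokes \cite[Thm.~2.1]{sub_weibull} with $\alpha=1/\varrho$. That theorem has an $\exp\bigl(-c\min\{t^2/\norm{\vK\bar{\vU}\vK}_F^2,(t/\norm{\vK\bar{\vU}\vK}_2)^{\alpha/2}\}\bigr)$ tail, which inverts to the $\sqrt{\log(2/\delta)}$ and $(\log(2/\delta))^{2\varrho}$ terms.

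The self-contained moment route you lead with does not close as written. The diagonal estimate and the decoupling step are fine. The problems are in Step~3 and in the conditional linear-form bound:
\begin{itemize}
\item \textbf{The $\ell_\infty$ term.} You still have to control $k^{\varrho}\bigl\|\norm{A\tilde{\mathbf g}'}_\infty\bigr\|_{L_k}$, and the only tool you list is the bound on $\norm{A\tilde{\mathbf g}'}_2$. Using it leaves a term $k^{\varrho}\norm{A}_F$. This is not dominated by $\sqrt{k}\norm{A}_F+k^{2\varrho}\norm{A}_2$ once $\varrho>\frac12$: a perfect-matching $A$ has $\norm{A}_F=\sqrt n$ and $\norm{A}_2=1$, so take $1\ll k^{\varrho}\ll\sqrt n$. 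You would need $\mathbb{E}\norm{A\tilde{\mathbf g}'}_\infty^k\le\sum_i\mathbb{E}\abs{\ip{A_i}{\tilde{\mathbf g}'}}^k$, followed by an $\ell_k$--$\ell_2$ interpolation of the row norms.
\item \textbf{Circularity.} The estimate $\bigl\|\norm{A\tilde{\mathbf g}'}_2\bigr\|_{L_k}\lesssim\norm{A}_F+k^{\varrho}\norm{A}_2$ is a deviation bound for the quadratic form $\tilde{\mathbf g}'^\top A^\top A\tilde{\mathbf g}'$. That is itself a Hanson--Wright statement; this is how it is derived even in the sub-Gaussian case. Invoking it is circular unless you import a convex-concentration inequality for $\alpha$-subexponential product measures, which is no more elementary than the theorem you are trying to avoid.
\item \textbf{The case $\varrho\in(\frac12,1)$.} Here the linear-form bound $\sqrt{k}\norm{\mathbf b}_2+k^{\varrho}\norm{\mathbf b}_\infty$ is false in general. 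The correct second term is $k^{\varrho}\norm{\mathbf b}_{1/(1-\varrho)}$, as $\mathbf b=\vone_n$ with $k\gg n$ shows.
\end{itemize}
So, as it stands, your argument rests on the same citation as the paper's.

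Your Step~1 caveat is correct and in fact necessary. Under Def.~\ref{def:sub_weibull}, $K_i$ only controls $\vv_i-\mathbb{E}\vv_i$, and the statement then fails for non-centered $\vv$. Take $\bar{\vU}=\bI_n$ and $\vv_i=M+g_i$ with $g_i$ standard Gaussian. The deviation contains $2M\sum_i g_i$, of order $M\sqrt n$, while the right-hand side does not depend on $M$.

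Letting $K_i$ also dominate $\abs{\mathbb{E}\vv_i}$ does not rescue it. With $\bar{\vU}=\vone_n\vone_n^\top/n$ and $\vv_i=1+g_i$, the deviation is of order $\sqrt n$ against an $O(1)$ bound.

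Hanson--Wright inequalities of this form are stated for centered coordinates. The paper's one-line proof therefore tacitly imposes exactly your restriction; its ``iid'' should also read ``independent''.

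However, the paper later applies the lemma in \eqref{eq:flattest_first} to the non-centered $\vy^\star$. It never bounds the cross term $2(\vy_{\vU}^\star)^\top\widetilde{\vU}(\vy^\star-\vy_{\vU}^\star)$. So your remark that this term is ``treated separately'' in the applications does not describe what the paper actually does.
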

\begin{proof}
    The proof simply replaces $\bar{\vU}$ by $\vK \bar{\vU}\vK$ which makes $\widetilde{\vv} = \vK^{-1} \vv$ a vector of iid $(\varrho, 1)$-Sub-Weibull random variables. Further, the term $\alpha$ in ~\cite[Thm.~2.1]{sub_weibull} is equal to $\frac{1}{\varrho}$ in our notation.
\end{proof}

\begin{lemma}[Bai-Yin for Sub-Weibull Random Matrices with Independent Columns]\label{lem:bai_yin_subweibull} For a matrix $\bar{\vU}\in \bR^{n\times m}$ with independent rows, each element $\bar{\vU}_{i,j}$ being a $0$-mean $(\varphi, \cO(1))$-Sub-Weibull random variable, satisfying Assump.~\ref{assump:width}, with probability $1-\delta$,
\begin{align*}
    \sigma_{\max}(\bar{\vU}) \lesssim \sqrt{m}+\sqrt{n}, \quad, \sigma_{\min}(\bar{\vU}) \gtrsim \sqrt{m} - \sqrt{n}
\end{align*}
\end{lemma}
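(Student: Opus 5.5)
The plan is to reduce Lemma~\ref{lem:bai_yin_subweibull} to the standard non-asymptotic singular value bounds for random matrices with independent rows, as in Vershynin's Theorem~5.39/5.58. There the quantity controlled is the deviation of the sample second-moment matrix from its expectation. Write the rows of $\bar{\vU}$ as $\vu_i\in\bR^m$, $i\in[n]$. They are independent and mean-zero, with covariance $\Sigma = \E{\vu_i\vu_i^\top}$, which under Assump.~\ref{assump:width} coincides (up to the normalization) with $\bar{\Phi}$. That covariance is therefore well-conditioned: $\Omega(1)\preceq\Sigma\preceq\cO(1)$. The first step is to whiten, setting $\vu_i = \Sigma^{1/2}\tilde{\vu}_i$ with $\tilde{\vu}_i$ isotropic. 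The singular values of $\bar{\vU}$ and $\tilde{\vU}$ then differ only by the constant factors $\lambda_{\min}^{1/2}(\Sigma)$ and $\lambda_{\max}^{1/2}(\Sigma)$, which the $\lesssim$ and $\gtrsim$ notation absorbs.

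For the upper bound I will use the $\varepsilon$-net argument on $\bS^{m-1}\times\bS^{n-1}$. For fixed unit vectors $\vs\in\bS^{m-1}$ and $\vt\in\bS^{n-1}$, the bilinear form $\vt^\top\tilde{\vU}\vs=\sum_i t_i\ip{\tilde{\vu}_i}{\vs}$ is a sum of independent mean-zero variables. The key point is that each $\ip{\tilde{\vu}_i}{\vs}$ is a one-dimensional marginal of a sub-Weibull row. By Lem.~\ref{lem:sub_weibull_sum_prod}, treating the coordinates as dependent so that the parameter is controlled by the maximum, I want it to be $(\varphi,\cO(1))$-sub-Weibull uniformly in $\vs$. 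I would then apply a Bernstein-type tail bound for weighted sums of sub-Weibull variables, as in \citep{sharper_sub_weibull}. This gives a Gaussian regime for deviations of order $t$ and a heavier $t^{1/\varphi}$ regime beyond that. Setting $t\asymp\sqrt{m}+\sqrt{n}+\sqrt{\log(1/\delta)}$ and taking a union bound over nets of size $9^{m+n}$ yields $\sigma_{\max}(\bar{\vU})\lesssim\sqrt{m}+\sqrt{n}$, with polylogarithmic losses allowed by $\lesssim$.

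For the lower bound I will bound $\norm{\frac1m\tilde{\vU}^\top\tilde{\vU}-\bI}$ or $\norm{\frac1n\tilde{\vU}\tilde{\vU}^\top-\ldots}$, whichever is appropriate for the orientation. When $m\ge n$, as in our setting, the relevant object is the $n\times n$ Gram matrix $\tilde{\vU}\tilde{\vU}^\top$. Its diagonal entries $\norm{\tilde{\vu}_i}_2^2$ concentrate around $m$ by Lem.~\ref{lem:sub_weibull_sum}. Its off-diagonal entries $\ip{\tilde{\vu}_i}{\tilde{\vu}_{i'}}$ are of order $\sqrt{m}$, and I will control them through a decoupling argument combined with the net bound from the previous paragraph. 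Together these give $\sigma_{\min}^2\ge m - C\sqrt{mn}\,\mathrm{polylog}$, hence $\sigma_{\min}\gtrsim\sqrt{m}-\sqrt{n}$.

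The main obstacle is this lower bound in the $m\ge n$ regime, because there the independence runs along the short dimension and the rows $\vu_i$ are not isotropic with independent coordinates. The concentration of $\norm{\tilde{\vu}_i}_2^2$ therefore needs the Hanson--Wright inequality for dependent coordinates, for which I would invoke Lem.~\ref{lem:hw_sub_weibull} after whitening. The heavy tails for $\varphi>1/2$ also make the net union bound lossy, and I expect to absorb that loss into the polylogarithmic slack of $\gtrsim$.
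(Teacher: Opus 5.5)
Your upper bound is the same net argument as the paper's, and it inherits the same two unjustified steps.

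First, entrywise $(\varphi,\cO(1))$ tails together with $\lambda_{\max}(\bar{\Phi})=\cO(1)$ do not make $\ip{\mathbf{u}_i}{\mathbf{s}}$ uniformly $(\varphi,\cO(1))$-sub-Weibull over $\mathbf{s}\in\bS^{m-1}$. The ``max'' rule for dependent sums that you take from Lem.~\ref{lem:sub_weibull_sum_prod} is not valid. For dependent summands the Orlicz (quasi-)norm only obeys the triangle inequality, which gives $\sum_j\abs{s_j}K_j\le\sqrt{m}\max_j K_j$. For a concrete failure, take rows $\mathbf{g}_i+B_i\vone_m$ with $\mathbf{g}_i\sim\cN(\vzero,\bI_m)$ and $B_i$ a centered Bernoulli$(1/m)$. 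These have $\cO(1)$-sub-Gaussian mean-zero entries and $\bI_m\preceq\bar{\Phi}\preceq 2\bI_m$. Yet their marginal along $\vone_m/\sqrt{m}$ takes a value of order $\sqrt{m}$ with probability about $1/m$, so its sub-Weibull parameter is $\sqrt{m}$ up to logarithms. Whitening does not help, since the coordinates of $\Sigma^{-1/2}\mathbf{u}_i$ are again dependent combinations.

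Second, even granting uniform marginals, the union bound over $9^{m+n}$ net points puts you at probability level $e^{-C(m+n)}$. At that level the sub-Weibull Bernstein bound gives a deviation of order $\sqrt{m+n}+\norm{\mathbf{t}}_\infty(m+n)^{\varphi}$. For $\varphi>1/2$ this is a polynomial loss, not a polylogarithmic one; for $\varphi=1$ the argument only yields $\sigma_{\max}\lesssim m+n$.

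A route that avoids both problems and uses only the stated hypotheses is matrix Chernoff. Truncate entrywise at $T\asymp(\log(nm/\delta))^{\varphi}$, so that $\norm{\mathbf{u}_i}_2^2\le mT^2$. Then apply matrix Chernoff to $\bar{\vU}^\top\bar{\vU}=\sum_i\mathbf{u}_i\mathbf{u}_i^\top$, whose expectation has top eigenvalue $n\lambda_{\max}(\bar{\Phi})$. This gives $\sigma_{\max}^2(\bar{\vU})\lesssim n+mT^2\log(m/\delta)$.

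Your lower bound takes a different route from the paper's. The paper applies Paley--Zygmund and the Koltchinskii--Mendelson small-ball method to $\norm{\bar{\vU}^\top\vv}_2^2=\sum_{j=1}^m(\bar{\vU}^\top\vv)_j^2$. That method needs the $m$ summands, i.e.\ the dependent columns, to be independent. You work with the $n\times n$ Gram matrix, which correctly reflects that independence runs over the $n$ rows.

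The step that fails is your claim that the diagonal entries $\norm{\tilde{\mathbf{u}}_i}_2^2$ concentrate around $m$. Whitening makes the coordinates uncorrelated, not independent, and both Lem.~\ref{lem:sub_weibull_sum} and Lem.~\ref{lem:hw_sub_weibull} require independence. This cannot be patched, because the lower bound is false under the stated hypotheses. Take rows $\mathbf{u}_i=\xi_i\mathbf{g}_i$ with $\xi_i\sim\cN(0,1)$ and $\mathbf{g}_i\sim\cN(\vzero,\bI_m)$ all independent, and $m=4n$. The rows are independent, the entries are mean-zero sub-exponential, and $\bar{\Phi}=\bI_m$. Yet $\sigma_{\min}(\bar{\vU})\le\min_i\norm{\mathbf{u}_i}_2=\min_i\abs{\xi_i}\norm{\mathbf{g}_i}_2\lesssim\sqrt{m}/n$ with probability bounded away from zero, whereas $\sqrt{m}-\sqrt{n}=\sqrt{n}$. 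Here $\norm{\mathbf{u}_i}_2^2\approx\xi_i^2 m$ fluctuates at scale $m$.

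You therefore need a vector-level hypothesis on the rows, for example $\max_i\abs{\norm{\mathbf{u}_i}_2^2-\trace(\bar{\Phi})}\le\epsilon m$ with high probability. In the paper's application this could come from Gaussian concentration of $\sigma(\Theta\vx_i)$ when $\norm{\Theta}_2=\cO(1)$. Your off-diagonal decoupling step also needs the uniform marginal control discussed above.

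Finally, polylogarithmic slack cannot be absorbed inside a difference. The bound $\sigma_{\min}^2\ge m-C\sqrt{mn}\,\mathrm{polylog}$ is vacuous when $m\le C'n$, as in Assump.~\ref{assump:width}. So the off-diagonal part must be bounded by $C\sqrt{mn}$ with an absolute constant $C$, and you need $m\ge Kn$ for a sufficiently large constant $K$.
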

\begin{proof}
    For $\sigma_{\max}$, define two $\frac{1}{2}$-covers, $\cC_{\frac{1}{2}}(\bS^{n-1})$ and $\cC_{\frac{1}{2}}(\bS^{m-1})$. Note that, 
    \begin{align*}
        \sigma_{\max}(\bar{\vU}) = \max_{\vv_1\in \bS^{n-1}, \vv_2\in \bS^{m-1}} \vv_1^\top \bar{\vU}\vv_2^\top \leq 4 \max_{\vv_1\in \cC_{\frac{1}{2}}(\bS^{n-1}), \vv_2\in\cC_{\frac{1}{2}}(\bS^{m-1})} \vv_1^\top \bar{\vU}\vv_2^\top. 
    \end{align*}
    For a fixed $\vv_1, \vv_2$, we have $\E{\vv_1^\top\bar{\vU}\vv_2} = 0$. Further,
    \begin{align*}
        \vv_1^\top \bar{\vU}\vv_2 = \sum_{i=1}^n (\vv_1)_i \ip{\bar{\vU}_i}{\vv_2}.
    \end{align*}
    Note that $\ip{\bar{\vU}_i}{\vv_2}$ is a sum of dependent Sub-Weibull variables, hence, it is $(\varphi, \cO(\vv_2^\top \bar{\Phi}\vv_2))$- Sub-Weibull. From Assump.~\ref{assump:width}, we have, $\vv_2^\top\bar{\Phi}\vv_2 \leq \lambda_{\max}(\bar{\Phi}) = \cO(1)$. Further, the term $\vv_1^\top \bar{\vU}\vv_2$ is a sum of $n$ iid $(\varphi, \cO(1))$-Sub-Weibull random variables. Hence, by Lem.~\ref{lem:sub_weibull_sum}, and taking a union bound over the size of the covers, from ~\citep[Chapter~5]{Wainwright_2019}, we have, with probability $1-\delta$, 
    \begin{align*}
        \vv_1^\top \bar{\vU}\vv_2 \leq \widetilde{C}_1 (\sqrt{m} + \sqrt{n}) + \widetilde{C}_2 (\log(1/\delta))^c
    \end{align*}
    for some constants $\widetilde{C}_1,\widetilde{C}_2>0$.
    By using the bounds from the cover, $\sigma_{\max}(\bar{\vU}) \lesssim \sqrt{m} + \sqrt{n}$.

For $\sigma_{\min}$, we use the small-ball method~\citep{koltchinskii_15}.
Note that, 
\begin{align*}
    \sigma_{\min}(\bar{\vU}) = \inf_{\vv\in \bS^{n-1}} \norm{\bar{\vU}^\top \vv}_2
\end{align*}
Note that, $\bar{\vU}^\top \vv = \sum_{i=1}^n (\vv)_i \bar{\vU}_i$. Note that each $\bar{\vU}_i$ is a Sub-Weibull random vector with $\E{\bar{\vU}_i} = 0$. Further, $\bar{\vU}_i$ are iid. By the Paley-Zygmund inequality, we first obtain a small ball condition on each coordinate of $\bar{\vU}^\top\vv$. Note that, $(\bar{\vU}^\top\vv)_j = \sum_{i=1}^n \bar{\vU}_{i,j}\vv_i$. Therefore, 
\begin{align*}
    \P{(\bar{\vU}^\top\vv)_j^2 \geq  \vartheta \bar{\Phi}_{j,j}} \geq (1-\vartheta)^2\frac{\bar{\Phi}_{j,j}^2}{\E{(\bar{\vU}^\top \vv)_j^4}}
\end{align*}
Since these are sum of independent Sub-Weibull entries and $\lambda_{\min}(\bar{\Phi}) =\Omega(1)$, all the terms above can be bounded by absolute constants. Theefore, we obtain, $\forall j\in [m]$, $\exists$ $\kappa,\eta >0$ such that,
\begin{align*}
    \P{(\bar{\vU}^\top\vv)_j^2 \geq  \kappa} \geq \eta.
\end{align*}

Now, applying this small-ball bound to ~\citep{koltchinskii_15}, we obtain, with high probabilty that,
\begin{align*}
    \sigma_{\min}(\bar{\vU}) \geq \kappa\eta \sqrt{m} - \widetilde{C}_3 \sqrt{n} 
\end{align*}
where $\widetilde{C}_3>0$ is some constant, and the $\sqrt{n}$ term is obtained from Rademacher complexity of halfspaces in $\bS^{n-1}$.
\end{proof}

\subsection{Proofs for App.~\ref{sec:prelims}}

\subsubsection{Proof of Prop.~\ref{prop:activation_valid}}
\label{sec:link_func}

\paragraph{Properties of $\sigma$}

First, for a piece-wise polynomial $\sigma$ defined in Assump.~\ref{assump:activation}, we compute the expectation, variance and tail behavior, for $\sigma(b)$ where $\vb\sim \cN(\mu, \zeta^2)$. We first compute this for the case of $m^\star = 1$

\textbf{Mean of $\sigma$}
We use the transformation $b' = \frac{b-\mu}{\varsigma}$
\begin{align*}
    \Ee{b\sim \cN(\mu, \varsigma^2)}{\sigma(b)} = c'\int_{-\frac{\mu}{\varsigma}}^\infty \abs{\mu + \varsigma b'}^c \phi(b') db' + c''\int_{-\infty}^{-\frac{\mu}{\varsigma}} \abs{\mu + \varsigma b'}^c \phi(b') db'
\end{align*}
Note that $\abs{b_1 + b_2}^c \leq \nu_c (\abs{b_1}^c + \abs{b_2}^c)$ for the constant $\widetilde{\nu}_c \leq 2^{c-1}$ for any $c\geq 1$.
\begin{align*}
    \Ee{b\sim \cN(\mu, \varsigma^2)}{\sigma(b)} \leq&\, \widetilde{\nu}_c\abs{\mu}^c\left(\abs{c'}\int_{-\frac{\mu}{\varsigma}}^\infty \phi(b') db' +\abs{c''} \int_{-\infty}^{-\frac{\mu}{\varsigma}}\phi(b') db'\right) \\
    &\quad+ \widetilde{\nu}_c\varsigma^c\left(\abs{c'}\int_{-\frac{\mu}{\varsigma}}^\infty \abs{b'}^c\phi(b') db' +\abs{c''} \int_{-\infty}^{-\frac{\mu}{\varsigma}}\abs{b'}^c \phi(b') db'\right) \\
    =&\,  \widetilde{\nu}_c\abs{\mu}^c\br{\abs{c'} + (\abs{c''} - \abs{c'})\Phi\br{-\frac{\mu}{\varsigma}}} \\
    &\quad+ \widetilde{\nu}_c\varsigma^c 2^{\frac{c}{2}-1} \pi^{-\frac{1}{2}}\br{(\abs{c'}+\abs{c''})\Xi_1\br{\frac{c+1}{2},\frac{2\mu^2}{\varsigma^2}} + 2\abs{c'}\Xi_2\br{\frac{c+1}{2},\frac{2\mu^2}{\varsigma^2}}} 
\end{align*}
Here, $\Xi_1(b_1, b_2) = \int_{b_2}^\infty t^{b_1 - 1} dt$ and $\Xi_2(b_1, b_2) = \int_{0}^{b_2} t^{b_1 - 1}dt$ represent the upper and the lower incomplete Gamma functions for $b_2>0, b_1>0$. We further use inequalities to bound these gamma functions. For any $b_1, b_2>0$, the following hold.
\begin{equation}\label{eq:gamma_bounds}
\begin{aligned}
    \Xi_1(b_1, b_2) &\leq\, \Xi_1(b_1) \leq \ceil{b_1}! = \cO(b_1^{b_1} e^{-b_1}),\quad \Xi_2(b_1, b_2) &\leq\, \frac{b_2^{b_1}}{b_1}
\end{aligned}
\end{equation}

Plugging in these upper bounds, and setting $c, c', c'' = \Theta(1)$, we obtain, 
\begin{align}\label{eq:exp_ub}
    \Ee{b\sim \cN(\mu, \varsigma^2)}{\sigma(b)} \leq \cO\br{\abs{\mu}^{c} + \varsigma^c}
\end{align}

The lower bound for the expectation is also of the order of $\Omega(\abs{\mu}^c + \abs{\sigma}^c)$.
For the case of $\mu=0$, the lower bound is exactly of the order of $\Omega(\varsigma^c)$
\begin{align*}
    \abs{\Ee{b\sim \cN(0, \varsigma^2)}{\sigma(b)}} = \varsigma^{c}\frac{c'+c''}{2}\Ee{b'\sim \cN(0,1)}{\abs{b'}^c} = \varsigma^{c}\frac{(\abs{c'+c''})2^{\frac{c}{2}}}{2\sqrt{\pi}}\Xi_1\br{\frac{c+1}{2}} = \Omega(\varsigma^{c})
\end{align*}
Here, $\Xi_1(b_1) = \int_0^\infty t^{b_1-1} e^{-t} dt$ is the complete Gamma function.

The only case when $\Ee{b\sim \cN(0,\sigma)}{\sigma(b)} = 0$ is when $\sigma$ is an odd-degree monomial, i.e., $c'+c'' = 0$.

For a general mean $\mu$, using Jensen's inequality for the convex function $\abs{b}^c$ with $c\geq 1$,
\begin{align*}
    \abs{\Ee{b\sim \cN(\mu, \varsigma^2)}{\sigma(b)}} \geq \min\{\abs{c'}, \abs{c''}\} \abs{\Ee{b\sim \cN(\mu, \varsigma^2)}{\abs{b}^c}} \geq \min\{\abs{c'}, \abs{c''}\} \abs{\Ee{b\sim \cN(\mu, \varsigma^2)}{b}}^c = \Omega(\abs{\mu}^c)
\end{align*}

\textbf{Variance of $\sigma$}

For a $0$-mean differentiable function $\widetilde{\sigma}:\bR\to \bR$, using Cauchy-Schwartz, we have,
\begin{align*}
 (\Ee{b\sim \cN(\mu, \varsigma^2)}{\widetilde{\sigma}(b)(b-\mu)})^2&\leq \Ee{b\sim \cN(\mu, \varsigma^2)}{(b-\mu)^2}\cdot \Ee{b\sim \cN(\mu, \varsigma^2)}{\widetilde{\sigma}^2(b)} = \varsigma^2 \Varr{b\sim \cN(\mu, \varsigma^2)}{\widetilde{\sigma}(b)}\\
 \varsigma^4 (\Ee{b\sim \cN(\mu, \varsigma^2)}{\widetilde{\sigma}'(b)})^2 &\leq \varsigma^2 \Varr{b\sim \cN(\mu, \varsigma^2)}{\widetilde{\sigma}(b)}\\
 \varsigma^4 (\Ee{b\sim \cN(\mu, \varsigma^2)}{\widetilde{\sigma}'(b)})^2 &\leq\Varr{b\sim \cN(\mu, \varsigma^2)}{\widetilde{\sigma}(b)}
\end{align*}
We use Stein's identity for the second inequality.

For a differentiable function $\sigma$, we can set $\widetilde{\sigma}$ to be $\sigma(b) - \Ee{b'\sim \cN(\mu, \varsigma}{\sigma(b)}$ and $\sigma^\star(b) - \Ee{b'\sim \cN(\mu, \varsigma}{\sigma^\star(b)}$. Note that adding or subtracting a constant to $\sigma$ doesn't change its first derivative.

Using the above inequality, we can lower bound the variance of $\sigma$.
\begin{align*}
    \Varr{b\sim \cN(\mu, \varsigma^2)}{\sigma(b)} \geq \varsigma^2\abs{\Ee{b\sim \cN(\mu, \varsigma^2)}{\sigma'(b)}}^2 = \Omega\br{\varsigma^2\abs{\mu}^{2c-2} +  \varsigma^{2c}\}} 
\end{align*}
Since $\sigma'(b)$ is also a piece-wise polynomial, but of degree $c-1$, we can use the previous bounds on the mean of piece-wise polynomial.

\textbf{Tails of $\sigma$}
We will now compute a concentration inequality for the tail of $\sigma(b)$ for $b\sim \cN(\mu, \varsigma^2)$. 

Since $\sigma$ is a piece-wise polynomial function of degree $c$, $\sigma(b)$ for $b\sim \cN(\mu, \varsigma^2)$ is a $(\frac{c}{2}, \cO(\abs{\mu}^c + \varsigma^c))$-Sub-Weibull~\citep{Vladimirova_2020}. Note that the $k^{th}$ moment of $(\varrho, K)$-Sub-Weibull distribution is upper bounded by $  K( k)^\varrho$. As piece-wise polynomial activations are $c$-degree monomials of gaussian $\cN(\mu, \varsigma^2)$, their $k^{th}$ moment is the $(ck)^{th}$ moment of a gaussian which grows as $k^{\frac{c}{2}}$, with the constant $K = \cO((\abs{\mu} + \varsigma)^c) = \cO(\abs{\mu}^c + \abs{\varsigma}^c)$.

\paragraph{Mean, variance and tails of $\hat{\sigma}$:} From Assump.~\ref{assump:link}, the mean, variance and tails of the unknown link function $\sigma^\star$ match those of the activation function, if $m^\star =1$. This directly corresponds to the single-index link function in Section~\ref{sec:setup}. To extend this notion to Sum of Single-Index link functions in Section~\ref{sec:setup}, we need to find compute the mean, variance and tails of for the random variable $q=\sum_{j=1}^{m^\star} a_j^\star \sigma(b_j)$ for a random vector $\vb \sim \cN(\mu, \varsigma)$ with $\mu\in \bR^{m^\star}$ and $\varsigma\in \bR^{m^\star \times m^\star}$ being a Positive Semi-Definite matrx.

\textbf{Mean.}
\begin{align*}
    \E{q} = \sum_{j=1}^{m^\star} a_j^\star \E{\sigma(b_j)}
\end{align*}
As each $b_j \sim \cN(\mu_j, \varsigma_{j,j})$ $\forall j\in [m^\star]$, using the bounds on mean of $\sigma$ from previous section, we have,
\begin{align*}
    \abs{\E{\sigma(b_j)}} = \Theta(\abs{\mu}^c, (\varsigma_{j,j})^{\frac{c}{2}})
\end{align*}
Therefore,
\begin{align*}
    \abs{\E{q}} = \Theta(\sum_{j=1}^{m^\star} \abs{\mu_j}^c + \sum_{j=1}^{m^\star} (\varsigma_{j,j})^{\frac{c}{2}})
\end{align*}
As $m^\star = \cO(1)$, and all $a_j^\star \neq 0$ and $\abs{a_j^\star} = \cO(1)$, we have the above bound.
Additionally, since $1\leq m^\star= \cO(1)$, by the equivalence of $\ell_p$-norms, we have, 
\begin{align*}
    \sum_{j=1}^{m^\star} \abs{\mu_j}^c =\Theta(\norm{\mu}_2^c), \quad \sum_{j=1}^{m^\star} (\varsigma_{j,j})^{\frac{c}{2}} = \norm{\varsigma}_F^{\frac{c}{2}}
\end{align*}

\textbf{Variance.}
For variance, we use a similar bound based on Stein's method. Define $\vq'\in \bR^{m^\star}$ as the vector with its $j^{th}$ coordinate being $\sigma'(b_j)$, $\forall j\in [m^\star]$.
\begin{align*}
\norm{\E{(q-\E{q})(\vb - \mu)}}_2^2 \leq& \Var{\vb}\Var{q}\\
\norm{\varsigma \vq'}_2^2 \leq & \trace(\varsigma) \Var{q} \leq \sqrt{m^\star}\norm{\varsigma}_F\Var{q} 
\end{align*}
As $\sigma'$ is also a piece-wise polynomial, and from previous bounds on the mean of piece-wise polynomials, we have,
\begin{align*}
    \Var{q} \geq \norm{\varsigma}_F \Omega(\norm{\mu}_2^{c-1} + \norm{\varsigma}_F^{\frac{c-1}{2}})
\end{align*}

\textbf{Tails.}
Here, each $\sigma(b_j)$ is a $(\frac{c}{2}, \cO(\abs{\mu_j}^c + (\varsigma_{j,j})^{\frac{c}{2}})$-Sub-Weibull random variable. From Def.~\ref{lem:sub_weibull_sum_prod}, $q$ is a $(\frac{c}{2}, \cO(\max_{j\in [m^\star]}\abs{\mu_j}^c + \max_{j\in [m^\star]}(\varsigma_{j,j})^{\frac{c}{2}}))$-Sub-Weibull random variable.
Since $m^\star = \cO(1)$, we have,
\begin{align*}
 \max_{j\in [m^\star]} \abs{\mu_j}^c = \Theta(\norm{\mu}_2^c), \quad    \max_{j\in [m^\star]}(\varsigma_{j,j})^{\frac{c}{2}} = \Theta(\norm{\varsigma}_F^{\frac{c}{2}})
\end{align*}
Therefore, $q$ is a $(\frac{c}{2}, \cO(\frac{c}{2}, \cO(\norm{\mu}_2^c + \norm{\varsigma}_F^{\frac{c}{2}})$ random variable.

Through the above calculations, we have shown that the sum of activations satisfies Assump.~\ref{assump:link}.

\subsubsection{Proof of Lem.~\ref{lem:eigen_U}}
\label{sec:eigen_U_proof}
Note that $\vU$ is a tall random matrix with iid rows and dependent columns. Let $\bar{\phi} \coloneq \E{\vU_{i,j}}$. If we set $\bar{\vU}_{i,j} \coloneq \vU_{i,j}-\bar{\phi}$, then $\E{\bar{\vU}_{i,j}}=0, \E{\bar{\vU}^\top\bar{\vU}^\top}=\bar{\Phi}$. This decomposition can be written as,
\begin{align*}
    \vU = \bar{\vU} + \bar{\phi}\vone_n^\top \vone_m
\end{align*}

From Prop.~\ref{prop:activation_valid}, $\abs{\bar{\phi}} = \Theta(1)$ and each $\vU_{i,j}$ is a $(\frac{c}{2}, \cO(1))$-Sub-Weibull random variable. Therefore, each $\bar{\vU}_{i,j}$ is a $0$-mean, $(\frac{c}{2}, \cO(1))$-Sub-Weibull random variable.

Using Lem.~\ref{lem:bai_yin_subweibull}, with probability $1-\delta$, $\sigma_i(\bar{\vU}) \in [\sqrt{m} -  \sqrt{n}, \sqrt{m}+\sqrt{n}]$. Since $m\geq n$, $\sigma_{\min}(\bar{\vU}) >0$. For $\vU$, note that it has a non-zero mean component $\bar{\phi}\vone_n^\top \vone_m^\top$. Since $\abs{\bar{\phi}} = \Theta(1)$, the largest singular vector of $\vU$ is along the all-ones vector. The singular value corresponding to this vector obtained from the mean component is $\sqrt{mn}$. 
\begin{align*}
    \sigma_{\max}(\vU) \leq \cO(\sigma_{\max}(\bar{\vU}) + \sigma_{\max}(\vone_n^\top\vone_m)) = \cO( \sqrt{m} + \sqrt{n} + \sqrt{m n})
\end{align*}
Therefore, $\sigma_{\max} = \cO(\sqrt{mn})$. Note that all the other singular values of $\vU$ are obtained from singular values of $\bar{\vU}$ as the mean corresponds to a rank-$1$ matrix. Therefore, $\sigma_{i}(\vU) \in [\sqrt{m} -  \sqrt{n}, \sqrt{m}+\sqrt{n}]$, $\forall i\in [n]$. Therefore, for $m \gtrsim n, $
\begin{align*}
    \abs{\sigma_{\max}(\vU)} \asymp \frac{1}{\sqrt{mn}}, \quad \sigma_{i}(\vU) \asymp \frac{1}{\sqrt{m}}, \forall i\in \{2,3,\ldots, n\}.
\end{align*}
This completes the proof.

\subsection{Proof of Lem.~\ref{lem:bounds_b_theta}}
\label{sec:b_theta_bounds}
First, consider the expression for $B(\theta)$ for any $\theta\in \bS^{d-1}$,
\begin{align*}
&B(\theta) =2B_1^\frac{1}{c+1}(\theta)B_2^\frac{c}{c+1}(\theta)\\
&B_1(\theta) = \frac{1}{n}\sum_{i=1}^n \sigma^2(\ip{\theta}{\vx_i}), \quad B_2(\theta) = \frac{1}{n}\sum_{i=1}^n (\sigma')^2(\ip{\theta}{\vx_i})\norm{\vx_i}_2^2
\end{align*}

\paragraph{Bound on $B_1$}
Note that $\sigma^2(\ip{\theta}{\vx_i})$ is $(c, \cO(1))$-Sub-Weibull(Def.~\ref{def:sub_weibull})  as $\ip{\theta}{\vx_i}\sim \cN(0,1)$. Further, $\E{\sigma^2(\ip{\theta}{\vx_i}} = \phi^\star = \Theta(1)$ (App.~\ref{sec:link_func}). Therefore, the mean of $n$ independent Sub-Weibull random variables can be bounded by using concentration. For a fixed $\theta\in \bS^{d-1}$, using Lem.~\ref{lem:sub_weibull_sum}, with probability $1-\delta$,
\begin{align*}
    \abs{B_1(\theta) - \phi^\star} \lesssim \sqrt{\frac{\log(2/\delta)}{n}}
\end{align*}

\paragraph{Bound on $B_2$}
\begin{align*}
    B_2(\theta) &= \frac{1}{n} (\sigma')^2(\ip{\theta}{\vx_i})\norm{\vx_i}_2^2 = \frac{1}{n} (\sigma')^2(\ip{\theta}{\vx_i})\br{\ip{\theta}{\vx_i}^2 + \norm{(\bI_d - \theta\theta^\top)\vx_i}_2^2}\\
    &=\underset{B_{2,1}(\theta)}{\underbrace{\frac{1}{n} (\sigma')^2(\ip{\theta}{\vx_i})\ip{\theta}{\vx_i}^2}} + \underset{B_{2,2}(\theta)}{\underbrace{\frac{1}{n} (\sigma')^2(\ip{\theta}{\vx_i})\norm{(\bI_d - \theta\theta^\top)\vx_i}_2^2}}
\end{align*}
We bound the two terms separately. Note that we have decomposed $\vx_i$ in the direction along $\theta$ and the direction perpendicular to $\theta$.

For the first term, since $\sigma'$ is a $(c-1)$-degree piecewise-polynomial function of $\ip{\theta}{\vx_i}$, \,   
$(\sigma')^2(\ip{\theta}{\vx_i})\ip{\theta}{\vx_i}^2$ is a $2c$-degree piece-wise polynomial function of $\ip{\theta}{\vx_i}$. This random variable has mean $\mu_{B_{2,1}} = \Theta(1)$(App.~\ref{sec:link_func}), and is $(c,\cO(1))$-Sub-Weibull (Def.~\ref{def:sub_weibull}). For a fixed $\theta\in \bS^{d-1}$, using Lem.~\ref{lem:sub_weibull_sum}, with probability $1-\delta$, 

\begin{align*}
        \abs{B_{2,1}(\theta) - \mu_{B_{2,1}}} \lesssim \sqrt{\frac{\log(2/\delta)}{n}}
\end{align*}

For the second term, note that $(\bI_d - \theta\theta^T)\vx \sim \cN(0, \bI_d - \theta\theta^\top)$ and this random variable is independent of $\ip{\theta}{\vx}$. Note that the norm of this variable is Sub-Exponential with mean $d-1$, and Sub-Exponential parameter proportional to $d$. Note that this Sub-Exponential variable is $(1, d)$-Sub-Weibull(Def.~\ref{def:sub_weibull}). Since $(\sigma')^2$ itself is $(c-1, \cO(1))$-Sub-Weibull (Def.~\ref{def:sub_weibull}), and independent of this Sub-Exponential variable, the product of these two variables is $(c, \cO(d))$-Sub-Weibull (Lem.~\ref{lem:sub_weibull_sum_prod}). Let $\Ee{b\sim \cN(0,1)}{(\sigma')^2(b)}\coloneq \mu_{B_{2,2}} = \Theta(1)$. Using Lem.~\ref{lem:sub_weibull_sum}, with probability $1-\delta$, for a fixed $\theta \in \bS^{d-1}$, we have,
\begin{align*}
        \abs{B_{2,2}(\theta) - (d-1)\mu_{B_{2,1}}} \lesssim (d-1)\sqrt{\frac{\log(2/\delta)}{n}}
\end{align*}

\paragraph{Lower Bound on $\min_{j\in [m]}B(\theta_j)$}
We use the bounds for $B_1$ and $B_2$ for each $\theta = \theta_j$ $\forall j\in [m]$, and take a union bound over the probability of error. We set the probability of error for each $\theta_j$ to be $\frac{\delta}{m}$ instead of $\delta$. Therefore, with probability $1-3\delta$,

\begin{align*}
\min_{j\in [m]}        B(\theta_j) \gtrsim \br{\phi^\star - \frac{\sqrt{\log(m)}}{\sqrt{n}}}^\frac{1}{c+1}\br{\mu_{B_{2,1}} + (d-1)\mu_{B_{2,2}} - \frac{d\sqrt{\log(m)}}{\sqrt{n}}}^\frac{c}{c+1} \gtrsim d^{\frac{c}{c+1}}
\end{align*}

\paragraph{Upper Bound on $\max_{j\in [m]}B(\theta_j)$}
Using the same union bound as before, with probability $1-3\delta$,

\begin{align*}
\max_{j\in [m]}        B(\theta_j) \lesssim \br{\phi^\star + \frac{\sqrt{\log(m)}}{\sqrt{n}}}^\frac{1}{c+1}\br{\mu_{B_{2,1}} + (d-1)\mu_{B_{2,2}} +\frac{d\sqrt{\log(m)}}{\sqrt{n}}}^\frac{c}{c+1} \lesssim d^{\frac{c}{c+1}}
\end{align*}

\subsection{Proofs for App.~\ref{sec:proof_flattest_bad}}
\subsubsection{Proof of Prop.~\ref{prop:bad_min_pop_loss}}
\label{sec:rem_bad_min_pop_loss_proof}
From App.~\ref{sec:prelims}, the population loss for any weight vector $\vw$ is given by,
\begin{align*}
    2F(\vw) = \zeta^2 +\psi^\star + \va^\top \Phi \va - 2\ip{\widetilde{\Psi}}{\va}.
\end{align*}
Note that this is a quadratic in $\va$. If $\Phi \succ 0$, this quadratic is minimized in $\va$ at $\bar{\va} = \Phi^{-1}\widetilde{\Psi}$. Therefore, a lower bound on the population loss independent of $\va$ is given by,

\begin{align*}
    2F(\vw) \geq \zeta^2 +\psi^\star -  \widetilde{\Psi}\Phi^{-1}\widetilde{\Psi} \geq \zeta^2 \geq \zeta^2 +\psi^\star - \lambda_{\min}^{-1}(\Phi)\norm{\widetilde{\Psi}}_2^2 \geq \zeta^2 + \psi^\star - m\lambda_{\min}^{-1}(\Phi) \max_{j\in [m]}\widetilde{\psi}^2(\Theta^\star \theta_j) 
\end{align*}
 Each coordinate of $\widetilde{\Psi}$ is $\widetilde{\psi}(\Theta^\star \theta_j)$. If $\widetilde{\psi}$ is coordinate-wise an increasing function, and each coordinate of the vector $\Theta^\star\theta_j$ lies in $[-\rho^\star, \rho^\star]$, $\widetilde{\psi}(\Theta^\star\theta_j) \leq \widetilde{\psi}(\rho^\star \vone_{m^\star}), \forall j\in [m]$.
 Plugging in the sufficient condition from Prop.~\ref{prop:bad_min_pop_loss} completes the proof.
\subsubsection{Proof of Example~\ref{ex:bad_min}}
\label{sec:ex_bad_min_proof}
If we select $\theta_j\perp \theta_{j'}^\star, \forall j\in [m], j'\in [m^\star]$, then $\ip{\theta_j}{\vx}$ is independent of $\ip{\theta_{j'}}{\vx}, \forall j\in [m], j'\in [m^\star]$. This implies that $h(\vw, \vx)$ is independent of $\sigma^\star(\Theta^\star\vx)$.
Therefore, the population loss for such an interpolator is given by the following.
\begin{align*}
    2F(\vw) &= \zeta^2 + \E{(y - h(\vw, \vx))^2} = \zeta^2 + \psi^\star + \E{h^2(\vw,\vx)} - 2\E{h(\vw, \vx)}\E{\sigma^\star(\Theta^\star\vx)} \\
    &= \zeta^2 + \psi^\star + \E{h^2(\vw, \vx)} - (\E{h(\vw, \vx)})^2 - (\E{\sigma^\star(\Theta^\star\vx)})^2 + (\E{h(\vw, \vx)} - \E{\sigma^\star(\Theta^\star\vx)})^2 
\end{align*}
The second equation is obtained by completing the squares while the first equation is obtained by independence of $h(\vw, \vx)$ and $\sigma^\star(\Theta^\star\vx)$.
Note that $\Var{h(\vw, \vx)} = \E{h^2(\vw, \vx)} - (\E{h(\vw, \vx})^2 \geq 0$. Additionally, the term inside the square is non-negative. This provides the required lower bound on population risk.
\begin{align*}
    2F(\vw) = \zeta^2 + \psi^\star - (\E{\sigma^\star(\Theta^\star\vx)})^2 =  \zeta^2 + \br{1-\frac{(\E{\sigma^\star(\Theta^\star(\vx))})^2}{\psi^\star}}\psi^\star
\end{align*}
If we set $\kappa \coloneq \frac{1}{2}\br{1-\frac{(\E{\sigma^\star(\Theta^\star\vx)})^2}{\psi^\star}}$, we get the required result. Note that $\Theta^\star \vx \sim \cN(0, \bI_{m^\star}$.

\subsubsection{Proof of Lem.~\ref{lem:interpolation}}
\label{sec:lem_ws_bad_proof}
 If $\vU\in \bR^{n\times m}$ is the matrix of activations, such that the $(i,j)^{th}$ element of it represented by $\vU_{i,j} = \sigma(\ip{\theta_j}{\vx_i})$. Then, for interpolation, $h(\vw, \vx_i) = y_i, \forall i\in [n]$ can be represented by the linear system,
    \begin{align}
        \vU \va = \vy \label{eq:interp}
    \end{align}
    Since $m\geq n$, for fixed $\theta_j\in \bR^{d}, j\in [m]$, if $\vU$ is full row-rank, then we can always find an $\va\in \bR^{m}$ that satisfies the above equation.

    Note that the only case when $\vU$ is not invertible is if its rows are linearly dependent. In this case, there exist real coefficients $\{q_i\}_{i\in [n]}$ such that not all of them are simultaneously $0$ and $\sum_{i=1}^n q_i \vU_{i,j} = 0, \forall j\in [m]$. Without loss of generality, assume that $q_1 \neq 0$. Then, $\vU_{1, j} = \frac{-\sum_{i=2}^{n}q_i U_{i,j}}{q_1}, \forall j\in [m]$.

    \paragraph{Case I: $\sigma(b)=0$ iff $b=0$.}
    For piece-wise polynomial activation, we need to compute the probability of the set $H(b) = \{\widetilde{b} \in \bR:\sigma(\widetilde{b}) = b\}$. If $c'\neq0, c''\neq0$, then  $\abs{H(b)}$ is either $0,1$ or $2$ values. It is $0$ if $b$ is not in the range of $\sigma$, it is $1$ if $b=0$ or $c'c''<0$ as then $\sigma$ is invertible, and it is $2$ if $c'c''>0$ and $b\neq0$ as then $\sigma$ is invertible on only half of the real line. However, $\ip{\theta_j}{\vx_1}\sim\cN(0,1)$ $\forall j\in [m]$ is a continuous probability distribution. Therefore, the probability of set $H(b)$ is $0$ as the probability of $b$ being in a finite set under a continuous distribution is always $0$.

    \paragraph{Case II: $\sigma(b) = 0$ for $b\leq 0$.}
    If $b=0$, then the set $H(b)$ is either $\{\ip{\theta_j}{\vx}\geq 0\}$ or $\{\ip{\theta_j}{\vx}\leq 0\}$, when the piece-wise polynomial activation has one of its pieces $0$. Therefore, the only case for $\vU$ being non-invertible with finite probability for $q_1=0$ is when $\vU_{1,j} = 0,\forall j\in [m]$. 
    This corresponds to $\sign(\ip{\theta_j}{\vx_1}), \forall j\in [m]$.
   
     We will calculate the probability of this event for $\zeta=0$ now.  Without loss of generality,  we assume $\sigma(b) = 0$ for $b\leq 0$. The same proof works for $\sigma(b) = 0$ for $b\geq 0$ by replacing all random variables by their negations.

    Define $\widetilde{Q}_{j} = \ip{\theta_j}{\vx_1}, \forall j\in [m]$. Then, $\widetilde{Q}_j \sim \cN(0, 1)$ and $\E{\widetilde{Q}_j \widetilde{Q}_{j'}} = \ip{\theta_j}{\theta_{j'}}$. Consider the set of random variables defined by $\bar{Q}_j = \rho Q_0 + \sqrt{1-\rho^2} Q_j, \forall j\in [m]$, where $Q_j \overset{iid}{\sim}\cN(0,1), \forall j \in [m]\cup\{0\}$. Then, $$\E{\bar{Q}_j} = 0, \E{\bar{Q}_j^2} = 1 \text{~~~and~~~} \E{\bar{Q}_j \bar{Q}_{j'}} = \rho^2.$$ By Slepian's Inequality~\cite{slepian}, we have,
    \begin{align*}
&\Pp{\vx_1\sim \cN(0, \bI_d)}{\ip{\theta_j}{\vx_1} \leq 0 , \forall j\in [m]}=\P{\widetilde{Q}_j \leq 0, \forall j\in [m]} \leq \P{\bar{Q}_j \leq 0, \forall j\in [m]}\\
&= \P{Q_j \leq \frac{-\rho Q_0}{\sqrt{1-\rho^2}}, \forall j\in [m]}= \int_{-\infty} \P{Q_j \leq \frac{-\rho t}{\sqrt{1-\rho^2}}, \forall j\in [m] \cond Q_0 = t} g(t) dt\\
&=  \int_{-\infty} \prod_{j=1}^m\P{Q_j \leq \frac{-\rho t}{\sqrt{1-\rho^2}} \cond Q_0=t} g(t) dt\\
&=\int_{-\infty}^{\infty}g(t) G^m(\frac{-\rho t}{\sqrt{1-\rho^2}}) dt   
    \end{align*}
    Here, $g(t) = \frac{1}{\sqrt{2\pi}}e^{-\frac{t^2}{2}}$ and $G(t) = \Pp{b\sim \cN(0,1)}{b \leq t} = \int_{-\infty}^t g(b) db$ are the gaussian pdf and cdf respectively.
    We break the integral into $3$ parts $t\geq 0$, $t\in (-\widetilde{b}, 0)$ and $t\in (-\infty, -\widetilde{b})$, for some $\widetilde{b}>0$.
    For the first part.
    \begin{align*}
        \int_{0}^\infty g(t)G^m(\frac{-\rho t}{\sqrt{1-\rho^2}}) dt \leq \int_{0}^\infty g(t) G^m(0) dt \leq 2^{-(m+1)}
    \end{align*}
    For the second part, we use the fact that $G(b) \leq 1 - \frac{b}{(b^2+1)\sqrt{2\pi}}e^{-\frac{b^2}{2}}\leq 1-\frac{1}{2\sqrt{2\pi}}e^{-\frac{b^2}{2}}$, for any $b\geq 0$, using the Mill's ratio.
    \begin{align*}
        &\int_{-\widetilde{b}}^0 g(t) G^m(\frac{-\rho t}{\sqrt{1-\rho^2}}) dt \leq  G^m(\frac{\rho \widetilde{b}}{\sqrt{1-\rho^2}})\int_{-\widetilde{b}}^0 g(t)dt \leq  G^m(\frac{\rho \widetilde{b}}{\sqrt{1-\rho^2}}) \\
        &\leq \left(1- \frac{1}{\sqrt{2\pi}}e^{-\frac{\rho^2 \widetilde{b}^2}{2(1-\rho^2)}}\right)^m \leq \exp\left(-m \frac{1}{\sqrt{2\pi}}e^{-\frac{\rho^2 \widetilde{b}^2}{2(1-\rho^2)}}\right)
    \end{align*}
    For the third part, using gaussian tail inequality, 
    \begin{align*}
        \int_{-\infty}^{-\widetilde{b}}g(t) \Phi^m(\frac{-\rho t}{\sqrt{1-\rho^2}})dt \leq \int_{-\infty}^{-\widetilde{b}} g(t) dt \leq \exp(-\frac{\widetilde{b}^2}{2})
    \end{align*}

    We would want to find $\bar{b}>0$ that minimizes the sum of second and third part, as the first part is already small. Set $\rho_0 = \frac{\rho}{\sqrt{1-\rho^2}}$. Since this function is not necessarily convex in $\bar{b}$, we find $\bar{b}$ such that both the second and third parts are equal.
    \begin{align*}
        &\exp\br{-\frac{m}{\sqrt{2\pi}}\exp\br{-\frac{\rho_0^2\bar{b}^2}{2}}} = \exp\br{-\frac{\bar{b}^2}{2}}\\
        &\exp\br{-\frac{\rho_0^2\bar{b}^2}{2}} = \frac{\bar{b}\sqrt{\pi}}{m\sqrt{2}}\\
        &\bar{b} = \frac{1}{\rho_0}\sqrt{2(\log(m) +\log(\sqrt{2}\rho_0) -\log(\sqrt{\pi}) - \log(\bar{b}))}
    \end{align*}
    Since the leading order term is $\frac{1}{\rho_0}\sqrt{2\log(m)}$, we can set  
    \begin{align*}
     \bar{b} = \frac{1}{\rho_0}\sqrt{2(\log(m) +2\log(\rho_0) -\log(\sqrt{\pi}) - \log(\sqrt{\log(m)}))}   
    \end{align*}
    Then, the bound on the high probability term is, $\widetilde{\cO}\br{\left(m\rho_0\right)^{-\frac{1}{\rho_0^2}}}$.
    
Note that the probability that none of the rows of $\vU$ are all $0$'s is at least $1 - \check{\nu}m\left(m\rho_0\right)^{-\frac{1}{\rho_0^2}}$, for some constant $\check{\nu}>0$,  using a union bound over $m$ rows.

\subsubsection{Proof of Lem.~\ref{lem:flattest_bad_noise}}
\label{sec:flattest_bad_noise_proof}
Note that, $\vy = \vy^\star + \vn$. By triangle inequality for $\ell_2$ norm, for any two vectors $\vb_1, \vb_2 \in \bR^{m}$, $\norm{\vb_1}_2^2 \geq \frac{1}{2}\norm{\vb_1 - \vb}_2^2 - \norm{\vb_2}_2^2$. 

Therefore,
\begin{align*}
\norm{\va_{\min, \ell_2}}_2^2 \geq \frac{1}{2}\norm{\widetilde{\vU}^{\frac{1}{2}}\vy^\star}_2^2 - \norm{\widetilde{\vU}^{\frac{1}{2}}\vn}_2^2 
\end{align*}
We can bound the term of noise separately.
Note that,
\begin{align*}
    \norm{\widetilde{\vU}^{\frac{1}{2}}\vn}_2^2 \leq \lambda_{\max}(\widetilde{\vU}) \norm{\vn}_2^2 = \sigma_{\min}^{-2}(\vU) \norm{\vn}_2^2 
\end{align*}
From Lem.~\ref{lem:eigen_U}, with probability $1-\delta$, $\sigma_{\min}(\vU) \gtrsim \frac{1}{\sqrt{m}}$. Further, note that $\zeta^{-2}\norm{\vn}_2^2$ is a $\chi_n^2$-random variable. By $\chi^2$ concentration from ~\cite[Lem.~1]{laurent_massart_06}, with probability $1-\delta$,
\begin{align*}
    \norm{\vn}_2^2 \leq \zeta^2 (n + 2\sqrt{n\log(1/\delta)} + 2\log(1/\delta)) \lesssim \zeta^2 n
\end{align*}

After eliminating the contribution of noise from $\norm{\va_{\min, \ell_2}}_2^2$, with probability $1-2\delta$, we obtain,
\begin{align*}
\norm{\va_{\min, \ell_2}}_2^2 \gtrsim \frac{1}{2}\norm{\widetilde{\vU}^{\frac{1}{2}}\vy^\star}_2^2 - \zeta^2 \frac{n}{m}    
\end{align*}
This completes the proof.
\subsubsection{Proof of Lem.~\ref{lem:cond_distr}}
\label{sec:cond_distr_proof}
Note that the matrix $\vU$ is obtained by applying a deterministic function $\sigma(\cdot)$ on the matrix $\hat{\vU}$ element-wise. Since $\widetilde{\vU}$ is obtained from $\vU$, it is also a deterministic function of the matrix $\hat{\vU}\in \bR^{n\times m}$. Let $\hat{\vU}_i=\Theta\vx_i\in \bR^{n}$ denote the $i^{th}$ row of $\hat{\vU}$.  To compute the distribution of $\vy_i^\star \cond \vU_i$, it is enough for us to compute the distribution of $\vy^\star \cond \hat{\vU}_i$. Since $\vy_i^\star = \sigma^\star(\Theta^\star\vx_i)$, we first need to specify the distribution of $\Theta^\star\vx_i\cond \hat{\vU}_i$.

\paragraph{Distribution of $\Theta^\star\vx_1\cond \hat{\vU}_1$}
Note that,
\begin{align*}
    \begin{bmatrix}
        &\Theta^\star\vx_1\\
        &\hat{\vU}_1
    \end{bmatrix} \sim \cN\left(\vzero_{m+m^\star}, \begin{bmatrix}
        \bI_{m^\star} & \Theta^\star \Theta^\top\\
        \Theta(\Theta^\star)^\top & \Theta\Theta^\top
    \end{bmatrix}\right).
\end{align*}

Therefore, $\Theta^\star\vx_1\cond \hat{\vU}_1$ is also a gaussian distribution with mean and variance determined by the Schur's complement of the joint distribution. Note that here $\Theta\Theta^\top$ is not invertible as its rank is atmost $d$, while it is a $m\times m$ matrix. Therefore, we compute its pseudo-inverse in the Schur's complement.
\begin{align*}
    \Theta^\star\vx_1\cond \hat{\vU}_1\sim \cN(\Theta^\star \Theta^\top (\Theta\Theta^\top)^{\dagger}\hat{\vU}_1, \bI_{m^\star} - \Theta^\star \Theta^\top (\Theta\Theta^\top)^{\dagger}\Theta (\Theta^\star)^\top)
\end{align*}
Here, $A^{\dagger}$ is the pseudo-inverse of $A$ for a square matrix with real entries $A$. In this case, $\vy_i^\star\cond\vU = \sigma^\star(\Theta^\star\vx_i)\cond \vU$ where $\Theta^\star\vx_i \cond \vU \sim\cN(\widetilde{\mu}_i, \varsigma)$ with $\varsigma = \bI_{m^\star} - \Theta^\star \Theta^\top (\Theta\Theta^\top)^{\dagger}\Theta (\Theta^\star)^\top\in \bR^{m^\star \times m^\star}$ and $\widetilde{\mu}_i = \Theta^\star \Theta^\top (\Theta\Theta^\top)^{\dagger}\hat{\vU}_i, \forall i\in [n]$. Therefore, $\vy_i^\star \cond \vU$ is a $(\frac{c}{2}, \cO(\norm{\widetilde{\mu}_i}_2^c + \norm{\varsigma}_F^{\frac{c}{2}}))$-Sub-Weibull random variable, using Assump.~\ref{assump:link}. The other conditions on mean and variance are obtained by plugging in the corresponding expressions from Assump.~\ref{assump:link}. This completes the proof.

\subsubsection{Proof of Lem.~\ref{lem:simplification}}
\label{sec:simplification_proof}
\paragraph{Lower Bound on $(\vy_{\hat{\vU}}^\star)^\top \widetilde{\vU}\vy_{\hat{\vU}}^\star$  }
\begin{align*}
    (\vy_{\hat{\vU}}^\star)^\top \widetilde{\vU}\vy_{\hat{\vU}}^\star \geq \norm{\vy_{\hat{\vU}}^\star}_2^2 \lambda_{\min}(\widetilde{\vU}) \gtrsim (\trace((\vK')^2) + n\norm{\varsigma}_F^c)\sigma_{\max}^{-2}(\vU) 
\end{align*}
We use the fact that $\norm{\vy_{\hat{\vU}}^\star}^2 =\sum_{i=1}^n \norm{\widetilde{\mu}_i}_2^{2c} = \trace((\vK')^2)$.

\paragraph{Lower Bound on $\sum_{i\in [n]}\widetilde{\vU}_{i,i}\Var{\vy_i^\star \cond \vU_i}$}
\begin{align*}
    \sum_{i\in [n]}\widetilde{\vU}_{i,i}\Var{\vy_i^\star \cond \hat{\vU}} \gtrsim & \norm{\varsigma}_F\trace((\vK')^{\frac{2(c-1)}{c}}\widetilde{\vU}) + \norm{\varsigma}_F^{c}\trace(\widetilde{\vU})\\
    \gtrsim&\norm{\varsigma}_F(\trace((\vK')^{\frac{2(c-1)}{c}})\lambda_{\min}(\widetilde{\vU}) + \norm{\varsigma}_F^{c}\trace(\widetilde{\vU}))\\
    \gtrsim&\norm{\varsigma}_F(\trace((\vK')^{\frac{2(c-1)}{c}})\sigma_{\max}^{-2}(\vU) + \norm{\varsigma}_F^{c}\trace(\widetilde{\vU}))
    \end{align*}

\paragraph{Upper Bound on  $\norm{\vK\widetilde{\vU}\vK}_F$}
We  use triangle inequality for the norm,
\begin{align*}
\norm{\vK\widetilde{\vU}\vK}_F \lesssim \norm{\vK'\widetilde{\vU}\vK'}_F + \norm{\varsigma}_F^{c} \norm{\widetilde{\vU}}_F
\end{align*}
For the first term,
\begin{align*}
    \norm{\vK'\widetilde{\vU}\vK'}_F =& \sqrt{\trace(\vK'\widetilde{\vU} (\vK')^2 \widetilde{\vU}\vK')}= \sqrt{\trace((\vK')^4\widetilde{\vU}^2)}\\
    \leq& \sqrt{\trace((\vK')^4)}\sqrt{\trace(\widetilde{\vU}^2)}\leq \sqrt{\trace((\vK')^4)}\norm{\widetilde{\vU}}_F 
\end{align*}
We use Cauchy-Schwartz for Positive Semi-Definite matrices $\vK'$ and $\widetilde{\vU}$ for the inequality and commute the terms inside the trace.
Finally, we use an upper bound on $\norm{\widetilde{\vU}}_F$.
\begin{align*}
    \norm{\widetilde{\vU}}_F = \sqrt{\trace(\widetilde{\vU}^\top \widetilde{\vU}}) = \sqrt{\trace(\widetilde{\vU}^2)} = \sqrt{\sum_{i=1}^n \lambda_i^{2}(\widetilde{\vU})} = \sqrt{\sum_{i=1}^n \sigma_i^{-4}(\vU)} 
\end{align*}
We use the definition of frobenius norm in the first step. Then, we use the fact that $\widetilde{\vU}$ is symmetric. We then use the connection between trace and eigenvalues of a PD symmetric matrix $\widetilde{\vU}$ and the fact that $\lambda_{i}(\widetilde{\vU}) = \sigma_i^{-2}(\vU)$.

\subsubsection{Proof of Lem.~\ref{lem:k_bound}}
\label{sec:k_bound_proof}

From Lem.~\ref{lem:cond_distr}, $\norm{\widetilde{\mu_i}}_2^2$ is a sum of gaussian random variables, hence it is subexponetial and in turn $(1, \cO(\norm{\bI_{m^\star} - \varsigma}_F)$-Sub-Weibull. Raising a Sub-Weibull to the exponent $\frac{\varphi}{2}$ for some $\varphi>0$, we obtain another Sub-Weibull random variable. Therefore, $\norm{\widetilde{\mu_i}}_2^{\varphi}$ is also a $(\frac{\varphi}{2}, \cO(\norm{\bI_{m^\star} - \varsigma}_F^{\frac{\varphi}{2}})$-Sub-Weibull random variable (Def.~\ref{def:sub_weibull}). In addition to this, the Sub-Weibull random variable is a polynomial of sum of sub-Weibull random variables, hence its mean follows the same properties as that of the link function in App.~\ref{sec:link_func}, with its mean bounded by $\cO(\norm{\bI_{m^\star} - \varsigma}_F^{\frac{\varphi}{2}})$.

Since we want to bound the sum of these iid random variables, we can use concentration inequalties for sum of Sub-Weibull random variables. Here, $\trace((\vK')^{\varphi})$ for any $\varphi>0$ can be represented as 
\begin{align*}
    \trace((\vK')^{\varphi})  = \sum_{i=1}^n \norm{\widetilde{\mu}_i}_2^{c\varphi}
\end{align*}
Therefore, we have a sum of $\left(\frac{c\varphi}{2}, \norm{\bI_{m^\star} - \varsigma}_F^{\frac{c\varphi}{2}}\right)$-Sub-Weibull independent random variables. This sum is a $\left(\frac{c\varphi}{2}, n\norm{\bI_{m^\star} - \varsigma}_F^{\frac{c\varphi}{2}}\right)$-Sub-Weibull random variable by triangle inequality of the Orlicz norm, as these are independent random variables. Using Lem.~\ref{lem:sub_weibull_sum}, with probability $1-\delta$, 
\begin{align*}
    &\abs{\trace((\vK')^{\varphi}) - n \E{\norm{\widetilde{\mu}_1}_2^{c\varphi}}} \lesssim n \norm{\bI_{m^\star} - \varsigma}_F^{\frac{c\varphi}{2}} (\log(2/\delta))^{\frac{c\varphi}{2}}\\
    &\trace((\vK')^{\varphi}) \asymp n \norm{\bI_{m^\star} - \varsigma}_F^{\frac{c\varphi}{2}} \asymp n \norm{\bI_{m^\star} - \varsigma}^{\frac{c\varphi}{2}}.
\end{align*}
For $\varphi=0$, $\trace((\vK')^0) = \trace(\bI_n) = n$.

\subsection{Proofs for App.~\ref{sec:proof_flattest_pop}}

\subsubsection{Proof of Thm.~\ref{thm:flattest_good}}
\label{sec:proof_thm_flattest_good}

For \textbf{Single-Index} models defined in Section~\ref{sec:setup}, we consider a good interpolator given by $a_j = \frac{1}{m}, \theta_j = \theta^\star, \forall j\in [m]$. For this interpolator, $\forall \vx\in \bR^{d}$, we have, 
\begin{align*}
 h(\vw, \vx) = \sum_{j=1}^m \frac{1}{m}\sigma(\ip{\theta^\star}{\vx}) = y.  
\end{align*}
Therefore, this is indeed an interpolator, and it is good, as  $F_S(\vw) = F(\vw) = 0$. The flatness of this good interpolator is obtained by plugging in the value of $\va$ into Lem.~\ref{lem:flattest_rescaling}. 
\begin{align*}
    \norm{\va}_{\frac{2c}{c+1}}^{\frac{2c}{c+1}} = \sum_{j=1}^m m^{-\frac{2c}{c+1}} = m^{-\frac{c-1}{c+1}}.
\end{align*}
Therefore, $\Upsilon(\vw) \asymp d^{\frac{c}{c+1}}m^{-\frac{c-1}{c+1}} \asymp\Upsilon^\star$.

Similarly for the \textbf{Sum of Single-Index} case in Section~\ref{sec:setup}, for the given good interpolator., $\forall \vx\in \bR^{d}$, we have, 
\begin{align*}
 h(\vw, \vx) = \frac{m}{m^\star}\sum_{j\in [m^\star]} \frac{a_j^\star m^\star}{m}\sigma(\ip{\theta_j^\star}{\vx}) = \sum_{j\in [m^\star]}a_j^\star \sigma(\ip{\theta_j^\star}{\vx}) = y .
\end{align*}
Therefore, $F_S(\vw) = F(\vw) = 0$ and it is indeed a good interpolator. From Lem.~\ref{lem:flattest_rescaling}, the flatness can be computed from $\va$ as,
\begin{align*}
    \norm{\va}_{\frac{2c}{c+1}}^{\frac{2c}{c+1}} =& \br{\frac{m}{m^\star}}^{-\frac{c-1}{c+1}}\sum_{j\in [m^\star]} \abs{a_j^\star}^{\frac{2c}{c+1}} \lesssim \br{\frac{m}{m^\star}}^{-\frac{c-1}{c+1}} m^\star \br{\max_{j\in [m^\star]}\abs{a_j^\star}}^{\frac{2c}{c+1}}\lesssim m^{-\frac{c-1}{c+1}}
\end{align*}
We use the fact that $m^\star=\cO(1), \abs{a_j^\star}=\cO(1), \forall j\in [m^\star]$. This bound implies that $\Upsilon(\vw) \asymp d^{\frac{c}{c+1}}m^{-\frac{c-1}{c+1}}=\Upsilon^\star$.

\subsubsection{Proof of Prop.~\ref{rem:necessary_flatness}}
\label{sec:rem_necessary_flatness_proof}
For the sufficient condition, i.e., $\norm{\va}_\infty \implies \Upsilon(\vw) \asymp \Upsilon^\star$, we find an upper bound on $\norm{\va}_{\frac{2c}{c+1}}^{\frac{2c}{c+1}}$. Note that,
\begin{align*}
    \norm{\va}_{\frac{2c}{c+1}}^{\frac{2c}{c+1}} \lesssim m m^{-\frac{2c}{c+1}} = m^{-\frac{c-1}{c+1}}.
\end{align*}
Combining this with Lem.~\ref{lem:flattest_rescaling}, we prove the sufficient condition with high probability.

For the necessary condition, we need to show
\begin{align*}
    \Upsilon(\vw) \asymp \Upsilon^\star \implies \norm{\va}_{\frac{2c}{c+1}}^{\frac{2c}{c+1}} \asymp m^{-\frac{c-1}{c+1}}.
\end{align*}

For an interpolator $\vw\in \bR^{m(d+1)}$ with unit-norm inner-layer weights, we have $\vU\va = \vy$. Therefore, the outer-layer weights satisfy the following with high probability. 
\begin{align*}
    \norm{\va}_2 \geq \norm{\va_{\min, \ell_2}}_2 \gtrsim \frac{1}{\sqrt{m}}. 
\end{align*}
Here, we use Lem.~\ref{lem:weak_bound_l2}, which holds for all interpolators. 

We will show that a $\vw$ that is flattest and an interpolator is not possible unless $\norm{\va}_{\infty} =\cO(m^{-1})$.
By contradiction, assume that there is a subset of coordinates $\cQ\subset[m]$ of size $\abs{\cQ} = q$ such that $\abs{a_j} \asymp m^{-b}, \forall j\in \cQ$ for some $b>1$, and for all coordinates $j\notin \cQ$, $\abs{a_j} \leq \cO(m^{-1})$. Note that $\norm{\va}_\infty = m^{-b}$.  If $q = \Theta(m)$, then $\va$ is the flattest interpolator, i.e., $\norm{\va}_{\frac{2c}{c+1}}^{\frac{2c}{c+1}} \lesssim m^{-\frac{c-1}{c+1}}$, iff $b=1$. Therefore, we assume that $q=o(m)$. Since, $n=\Theta(m)$, $q=o(n)$.

Now, note that the vector $\va$ is close to a $q$-sparse vector. Let $\va = \va_{\cQ} + \va_{\cQ^c}$ be its decomposition into two vectors with different magnitude of coordinates. Note thar $\va_{\cQ}$ is $q$-sparse. To ensure interpolation, we require $\vU\va = \vy$, or $\vU \va_{\cQ} = \vy - \vU\va_{\cQ^c} = \widetilde{\vy}$. Additionally, note that $\widetilde{\vy}$ is an $n$-dimensional vector, and $\va_{\cQ}$ is a $q$-sparse vector. We will show that with high probability, we cannot interpolate. For a fixed $\va_{\cQ}$ that is $q$-sparse, 
\begin{align*}
    \Pr{\vU \va_{\cQ} - \widetilde{\vy} = 0} \leq \vartheta^n.
\end{align*}
Note that $\widetilde{\vy}$ has each coordinate independent, and $\vU$ also has independent rows. Further, each of these variables has non-zero mean, and a continuous distribution, so $\vartheta$ is their probability density function at $0$. Due to independence, we multiply it $n$ times.

Now, taking a union bound over set of $q$ sparse-vectors with each coordinate of magnitude $m^{-b}$, we obtain,
\begin{align*}
    \Pr{\vU \va_{\cQ} - \widetilde{\vy} = 0} \leq \exp(-n\log(1/\vartheta) + q\log(m/q)).
\end{align*}
Since, $q=o(n)$, this probability is bounded by $\exp(-\widetilde{C}'n)$ for some constant $\widetilde{C}'>0$. Therefore, for any $b<1$, with high probability, if $\vw$ is flattest, it will not interpolate. This provides a contradiction and thus $b\geq 1$, which provides us with the necessary condition.

\subsection{Flattest minima with inner-layer bias}
\label{sec:wen_comparison_proof}
One of the settings in ~\citep{wen2023how} uses a bias $\{b_j\}_{j\in [m]}$ with the inner layer weights. This increases the dimension of the weights, as $\vw\in \bR^{m(d+2)}$, and the output of the network is  $h(\vw, \vx) = \sum_{j=1}^m a_j \sigma(\ip{\theta_j}{\vx}+b_j), \forall \vx\in \bR^d$. The only change that this causes in Lem.~\ref{lem:flattest_rescaling}, is to replace the terms of $\ip{\theta_j}{\vx}$ in $B(\theta_j)$ by  $\ip{\theta_j}{\vx}+b_j$. From Section~\ref{sec:link_func}, $\sigma, \sigma^2, \sigma'$ are still $(\varrho, K)$-Sub-Weibull, but now their means, variances and Orlicz norm $K$ has increased by a term proportional to $\abs{b_j}^c$. Note that this holds for activations that do not have any piece being $0$, i.e., $c'\neq 0, c''\neq 0$ in Assump.~\ref{assump:activation}. Therefore, to minimize flatness, the optimal choice is to set $b_j = 0$. Further, even for the bounds on $\norm{\va}_{\frac{2c}{c+1}}^{\frac{2c}{c+1}}$ in proofs of Thm.s~\ref{thm:flattest_all}, \ref{thm:flattest_good}, \ref{thm:flattest_bad}, all the Sub-Weibull random variables have an additional term of $\abs{b_j}$ in both their means and Orlicz norms. They are minimized over $b_j$ if $b_j=0$. Therefore, for our setting even with inner-layer bias, the flattest minima are achieved when this inner layer bias is set to $0$.

\end{document}